\documentclass{article}

\usepackage[utf8]{inputenc}
\usepackage[T1]{fontenc}
\usepackage{graphicx}
\usepackage{amsmath,amsfonts,amssymb,amsthm}
\usepackage{mathtools}
\usepackage{mathrsfs}
\usepackage{booktabs}
\usepackage{bbold}
\usepackage{bm}
\usepackage[font=footnotesize]{caption}
\usepackage{subcaption}
\usepackage{placeins}
\usepackage{enumitem}
\usepackage{empheq}
\usepackage{framed}
\usepackage{comment}
\usepackage{soul}
\usepackage{dirtytalk}
\usepackage{dsfont}
\usepackage{bbm}
\usepackage{natbib}
\usepackage{fullpage}
\usepackage[dvipsnames]{xcolor}
\usepackage{xspace}
\usepackage{epsf}
\usepackage{graphics}
\usepackage{psfrag}
\usepackage{microtype}
\usepackage{algorithm}
\usepackage{algpseudocode}
\usepackage{xurl}
\usepackage[mathscr]{eucal}
\usepackage{scalerel}
\usepackage{mathbbol}
\usepackage[framemethod=TikZ]{mdframed}
\usepackage[colorlinks,linkcolor=magenta,citecolor=blue,urlcolor=blue,pagebackref=false,backref=false]{hyperref}
\usepackage{cleveref}

\newtheoremstyle{named}{}{}{\itshape}{}{\bfseries}{.}{.5em}{\thmnote{#3's }#1}
\theoremstyle{named}

\theoremstyle{plain}

\newtheorem{theorem}{Theorem}
\newtheorem{proposition}{Proposition}

\newtheorem{lemma}{Lemma}

\newtheorem{corollary}{Corollary}

\newlength{\widebarargwidth}
\newlength{\widebarargheight}
\newlength{\widebarargdepth}

\makeatletter
\long\def\@makecaption#1#2{
        \vskip 0.8ex
        \setbox\@tempboxa\hbox{\small {\bf #1:} #2}
        \parindent 1.5em  %% How can we use the global value of this???
        \dimen0=\hsize
        \advance\dimen0 by -3em
        \ifdim \wd\@tempboxa >\dimen0
                \hbox to \hsize{
                        \parindent 0em
                        \hfil
                        \parbox{\dimen0}{\def\baselinestretch{0.96}\small
                                {\bf #1.} #2
                                }
                        \hfil}
        \else \hbox to \hsize{\hfil \box\@tempboxa \hfil}
        \fi
        }
\makeatother

\long\def\comment#1{}
\definecolor{battleshipgrey}{rgb}{0.52, 0.52, 0.51}
\definecolor{darkgray}{rgb}{0.66, 0.66, 0.66}
\definecolor{darkgreen}{rgb}{0.0, 0.2, 0.13}
\definecolor{darkspringgreen}{rgb}{0.09, 0.45, 0.27}
\definecolor{dukeblue}{rgb}{0.0, 0.0, 0.61}
\definecolor{olivedrab7}{rgb}{0.24, 0.2, 0.12}
\definecolor{darkblue}{rgb}{0.0, 0.0, 0.55}
\definecolor{darkscarlet}{rgb}{0.34, 0.01, 0.1}
\definecolor{candyapplered}{rgb}{1.0, 0.03, 0.0}
\definecolor{ao(english)}{rgb}{0.0, 0.5, 0.0}
\definecolor{applegreen}{rgb}{0.55, 0.71, 0.0}

\definecolor{myred}{HTML}{880000}
\definecolor{mygreen}{HTML}{008800}
\definecolor{myblue}{HTML}{000088}
\definecolor{linkblue}{HTML}{0000BB}
\definecolor{myred}{HTML}{880000}
\definecolor{mygreen}{HTML}{008800}
\definecolor{myblue}{HTML}{000088}
\definecolor{linkblue}{HTML}{0000BB}

\renewcommand{\leq}{\leqslant}

\renewcommand{\le}{\leqslant}
\renewcommand{\ge}{\geqslant}

\newcommand{\sigmamax}{\sigma_{\max}}

\newtheorem{assumption}{Assumption}
\theoremstyle{remark}
\newtheorem{remark}{Remark}

\DeclareFontFamily{U}{mathx}{}
\DeclareFontShape{U}{mathx}{m}{n}{<-> mathx10}{}
\DeclareSymbolFont{mathx}{U}{mathx}{m}{n}

\DeclareMathAccent{\widecheck}{0}{mathx}{"71}

\long\def\comment#1{}

\newcommand{\rewardmodel}{R}
\newcommand{\refpolicy}{\pi_{\mathrm{ref}}}

\newcommand{\curpolicy}{\pi_{\theta}}
\newcommand{\rlhfobj}{J_{\mathrm{RN}}}
\newcommand{\TTSobj}{J_{\mathrm{TT}}}
\newcommand{\hatTTSobj}{\widehat{J}_{\mathrm{TT}}}

\newcommand{\Ind}[1]{\mathbb{I}\left\{ #1 \right\}} % Indicator function

\newcommand{\modelLlamaOneB}{\texttt{meta-\allowbreak llama/\allowbreak Llama-\allowbreak 3.2-\allowbreak 1B-\allowbreak Instruct}}
\newcommand{\modelLlamaEightBMeta}{\texttt{meta-\allowbreak llama/\allowbreak Meta-\allowbreak Llama-\allowbreak 3.1-\allowbreak 8B-\allowbreak Instruct}}
\newcommand{\modelLlamaEightB}{\texttt{meta-\allowbreak llama/\allowbreak Llama-\allowbreak 3.1-\allowbreak 8B-\allowbreak Instruct}}
\newcommand{\modelQwenFourB}{\texttt{Qwen/\allowbreak Qwen3-\allowbreak 4B-\allowbreak Instruct-\allowbreak 2507}}
\newcommand{\rewardSkyworkLlama}{\texttt{Skywork/\allowbreak Skywork-\allowbreak Reward-\allowbreak V2-\allowbreak Llama-\allowbreak 3.1-\allowbreak 8B}}
\newcommand{\rewardSkyworkQwen}{\texttt{Skywork/\allowbreak Skywork-\allowbreak Reward-\allowbreak V2-\allowbreak Qwen3-\allowbreak 8B}}
\newcommand{\datasetHH}{\texttt{Anthropic/\allowbreak hh-rlhf}}

\begin{document}
\begin{center}
{\bf{\LARGE{What should post-training optimize? A test-time scaling law perspective}}}

\vspace*{.2in}
{\large{
 \begin{tabular}{ccc}
  Muheng Li$^{ \dagger}$ &  Jian Qian$^{\diamond}$ &
  Wenlong Mou$^{\dagger}$ 
 \end{tabular}

}

\vspace*{.2in}

 \begin{tabular}{c}
 Department of Statistical Sciences, University of Toronto$^{\dagger}$\\
 Department of AI and Data Science, The University of Hong Kong $^{\diamond}$
 \end{tabular}

}

\end{center}

\begin{abstract}
Large language models are increasingly deployed with test-time strategies: sample \(N\) responses, score them with a reward model or verifier, and return the best. This deployment rule exposes a mismatch in post-training: standard objectives optimize the mean reward of a single response, whereas best-of-\(N\) performance is governed by the upper tail of the reward distribution. Recent test-time-aware objectives partly address this mismatch, but typically assume that training can use the same per-prompt rollout budget as deployment, which is impractical when post-training must cover many prompts while deployment can allocate much larger per-prompt test-time compute. We study this budget-mismatch regime, where only \(m\ll N\) per-prompt rollouts are available during training but the target objective is best-of-\(N\) deployment. Under structural assumptions on the reward tails, we show that the policy gradient of the best-of-\(N\) objective can be approximated from a much smaller rollout group by extrapolating upper-tail statistics. This yields a family of Tail-Extrapolated estimators for best-of-\(N\)-oriented post-training: a simple direct estimator, Tail-Extrapolated Advantage (TEA), and a fixed-order debiased Prefix-TEA estimator based on moment cancellation. Experiments on instruction-following tasks show that TEA and Prefix-TEA improve best-of-\(N\) performance across different language models, reward models and datasets under various training and test-time budget settings.
\end{abstract}

\section{Introduction}
The modern workflow for large language models (LLM) typically consists of three stages: pre-training, post-training, and test-time adaptation. Post-training is a critical stage in the development of large language models (LLMs), where a pre-trained model is further fine-tuned using reinforcement learning to optimize a task-specific objective or align with human preferences. At deployment time, practitioners often use additional test-time compute to improve performance. A widely used strategy is best-of-$N$ sampling: for each prompt, one draws $N$ independent responses from the model, scores them with a reward model, and returns the one with the largest score. This test-time strategy can substantially improve performance, but it also creates a mismatch with the standard post-training objective, which optimizes the expected reward of a single response rather than the best response among $N$ samples. This mismatch is not only conceptually unsatisfying, but also practically sub-optimal. Empirically, reward-guided fine-tuning has been shown to reduce output diversity across different tasks, which removes precisely the stochasticity that the best-of-\(N\) (BoN) sampling strategy relies on~\citep{kirk2023understanding,omahony2024attributing}. Direct test-time-aware training studies give complementary evidence: BoN-aware, pass@$k$/majority-vote-aware, and max@$k$ objectives improve best-of-$N$ or set-level performance on various benchmarks over standard single-sample objectives~\citep{chow2024inference,tang2025optimizing,bagirov2025best,ousherovitch2026compute}.

These test-time-aware post-training methods typically involve computing the test-time objective directly during training, which requires a large rollout budget per prompt. For example, if the test-time strategy is best-of-$N$ sampling, then for each prompt during training, one needs to generate at least $N$ responses to compute the objective and the policy gradient. This matched-budget requirement creates tension between post-training and test-time deployment.
On the one hand, post-training must cover a large number of prompts, so the
rollout budget available for each individual prompt is often limited \citep{zhang2025grpo,sun2025improving}.
On the other hand, test-time compute is increasingly treated as a separate
scaling dimension, and recent work has begun to study deployment regimes with
substantially larger search budgets \citep{snell2024scaling, chow2024inference, yuksekgonul2026learning, novikov2025alphaevolve}. This motivates the key question:
\begin{quote}
\textit{Can we optimize best-of-$N$ performance when only $m \ll N$ rollouts per prompt are available during training?}
\end{quote}
In this paper, we provide a solution to this question through the lens of \emph{test-time scaling laws}, which characterize how best-of-$N$ performance scales with $N$ under a given policy. Recently, Li et al.,~\cite{li2026predicting} characterize the BoN scaling law by modeling the upper tail of the reward distribution, and show that the BoN scaling law can be predicted using $m \ll N$ samples by extrapolating upper-tail statistics. Drawing on this perspective, we develop a family of algorithms that use tail extrapolation to optimize best-of-$N$ performance under a small rollout budget. Our main contributions are as follows:
\begin{itemize}
\item We derive a class of Tail-Extrapolated Advantage (TEA) estimators for the advantage function and policy gradient of the best-of-$N$ objective, which can be estimated from $m \ll N$ rollouts per prompt. We further construct a debiased estimator, Prefix-TEA, which further removes the bias in stochastic gradient estimation.
\item We prove non-asymptotic gradient estimation error bounds for TEA and Prefix-TEA, characterizing the bias--variance tradeoff and the dependence on the training budget $m$ and the test-time budget $N$. Building upon these guarantees, we establish convergence guarantees for post-training with TEA and Prefix-TEA, with respect to the best-of-$N$ objective.
\item We conduct extensive experiments on instruction-following tasks, showing that TEA and Prefix-TEA consistently improve best-of-$N$ performance over GRPO and recent test-time-aware baselines across datasets and reward models under various training and test-time budget settings.
\end{itemize}

We provide a detailed discussion of related work in \Cref{app:additional_related_work}.

\section{Problem setup and test-time-aware objective}

\subsection{Text generation and reward distribution}
% \mwlcomment{We don't even need to describe auto-regressive generation -- our framework has nothing to do with autoregression}
Let $x \sim P_X$ be a prompt. We model the language model as a stochastic
policy $\curpolicy$, which induces a distribution over responses
\[
y \sim \curpolicy(\cdot \mid x).
\]
We assume a fixed reward model $\rewardmodel(x,y)\in\mathbb{R}$ that assigns
a scalar score to each prompt-response pair $(x,y)$. For a given prompt $x$ and policy parameter $\theta$, the induced reward is
the random variable
\[
R_\theta(x)
:=
\rewardmodel(x,y),
\qquad
y \sim \curpolicy(\cdot \mid x).
\]
For a fixed prompt $x$, let
\begin{equation}
\label{eq:reward_distribution}
F_{\theta,x}(t)
:=
\mathbb{P}\bigl(R_\theta(x)\le t \mid x\bigr)
\end{equation}
denote the conditional distribution function of $R_\theta(x)$ given $x$.
We refer to $F_{\theta,x}$ as the reward distribution induced by prompt $x$
and policy $\curpolicy$.

\subsection{From standard post-training to test-time-aware training}

In standard RLHF-style post-training, the policy is optimized to improve the
expected reward while staying close to a reference policy. Concretely, the
training objective takes the form
\begin{equation}
\label{eq:risk_neutral_objective}
\rlhfobj(\theta)
:=
\mathbb{E}_{x \sim P_X}
\Bigl[
\mathbb{E}_{y \sim \curpolicy(\cdot \mid x)}
\bigl[\rewardmodel(x,y)\bigr]
-
\beta\,\mathrm{KL}\bigl(
\curpolicy(\cdot \mid x)\,\|\,\refpolicy(\cdot \mid x)
\bigr)
\Bigr].
\end{equation}
This objective is driven by the \emph{mean} reward under the current policy.

At deployment time, however, one often uses additional test-time compute,
for example through best-of-$N$ sampling: for each prompt $x$, one draws
$N$ independent responses from $\curpolicy(\cdot \mid x)$, scores them with
the reward model, and returns the one with the largest score. This induces
the promptwise value
\begin{equation}
\label{eq:promptwise_bon_value}
V_N(\theta;x)
:=
\mathbb{E}
\Bigl[
\max_{1 \le i \le N} \rewardmodel(x,y_i)
\,\Bigm|\, x,\;
y_1,\dots,y_N \stackrel{\mathrm{i.i.d.}}{\sim} \curpolicy(\cdot \mid x)
\Bigr].
\end{equation}
Therefore, if the goal of post-training is to improve performance under some additional test-time budget, a natural objective is
\begin{equation}
\label{eq:exact_tts_objective}
\TTSobj(\theta)
:=
\mathbb{E}_{x \sim P_X}
\Bigl[
V_N(\theta;x)
-
\beta\,\mathrm{KL}\bigl(
\curpolicy(\cdot \mid x)\,\|\,\refpolicy(\cdot \mid x)
\bigr)
\Bigr].
\end{equation}

The difference between \eqref{eq:risk_neutral_objective} and
\eqref{eq:exact_tts_objective} is the part of the reward distribution being
optimized. Mean-reward post-training improves the average sample, whereas
best-of-\(N\) deployment is controlled by the upper tail. Directly optimizing
\eqref{eq:exact_tts_objective} with a matched budget would require rollout
groups comparable to the deployment budget \(N\). In modern post-training, this
is often impractical: the training rollout budget per prompt is limited, while
deployment may use a much larger search budget. We therefore focus on the
finite-budget regime
\[
m\ll N,
\]
and ask how to construct a test-time-aware training signal from only \(m\)
rollouts per prompt.

% Recent work has begun to optimize post-training for the eventual
% inference-time procedure, bringing objectives of the form
% \eqref{eq:exact_tts_objective} into focus. For example,
% \citet{chow2024inference} study Best-of-$N$-aware fine-tuning, while
% \citet{tang2025optimizing} optimize fixed-$k$ inference-time objectives such
% as pass@$k$ and majority voting. However, these works focus on a \emph{matched-budget} regime, where the same
% sample budget is used in both training and deployment. In particular, if the
% test-time procedure uses $N$ samples per prompt, then training also assumes
% access to $N$ samples per prompt.

% \mwlcomment{Some of the texts here already appeared in intro/related work so we don't need to repeat.}

% In contrast, this matched-budget setting can be impractical in modern LLM post-training.
% Training must cover a large number of prompts, so the
% rollout budget available for each individual prompt is often limited \citep{zhang2025grpo,sun2025improving}.
% At the same time, test-time compute is increasingly treated as a separate
% scaling dimension, and recent work has begun to study deployment regimes with
% substantially larger search budgets \citep{snell2024scaling, chow2024inference, yuksekgonul2026learning, novikov2025alphaevolve}.
% This creates a practically important setting in which the deployment budget
% $N$ can be much larger than the training-time budget $m$.

% Our goal is therefore to optimize \eqref{eq:exact_tts_objective} when only
% $m \ll N$ rollouts per prompt are available during training.

\subsection{A tail-based surrogate objective}

To make the deployment-aware objective \eqref{eq:exact_tts_objective}
tractable under a small rollout budget, we impose a structural assumption on
the upper tail of the promptwise reward distribution.

\begin{assumption}
\label{assum:gaussian-tail-model}
Fix $\alpha\in(0,1/2)$. Let $F_{\theta,x}$ be the reward distribution defined in
\eqref{eq:reward_distribution}, and let $p_{\theta,x}$ denote its density.
Define
\(
r_{\theta,2\alpha}(x):=F_{\theta,x}^{-1}(1-2\alpha).
\)
We assume that, for every prompt $x$ and parameter $\theta$, the upper
$2\alpha$ tail of the reward distribution is Gaussian, namely
\begin{equation}
p_{\theta,x}(r)=\phi\bigl(r;\mu_\theta(x),\sigma_\theta^2(x)\bigr),
\qquad r\ge r_{\theta,2\alpha}(x),
\label{eq:gaussian_tail_density}
\end{equation}
where $\phi(\cdot;\mu,\sigma^2)$ denotes the Gaussian density with mean $\mu$
and variance $\sigma^2$. In addition, there exists $M_R>0$ such that
\(
\mathbb{E}_{y\sim\curpolicy(\cdot\mid x)}\big[\rewardmodel(x,y)^4\big]\le M_R
\)
for all prompts $x$ and all policy parameters $\theta$.
\end{assumption}

This assumption is motivated by prior work
\citep{li2026predicting} on tail-guided prediction, where Gaussian upper-tail behavior is empirically
supported across multiple model and reward-model pairs. Although we focus on the Gaussian tail model for concreteness, our approach can be extended to other tail models as well, as long as the tail behavior can be parametrized by a small number of parameters.

Under \Cref{assum:gaussian-tail-model}, $V_N(\theta;x)$ as defined in
\eqref{eq:promptwise_bon_value} is governed by the extreme upper tail of the
reward distribution. It is therefore natural to predict $V_N(\theta;x)$ using
the conditional reward on a smaller upper-tail region that remains inside the
modeled Gaussian regime. Specifically, define the upper $\alpha$-quantile of
the reward distribution as
\(
r_{\theta,\alpha}(x):=F_{\theta,x}^{-1}(1-\alpha),
\)
and let \[\mu_{\theta,\alpha}(x)
=
\mathbb{E}\!\left[
R_\theta(x)\mid R_\theta(x)\ge r_{\theta,\alpha}(x),\,x
\right], \qquad \sigma_{\theta,\alpha}^2(x) = \mathrm{Var}\!\left[R_\theta(x)\mid R_\theta(x)\ge r_{\theta,\alpha}(x),\,x
\right].\]
We collect these population tail statistics into the vector
\(
\eta_{\theta,\alpha}(x)
:=
\bigl(
r_{\theta,\alpha}(x),
\mu_{\theta,\alpha}(x),
\sigma_{\theta,\alpha}(x)
\bigr),\)
which will be the promptwise population object estimated from the
$m$ training rollouts in the sequel.

The next lemma shows that we can approximate 
$V_N(\theta;x)$ well by these upper-tail statistics.

\begin{lemma}
\label{lem:VN_tail_surrogate}
Under \Cref{assum:gaussian-tail-model}, there exists a constant
\(
\widetilde c_N
\)
depending only on $(N,\alpha)$, such that for every prompt $x$ and parameter
$\theta$,
\begin{equation}
\label{eq:VN_tail_surrogate_bound}
\left|
V_N(\theta;x)
-
\Bigl(
\mu_{\theta,\alpha}(x)+\widetilde c_N\,\sigma_{\theta,\alpha}(x)
\Bigr)
\right|
\le
C(1-2\alpha)^N,
\end{equation}
where $C>0$ depends only on $\alpha$ and the moment bound in
\cref{assum:gaussian-tail-model}.

Moreover, writing $c_N$ as the expected maximum of $N$ i.i.d.\ standard Gaussian random variables,
we have
\(
\widetilde c_N=\frac{c_N-\lambda_\alpha}{\sqrt{\delta_\alpha}}\), where 
\(z_\alpha:=\Phi^{-1}(1-\alpha),
\lambda_\alpha:=\frac{\phi(z_\alpha)}{1-\Phi(z_\alpha)},
\delta_\alpha:=1+z_\alpha\lambda_\alpha-\lambda_\alpha^2.
\)
\end{lemma}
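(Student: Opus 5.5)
The proof has two parts. First, express the population tail statistics $\eta_{\theta,\alpha}(x)$ exactly in terms of the Gaussian parameters $(\mu_\theta(x),\sigma_\theta(x))$ and invert. Second, show that $V_N(\theta;x)$ equals the Gaussian best-of-$N$ value $\mu_\theta(x)+c_N\sigma_\theta(x)$ up to an error of order $(1-2\alpha)^N$. Fix $x,\theta$ and abbreviate $\mu=\mu_\theta(x)$, $\sigma=\sigma_\theta(x)$, $F=F_{\theta,x}$, and $r_{2\alpha}=r_{\theta,2\alpha}(x)$.

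\textbf{Step 1 (tail identities).} By \Cref{assum:gaussian-tail-model}, the density coincides with the full $\mathcal N(\mu,\sigma^2)$ density on $[r_{2\alpha},\infty)$. Hence for all $t\ge r_{2\alpha}$ we have $1-F(t)=1-\Phi((t-\mu)/\sigma)$. This gives $r_{2\alpha}=\mu+\sigma z_{2\alpha}$. Since $\alpha<2\alpha$, it also gives $r_{\theta,\alpha}(x)=\mu+\sigma z_\alpha$. The conditional law of $R_\theta(x)$ given $R_\theta(x)\ge r_{\theta,\alpha}(x)$ is therefore the Gaussian truncated below at $\mu+\sigma z_\alpha$. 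The standard truncated-normal formulas then yield
\[
\mu_{\theta,\alpha}(x)=\mu+\sigma\lambda_\alpha,\qquad \sigma^2_{\theta,\alpha}(x)=\sigma^2\delta_\alpha .
\]
Inverting gives $\sigma=\sigma_{\theta,\alpha}/\sqrt{\delta_\alpha}$ and $\mu=\mu_{\theta,\alpha}-\lambda_\alpha\sigma_{\theta,\alpha}/\sqrt{\delta_\alpha}$. Consequently
\[
\mu+c_N\sigma=\mu_{\theta,\alpha}(x)+\frac{c_N-\lambda_\alpha}{\sqrt{\delta_\alpha}}\,\sigma_{\theta,\alpha}(x),
\]
which is exactly the identification of $\widetilde c_N$. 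It remains to bound $|V_N(\theta;x)-(\mu+c_N\sigma)|$.

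\textbf{Step 2 (coupling).} Let $Z_1,\dots,Z_N$ be i.i.d.\ standard Gaussian and set $U_i=\Phi(Z_i)$. Define $R_i=F^{-1}(U_i)$, which are i.i.d.\ with law $F$, and $G_i=\mu+\sigma Z_i$. By Step 1, $R_i=G_i$ whenever $U_i\ge 1-2\alpha$. Let $A=\{U_i<1-2\alpha\ \forall i\}$, so $\mathbb P(A)=(1-2\alpha)^N$. On $A^c$, both maxima are attained at indices in the Gaussian region, where $R_i=G_i$ and every other value lies below $r_{2\alpha}$. Hence $\max_i R_i=\max_i G_i$ on $A^c$, and
\[
|V_N-(\mu+c_N\sigma)|\le \mathbb E\bigl[|\max_i R_i|\mathbf 1_A\bigr]+\mathbb E\bigl[|\max_i G_i|\mathbf 1_A\bigr].
\]
To avoid the factor $N$ that a union bound would produce, I use the sandwich $R_1\le\max_i R_i\le r_{2\alpha}$ on $A$. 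This gives $|\max_i R_i|\le |R_1|+|r_{2\alpha}|$ on $A$. Using independence across $i$,
\[
\mathbb E\bigl[|\max_i R_i|\mathbf 1_A\bigr]\le (1-2\alpha)^{N-1}\,\mathbb E\bigl[(|R_1|+|r_{2\alpha}|)\mathbf 1\{U_1<1-2\alpha\}\bigr]\le \frac{M_R^{1/4}+|r_{2\alpha}|}{1-2\alpha}\,(1-2\alpha)^N .
\]
The same argument applies to the $G_i$, with $\mathbb E|G_1|\le|\mu|+\sigma$.

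\textbf{Step 3 (uniform constants), the main obstacle.} The constant must not depend on $(\theta,x)$, so I need bounds on $|\mu|$, $\sigma$, and $|r_{2\alpha}|$ in terms of $\alpha$ and $M_R$ only. I would obtain them from the moment bound as follows.
\begin{itemize}[leftmargin=*]
\item The tail moments satisfy $\mathbb E[R^2\mid R\ge r_{\theta,\alpha}]\le \sqrt{M_R}/\alpha$. Combined with Step 1, this bounds $\mu_{\theta,\alpha}^2+\sigma_{\theta,\alpha}^2$, and hence $|\mu|$ and $\sigma$ via the inversion formulas (using $\delta_\alpha>0$).
\item Then $|r_{2\alpha}|=|\mu+\sigma z_{2\alpha}|$ is bounded as well.
\item Finally, $c_N\le\sqrt{2\log N}$ is harmless, since it only enters through the exact Gaussian identity and not through the error term.
\end{itemize}
This delivers the bound with $C=C(\alpha,M_R)$.
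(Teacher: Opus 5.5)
Your proof is correct. Step 1 coincides with the paper's, but Steps 2 and 3 take a different route.

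\textbf{Step 2.} The paper does not couple samples. It uses the integrated-CDF identity $V_N(\theta;x)-(\mu+c_N\sigma)=\int_{-\infty}^{r_{2\alpha}}\bigl(\Phi\bigl(\tfrac{r-\mu}{\sigma}\bigr)^N-F(r)^N\bigr)\,dr$ and extracts $(1-2\alpha)^{N-1}$ from the fact that both CDFs are at most $1-2\alpha$ below the threshold. It then evaluates $\int_{-\infty}^{r_{2\alpha}}F=\mathbb E[(r_{2\alpha}-R)_+]$ and $\int_{-\infty}^{r_{2\alpha}}\Phi\bigl(\tfrac{r-\mu}{\sigma}\bigr)\,dr=\sigma\bigl(z_{2\alpha}\Phi(z_{2\alpha})+\phi(z_{2\alpha})\bigr)$. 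Your quantile coupling, with the sandwich $R_1\le\max_iR_i\le r_{2\alpha}$ on $A$ and factorization over $i$, is the pathwise counterpart of this Fubini computation. The factorization plays the role of $F^N\le(1-2\alpha)^{N-1}F$, and the resulting bound has the same form.

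\textbf{Step 3.} Here the difference is more substantive. The paper bounds $|r_{2\alpha}|$ by a two-sided Markov argument. It bounds $\sigma$ through a separate lemma based on $\kappa_\alpha\sigma^4=\int_{r_{2\alpha}}^\infty(r-r_{2\alpha})^4p_{\theta,x}(r)\,dr\le 8\,\mathbb E[R^4]+16\alpha\, r_{2\alpha}^4$, with $\kappa_\alpha>0$ depending only on $\alpha$. Your route instead observes that $\mu_{\theta,\alpha}^2+\sigma_{\theta,\alpha}^2=\mathbb E[R^2\mid R\ge r_{\theta,\alpha}]\le\sqrt{M_R}/\alpha$ and inverts the exact truncated-normal identities from Step 1. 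This is shorter and uses only a second-moment bound. The paper packages its version as a standalone lemma because it reuses it for the uniform tail bounds in later sections, but your argument would supply those bounds equally well.
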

The proof of \Cref{lem:VN_tail_surrogate} is deferred to Appendix~\ref{app:proof-tail-based-surrogate}. \Cref{lem:VN_tail_surrogate} motivates the surrogate objective
\begin{equation}
\label{eq:surrogate_tts_objective}
\hatTTSobj(\theta)
:=
\mathbb{E}_{x \sim P_X}
\Bigl[
\mu_{\theta,\alpha}(x)
+
\widetilde c_N\,\sigma_{\theta,\alpha}(x)
-
\beta\,\mathrm{KL}\bigl(
\curpolicy(\cdot \mid x)\,\|\,\refpolicy(\cdot \mid x)
\bigr)
\Bigr].
\end{equation}
Taking expectation in \eqref{eq:VN_tail_surrogate_bound} over prompts $x$ yields that the gap between
\eqref{eq:exact_tts_objective} and \eqref{eq:surrogate_tts_objective} is also
exponentially small in $N$. Thus, in the rest of the paper we focus on
optimizing the surrogate objective \eqref{eq:surrogate_tts_objective}.

To use \eqref{eq:surrogate_tts_objective} for post-training, we first derive a
standard policy-gradient representation of its population gradient in
\Cref{sec:tail_based_policy_gradient}. Then, in
\Cref{sec:finite_sample_gradient_estimators}, we construct finite-sample
estimators of this oracle gradient from \(m\) rollouts, analyze their
bias--variance tradeoffs, and connect these estimator-level guarantees to
stochastic post-training dynamics.

\subsection{Policy gradient of the tail-based surrogate objective}
\label{sec:tail_based_policy_gradient}

We now turn the surrogate objective
\eqref{eq:surrogate_tts_objective} into a training signal. To obtain the gradient of \eqref{eq:surrogate_tts_objective}, it suffices to identify the promptwise gradient $\nabla_\theta (\mu_{\theta,\alpha}(x)+\widetilde c_N\,\sigma_{\theta,\alpha}(x))$ for each prompt $x$.

\begin{lemma}
\label{lem:tail_based_policy_gradient}
Fix a prompt $x$. For any tail vector $\eta=(r,\mu,\sigma)$ with
$\sigma>0$ and any scalar reward value $u\in\mathbb R$, define the
tail-shaped reward
\begin{equation}
\label{eq:tail_shaped_reward}
\widetilde R_{\eta}(u)
:=
(u-r)
+
\tfrac{\widetilde c_N}{2\sigma}
\big(
(u-\mu)^2-(r-\mu)^2
\big).
\end{equation}
Denoting the policy score function by $S_\theta(x,y):=\nabla_\theta \log \curpolicy(y\mid x)$, we have
\[
g_\theta(x) := \nabla_\theta \bigl(\mu_{\theta,\alpha}(x)+\widetilde c_N\,\sigma_{\theta,\alpha}(x)\bigr)
=
\frac{1}{\alpha}
\mathbb{E}_{y\sim\curpolicy(\cdot\mid x)}
\Bigl[
\Ind{R(x,y)\ge r_{\theta,\alpha}(x)}
\,
\widetilde R_{\eta_{\theta,\alpha}(x)}(R(x,y))
\,
S_\theta(x,y)
\Bigr].
\]
\end{lemma}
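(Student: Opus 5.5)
We compute $\nabla_\theta \mu_{\theta,\alpha}(x)$ and $\nabla_\theta \sigma_{\theta,\alpha}(x)$ separately, writing each as a score-function expectation over the upper $\alpha$-tail, and then combine them. Fix $x$ and suppress it from the notation; write $r=r_{\theta,\alpha}$, $\mu=\mu_{\theta,\alpha}$, $\sigma=\sigma_{\theta,\alpha}$, $R=\rewardmodel(x,y)$. Since $\mathbb{P}(R\ge r)=\alpha$ for all $\theta$, we have
\[
\mu=\tfrac{1}{\alpha}\mathbb{E}_{y\sim\curpolicy}\bigl[R\,\Ind{R\ge r}\bigr],\qquad
\sigma^2=\tfrac{1}{\alpha}\mathbb{E}_{y\sim\curpolicy}\bigl[(R-\mu)^2\Ind{R\ge r}\bigr].
\]
The first step is a generic differentiation rule. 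For a smooth $h(u,\theta)$, set $G(\theta)=\mathbb{E}_{\curpolicy}[h(R,\theta)\Ind{R\ge r_{\theta,\alpha}}]$. Then
\[
\nabla G=\mathbb{E}[h\,\Ind{R\ge r}S_\theta]+\mathbb{E}[\partial_\theta h\,\Ind{R\ge r}]-h(r,\theta)\,p_\theta(r)\,\nabla r.
\]
The first term is the likelihood-ratio term. The third is the boundary term coming from the moving threshold.

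The boundary term is handled by differentiating the quantile constraint $\mathbb{E}[\Ind{R\ge r_{\theta,\alpha}}]=\alpha$. This gives
\[
0=\mathbb{E}[\Ind{R\ge r}S_\theta]-p_\theta(r)\nabla r,
\]
so $p_\theta(r)\nabla r=\mathbb{E}[\Ind{R\ge r}S_\theta]$. The boundary term therefore becomes $-h(r,\theta)\,\mathbb{E}[\Ind{R\ge r}S_\theta]$, and we obtain the clean formula
\[
\nabla G=\mathbb{E}\bigl[(h(R)-h(r))\Ind{R\ge r}S_\theta\bigr]+\mathbb{E}[\partial_\theta h\,\Ind{R\ge r}].
\]

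Now apply this formula twice.
\begin{itemize}
\item With $h(u)=u$, it gives $\nabla\mu=\frac1\alpha\mathbb{E}[(R-r)\Ind{R\ge r}S_\theta]$.
\item With $h(u,\theta)=(u-\mu_\theta)^2$, the explicit-dependence term is $-2\nabla\mu\,\mathbb{E}[(R-\mu)\Ind{R\ge r}]$. It vanishes because $\mathbb{E}[(R-\mu)\Ind{R\ge r}]=\alpha(\mu-\mu)=0$ by the definition of $\mu$.
\end{itemize}
The second case therefore gives $\nabla\sigma^2=\frac1\alpha\mathbb{E}[((R-\mu)^2-(r-\mu)^2)\Ind{R\ge r}S_\theta]$. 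Then $\nabla\sigma=\nabla\sigma^2/(2\sigma)$. Forming $\nabla\mu+\widetilde c_N\nabla\sigma$ produces exactly $\frac1\alpha\mathbb{E}[\Ind{R\ge r}\widetilde R_\eta(R)S_\theta]$, which is the claimed identity. Here $\widetilde c_N$ depends only on $(N,\alpha)$ by \Cref{lem:VN_tail_surrogate}, so it passes through the gradient.

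The main obstacle is justifying the calculus. We need three facts:
\begin{itemize}
\item the quantile $r_{\theta,\alpha}$ is differentiable in $\theta$, which follows from the implicit function theorem using $p_\theta(r)>0$ (true on the Gaussian tail in \Cref{assum:gaussian-tail-model});
\item differentiation under the integral (the log-derivative trick) is valid, which needs mild regularity, e.g. a dominated score and the fourth-moment bound $M_R$ to control $R^2 S_\theta$;
\item the boundary-term computation holds.
\end{itemize}
For the last point, rather than differentiating an indicator directly, I would write $\mathbb{E}[h\Ind{R\ge r}]=\int_r^\infty h(u)p_\theta(u)\,du$ and apply Leibniz's rule. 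The interior derivative $\int_r^\infty h\,\nabla p_\theta=\mathbb{E}[h\Ind{R\ge r}S_\theta]$ holds because $p_\theta$ is the pushforward of $\curpolicy$ under $R$. The same Leibniz computation applied to the quantile constraint gives $p_\theta(r)\nabla r$, and the two boundary contributions cancel as described above.
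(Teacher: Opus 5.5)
Your proposal is correct and follows essentially the same route as the paper's proof. Both arguments differentiate the quantile constraint via Leibniz's rule to identify the boundary term $p_{\theta,x}(r)\nabla_\theta r$, and both apply the resulting moving-threshold identity with $u\mapsto u$ and with $u\mapsto(u-\mu_{\theta,\alpha}(x))^2$. Both then note that the explicit $\nabla_\theta\mu_{\theta,\alpha}$ term vanishes by the definition of the tail mean, and combine the pieces via $\nabla_\theta\sigma_{\theta,\alpha}=\nabla_\theta\sigma_{\theta,\alpha}^2/(2\sigma_{\theta,\alpha})$.
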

The proof is deferred to Appendix~\ref{app:proof-tail-based-policy-gradient}.
\Cref{lem:tail_based_policy_gradient} shows that the promptwise gradient of
\eqref{eq:surrogate_tts_objective} has the standard policy-gradient form
\(
\mathbb E\!\left[A_\theta(x,y)S_\theta(x,y)\right],
\)
but with a tail-dependent weight \(A_\theta(x,y)\). The indicator
\(\Ind{R(x,y)\ge r_{\theta,\alpha}(x)}\) focuses the update on upper-tail
responses; within this tail, the linear term rewards excess above the threshold,
while the quadratic term supplies the tail-scale correction relevant to
best-of-\(N\) performance.

Combining \Cref{lem:tail_based_policy_gradient} with the KL regularizer gives
the population gradient of the surrogate objective:
\begin{equation}
\label{eq:full_surrogate_gradient}
\nabla_\theta \hatTTSobj(\theta)
=
\mathbb E_{x\sim P_X}
\Bigl[
g_\theta(x)
-
\beta\,
\nabla_\theta
\mathrm{KL}\bigl(
\curpolicy(\cdot\mid x)\,\|\,\refpolicy(\cdot\mid x)
\bigr)
\Bigr].
\end{equation}

 In practice, however, the population
tail vector is unknown and must be estimated from only $m$ rollouts per
prompt. The next section provides a family of finite-sample estimators for $g_\theta(x)$.

\section{Finite-sample gradient estimators}
\label{sec:finite_sample_gradient_estimators}

The oracle gradient \(g_\theta(x)\) in \Cref{lem:tail_based_policy_gradient}
depends on the population tail vector \(\eta_{\theta,\alpha}(x)\). We construct
finite-rollout estimators of \(g_\theta(x)\) and show how their bias--variance
guarantees enter post-training.

\subsection{Promptwise gradient estimators}
\label{subsec:promptwise_gradient_estimators}

All results in this subsection are promptwise for a fixed \(x\), with constants
uniform over \(x\) and \(\theta\).
\paragraph{Direct plug-in estimator.}
Draw \(y_1,\dots,y_m\stackrel{\mathrm{i.i.d.}}{\sim}\curpolicy(\cdot\mid x)\)
and set \(R_i:=R(x,y_i)\). Let \(q_m:=\lceil\alpha m\rceil\), let
\(\hat r_m\) be the \(q_m\)-th largest reward, and let \(\mathcal I_m\) be the
indices of the top \(q_m\) rewards, with ties broken randomly. Define
\begin{equation}
\label{eq:empirical_tail_vector}
\hat\mu_m
:=
\frac1{q_m}\sum_{i\in\mathcal I_m}R_i,
\qquad
\hat\sigma_m^2
:=
\max\Big\{ 
\frac1{q_m}\sum_{i\in\mathcal I_m}(R_i-\hat\mu_m)^2,
\varepsilon_\sigma^2
 \Big\},
\qquad
\hat\eta_m:=(\hat r_m,\hat\mu_m,\hat\sigma_m).
\end{equation}

The direct plug-in estimator is obtained by replacing the population tail
vector in the policy gradient formula in
\Cref{lem:tail_based_policy_gradient} by \(\hat\eta_m\):
\begin{equation}
\label{eq:direct_gradient_estimator}
\widehat g_m^{\mathrm{dir}}(\theta;x)
:=
\frac{1}{\alpha m}
\sum_{i=1}^{m}
\Ind{R_i\ge \hat r_m}
\,
\widetilde R_{\hat\eta_m}(R_i)
\,
S_\theta(x,y_i).
\end{equation}
Here \(\widetilde R_{\hat\eta_m}\) is the tail-shaped reward defined in
\eqref{eq:tail_shaped_reward}.

To analyze the finite guarantees of \(\widehat g_m^{\mathrm{dir}}(\theta;x)\), we state two standard regularity assumptions.

\begin{assumption}
\label{assum:bounded_score}
There exists a constant \(G<\infty\) such that
\(
\|S_\theta(x,y)\|\le G
\)
for all prompts \(x\), responses \(y\), and policy parameters \(\theta\).
Moreover, writing
\(
K_\theta(x)
:=
\mathrm{KL}\bigl(
\curpolicy(\cdot\mid x)\|\refpolicy(\cdot\mid x)
\bigr),
\) 
then the first and second \(\theta\)-derivatives of
\(\mu_{\theta}(x)\), \(\sigma_{\theta}(x)\), and \(K_\theta(x)\) are uniformly
bounded by \(G\) over \(x\) and \(\theta\). \looseness=-1
\end{assumption}
In practice, we usually clip the policy score functions. So this assumption is not restrictive. 
\begin{assumption}
\label{assum:nondegenerate_scale}
There exists \(\sigma_{\min}>0\) such that
\(
\sigma_\theta(x)\ge \sigma_{\min}
\)
uniformly over \(x\) and \(\theta\).
\end{assumption}
For the theoretical guarantees, we take
\(
\varepsilon_\sigma
:=
\frac12\sqrt{\delta_\alpha}\,\sigma_{\min},
\)
where \(\delta_\alpha\) is defined in \Cref{lem:VN_tail_surrogate}.  We then have the following bias variance bounds for $\widehat g_m^{\mathrm{dir}}(\theta;x)$.

\begin{theorem}
\label{thm:direct_plugin_estimator}
Suppose Assumptions~\ref{assum:gaussian-tail-model},
\ref{assum:bounded_score}, and \ref{assum:nondegenerate_scale} hold. Let
\(\widehat g_m^{\mathrm{dir}}(\theta;x)\) be defined by
\eqref{eq:direct_gradient_estimator}, with the empirical tail vector
\(\hat\eta_m\) in \eqref{eq:empirical_tail_vector}. Then there
exists a constant \(C>0\) such that, for all sufficiently large \(m\),
\begin{equation}
\label{eq:direct_bias_bound}
\bigl\|
\mathbb E[\widehat g_m^{\mathrm{dir}}(\theta;x)]
-
g_\theta(x)
\bigr\|
\le
C\frac{\sqrt{\log N}}{m}, \qquad
\mathbb E\Big[
\bigl\|
\widehat g_m^{\mathrm{dir}}(\theta;x)
-
\mathbb E[\widehat g_m^{\mathrm{dir}}(\theta;x)]
\bigr\|^2
\Big]
\le
C\frac{\log N}{m}.
\end{equation}
The constant \(C\) depends only on \((\alpha,G,\sigma_{\min},M_R)\), uniformly
over \(x\) and \(\theta\).
\end{theorem}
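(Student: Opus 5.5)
Our plan is to view \(\widehat g_m^{\mathrm{dir}}\) as a smooth functional of the empirical measure and expand it around the population tail vector \(\eta:=\eta_{\theta,\alpha}(x)\). Since \(\widetilde R_\eta(u)\) is quadratic in \(u\) with coefficients smooth in \(\eta\) on \(\{\sigma\ge\varepsilon_\sigma\}\), we write
\[
\widehat g_m^{\mathrm{dir}}=\frac1{\alpha m}\sum_{i}\Ind{R_i\ge\hat r_m}\bigl(a_0(\hat\eta_m)+a_1(\hat\eta_m)R_i+a_2(\hat\eta_m)R_i^2\bigr)S_i ,
\]
with \(a_2=\widetilde c_N/(2\hat\sigma_m)\). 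This reduces everything to the joint behaviour of \(\hat\eta_m\) and the tail-trimmed vector averages \(T_k:=\frac1{\alpha m}\sum_i\Ind{R_i\ge\hat r_m}R_i^kS_i\), \(k=0,1,2\). Note that \(c_N\asymp\sqrt{2\log N}\), so \(\widetilde c_N=O(\sqrt{\log N})\). This is the only \(N\)-dependence, and it explains the \(\sqrt{\log N}\) in the bias and the \(\log N\) in the variance.

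\emph{Step 1 (order-statistic/L-statistic linearization).} Under \Cref{assum:gaussian-tail-model}, the density at \(r_{\theta,\alpha}\) is Gaussian with \(\sigma_\theta\ge\sigma_{\min}\). It is therefore bounded below by a constant depending on \(\alpha,\sigma_{\min}\), and bounded above as well, since \(r_{\theta,\alpha}\) lies at standardized level \(z_\alpha\). Standard Bahadur-type representations then give
\[
\hat r_m-r=\frac{\alpha-\frac1m\sum_i\Ind{R_i\ge r}}{p(r)}+O_P(m^{-3/4}).
\]
Here \(\mathbb E[\hat r_m-r]=O(1/m)\) and \(\mathbb E|\hat r_m-r|^2=O(1/m)\), with the \(q_m=\lceil\alpha m\rceil\) rounding contributing \(O(1/m)\). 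For trimmed sums \(\frac1m\sum_i\Ind{R_i\ge\hat r_m}h(R_i,y_i)\) with \(h\in\{R^kS\}\), the derivative with respect to the threshold is \(-p(r)\,\mathbb E[h\mid R=r]\). The resulting first-order influence function is \(\psi_h=(h-h(r))\Ind{R\ge r}\), up to centering. A second-order expansion yields remainders of mean \(O(1/m)\). We use the fourth-moment bound \(M_R\) and \(\|S\|\le G\) to control the quadratic terms (\(R^2S\) requires fourth moments for the variance). We also use the \(\max\) truncation at \(\varepsilon_\sigma=\tfrac12\sqrt{\delta_\alpha}\sigma_{\min}\), which sits below \(\sigma_{\theta,\alpha}=\sqrt{\delta_\alpha}\sigma_\theta\). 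This keeps \(1/\hat\sigma_m\) bounded, and a Hoeffding/Bernstein bound on the event \(\{\hat\sigma_m<\sigma_{\theta,\alpha}/2\}\) makes its contribution exponentially small.

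\emph{Step 2 (cancellation of first-order terms).} We write \(\widehat g_m^{\mathrm{dir}}=\Gamma(\hat\eta_m,\widehat P_m)\), where \(\Gamma(\eta',P)=\frac1\alpha\mathbb E_P[\Ind{R\ge r'}\widetilde R_{\eta'}(R)S]\) and \(g_\theta(x)=\Gamma(\eta,P)\). The key structural fact is that \(\widetilde R_\eta\) vanishes at \(u=r\). As a result, the derivative of \(\Gamma\) in \(r'\) through the indicator is \(-p(r)\widetilde R_\eta(r)\mathbb E[S\mid R=r]=0\). The remaining first-order dependence on \(\eta'\) is linear in \(\hat\eta_m-\eta\). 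Its expectation is \(O(1/m)\) by Step~1, since the estimators are asymptotically unbiased at order \(1/m\). Consequently,
\[
\mathbb E\widehat g_m^{\mathrm{dir}}-g_\theta=\underbrace{\mathbb E[\text{linear terms}]}_{O(1/m)}+\underbrace{\mathbb E[\text{quadratic remainders}]}_{O(\mathbb E\|\hat\eta_m-\eta\|^2)=O(1/m)}.
\]
Each term carries at most a factor \(1+\widetilde c_N/\sigma\), which gives \(C\sqrt{\log N}/m\).

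\emph{Step 3 (variance).} The dominant fluctuation is the linear term \(\frac1m\sum_i\Psi(R_i,y_i)\), whose influence function satisfies \(\|\Psi\|\lesssim(1+\widetilde c_N)(1+R^2)G\). Its second moment is therefore \(O(\log N)\cdot M_R\), giving \(\mathrm{Var}=O(\log N/m)\). The remainders contribute \(O(\log N/m^{3/2})\).

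\emph{Main obstacle.} The main difficulty is making the second-order expansion of the trimmed (L-statistic) terms rigorous in \(L^1\) and \(L^2\), rather than just in probability, with only four moments and with the data-dependent threshold sitting inside an indicator multiplied by an unbounded quadratic. We would first handle this by conditioning on \(\hat r_m\): given the \(q_m\)-th order statistic, the top \(q_m-1\) samples are i.i.d.\ from the conditional tail law above it, and that law is exactly Gaussian-truncated. This gives exact formulas for the conditional means of \(T_k\) as smooth functions of \(\hat r_m\). Taylor expansion in \(\hat r_m\) then uses only beta-distribution moments of \(F(\hat r_m)\), and a truncation argument on a high-probability event handles the tails.
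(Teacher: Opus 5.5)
Your argument is correct in outline, but it handles the central difficulty differently from the paper. That difficulty is that \(\hat\eta_m\) and the score average are computed from the same rollouts.

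\emph{The paper's route.} It splits the bias as \(\mathbb E[(P_m-P)\varphi_{\hat\eta_m}]+\mathbb E[H(\hat\eta_m)-H(\eta)]\), with \(H(\eta')=P\varphi_{\eta'}\).
\begin{itemize}
\item The second term is handled by a second-order expansion of \(H\). \(H\) is differentiable precisely because \(\widetilde R_{\eta'}(r')=0\) kills the moving-boundary term, which is your Step~2 observation. This is combined with the \(O(1/m)\) bias and MSE of \(\hat\eta_m\), which the paper obtains by conditioning on the threshold order statistic, as you suggest.
\item The same-batch term, and the plug-in part of the variance, are handled by leave-one-out. The vector \(\hat\eta_m^{(-i)}\) is independent of \(y_i\), and \(\|\hat\eta_m-\hat\eta_m^{(-i)}\|=O(1/m)\) in weighted \(L^1\) and \(L^2\), via order-statistic interlacing and spacing envelopes (\Cref{lem:leave_one_out_tail_stability,lem:same_batch_plugin_bounds}).
\end{itemize}

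\emph{Your route.} You condition on \(B_m=F(\hat r_m)\) for the \emph{whole} estimator. Since \(\widetilde R_{\hat\eta_m}(\hat r_m)=0\), the threshold sample drops out. Given \(B_m=b\), on an exponentially likely event where \(b\) is near \(1-\alpha\) and the clipping is inactive, the estimator is a smooth function of averages over \(q_m-1\) conditionally i.i.d.\ draws of \(y\) given \(R(y)>Q(b)\). A conditional delta method gives \(O(\sqrt{\log N}/m)\) conditional bias and \(O(\log N/m)\) conditional variance. The Beta law of \(B_m\), plus a \(C^{1,1}\) bound (constants of order \(\sqrt{\log N}\)) on the leading part of \(b\mapsto\mathbb E[\widehat g_m^{\mathrm{dir}}\mid B_m=b]\), then finishes both claims; the variance follows from the law of total variance.

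\emph{What each buys.} Your route dispenses with the leave-one-out apparatus and reuses conditioning machinery the paper builds anyway. The paper's split is more modular: it isolates the same-batch term as exactly what cross-fitting removes, it shares the expansion of \(H\) with the fixed-order analysis, and it needs only first-order influence bounds rather than a second-order expansion of the full estimator.

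Two points to fix when you write it out.
\begin{itemize}
\item Step~2 misattributes the coupling term \(\mathbb E[(P_m-P)(\varphi_{\hat\eta_m}-\varphi_\eta)]\). In your parametrization this is the correlation between the coefficients \(a_k(\hat\eta_m)\) and the averages \(T_k\), which are built from the same top samples. It is controlled neither by \(\mathbb E\|\hat\eta_m-\eta\|^2\) nor by the marginal unbiasedness from Step~1. It is \(O(\sqrt{\log N}/m)\) only because, given \(B_m\), it is a covariance of two means of the same \(q_m-1\) i.i.d.\ draws. So your conditioning step must treat the products \(a_k(\hat\eta_m)T_k\), not just the conditional means of \(T_k\).
\item Smoothness of \(b\mapsto\mathbb E[T_k\mid B_m=b]\) involves \(t\mapsto\mathbb E[S\mid R=t]\) at \(t=Q(b)\). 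Either use that \(\mathbb E[S\mid R=t]\,p_{\theta,x}(t)=\nabla_\theta p_{\theta,x}(t)\) is smooth on the Gaussian tail, or expand the combined map, where boundedness by \(G\) suffices.
\end{itemize}
A minor slip: the lower bound on \(p(r)=\phi(z_\alpha)/\sigma_\theta(x)\) comes from \(\sigma_\theta\le\sigma_{\max}(\alpha,M_R)\) (\Cref{lem:uniform_sigma_upper_bound}), not from \(\sigma_{\min}\).
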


The proof is deferred to Appendix~\ref{app:gradient_estimator_proofs}.
\Cref{thm:direct_plugin_estimator} shows that the direct plug-in estimator has
\(1/m\) variance rate, but also a systematic \(1/m\) bias. In stochastic first-order methods, it is well-known that the bias can prevent convergence, and that bias reduction is desirable. We now show how to construct a family of debiased estimators that can cancel the bias to any fixed order in \(1/m\).

\paragraph{Fixed-order debiasing.}
In Appendix~\ref{app:fixed_order_aux}, we construct a cross-fitted plug-in estimator that
admits a finite expansion in inverse powers of the rollout budget. For any fixed
order \(k\), one can cancel the first \(k-1\) terms of this expansion by
combining cross-fitted estimates with compatible prefix sizes and
moment-canceling weights. This can be viewed as a generalized jackknife
correction applied to the finite-rollout bias expansion
\citep{schucany1971bias}. We defer the construction to
Appendix~\ref{app:fixed_order_aux}, and state here only the resulting
finite-sample guarantee.

\begin{theorem}
\label{thm:fixed_order_debiased_gradient}
Suppose Assumptions~\ref{assum:gaussian-tail-model},
\ref{assum:bounded_score}, and \ref{assum:nondegenerate_scale} hold. Suppose
\(\alpha\in(0,1/2)\cap\mathbb Q\). For any fixed integers \(1\le k\le J\), there exists an estimator
\(
\widehat g_{m,k,J}^{\mathrm{fo}}(\theta;x)
\) constructed in Appendix~\ref{app:fixed_order_aux} using $m$ rollouts, such that for all sufficiently large \(m\),
\[
\left\|
\mathbb E[\widehat g_{m,k,J}^{\mathrm{fo}}(\theta;x)]
-
g_\theta(x)
\right\|
\le
C_{k,J}\frac{\sqrt{\log N}}{m^k},
\quad
\mathbb E\!\left[
\left\|
\widehat g_{m,k,J}^{\mathrm{fo}}(\theta;x)
-
\mathbb E[\widehat g_{m,k,J}^{\mathrm{fo}}(\theta;x)]
\right\|^2
\right]
\le
C_{k,J}\frac{\log N}{m}.
\]
The constant \(C_{k,J}\) depends only on
\((\alpha,G,\sigma_{\min},M_R,k,J)\), uniformly over \(x,\theta,m\).
\end{theorem}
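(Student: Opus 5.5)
We construct $\widehat g_{m,k,J}^{\mathrm{fo}}$ as a generalized jackknife (Richardson-type) combination of cross-fitted plug-in estimators computed on prefixes of the rollout group. The bias guarantee then follows from a finite expansion of the cross-fitted bias in powers of $1/n$; the variance guarantee follows from \Cref{thm:direct_plugin_estimator}-type bounds applied to each component.

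\emph{Step 1: the cross-fitted estimator.} For a prefix of size $n$ with $\alpha n\in\mathbb N$ (possible along a lattice of $n$ because $\alpha\in\mathbb Q$), split the prefix into two halves. On one half compute the empirical tail vector $\hat\eta$ as in \eqref{eq:empirical_tail_vector}. On the other half form $\frac{1}{\alpha n'}\sum_i \Ind{R_i\ge \hat r}\widetilde R_{\hat\eta}(R_i)S_\theta(x,y_i)$, and symmetrize over the two roles. Independence of the two halves gives
\[
\mathbb E[\widehat g_n^{\mathrm{cf}}]=\mathbb E\bigl[\Psi(\hat\eta)\bigr],\qquad \Psi(\eta):=\tfrac1\alpha\mathbb E\bigl[\Ind{R\ge r}\widetilde R_\eta(R)S_\theta\bigr],
\]
and $\Psi(\eta_{\theta,\alpha}(x))=g_\theta(x)$ by \Cref{lem:tail_based_policy_gradient}. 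The map $\Psi$ is smooth in $\eta$ near the population value. This holds because the tail density is Gaussian above $r_{\theta,2\alpha}$ by \Cref{assum:gaussian-tail-model}, because $\sigma$ is bounded below through \Cref{assum:nondegenerate_scale} and the choice of $\varepsilon_\sigma$, and because scores are bounded by \Cref{assum:bounded_score}. Each derivative carries at most a factor $\widetilde c_N=O(\sqrt{\log N})$.

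\emph{Step 2: the bias expansion.} Taylor-expand $\Psi$ around $\eta_{\theta,\alpha}$ to order $2k$, so that $\mathbb E\Psi(\hat\eta)-g_\theta=\sum_{j\ge1}\frac{1}{j!}D^j\Psi[\mathbb E(\hat\eta-\eta)^{\otimes j}]+\text{remainder}$. We then need moment expansions
\[
\mathbb E(\hat\eta_n-\eta)^{\otimes j}=\sum_{\ell=\lceil j/2\rceil}^{k-1} a_{j,\ell}\,n^{-\ell}+O(n^{-k})
\]
with $n$-independent coefficients. For the order statistic $\hat r_n$ at an exact lattice quantile $q=\alpha n$, this follows from the Rényi/uniform-spacings representation $\hat r_n=F^{-1}(1-U_{(q)})$. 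Beta moments of $U_{(q)}$ are rational in $n$ with expansions in $1/n$ when $q/n=\alpha$ is held fixed, and $F^{-1}$ is smooth (Gaussian) near $1-\alpha$. The trimmed mean and variance, conditional on $\hat r_n$, are sample means of i.i.d.\ draws from the truncated Gaussian above $\hat r_n$, which again produce polynomial expansions in $1/n$. The fourth-moment bound $M_R$ together with exponential concentration of $U_{(q)}$ controls the contribution of the event where $\hat r_n<r_{\theta,2\alpha}$; that contribution is $O(e^{-cn})$. The remainder is bounded by $C\sqrt{\log N}\,\mathbb E\|\hat\eta-\eta\|^{2k}=O(\sqrt{\log N}\,n^{-k})$. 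Altogether,
\[
\mathbb E\widehat g_n^{\mathrm{cf}}=g_\theta+\sum_{\ell=1}^{k-1}b_\ell n^{-\ell}+O\bigl(\sqrt{\log N}\,n^{-k}\bigr),
\]
with $\|b_\ell\|\lesssim\sqrt{\log N}$. The main obstacle is this step: proving that the expansion holds with coefficients genuinely independent of $n$ along the lattice, uniformly in $x$ and $\theta$. I would attack it first for $\hat r_n$ via Beta-moment asymptotics, then condition on $\hat r_n$ for $(\hat\mu,\hat\sigma)$. The clipping at $\varepsilon_\sigma$ is handled by showing it is inactive except on an exponentially small event.

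\emph{Step 3: cancellation and variance.} Choose $J$ distinct prefix sizes $n_j=\kappa_j m$, with $\kappa_j\in(0,1]$ chosen so that $\alpha n_j\in\mathbb N$ and the halves are integral (round $m$ down to the lattice; the change is $O(1)$). Pick weights $w_j$ solving $\sum_j w_j=1$ and $\sum_j w_j\kappa_j^{-\ell}=0$ for $\ell=1,\dots,k-1$. This Vandermonde system is solvable because $J\ge k$ and the $\kappa_j$ are distinct, and the weights are bounded by a constant depending on $(k,J)$. Set $\widehat g^{\mathrm{fo}}_{m,k,J}=\sum_j w_j\widehat g^{\mathrm{cf}}_{n_j}$. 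Then the terms $b_\ell$ cancel and the bias is $C_{k,J}\sqrt{\log N}/m^k$. For the variance, each $\widehat g^{\mathrm{cf}}_{n_j}$ has variance at most $C\log N/n_j$. This follows by the argument of \Cref{thm:direct_plugin_estimator}: bounded scores, the tail-shaped reward of size $O(\sqrt{\log N})$ times a fourth-moment-controlled polynomial in $R$, and an Efron–Stein bound for the dependence through $\hat\eta$. Cauchy–Schwarz over the fixed number $J$ of terms then gives the $C_{k,J}\log N/m$ bound.
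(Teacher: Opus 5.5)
Your proposal follows essentially the same route as the paper's proof, which uses:
\begin{itemize}
\item cross-fitting, so that the estimator's mean is $\mathbb E[\Psi(\hat\eta)]$ (the paper's $H$);
\item rational $\alpha$, to put all prefixes on a lattice with a common rounding offset $\lceil\alpha n\rceil-\alpha n$;
\item a finite $1/n$ expansion obtained by conditioning on the threshold order statistic, via rational Beta moments and conditionally i.i.d.\ tail sums, with the clipping and the off-tail event exponentially small;
\item Vandermonde cancellation weights, and Cauchy--Schwarz over the $J$ prefixes for the variance.
\end{itemize}
Your differences from the paper are cosmetic: you split each prefix into halves rather than drawing two batches with nested prefixes, you use fixed ratios $\kappa_j$ with exact weights rather than $n$-dependent minimum-norm weights, and you expand directly in $\hat\eta-\eta$ rather than through the reconstruction map $\mathcal G(\Delta_n)$.

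Two details need tightening.

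\emph{The lattice condition is on the wrong sample size.} It must hold for the sample from which $\hat\eta$ is actually computed, which in your construction is the half-prefix of size $n_j/2$. The first-order coefficient depends on the offset through $\mathbb E[B_n]-u_\alpha=(\alpha-\delta_n)/(n+1)$. With only $\alpha n_j\in\mathbb N$ and $n_j$ even, the half-sample offsets can be $0$ for some $j$ and $1/2$ for others. The $n^{-1}$ terms then do not cancel. Taking every $n_j$ to be a multiple of twice the denominator of $\alpha$ fixes this.

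\emph{Smoothness of $\Psi$ in the threshold is not justified by bounded scores.} Expanding $\Psi$ to order $2k$ in the threshold coordinate requires $t\mapsto\mathbb E[S_\theta(x,y)\mid R=t]\,p_{\theta,x}(t)$ to be smooth near $r_{\theta,\alpha}(x)$. Already the second $r$-derivative evaluates this function pointwise. Bounded scores only make the conditional score bounded and measurable. The paper gets smoothness from the likelihood-ratio identity $\mathbb E[S_\theta\mid R=t]\,p_{\theta,x}(t)=\nabla_\theta p_{\theta,x}(t)$. On the Gaussian tail, this equals $p_{\theta,x}(t)$ times a quadratic polynomial in $(t-\mu_\theta(x))/\sigma_\theta(x)$. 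Its coefficients are controlled by the bounds on $\nabla_\theta\mu_\theta(x)$ and $\nabla_\theta\sigma_\theta(x)$ in Assumption~\ref{assum:bounded_score}. You should make this step explicit.
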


Here, \(k\) sets the bias-cancellation order and higher \(J\) reduces the constants in the bias and variance bounds.
Thus fixed-order debiasing improves the direct estimator's bias from \(m^{-1}\)
to \(m^{-k}\), while keeping variance at order \(m^{-1}\), up to
\(k,J\)-dependent constants.

\subsection{Finite-rollout post-training implication}
\label{subsec:finite_rollout_post_training}

The preceding bounds are promptwise finite-sample guarantees for estimating the
gradient of the Gaussian-tail surrogate \(\hatTTSobj\). We now explain how they
translate into a post-training guarantee.

At training step \(t\), sample a prompt batch \(\{x_{t,p}\}_{p=1}^P\) from the prompt distribution \(P_X\).
For each prompt, form either the direct estimator
\(\widehat g_m^{\mathrm{dir}}(\theta_t;x_{t,p})\) or the fixed-order estimator
\(\widehat g_{m,k,J}^{\mathrm{fo}}(\theta_t;x_{t,p})\), both using total
rollout budget \(m\). Define
\begin{equation}
\label{eq:prompt_batch_gradient_estimator}
\widehat G_t
:=
\frac1P
\sum_{p=1}^P
\left[
\widehat g(\theta_t;x_{t,p})
-
\beta\nabla_\theta
\mathrm{KL}\bigl(
\curpolicy(\cdot\mid x_{t,p})
\|\refpolicy(\cdot\mid x_{t,p})
\bigr)
\right],
\end{equation}
where \(\widehat g\) denotes the chosen finite-rollout estimator. We analyze the
first-order ascent update \(\theta_{t+1}=\theta_t+\gamma\widehat G_t\).

\begin{proposition}
\label{prop:finite_rollout_post_training}
Suppose Assumptions~\ref{assum:gaussian-tail-model},
\ref{assum:bounded_score}, and \ref{assum:nondegenerate_scale} hold. Consider
the prompt-batch ascent update \(\theta_{t+1}=\theta_t+\gamma\widehat G_t\), and let
\(
\hatTTSobj^\star:=\sup_\theta \hatTTSobj(\theta).
\)
Then there exist constants \(C,c>0\), depending only on
\((\alpha,G,\sigma_{\min},M_R)\), and in the fixed-order case also on the fixed
pair \((k,J)\), such that for every
\(
0<\gamma\le \frac{c}{1+\beta+\sqrt{\log N}},
\)
and all sufficiently large \(m\),
\begin{equation}
\label{eq:finite_rollout_post_training_stationarity}
\frac1T
\sum_{t=0}^{T-1}
\mathbb E
\Big[
\left\|
\nabla_\theta \hatTTSobj(\theta_t)
\right\|^2
\Big]
\le
C
\Big[
\frac{\hatTTSobj^\star-\hatTTSobj(\theta_0)}{\gamma T}
+
\mathsf B_m
+
\gamma\frac{\log N+\beta^2}{P}
\Big],
\end{equation}
where \(\mathsf B_m=\log N/m^2\) for the direct plug-in estimator
\eqref{eq:direct_gradient_estimator}, and
\(\mathsf B_m=\log N/m^{2k}\) for the fixed-order estimator in
\Cref{thm:fixed_order_debiased_gradient}.
\end{proposition}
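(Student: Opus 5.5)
The plan is the standard biased-SGD analysis for smooth nonconvex ascent, with the bias and variance inputs supplied by \Cref{thm:direct_plugin_estimator} and \Cref{thm:fixed_order_debiased_gradient}. The argument has three steps.

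\textbf{Step 1: smoothness.} We first show that \(\hatTTSobj\) is \(L\)-smooth with \(L\lesssim 1+\beta+\sqrt{\log N}\). Under \Cref{assum:gaussian-tail-model}, the tail statistics are explicit functions of the Gaussian parameters: \(\mu_{\theta,\alpha}=\mu_\theta+\lambda_\alpha\sigma_\theta\) and \(\sigma_{\theta,\alpha}=\sqrt{\delta_\alpha}\,\sigma_\theta\). Hence
\[
\mu_{\theta,\alpha}+\widetilde c_N\sigma_{\theta,\alpha}=\mu_\theta+c_N\sigma_\theta .
\]
By \Cref{assum:bounded_score}, the Hessian of this quantity has norm at most \(G(1+c_N)\), and the KL term contributes at most \(\beta G\). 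Since \(c_N\le\sqrt{2\log N}\), this gives the stated \(L\). The step-size condition \(\gamma\le c/(1+\beta+\sqrt{\log N})\) is then exactly \(\gamma L\le 1/2\) after choosing \(c\) appropriately.

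\textbf{Step 2: one-step ascent inequality.} Write \(\widehat G_t=\nabla\hatTTSobj(\theta_t)+b_t+\xi_t\). Here \(b_t=\mathbb E[\widehat G_t\mid\theta_t]-\nabla\hatTTSobj(\theta_t)\) is the bias and \(\xi_t\) is a conditionally mean-zero noise term. The KL gradient is exact, and prompts are sampled i.i.d.\ from \(P_X\). Averaging the promptwise bias bound over \(x\) therefore gives \(\|b_t\|\le C\sqrt{\log N}/m\) for the direct estimator, or \(C_{k,J}\sqrt{\log N}/m^k\) for the fixed-order one.

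For the noise, the \(P\) prompt terms are conditionally independent. Each has conditional second moment around its mean bounded by the following sum: the rollout variance \(C\log N/m\), plus the across-prompt variance of \(\mathbb E[\widehat g\mid x]\), plus the KL-gradient variance \(\beta^2G^2\). The across-prompt piece is bounded because \(\|g_\theta(x)\|\lesssim G(1+c_N)\) and the bias is bounded. This yields
\[
\mathbb E\bigl[\|\xi_t\|^2\mid\theta_t\bigr]\le C(\log N+\beta^2)/P .
\]

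By \(L\)-smoothness,
\[
\hatTTSobj(\theta_{t+1})\ge \hatTTSobj(\theta_t)+\gamma\langle\nabla,\widehat G_t\rangle-\tfrac{L\gamma^2}{2}\|\widehat G_t\|^2 .
\]
Take conditional expectations, use \(\langle\nabla,b\rangle\ge-\tfrac14\|\nabla\|^2-\|b\|^2\), and use \(\gamma L\le1/2\) to absorb \(\|\nabla\|^2\) and \(\|b\|^2\) from the quadratic term. The result is
\[
\mathbb E\,\hatTTSobj(\theta_{t+1})\ge\mathbb E\,\hatTTSobj(\theta_t)+\tfrac{\gamma}{4}\mathbb E\|\nabla\hatTTSobj(\theta_t)\|^2-C\gamma\|b_t\|^2-C\gamma^2 L\,\mathbb E\|\xi_t\|^2 .
\]

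\textbf{Step 3: telescope.} Sum over \(t\), bound \(\hatTTSobj(\theta_T)\le\hatTTSobj^\star\), and divide by \(\gamma T/4\). This gives three terms:
\begin{itemize}
\item the initialization term \((\hatTTSobj^\star-\hatTTSobj(\theta_0))/(\gamma T)\);
\item the bias term \(\mathsf B_m=\log N/m^2\) or \(\log N/m^{2k}\);
\item the noise term \(\gamma L(\log N+\beta^2)/P\).
\end{itemize}
Since \(L\) is at most a constant times \(1+\beta+\sqrt{\log N}\), the noise term has an extra factor of \(L\) compared with the statement. I would handle this by bounding \(\gamma L\le c\), which reduces the noise term to \(c\,(\log N+\beta^2)/P\). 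If the \(\gamma\)-dependent form in the statement is required, the fallback is to track a sharper smoothness constant in which \(L\) is treated as \(O(1)\) inside \(C\).

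\textbf{Main obstacle.} The delicate step is the uniform constants. In Step 1 this means confirming that the Hessian of \(c_N\sigma_\theta\) scales with \(c_N\) and not worse. In Step 2 it means controlling the across-prompt variance of \(\mathbb E[\widehat g\mid x]\) uniformly in \(\theta\), so that it is \(O(\log N)\); this uses \(\|S_\theta\|\le G\), the fourth-moment bound, and \(\sigma_\theta\ge\sigma_{\min}\) from \Cref{assum:nondegenerate_scale} to control \(\widetilde R_\eta\).
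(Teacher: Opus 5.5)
Your route is the paper's own. It uses smoothness of $\hatTTSobj$ with $L_{\mathrm{opt}}\le C(1+\beta+\sqrt{\log N})$ via the explicit Gaussian parametrization, as in \Cref{lem:post_training_surrogate_regular}. The bias comes from \Cref{thm:direct_plugin_estimator} or \Cref{thm:fixed_order_debiased_gradient} averaged over prompts, and the prompt-plus-rollout variance is of order $(\log N+\beta^2)/P$. Your direct bound on $\|\mathbb E[\widehat g\mid x]\|$ even avoids the paper's superfluous $\mathsf B_m/P$ term. The argument then closes with the smoothness ascent inequality and telescoping. Everything through your one-step inequality is correct. The gap is exactly the one you flag. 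The quadratic term contributes $\tfrac{L\gamma^2}{2}\mathbb E\|\xi_t\|^2$, so after telescoping the noise term is $C\gamma L_{\mathrm{opt}}(\log N+\beta^2)/P$. The step-size condition reduces this only to $C(\log N+\beta^2)/P$, not to the stated $C\gamma(\log N+\beta^2)/P$.

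Your fallback of treating $L$ as $O(1)$ inside $C$ is not available. The constant $C$ may depend only on $(\alpha,G,\sigma_{\min},M_R)$, and on $(k,J)$ in the fixed-order case. By contrast, $L_{\mathrm{opt}}$ can be of order $\beta$ through the KL Hessian and of order $\sqrt{\log N}$ through $c_N\nabla^2\sigma_\theta$.

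The paper's proof performs precisely this absorption without comment. It asserts a per-step loss of $C\gamma^2(\log N+\beta^2)/P$ ``using $\gamma L_{\mathrm{opt}}\le c_0$'', but that condition only yields $Cc_0\gamma(\log N+\beta^2)/P$.

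No sharper bookkeeping can repair this, because the SGD noise floor $\gamma L\sigma^2/P$ is attained on locally quadratic objectives. Take a scalar $\theta$ that only moves a reward-independent Bernoulli coordinate of the response with logit $\theta$. Take two equally likely prompts whose reference logits are $\pm a$. Then:
\begin{itemize}
\item $g_\theta\equiv 0$, and the estimators are exactly unbiased;
\item the curvature of $\hatTTSobj$ at its maximizer $\theta=0$ is $\beta/4$;
\item the prompt noise variance there is $\beta^2a^2/(16P)$;
\item ascent started at $\theta_0=0$ (so the initialization term vanishes) has long-run $\mathbb E\|\nabla\hatTTSobj(\theta_t)\|^2$ of order $\gamma\beta^3a^2/P$ when $P$ is large enough for the linearization to be accurate.
\end{itemize}
This exceeds $C\gamma(\log N+\beta^2)/P$ once $\beta$ is large.

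So what your argument proves, and what the paper's proves when read correctly, is the inequality with $\gamma$ replaced by $\gamma L_{\mathrm{opt}}$. Equivalently, under the step-size condition, the noise term is $C(\log N+\beta^2)/P$. You should state that version rather than the one in the proposition.
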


The proof is deferred to Appendix~\ref{app:proof_finite_rollout_post_training}.
Under an additional gradient-domination condition on \(\hatTTSobj\), the same
argument yields a value guarantee for the exact best-of-\(N\) objective
\(\TTSobj\), up to the Gaussian-tail approximation error; see
\Cref{cor:value_transfer_to_best_of_n} in the appendix.

\begin{remark}
Fixed-order debiasing reduces the systematic finite-rollout bias, but its
constant depends on \((k,J)\). Thus fixed-order debiasing is most useful when
the rollout budget \(m\) and prompt batch \(P\) are large enough for the smaller
bias floor to outweigh the additional constant costs.
Appendix~\ref{app:synthetic_bias_variance_frontier} isolates this frontier in a
controlled Gaussian-tail simulation.
\end{remark}

In the next section, we implement our finite-rollout gradient estimators in a
realistic post-training pipeline and test whether they improve best-of-\(N\)
performance with a small per-prompt rollout budget. \looseness=-1

\section{Experiments}
\label{sec:experiments}
\subsection{Setup and estimators}
\label{subsec:experiments_setup_estimators}

\paragraph{Training interface.}
The estimators in Section~\ref{sec:finite_sample_gradient_estimators} define promptwise updates of the form \looseness=-1
\[
\widehat g_x(A)
=
\frac{1}{m}\sum_{i=1}^m
A_i \nabla_\theta \log \pi_\theta(y_i\mid x).
\]
In experiments, we use this form as the common training interface. For
each prompt \(x\), the current policy samples \(m\) responses, the reward
model assigns rewards \(R_i=R(x,y_i)\), and each method computes a scalar
advantage \(A_i\) from this reward group. We then use \(A_i\) as
stop-gradient advantage weights in the same grouped on-policy training
pipeline with KL regularization. Thus all methods share the rollout
budget, reward model, log-probability computation, KL penalty, optimizer,
and checkpoint selection rule; they differ only in how they construct
\(A_i\).

\paragraph{TEA estimators.}
Section~\ref{sec:finite_sample_gradient_estimators} defines two
finite-rollout gradient estimators. We call the advantage induced by the direct plug-in estimator
\emph{Tail-Extrapolated Advantage} (TEA). The direct plug-in estimator in
Eq.~\eqref{eq:direct_gradient_estimator} already has the score-function
form
\[
\widehat g_m^{\mathrm{dir}}(\theta;x)
=
\frac1m\sum_{i=1}^m
A_i^{\mathrm{raw}}
S_\theta(x,y_i),
\qquad
A_i^{\mathrm{raw}}
=
\frac{1}{\alpha}
\Ind{R_i\ge \hat r_m}
\widetilde R_{\hat\eta_m}(R_i),
\]
where \(\hat\eta_m\) is computed from the same \(m\) rollouts as in
Eq.~\eqref{eq:empirical_tail_vector}. In training, we use a practical stabilized version of the raw estimator:
we take its positive part as a tail score and then center it within the
prompt group
\[
A_i^{\mathrm{TEA}}
=
\bigl(A_i^{\mathrm{raw}}\bigr)_+
-
\frac1m
\sum_{j=1}^m
\bigl(A_j^{\mathrm{raw}}\bigr)_+ .
\]

\emph{Prefix-TEA} denotes the advantage obtained from the fixed-order
estimator in \Cref{thm:fixed_order_debiased_gradient}. In the main
experiments, we use the \((k,J)=(2,4)\) instance and apply the same
positive-part and group-centering steps as in TEA. The implementation details are given in
Appendix~\ref{app:estimator_baseline_definitions}. Unless otherwise stated, both TEA and Prefix-TEA use \(\alpha=0.25\) and
\(N_{\mathrm{target}}=128\), which controls the tail-extrapolation
curvature in the finite-rollout estimator.

\paragraph{Baselines.}
We compare TEA and Prefix-TEA with \textbf{\emph{GRPO}} and its normalized version
\textbf{\emph{GRPO-Z}} \citep{shao2024deepseekmath}, as well as test-time-aware training baselines:
\textbf{\emph{BoN-max mean}}, \textbf{\emph{BoN-max second}}, and
\textbf{\emph{BoN mean}} in~\citep{bagirov2025best},
\textbf{\emph{Chow BoN-RL}} in~\citep{chow2024inference}, and
\textbf{\emph{CAT-BoN}}, our implementation of the Best-of-\(N\)
rank-weighted scaling in~\citep{ousherovitch2026compute}.
All baselines are implemented as alternative advantage rules in the same
grouped training interface. Our main comparisons use the matched per-prompt rollout
budget \(m=64\); for test-time-aware baselines that naturally benefit from
larger candidate sets, we also report \(m=128\) compute-unmatched
tests. Full definitions, formulas, and compute-budget
conventions are given in Appendix~\ref{app:estimator_baseline_definitions}.

\paragraph{Main experimental setting.}
Unless otherwise stated, our default setting trains
\modelLlamaOneB on an
UltraFeedback-derived instruction-following split~\citep{cui2023ultrafeedback},
using \rewardSkyworkLlama
as both the training reward model and the primary reward evaluator. We
construct disjoint training, validation, and held-out evaluation prompt
sets. For the comprehensive baseline comparison, we use \emph{core250}, a
pre-specified 250-prompt held-out set on which we evaluate the full
baseline suite. We additionally report results on \emph{core500}, a
larger 500-prompt held-out set containing core250, for larger-scale
confirmation of the key methods.

The default training configuration uses rollout group size \(m=64\),
KL coefficient \(\beta=0.1\), learning rate \(10^{-5}\), and 450 training
steps. Checkpoints are selected using the same validation rule for all
methods. Held-out evaluation samples \(256\) completions per prompt and
reports grouped best-of-\(N\) reward for
\(N\in\{1,2,4,8,16,32,64,128,256\}\). For scaling experiments, we sample \(512\) completions per prompt to evaluate up to \(N=512\). We report confidence intervals by
paired prompt bootstrap. Full split construction, hyperparameters,
validation protocol, and evaluation details are given in
Appendix~\ref{app:training_hyperparameters}. Additional experiments vary the dataset, reward model, and policy
backbone; each setting is introduced in the corresponding robustness
subsection.

\subsection{Main results on UltraFeedback}
\label{subsec:experiments_main_results}

We first evaluate whether TEA improves test-time best-of-\(N\)
performance after training. Figure~\ref{fig:main_ultrafeedback_performance}(a)
shows the grouped best-of-\(N\) frontier on UltraFeedback core250 when training with \(m=64\) rollouts per prompt.
TEA improves over GRPO across \(N=1,\ldots,256\), and remains above the
strongest tested test-time-aware baseline throughout the frontier.
Prefix-TEA shows the same trend.

\begin{figure*}[!htbp]
\centering
\begin{subfigure}[t]{0.45\linewidth}
  \centering
  \includegraphics[width=\linewidth]{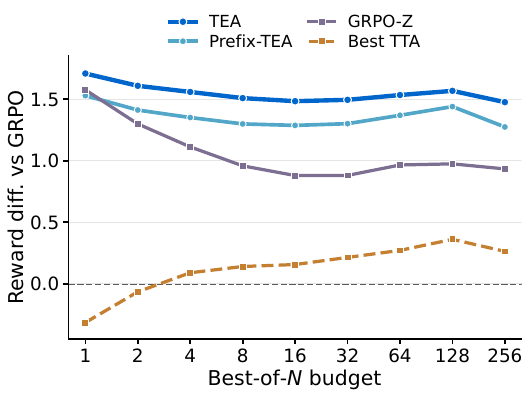}
  \caption{Best-of-\(N\) frontier on UltraFeedback core250.}
  \label{fig:core250_bestofk_curve}
\end{subfigure}
\hfill
\begin{subfigure}[t]{0.45\linewidth}
  \centering
  \includegraphics[width=\linewidth]{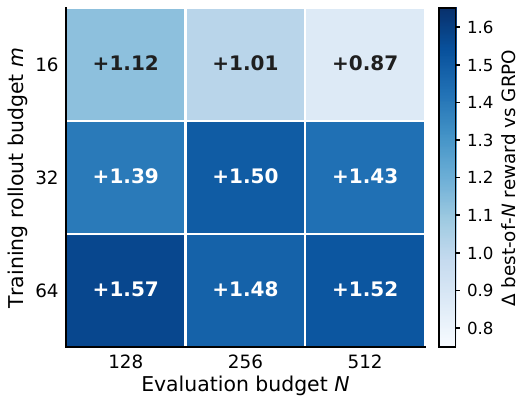}
  \caption{\(m\)-to-\(N\) scaling.}
  \label{fig:m_to_n_scaling}
\end{subfigure}
\caption{
\textbf{UltraFeedback performance and scaling.}
(a) TEA improves the best-of-\(N\) frontier over GRPO and all test-time-aware baselines on UltraFeedback core250.
(b) TEA also improves over matched GRPO across training rollout budgets
\(m\in\{16,32,64\}\) and evaluation budgets
\(N\in\{128,256,512\}\). \looseness=-1
}
\label{fig:main_ultrafeedback_performance}
\end{figure*}

Table~\ref{tab:main_core250_endpoint_summary} quantifies the high-\(N\)
region. TEA and Prefix-TEA substantially outperform GRPO, GRPO-Z, and
the strongest test-time-aware baselines; their paired-bootstrap
confidence intervals against GRPO are positive at all reported
endpoints. TEA also remains well above a compute-unmatched BoN mean
baseline with \(2\times\) per-prompt rollouts during training.

\begin{table}[!htbp]
\centering
\small
\caption{
\textbf{High-\(N\) endpoint results on UltraFeedback core250.}
Values are reward-score improvements over GRPO with paired bootstrap
95\% confidence intervals. Bold point estimates mark the best method in
each column. Best TTA reports the strongest cited test-time-aware
baseline for each column; it is CAT-BoN at bo64 and BoN-max second at
bo128/bo256. BoN mean \(2\times\) uses a compute-unmatched \(m=128\)
per-prompt rollout budget, while all other methods use \(m=64\).
}
\label{tab:main_core250_endpoint_summary}
\begin{tabular}{@{}lccc@{}}
\toprule
Method  & \(\Delta\)bo64 & \(\Delta\)bo128 & \(\Delta\)bo256 \\
\midrule
GRPO-Z  &
\(+0.966\,[+0.757,+1.190]\) &
\(+0.975\,[+0.723,+1.255]\) &
\(+0.933\,[+0.612,+1.297]\) \\
Best TTA &
\(+0.362\,[+0.171,+0.565]\) &
\(+0.361\,[+0.125,+0.607]\) &
\(+0.263\,[-0.053,+0.583]\) \\
BoN mean \(2\times\)  &
\(-0.024\,[-0.209,+0.163]\) &
\(+0.043\,[-0.200,+0.262]\) &
\(+0.016\,[-0.324,+0.319]\) \\
\midrule
\textbf{Prefix-TEA (ours)}  &
\(+1.370\,[+1.147,+1.604]\) &
\(+1.441\,[+1.182,+1.704]\) &
\(+1.275\,[+0.979,+1.598]\) \\
\textbf{TEA (ours)}  &
\(\mathbf{+1.535}\,[+1.309,+1.769]\) &
\(\mathbf{+1.569}\,[+1.305,+1.819]\) &
\(\mathbf{+1.477}\,[+1.177,+1.767]\) \\
\bottomrule
\end{tabular}

\end{table}

\paragraph{Full core250 frontier values.}
Table~\ref{tab:core250_full_baseline_results} reports the full
UltraFeedback core250 baseline comparison underlying
Figure~\ref{fig:core250_bestofk_curve} and
Table~\ref{tab:main_core250_endpoint_summary}. The table includes the
validation-selected checkpoint for each method and the full best-of-\(N\)
frontier from bo1 to bo256.

\begin{table}[!htbp]
\centering
\scriptsize
\caption{
\textbf{Full UltraFeedback core250 baseline comparison.}
The reported metric is the reward-model score. Step represents the chosen
checkpoint step for each method by the validation-based selection rule.
All methods are evaluated with \(N=256\) samples per prompt. BoN mean
\(2\times\) uses a compute-unmatched \(m=128\) rollout budget.
}
\label{tab:core250_full_baseline_results}
\resizebox{\linewidth}{!}{
\begin{tabular}{llrrrrrrrrrr}
\toprule
Method & Budget & Step & bo1 & bo2 & bo4 & bo8 & bo16 & bo32 & bo64 & bo128 & bo256 \\
\midrule
GRPO & \(m=64\) & 100 & -1.129 & 2.226 & 4.719 & 6.731 & 8.398 & 9.803 & 11.033 & 12.149 & 13.273 \\
GRPO-Z & \(m=64\) & 450 & 0.448 & 3.526 & 5.832 & 7.690 & 9.278 & 10.684 & 11.999 & 13.124 & 14.207 \\
TEA & \(m=64\) & 425 & 0.581 & 3.836 & 6.279 & 8.241 & 9.883 & 11.299 & 12.568 & 13.718 & 14.750 \\
Prefix-TEA & \(m=64\) & 450 & 0.401 & 3.639 & 6.071 & 8.031 & 9.686 & 11.105 & 12.403 & 13.590 & 14.548 \\
BoN-max mean & \(m=64\) & 125 & -1.603 & 1.930 & 4.552 & 6.610 & 8.310 & 9.743 & 11.052 & 12.207 & 13.291 \\
BoN-max second & \(m=64\) & 250 & -1.445 & 2.160 & 4.808 & 6.871 & 8.554 & 10.017 & 11.304 & 12.510 & 13.536 \\
BoN mean & \(m=64,k=32\) & 250 & -1.633 & 1.901 & 4.534 & 6.609 & 8.295 & 9.717 & 11.047 & 12.193 & 13.233 \\
BoN mean \(2\times\) & \(m=128,k=64\) & 275 & -1.431 & 2.046 & 4.584 & 6.579 & 8.241 & 9.672 & 11.009 & 12.191 & 13.289 \\
Chow BoN-RL & \(m=64\) & 200 & -1.716 & 1.843 & 4.450 & 6.496 & 8.183 & 9.637 & 10.903 & 12.013 & 13.190 \\
CAT-BoN & \(m=64\) & 425 & -0.800 & 2.593 & 5.095 & 7.083 & 8.757 & 10.158 & 11.395 & 12.482 & 13.484 \\
\bottomrule
\end{tabular}
}
\end{table}

\paragraph{Additional main-result checks.}
We next check whether the headline gains are broad across prompts and
robust to common evaluation artifacts. Table~\ref{tab:main_additional_checks}
summarizes four such checks. First, the gains are not driven by a small
set of prompts: at bo128, TEA beats GRPO on 78.4\% of core250 prompts
and beats the strongest test-time-aware baseline on 70.4\% of prompts.
Second, the reward gains are not explained by longer completions: TEA is
shorter on average than GRPO on the same evaluation records. Third, the
key-method ordering is unchanged at high-\(N\) endpoints on the larger
core500 held-out set. Finally,
using fixed step-450 checkpoints instead of validation-selected
checkpoints still leaves TEA and Prefix-TEA above GRPO and GRPO-Z,
indicating that the main result is not a checkpoint-selection artifact.

\begin{table*}[!htbp]
\centering
\small
\caption{
\textbf{Additional checks for the main UltraFeedback result.}
TEA's gains are broad across prompts, not explained by completion length,
replicate on the larger core500 held-out set, and persist under a fixed
checkpoint comparison.
}
\label{tab:main_additional_checks}
\begin{subtable}[t]{0.62\linewidth}
\centering
\caption{Core250 prompt-level win/tie/loss rates.}
\label{tab:core250_win_tie_loss}
\resizebox{\linewidth}{!}{
\begin{tabular}{lccc}
\toprule
Comparison & bo64 & bo128 & bo256 \\
\midrule
TEA vs GRPO & 80.0/0.8/19.2 & 78.4/0.8/20.8 & 75.2/1.6/23.2 \\
Prefix-TEA vs GRPO & 74.8/0.8/24.4 & 76.4/0.8/22.8 & 69.2/2.0/28.8 \\
TEA vs Best TTA & 75.6/0.8/23.6 & 70.4/0.8/28.8 & 67.2/1.6/31.2 \\
TEA vs GRPO-Z & 63.2/0.0/36.8 & 62.8/1.2/36.0 & 58.4/1.2/40.4 \\
\bottomrule
\end{tabular}
}
\end{subtable}
\hfill
\begin{subtable}[t]{0.32\linewidth}
\centering
\caption{Mean completion length.}
\label{tab:core250_length_check}
\resizebox{\linewidth}{!}{
\begin{tabular}{lrr}
\toprule
Method & Mean length & Source samples \\
\midrule
GRPO & 321.9 & 64000 \\
GRPO-Z & 321.6 & 64000 \\
TEA & 311.4 & 64000 \\
Prefix-TEA & 321.7 & 64000 \\
\bottomrule
\end{tabular}
}
\end{subtable}

\vspace{0.85em}
\begin{subtable}[t]{0.48\linewidth}
\centering
\caption{Core500 confirmation.}
\label{tab:core500_confirmation}
\resizebox{\linewidth}{!}{
\begin{tabular}{lrrrrrrr}
\toprule
Method & bo1 & bo32 & bo64 & bo128 & bo256 & \(\Delta\)bo128 & \(\Delta\)bo256 \\
\midrule
GRPO & -1.172 & 9.850 & 11.179 & 12.334 & 13.400 & 0.000 & 0.000 \\
GRPO-Z & 0.314 & 10.608 & 11.889 & 13.034 & 14.060 & +0.701 & +0.660 \\
Prefix-TEA & 0.298 & 11.107 & 12.409 & 13.575 & 14.618 & +1.242 & +1.218 \\
TEA & 0.467 & 11.193 & 12.504 & 13.675 & 14.708 & +1.342 & +1.307 \\
\bottomrule
\end{tabular}
}
\end{subtable}
\hfill
\begin{subtable}[t]{0.48\linewidth}
\centering
\caption{Fixed step-450 core500 sanity check.}
\label{tab:step450_sanity}
\resizebox{\linewidth}{!}{
\begin{tabular}{lrrrrrrr}
\toprule
Method & bo32 & bo64 & bo128 & bo256 & \(\Delta\)bo128 & \(\Delta\)bo256 & Length \\
\midrule
GRPO & 10.135 & 11.402 & 12.616 & 13.650 & 0.000 & 0.000 & 327.6 \\
GRPO-Z & 10.608 & 11.889 & 13.034 & 14.060 & +0.418 & +0.410 & 325.7 \\
Prefix-TEA & 11.124 & 12.416 & 13.563 & 14.644 & +0.947 & +0.994 & 325.4 \\
TEA & 11.172 & 12.453 & 13.604 & 14.623 & +0.988 & +0.973 & 317.8 \\
\bottomrule
\end{tabular}
}
\end{subtable}
\end{table*}

\FloatBarrier
\subsection{Scaling with the train-time rollout budget}
\label{subsec:experiments_m_to_n_scaling}
We next ask whether TEA remains effective when the train-time rollout
budget changes. Figure~\ref{fig:main_ultrafeedback_performance}(b)
compares TEA with matched GRPO for
\(m\in\{16,32,64\}\), evaluated at
\(N\in\{128,256,512\}\). TEA improves over GRPO in every
\((m,N)\) cell, showing that the gains are not specific to the default
\(m=64\) setting. The improvements are already positive at \(m=16\), become larger for
\(m=32\) and \(m=64\), and persist across all evaluation budgets
\(N\in\{128,256,512\}\).

This pattern is consistent with the intended use case of TEA: even with a
small per-prompt rollout budget during training, modeling the upper tail
can produce gains that transfer to larger inference-time sampling
budgets. The full numerical table makes the budget-mismatch pattern
explicit. As shown in Table~\ref{tab:m_to_n_scaling_full}, TEA improves
matched GRPO in every cell, including the low-budget regime
\(m=16\ll N=512\), where it still improves bo512 by \(+0.867\).
Prompt-level win/tie/loss rates at bo512 also favor TEA for all three
training rollout budgets.

\begin{table*}[!htbp]
\centering
\small
\caption{
\textbf{Full \(m\)-to-\(N\) scaling results on UltraFeedback core250.}
Each row trains matched GRPO and TEA at rollout budget \(m\), then
evaluates best-of-\(N\) endpoints up to \(N=512\). Deltas are TEA minus
GRPO with paired bootstrap 95\% confidence intervals. Reported metrics
are reward-model scores.
}
\label{tab:m_to_n_scaling_full}
\resizebox{\linewidth}{!}{
\begin{tabular}{c rr c rr c rr c c}
\toprule
\(m\) &
GRPO bo128 & TEA bo128 & \(\Delta\)bo128 [95\% CI] &
GRPO bo256 & TEA bo256 & \(\Delta\)bo256 [95\% CI] &
GRPO bo512 & TEA bo512 & \(\Delta\)bo512 [95\% CI] &
Length \(\Delta\) \\
\midrule
16 &
12.129 & 13.254 & \(+1.125\,[+0.896,+1.360]\) &
13.241 & 14.253 & \(+1.012\,[+0.740,+1.315]\) &
14.318 & 15.185 & \(+0.867\,[+0.520,+1.212]\) &
\(-22.9\) \\
32 &
12.466 & 13.857 & \(+1.391\,[+1.025,+1.741]\) &
13.505 & 15.001 & \(+1.496\,[+1.103,+1.885]\) &
14.526 & 15.952 & \(+1.427\,[+0.988,+1.853]\) &
\(-34.4\) \\
64 &
12.149 & 13.718 & \(+1.569\,[+1.307,+1.828]\) &
13.273 & 14.750 & \(+1.477\,[+1.177,+1.768]\) &
14.196 & 15.714 & \(+1.519\,[+1.149,+1.865]\) &
\(-10.5\) \\
\bottomrule
\end{tabular}
}
\end{table*}

\begin{table}[!htbp]
\centering
\small
\caption{
Prompt-level win/tie/loss rates at bo512 for the \(m\)-to-\(N\) scaling
runs. Values are percentages.
}
\label{tab:m_to_n_bo512_wtl}
\begin{tabular}{cc}
\toprule
Train rollout budget \(m\) & bo512 W/T/L \\
\midrule
16 & 59.6/2.4/38.0 \\
32 & 62.0/2.4/35.6 \\
64 & 69.6/1.6/28.8 \\
\bottomrule
\end{tabular}
\end{table}

We also ablate \(N_{\mathrm{target}}\), the parameter controlling the
tail-extrapolation curvature in the finite-rollout estimator.
Table~\ref{tab:target_n_ablation} shows that
\(N_{\mathrm{target}}\) behaves as a tail-emphasis hyperparameter rather
than a literal deployment budget selector. The default
\(N_{\mathrm{target}}=128\) is strongest in this finite-\(m\) setting;
larger values remain positive but are weaker, consistent with a
finite-sample bias--variance tradeoff.

\begin{table*}[!htbp]
\centering
\small
\caption{
\textbf{\(N_{\mathrm{target}}\) ablation at fixed \(m=64\) on UltraFeedback
core250.}
Deltas are relative to the matched GRPO \(m=64\) baseline, with paired
bootstrap 95\% confidence intervals.
}
\label{tab:target_n_ablation}
\resizebox{\linewidth}{!}{
\begin{tabular}{c cccc c}
\toprule
\(N_{\mathrm{target}}\) &
\(\Delta\)bo32 [95\% CI] &
\(\Delta\)bo128 [95\% CI] &
\(\Delta\)bo256 [95\% CI] &
\(\Delta\)bo512 [95\% CI] &
Length \(\Delta\) \\
\midrule
128 &
\(+1.496\,[+1.279,+1.711]\) &
\(+1.569\,[+1.307,+1.828]\) &
\(+1.477\,[+1.177,+1.768]\) &
\(+1.519\,[+1.149,+1.865]\) &
\(-10.5\) \\
256 &
\(+0.723\,[+0.554,+0.883]\) &
\(+0.828\,[+0.588,+1.052]\) &
\(+0.779\,[+0.470,+1.075]\) &
\(+0.840\,[+0.468,+1.213]\) &
\(-11.7\) \\
512 &
\(+0.176\,[+0.030,+0.323]\) &
\(+0.308\,[+0.102,+0.513]\) &
\(+0.286\,[+0.030,+0.546]\) &
\(+0.462\,[+0.103,+0.826]\) &
\(+0.7\) \\
\bottomrule
\end{tabular}
}
\end{table*}

\FloatBarrier
\subsection{Reward-model-independent judge checks}
\label{subsec:experiments_judge_checks}

Because all primary metrics use learned reward models, we additionally
evaluate selected outputs with a position-randomized pairwise LLM judge.
For each comparison, we select one response per method using the same
best-of-\(N\) selection rule as the corresponding reward-model
evaluation, randomly order the two responses, and ask DeepSeek-v4 Pro to
choose the better response while prioritizing correctness, instruction
following, completeness, clarity, and safety rather than length. The
responses are mapped back to method identity after parsing the judge's
JSON object with a winner, confidence, and concise reason.
Table~\ref{tab:deepseek_v4pro_checks} reports the full set of headline
checks. The judge favors TEA in every group: the main UltraFeedback
comparison, all \(m\)-to-\(N\) scaling cells, both policy-backbone
robustness settings, and the dataset/reward-model robustness settings.

\begin{table*}[!htbp]
\centering
\small
\setlength{\tabcolsep}{5pt}
\renewcommand{\arraystretch}{0.95}
\caption{
\textbf{DeepSeek-v4 Pro pairwise judge sanity checks.}
Each row reports randomized A/B comparisons between TEA and the indicated
baseline using selected outputs from the corresponding best-of-\(N\)
evaluation. W--L denotes TEA wins versus baseline wins. Across all
headline checks, the judge favors TEA.
}
\label{tab:deepseek_v4pro_checks}
\begin{tabular}{@{}lllcc@{}}
\toprule
Check & Sel. & Baseline & W--L & Win \\
\midrule
\multicolumn{5}{@{}l}{\emph{Main UltraFeedback}} \\
UF core250 & bo128 & GRPO & 152--98 & 60.8\% \\
UF core250 & bo128 & GRPO-Z & 136--114 & 54.4\% \\
UF core250 & bo128 & Best TTA & 145--105 & 58.0\% \\
\midrule
\multicolumn{5}{@{}l}{\emph{\(m\)-to-\(N\) scaling}} \\
\(m=16\) & bo128 & GRPO & 159--91 & 63.6\% \\
\(m=16\) & bo256 & GRPO & 141--109 & 56.4\% \\
\(m=16\) & bo512 & GRPO & 146--104 & 58.4\% \\
\(m=32\) & bo128 & GRPO & 141--109 & 56.4\% \\
\(m=32\) & bo256 & GRPO & 148--102 & 59.2\% \\
\(m=32\) & bo512 & GRPO & 161--89 & 64.4\% \\
\(m=64\) & bo128 & GRPO & 152--98 & 60.8\% \\
\(m=64\) & bo256 & GRPO & 141--109 & 56.4\% \\
\(m=64\) & bo512 & GRPO & 145--105 & 58.0\% \\
\midrule
\multicolumn{5}{@{}l}{\emph{Policy-backbone robustness}} \\
Llama-3.1-8B LoRA & bo128 & GRPO & 137--113 & 54.8\% \\
Qwen3-4B & bo128 & GRPO & 144--106 & 57.6\% \\
\midrule
\multicolumn{5}{@{}l}{\emph{Dataset / reward-model robustness}} \\
HH-helpful & bo128 & GRPO & 149--101 & 59.6\% \\
Qwen3-RM & bo128 & GRPO & 158--92 & 63.2\% \\
\bottomrule
\end{tabular}
\end{table*}

\FloatBarrier
\subsection{Generalization and robustness}
\label{subsec:experiments_robustness}

\begin{figure*}[!htbp]
\centering
\includegraphics[width=\linewidth]{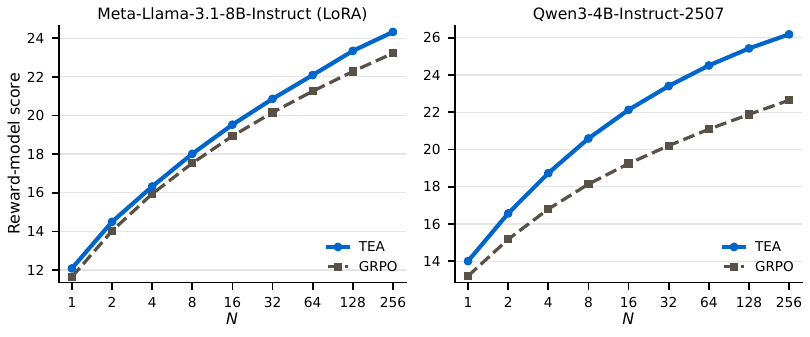}
\caption{
\textbf{Policy-backbone robustness.}
We compare TEA with matched GRPO on two non-default policy backbones.
Left: \modelLlamaEightBMeta, a larger Llama
backbone trained with LoRA and evaluated after merging adapters.
Right: \modelQwenFourB, a different policy family.  TEA
improves the full best-of-\(N\) frontier in both settings.
}
\label{fig:policy_backbone_robustness}
\end{figure*}

We next test whether TEA's improvements transfer across changes to the
dataset, reward model, and policy backbone. For each of these dimensions, we compare TEA with GRPO using the same training interface, while changing only the relevant component.

Table~\ref{tab:dataset_rm_robustness} summarizes dataset and reward-model
robustness. On an HH-helpful split constructed from
\datasetHH~\citep{bai2022training},  TEA gives large reward-score gains over GRPO
across various best-of-\(N\) endpoints. When replacing the training and evaluation reward model with \rewardSkyworkQwen, TEA still improves over GRPO in both average best-of-\(N\) reward and prompt-level win/tie/loss rates.

\begin{table*}[!htbp]
\centering
\small
\caption{
\textbf{Dataset and reward-model robustness.}
Left: TEA improves matched GRPO on an HH-helpful prompt split in
reward-model score. Right: when replacing the training and evaluation
reward model with \rewardSkyworkQwen, TEA still improves over GRPO in both average best-of-\(N\) reward and prompt-level win/tie/loss rates under the new reward model. \looseness=-1
}
\label{tab:dataset_rm_robustness}
\begin{subtable}[t]{0.54\linewidth}
\centering
\caption{HH-helpful dataset robustness.}
\label{tab:hh_helpful_robustness}
\begin{tabular}{rrrr}
\toprule
\(N\) & GRPO & TEA & TEA improvement [95\% CI] \\
\midrule
8   & 10.758 & 12.550 & \(+1.792\,[+1.482,+2.123]\) \\
16  & 12.644 & 14.555 & \(+1.911\,[+1.583,+2.272]\) \\
32  & 14.244 & 16.224 & \(+1.980\,[+1.605,+2.366]\) \\
64  & 15.734 & 17.674 & \(+1.940\,[+1.502,+2.356]\) \\
128 & 17.113 & 18.930 & \(+1.817\,[+1.336,+2.269]\) \\
256 & 18.354 & 20.155 & \(+1.800\,[+1.264,+2.317]\) \\
\bottomrule
\end{tabular}
\end{subtable}
\hfill
\begin{subtable}[t]{0.42\linewidth}
\centering
\caption{Qwen3-8B reward model robustness.}
\label{tab:qwen3_rm_robustness}
\begin{tabular}{rrc}
\toprule
\(N\) & \(\Delta\) RM & TEA vs GRPO W/T/L \\
\midrule
8   & \(+0.196\) & 69.6/0.0/30.4 \\
16  & \(+0.212\) & 67.6/0.0/32.4 \\
32  & \(+0.249\) & 63.6/0.8/35.6 \\
64  & \(+0.292\) & 60.8/0.8/38.4 \\
128 & \(+0.248\) & 58.8/1.6/39.6 \\
256 & \(+0.209\) & 56.0/3.2/40.8 \\
\bottomrule
\end{tabular}
\end{subtable}
\end{table*}

Figure~\ref{fig:policy_backbone_robustness} reports policy-backbone
robustness. We test both a different policy family,
\modelQwenFourB, and a larger Llama backbone,
\modelLlamaEightB. In both settings, we keep
the grouped training interface fixed and compare TEA to matched GRPO. TEA improves the full best-of-\(N\) frontier in both panels, showing that
the gain is not specific to the default Llama-1B backbone. 

The robustness comparisons are also checked with the DeepSeek-v4 Pro
pairwise judge in Table~\ref{tab:deepseek_v4pro_checks}.

\paragraph{HH-helpful dataset robustness.}
For dataset robustness, we train and evaluate matched GRPO and TEA runs on
a helpful-dialogue prompt split constructed from
\datasetHH~\citep{bai2022training}. Specifically, the split has 121{,}079
training prompts, 512 validation prompts, and a deterministic held-out
core250 evaluation set with 250 prompts. Train, validation, and final
evaluation prompt hashes are disjoint. We use the same language model,
reward model and evaluation protocol as in the main UltraFeedback
experiments. Table~\ref{tab:hh_helpful_full_robustness} reports the full
best-of-\(N\) frontier. TEA improves GRPO at every endpoint, and the
confidence intervals exclude zero throughout.

\begin{table}[!htbp]
\centering
\small
\caption{
\textbf{Full HH-helpful dataset robustness results.}
Values are reward-model best-of-\(N\) scores; deltas are TEA minus GRPO
with paired bootstrap 95\% confidence intervals. W/T/L is prompt-level
win/tie/loss percentage.
}
\label{tab:hh_helpful_full_robustness}
\begin{tabular}{rrrrc}
\toprule
\(N\) & GRPO & TEA & \(\Delta\) [95\% CI] & W/T/L \\
\midrule
1   & 2.039  & 3.017  & \(+0.978\,[+0.683,+1.254]\) & 71.6/2.0/26.4 \\
2   & 5.707  & 7.081  & \(+1.375\,[+1.071,+1.687]\) & 76.8/2.0/21.2 \\
4   & 8.485  & 10.120 & \(+1.634\,[+1.321,+1.956]\) & 78.0/2.0/20.0 \\
8   & 10.758 & 12.550 & \(+1.792\,[+1.482,+2.123]\) & 78.4/2.0/19.6 \\
16  & 12.644 & 14.555 & \(+1.911\,[+1.583,+2.272]\) & 77.6/2.0/20.4 \\
32  & 14.244 & 16.224 & \(+1.980\,[+1.605,+2.366]\) & 75.2/2.4/22.4 \\
64  & 15.734 & 17.674 & \(+1.940\,[+1.502,+2.356]\) & 72.8/2.0/25.2 \\
128 & 17.113 & 18.930 & \(+1.817\,[+1.336,+2.269]\) & 67.2/2.4/30.4 \\
256 & 18.354 & 20.155 & \(+1.800\,[+1.264,+2.317]\) & 64.8/3.6/31.6 \\
\bottomrule
\end{tabular}
\end{table}

\paragraph{Qwen3-8B reward-model robustness.}
For reward-model robustness, we replace both the training reward model and
the primary reward evaluator with
\rewardSkyworkQwen, while keeping the same policy backbone and training
protocol as in the main UltraFeedback experiments. Table~\ref{tab:qwen3_rm_full_robustness}
reports the full result. The raw RM-score deltas are smaller because the
new reward model has a different scale, but TEA wins more prompts than
GRPO across all reported endpoints.

\begin{table}[!htbp]
\centering
\small
\caption{
\textbf{Full Qwen3-8B reward-model robustness results.}
TEA and GRPO are trained and evaluated with \rewardSkyworkQwen. Deltas
are TEA minus GRPO reward-model scores with paired bootstrap 95\%
confidence intervals. W/T/L is prompt-level win/tie/loss percentage.
}
\label{tab:qwen3_rm_full_robustness}
\begin{tabular}{rrrrc}
\toprule
\(N\) & GRPO RM & TEA RM & \(\Delta\) [95\% CI] & W/T/L \\
\midrule
1   & -1.013 & -0.735 & \(+0.279\,[+0.146,+0.418]\) & 65.6/0.0/34.4 \\
2   & 1.100  & 1.340  & \(+0.240\,[+0.118,+0.367]\) & 70.0/0.0/30.0 \\
4   & 2.619  & 2.831  & \(+0.212\,[+0.100,+0.325]\) & 70.8/0.0/29.2 \\
8   & 3.764  & 3.960  & \(+0.196\,[+0.097,+0.302]\) & 69.6/0.0/30.4 \\
16  & 4.646  & 4.858  & \(+0.212\,[+0.100,+0.317]\) & 67.6/0.0/32.4 \\
32  & 5.364  & 5.614  & \(+0.249\,[+0.130,+0.362]\) & 63.6/0.8/35.6 \\
64  & 5.992  & 6.284  & \(+0.292\,[+0.164,+0.423]\) & 60.8/0.8/38.4 \\
128 & 6.553  & 6.801  & \(+0.248\,[+0.109,+0.382]\) & 58.8/1.6/39.6 \\
256 & 7.093  & 7.303  & \(+0.209\,[-0.005,+0.405]\) & 56.0/3.2/40.8 \\
\bottomrule
\end{tabular}
\end{table}

\paragraph{Policy-backbone robustness details.}
We evaluate matched GRPO and TEA on two non-default policy backbones:
\modelQwenFourB for model-family transfer and
\modelLlamaEightBMeta for scale transfer within the Llama family. Both
use the UltraFeedback training split, the default \rewardSkyworkLlama
reward model, rollout budget \(m=64\), and TEA hyperparameters
\(\alpha=0.25\), \(N_{\mathrm{target}}=128\). Checkpoints are selected
on the same validation protocol using 128 validation prompts and 32
samples per prompt, and held-out evaluation uses UltraFeedback core250
with \(N=256\) completions per prompt. The Llama-3.1-8B runs use LoRA
training with adapters merged before evaluation. We report the full
Qwen3-4B policy-backbone result in
Table~\ref{tab:qwen4b_policy_robustness} and the Llama-3.1-8B result in
Table~\ref{tab:llama8b_policy_robustness}.

\begin{table}[!htbp]
\centering
\small
\caption{
\textbf{Qwen3-4B policy-backbone robustness.}
Matched GRPO and TEA are trained on UltraFeedback with
\modelQwenFourB and evaluated on core250 with
\(N=256\) completions per prompt. Deltas are TEA minus GRPO reward-model
scores with paired bootstrap 95\% confidence intervals. W/T/L is
prompt-level win/tie/loss percentage.
}
\label{tab:qwen4b_policy_robustness}
\begin{tabular}{rrrrc}
\toprule
\(N\) & GRPO & TEA & \(\Delta\) [95\% CI] & W/T/L \\
\midrule
1   & 13.202 & 14.001 & \(+0.800\,[+0.180,+1.426]\) & 53.6/0.0/46.4 \\
2   & 15.173 & 16.564 & \(+1.392\,[+0.742,+2.066]\) & 59.2/0.0/40.8 \\
4   & 16.797 & 18.727 & \(+1.929\,[+1.275,+2.627]\) & 63.2/0.0/36.8 \\
8   & 18.135 & 20.588 & \(+2.453\,[+1.776,+3.131]\) & 66.8/0.0/33.2 \\
16  & 19.239 & 22.127 & \(+2.888\,[+2.239,+3.579]\) & 71.6/0.0/28.4 \\
32  & 20.202 & 23.406 & \(+3.204\,[+2.537,+3.905]\) & 73.2/0.0/26.8 \\
64  & 21.085 & 24.509 & \(+3.424\,[+2.727,+4.124]\) & 73.6/0.0/26.4 \\
128 & 21.882 & 25.425 & \(+3.543\,[+2.845,+4.265]\) & 72.8/1.2/26.0 \\
256 & 22.642 & 26.184 & \(+3.542\,[+2.813,+4.227]\) & 73.6/0.0/26.4 \\
\bottomrule
\end{tabular}
\end{table}

\begin{table*}[!htbp]
\centering
\small
\caption{
\textbf{Llama-3.1-8B policy-backbone robustness.}
Matched GRPO and TEA are trained with LoRA on UltraFeedback using
\modelLlamaEightBMeta. Left: best-of-\(N\) reward-model scores on
core250. Right: paired bootstrap confidence intervals for TEA minus GRPO
at high-\(N\) endpoints.
}
\label{tab:llama8b_policy_robustness}
\begin{subtable}[t]{0.55\linewidth}
\centering
\caption{Best-of-\(N\) frontier.}
\label{tab:llama8b_policy_robustness_frontier}
\begin{tabular}{rrrr}
\toprule
\(N\) & GRPO & TEA & \(\Delta\) \\
\midrule
1   & 11.640 & 12.081 & +0.441 \\
2   & 14.024 & 14.493 & +0.469 \\
4   & 15.939 & 16.322 & +0.383 \\
8   & 17.529 & 18.014 & +0.485 \\
16  & 18.933 & 19.522 & +0.589 \\
32  & 20.157 & 20.858 & +0.701 \\
64  & 21.255 & 22.090 & +0.835 \\
128 & 22.275 & 23.342 & +1.067 \\
256 & 23.221 & 24.323 & +1.102 \\
\bottomrule
\end{tabular}
\end{subtable}
\hfill
\begin{subtable}[t]{0.40\linewidth}
\centering
\caption{High-\(N\) confidence intervals.}
\label{tab:llama8b_policy_robustness_ci}
\begin{tabular}{cc}
\toprule
Endpoint & TEA - GRPO [95\% CI] \\
\midrule
bo32  & \(+0.701\,[+0.565,+0.832]\) \\
bo64  & \(+0.835\,[+0.659,+1.001]\) \\
bo128 & \(+1.067\,[+0.839,+1.300]\) \\
bo256 & \(+1.103\,[+0.821,+1.384]\) \\
\bottomrule
\end{tabular}
\end{subtable}
\end{table*}

\FloatBarrier
\subsection{Mechanism diagnostics}
\label{subsec:experiments_mechanism}

Finally, we examine why TEA improves best-of-\(N\) performance.
Figure~\ref{fig:mechanism_diagnostics}(a) compares the actual gradients
induced by different training methods with an empirical expected-best-of-128
oracle gradient. TEA is more aligned than GRPO and the strongest
test-time-aware baseline across rollout budgets, supporting the claim
that its finite-rollout advantage better approximates the desired
best-of-\(N\) update. Figure~\ref{fig:mechanism_diagnostics}(b) shows the
corresponding trained-policy effect on the reward distribution: relative
to both the base model and GRPO, TEA shifts probability mass toward
higher-reward completions, with the inset highlighting the high-reward
tail. This suggests that the best-of-\(N\) gains come from a broader
rightward shift of the reward distribution, rather than only from isolated
maximum-reward outliers.

\begin{figure*}[!htbp]
\centering
\begin{subfigure}[t]{0.48\linewidth}
  \centering
  \includegraphics[width=\linewidth]{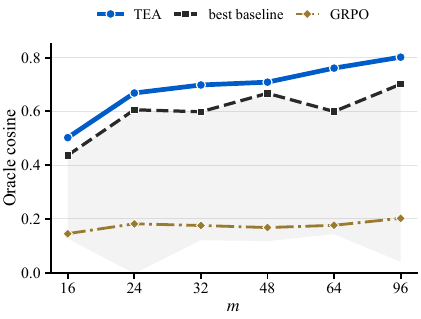}
  \caption{Gradient alignment.}
  \label{fig:mechanism_gradient_alignment}
\end{subfigure}
\hfill
\begin{subfigure}[t]{0.48\linewidth}
  \centering
  \includegraphics[width=\linewidth]{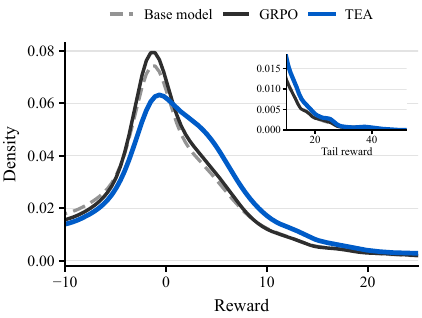}
  \caption{Reward distribution shift.}
  \label{fig:mechanism_reward_distribution}
\end{subfigure}
\caption{
\textbf{Mechanistic diagnostics.}
(a) TEA induces update directions more aligned with an empirical
expected-best-of-128 oracle gradient as the rollout budget \(m\) varies.
(b) After training, TEA shifts the reward distribution to the right
relative to the base model and GRPO; the inset zooms into the high-reward
tail.
}
\label{fig:mechanism_diagnostics}
\end{figure*}

\paragraph{Oracle-gradient alignment diagnostic.}
This diagnostic evaluates whether the advantages produced by different
training rules induce policy-gradient directions aligned with an
expected best-of-\(N\) objective. We use a fixed base-policy sample pool
with 128 prompts and 512 stored completions per prompt, generated by
\modelLlamaOneB and scored by
\rewardSkyworkLlama.

For each prompt, we treat the 512 reward scores as an empirical reward
distribution. For a sample with reward \(r_i\), the expected-best-of-\(N\)
oracle advantage is
\[
A_N^\star(r_i)
=
\mathbb{E}\!\left[\max(r_i,M_{N-1})\right]
-
\mathbb{E}\!\left[M_N\right],
\]
where \(M_N\) denotes the maximum reward among \(N\) independent samples
from the empirical distribution. The corresponding oracle update is
\[
g_N^\star
=
\sum_i
A_N^\star(r_i)
\nabla_\theta \log \pi_\theta(y_i\mid x).
\]
For each training rule, we similarly form
\[
g(A)
=
\sum_i
A_i
\nabla_\theta \log \pi_\theta(y_i\mid x),
\]
where \(A_i\) is the method's training advantage for that rollout
group. We report cosine similarity between \(g(A)\) and \(g_N^\star\).

All gradients are computed on the final transformer block, layer 15,
containing 60,821,504 parameters. The KL/reference term is disabled and
no optimizer step is taken. All results are averaged over 32 prompts and
5 deterministic seeds. Table~\ref{tab:gradient_alignment_mcurve_n128}
gives the full numerical results for the fixed \(N=128\) curve used in
Figure~\ref{fig:mechanism_gradient_alignment}.

\begin{table}[!htbp]
\centering
\caption{
\textbf{Actual-gradient cosine similarity to the empirical expected-best-of-128
oracle gradient as the rollout budget \(m\) varies.}
Values are averaged over 32 prompts and 5 deterministic seeds.
}
\label{tab:gradient_alignment_mcurve_n128}
\begin{tabular}{lrrrrrr}
\toprule
Method & \(m=16\) & \(m=24\) & \(m=32\) & \(m=48\) & \(m=64\) & \(m=96\) \\
\midrule
GRPO & 0.1452 & 0.1818 & 0.1755 & 0.1679 & 0.1763 & 0.2023 \\
TEA & 0.5023 & 0.6683 & 0.6981 & 0.7086 & 0.7608 & 0.8015 \\
BoN-max mean & 0.2874 & 0.4793 & 0.5176 & 0.5366 & 0.5366 & 0.6868 \\
BoN-max second & 0.4366 & 0.6059 & 0.5985 & 0.6671 & 0.5995 & 0.7010 \\
BoN mean & 0.3376 & 0.3842 & 0.4366 & 0.4725 & 0.4930 & 0.6465 \\
Chow BoN-RL & 0.1265 & -0.0032 & 0.1210 & 0.1174 & 0.1431 & 0.0405 \\
CAT-BoN & 0.2980 & 0.4704 & 0.4815 & 0.5437 & 0.5188 & 0.6924 \\
\bottomrule
\end{tabular}
\end{table}

For BoN mean, we use \(k=\lfloor m/2\rfloor\). For Chow BoN-RL, we use a
matched split with \(N_{\mathrm{sel}}=\lfloor m/2\rfloor\) and
\(M_{\mathrm{corr}}=m-N_{\mathrm{sel}}\). CAT-BoN uses the rank-weighted
Best-of-\(N\) scaling rule with \(N_{\mathrm{target}}=128\).

\paragraph{Reward-distribution diagnostic.}
For the reward-distribution panel in
Figure~\ref{fig:mechanism_reward_distribution}, we use existing
UltraFeedback core500 held-out reward records with \(N=256\) completions
per prompt. We compare the base policy, GRPO, and TEA under the same
Skywork-Llama reward evaluator. For each method, we pool all sampled
completion-level reward scores across prompts to estimate the marginal
held-out reward distribution. The main curve is a density-normalized
histogram smoothed with the same one-dimensional Gaussian kernel for all
methods, using a common bin grid and reward range.

Finally, since the estimator is derived from a Gaussian upper-tail
approximation, we check whether this structure is visible in the sampled
reward tails. For each prompt and method, we fit empirical upper-tail
reward quantiles to an affine function of the standard-normal quantile
\[
Q_x(q) = a_x + b_x \Phi^{-1}(q)
\]
over \(q\in[0.80,0.99]\). Table~\ref{tab:gaussian_tail_check_80_99}
shows strong prompt-level QQ linearity: all methods have median
\(R^2\) around \(0.98\), mean \(R^2\) between \(0.962\) and \(0.966\),
and at least \(82\%\) of prompts have \(R^2\ge .95\). This supports the
Gaussian-tail model as a reasonable local approximation to the
high-reward tail used by TEA.

\begin{table}[!htbp]
\centering
\small
\caption{
\textbf{Gaussian QQ upper-tail fit on UltraFeedback core500.}
Fits use \(q\in[0.80,0.99]\) and are computed per prompt; entries
summarize prompt-level \(R^2\) values.
}
\label{tab:gaussian_tail_check_80_99}
\begin{tabular}{lrrrr}
\toprule
Method & Median \(R^2\) & Mean \(R^2\) & p10 \(R^2\) & \(\Pr(R^2\ge .95)\) \\
\midrule
Base model & 0.980 & 0.962 & 0.931 & 82.0\% \\
GRPO & 0.980 & 0.963 & 0.937 & 85.4\% \\
Prefix-TEA & 0.982 & 0.966 & 0.936 & 85.8\% \\
TEA & 0.980 & 0.966 & 0.939 & 86.0\% \\
\bottomrule
\end{tabular}
\end{table}

\FloatBarrier
\section{Discussion}
We studied post-training for language models deployed with test-time strategies such as best-of-\(N\), where performance is governed by the upper tail of the reward distribution rather than the mean reward of a single response. To address the practical budget-mismatch regime \(m\ll N\), we used test-time scaling-law structure to approximate the best-of-\(N\) policy gradient from a small rollout group, leading to Tail-Extrapolated Advantage (TEA) and its fixed-order debiased variant, Prefix-TEA. We provided finite-sample error and post-training guarantees, and showed empirically that these estimators improve best-of-\(N\) performance over standard and test-time-aware baselines across datasets, reward models, and budget settings. 

This work also opens up a number of directions for future research.
\begin{itemize}
  \item We use a Gaussian upper-tail approximation to derive TEA and Prefix-TEA, but other tail structures could be used to construct different extrapolated estimators. In particular, it is interesting to explore methods that adapt to any tail structure, rather than relying on a fixed parametric form. Moreover, it is an important future direction to extend the idea to non-smooth reward distributions, such as zero-one rewards in the pass@\(k\) setting.
  \item It is useful to extend the methods to other test-time strategies beyond best-of-\(N\), such as tree search methods, self-refine procedures, or interaction with external tools. In general, the tail-extrapolation principle applies to any test-time strategy that admits a scaling-law structure, but the specific form of the estimator and the training procedure may need to be adapted to the strategy and its scaling law.
\end{itemize}

\section*{Acknowledgments}
This work was partially supported by NSERC grant RGPIN-2024-05092 and a Connaught New Researcher Award to WM.

\newpage
\bibliographystyle{alpha}
\bibliography{mybib,Formatting_Instructions_For_NeurIPS_2026/neurips_extra}

\begin{thebibliography}{26}
\providecommand{\natexlab}[1]{#1}
\providecommand{\url}[1]{\texttt{#1}}
\expandafter\ifx\csname urlstyle\endcsname\relax
  \providecommand{\doi}[1]{doi: #1}\else
  \providecommand{\doi}{doi: \begingroup \urlstyle{rm}\Url}\fi

\bibitem[Bagirov et~al.(2025)Bagirov, Arkhipov, Sycheva, Glukhov, and
  Bogomolov]{bagirov2025best}
Farid Bagirov, Mikhail Arkhipov, Ksenia Sycheva, Evgeniy Glukhov, and Egor
  Bogomolov.
\newblock The best of n worlds: Aligning reinforcement learning with best-of-n
  sampling via max@ k optimisation.
\newblock \emph{arXiv preprint arXiv:2510.23393}, 2025.

\bibitem[Bai et~al.(2022)Bai, Jones, Ndousse, Askell, Chen, DasSarma, Drain,
  Fort, Ganguli, Henighan, et~al.]{bai2022training}
Yuntao Bai, Andy Jones, Kamal Ndousse, Amanda Askell, Anna Chen, Nova DasSarma,
  Dawn Drain, Stanislav Fort, Deep Ganguli, Tom Henighan, et~al.
\newblock Training a helpful and harmless assistant with reinforcement learning
  from human feedback.
\newblock \emph{arXiv preprint arXiv:2204.05862}, 2022.

\bibitem[Balashankar et~al.(2024)Balashankar, Sun, Berant, Eisenstein, Collins,
  Hutter, Lee, Nagpal, Prost, Sinha, et~al.]{balashankar2024infalign}
Ananth Balashankar, Ziteng Sun, Jonathan Berant, Jacob Eisenstein, Michael
  Collins, Adrian Hutter, Jong Lee, Chirag Nagpal, Flavien Prost, Aradhana
  Sinha, et~al.
\newblock Infalign: Inference-aware language model alignment.
\newblock \emph{arXiv preprint arXiv:2412.19792}, 2024.

\bibitem[Brown et~al.(2024)Brown, Juravsky, Ehrlich, Clark, Le, R{\'e}, and
  Mirhoseini]{brown2024large}
Bradley Brown, Jordan Juravsky, Ryan Ehrlich, Ronald Clark, Quoc~V Le,
  Christopher R{\'e}, and Azalia Mirhoseini.
\newblock Large language monkeys: Scaling inference compute with repeated
  sampling.
\newblock \emph{arXiv preprint arXiv:2407.21787}, 2024.

\bibitem[Chow et~al.(2024)Chow, Tennenholtz, Gur, Zhuang, Dai, Thiagarajan,
  Boutilier, Agarwal, Kumar, and Faust]{chow2024inference}
Yinlam Chow, Guy Tennenholtz, Izzeddin Gur, Vincent Zhuang, Bo~Dai, Sridhar
  Thiagarajan, Craig Boutilier, Rishabh Agarwal, Aviral Kumar, and Aleksandra
  Faust.
\newblock Inference-aware fine-tuning for best-of-n sampling in large language
  models.
\newblock \emph{arXiv preprint arXiv:2412.15287}, 2024.

\bibitem[Cobbe et~al.(2021)Cobbe, Kosaraju, Bavarian, Chen, Jun, Kaiser,
  Plappert, Tworek, Hilton, Nakano, et~al.]{cobbe2021training}
Karl Cobbe, Vineet Kosaraju, Mohammad Bavarian, Mark Chen, Heewoo Jun, Lukasz
  Kaiser, Matthias Plappert, Jerry Tworek, Jacob Hilton, Reiichiro Nakano,
  et~al.
\newblock Training verifiers to solve math word problems.
\newblock \emph{arXiv preprint arXiv:2110.14168}, 2021.

\bibitem[Cui et~al.(2023)Cui, Yuan, Ding, Yao, He, Zhu, Ni, Xie, Xie, Lin,
  et~al.]{cui2023ultrafeedback}
Ganqu Cui, Lifan Yuan, Ning Ding, Guanming Yao, Bingxiang He, Wei Zhu, Yuan Ni,
  Guotong Xie, Ruobing Xie, Yankai Lin, et~al.
\newblock Ultrafeedback: Boosting language models with scaled ai feedback.
\newblock \emph{arXiv preprint arXiv:2310.01377}, 2023.

\bibitem[Gui et~al.(2024)Gui, G{\^a}rbacea, and Veitch]{gui2024bonbon}
Lin Gui, Cristina G{\^a}rbacea, and Victor Veitch.
\newblock Bonbon alignment for large language models and the sweetness of
  best-of-n sampling.
\newblock volume~37, pages 2851--2885, 2024.

\bibitem[Kirk et~al.(2023)Kirk, Mediratta, Nalmpantis, Luketina, Hambro,
  Grefenstette, and Raileanu]{kirk2023understanding}
Robert Kirk, Ishita Mediratta, Christoforos Nalmpantis, Jelena Luketina, Eric
  Hambro, Edward Grefenstette, and Roberta Raileanu.
\newblock Understanding the effects of rlhf on llm generalisation and
  diversity.
\newblock \emph{arXiv preprint arXiv:2310.06452}, 2023.

\bibitem[Li et~al.(2026)Li, Qian, and Mou]{li2026predicting}
Muheng Li, Jian Qian, and Wenlong Mou.
\newblock Predicting and improving test-time scaling laws via reward
  tail-guided search.
\newblock \emph{arXiv preprint arXiv:2602.01485}, 2026.

\bibitem[Novikov et~al.(2025)Novikov, V{\~u}, Eisenberger, Dupont, Huang,
  Wagner, Shirobokov, Kozlovskii, Ruiz, Mehrabian,
  et~al.]{novikov2025alphaevolve}
Alexander Novikov, Ng{\^a}n V{\~u}, Marvin Eisenberger, Emilien Dupont, Po-Sen
  Huang, Adam~Zsolt Wagner, Sergey Shirobokov, Borislav Kozlovskii,
  Francisco~JR Ruiz, Abbas Mehrabian, et~al.
\newblock Alphaevolve: A coding agent for scientific and algorithmic discovery.
\newblock \emph{arXiv preprint arXiv:2506.13131}, 2025.

\bibitem[Ousherovitch and Tewari(2026)]{ousherovitch2026compute}
Adam Ousherovitch and Ambuj Tewari.
\newblock Compute aligned training: Optimizing for test time inference.
\newblock \emph{arXiv preprint arXiv:2604.24957}, 2026.

\bibitem[Ouyang et~al.(2022)Ouyang, Wu, Jiang, Almeida, Wainwright, Mishkin,
  Zhang, Agarwal, Slama, Ray, et~al.]{ouyang2022training}
Long Ouyang, Jeffrey Wu, Xu~Jiang, Diogo Almeida, Carroll Wainwright, Pamela
  Mishkin, Chong Zhang, Sandhini Agarwal, Katarina Slama, Alex Ray, et~al.
\newblock Training language models to follow instructions with human feedback.
\newblock \emph{Advances in neural information processing systems},
  35:\penalty0 27730--27744, 2022.

\bibitem[O’Mahony et~al.(2024)O’Mahony, Grinsztajn, Schoelkopf, and
  Biderman]{omahony2024attributing}
Laura O’Mahony, Leo Grinsztajn, Hailey Schoelkopf, and Stella Biderman.
\newblock Attributing mode collapse in the fine-tuning of large language
  models.
\newblock In \emph{ICLR 2024 Workshop on Mathematical and Empirical
  Understanding of Foundation Models}, volume~2, page~2, 2024.

\bibitem[Rafailov et~al.(2023)Rafailov, Sharma, Mitchell, Manning, Ermon, and
  Finn]{rafailov2023direct}
Rafael Rafailov, Archit Sharma, Eric Mitchell, Christopher~D Manning, Stefano
  Ermon, and Chelsea Finn.
\newblock Direct preference optimization: Your language model is secretly a
  reward model.
\newblock \emph{Advances in neural information processing systems},
  36:\penalty0 53728--53741, 2023.

\bibitem[Razin et~al.(2025)Razin, Wang, Strauss, Wei, Lee, and
  Arora]{razin2025makes}
Noam Razin, Zixuan Wang, Hubert Strauss, Stanley Wei, Jason~D Lee, and Sanjeev
  Arora.
\newblock What makes a reward model a good teacher? an optimization
  perspective.
\newblock \emph{arXiv preprint arXiv:2503.15477}, 2025.

\bibitem[Rosenthal(1970)]{rosenthal1970subspaces}
Haskell~P Rosenthal.
\newblock On the subspaces of l p (p> 2) spanned by sequences of independent
  random variables.
\newblock \emph{Israel Journal of Mathematics}, 8\penalty0 (3):\penalty0
  273--303, 1970.

\bibitem[Schucany et~al.(1971)Schucany, Gray, and Owen]{schucany1971bias}
WR~Schucany, HL~Gray, and DB~Owen.
\newblock On bias reduction in estimation.
\newblock \emph{Journal of the American Statistical Association}, 66\penalty0
  (335):\penalty0 524--533, 1971.

\bibitem[Shao et~al.(2024)Shao, Wang, Zhu, Xu, Song, Bi, Zhang, Zhang, Li, Wu,
  et~al.]{shao2024deepseekmath}
Zhihong Shao, Peiyi Wang, Qihao Zhu, Runxin Xu, Junxiao Song, Xiao Bi, Haowei
  Zhang, Mingchuan Zhang, YK~Li, Yang Wu, et~al.
\newblock Deepseekmath: Pushing the limits of mathematical reasoning in open
  language models.
\newblock \emph{arXiv preprint arXiv:2402.03300}, 2024.

\bibitem[Snell et~al.(2024)Snell, Lee, Xu, and Kumar]{snell2024scaling}
Charlie Snell, Jaehoon Lee, Kelvin Xu, and Aviral Kumar.
\newblock Scaling llm test-time compute optimally can be more effective than
  scaling model parameters.
\newblock \emph{arXiv preprint arXiv:2408.03314}, 2024.

\bibitem[Sun et~al.(2025)Sun, Shen, Wang, Chen, Wang, Zhou, and
  Zhang]{sun2025improving}
Yifan Sun, Jingyan Shen, Yibin Wang, Tianyu Chen, Zhendong Wang, Mingyuan Zhou,
  and Huan Zhang.
\newblock Improving data efficiency for llm reinforcement fine-tuning through
  difficulty-targeted online data selection and rollout replay.
\newblock \emph{arXiv preprint arXiv:2506.05316}, 2025.

\bibitem[Tang et~al.(2025)Tang, Zheng, Synnaeve, and Munos]{tang2025optimizing}
Yunhao Tang, Kunhao Zheng, Gabriel Synnaeve, and R{\'e}mi Munos.
\newblock Optimizing language models for inference time objectives using
  reinforcement learning.
\newblock \emph{arXiv preprint arXiv:2503.19595}, 2025.

\bibitem[Walder and Karkhanis(2025)]{walder2025pass}
Christian Walder and Deep Karkhanis.
\newblock Pass@ k policy optimization: Solving harder reinforcement learning
  problems.
\newblock \emph{arXiv preprint arXiv:2505.15201}, 2025.

\bibitem[Wang et~al.(2022)Wang, Wei, Schuurmans, Le, Chi, Narang, Chowdhery,
  and Zhou]{wang2022self}
Xuezhi Wang, Jason Wei, Dale Schuurmans, Quoc Le, Ed~Chi, Sharan Narang,
  Aakanksha Chowdhery, and Denny Zhou.
\newblock Self-consistency improves chain of thought reasoning in language
  models.
\newblock \emph{arXiv preprint arXiv:2203.11171}, 2022.

\bibitem[Yuksekgonul et~al.(2026)Yuksekgonul, Koceja, Li, Bianchi, McCaleb,
  Wang, Kautz, Choi, Zou, Guestrin, et~al.]{yuksekgonul2026learning}
Mert Yuksekgonul, Daniel Koceja, Xinhao Li, Federico Bianchi, Jed McCaleb,
  Xiaolong Wang, Jan Kautz, Yejin Choi, James Zou, Carlos Guestrin, et~al.
\newblock Learning to discover at test time.
\newblock \emph{arXiv preprint arXiv:2601.16175}, 2026.

\bibitem[Zhang and Zuo(2025)]{zhang2025grpo}
Jixiao Zhang and Chunsheng Zuo.
\newblock Grpo-lead: A difficulty-aware reinforcement learning approach for
  concise mathematical reasoning in language models.
\newblock In \emph{Proceedings of the 2025 Conference on Empirical Methods in
  Natural Language Processing}, pages 5642--5665, 2025.

\end{thebibliography}

\newpage

\appendix

\section{Additional related work}
\label{app:additional_related_work}

\paragraph{Single-response post-training.}
Most LLM post-training methods are designed to improve the quality of a single
sampled response. Reinforcement learning with human feedback (RLHF) fine-tunes a policy against a learned reward model,
typically with a KL penalty to keep the policy close to a reference model
\citep{ouyang2022training}. Preference-optimization methods such as DPO
optimize an equivalent preference objective without explicitly running online RL
\citep{rafailov2023direct}, while grouped policy-gradient methods such as GRPO
improve the efficiency of RL-style post-training by normalizing rewards within a
rollout group \citep{shao2024deepseekmath}. These methods are natural when the
deployed model returns one response, but they do not directly optimize the
multi-candidate behavior used by test-time strategies such as best-of-\(N\).

This distinction is practically important because post-training changes not only
the average quality of responses, but also the distribution of candidate responses at
test time. Empirical studies show that RLHF, instruction tuning, and
reward-based fine-tuning can reduce the diversity of generated candidates
\citep{kirk2023understanding,omahony2024attributing}. Complementary theory
shows that reward variation also matters for optimization: if a reward model
induces too little variance under the current policy, the RLHF objective can
become flat and slow to optimize \citep{razin2025makes}. These works motivate a
distributional view of post-training, but they do not specify how to optimize a
target best-of-\(N\) deployment objective.

\paragraph{Test-time strategies and test-time scaling.}
A separate line of work improves language model performance by spending more compute at
deployment. Verifier-based methods generate many candidate solutions and return
the one ranked highest by a verifier \citep{cobbe2021training}. Self-consistency
samples multiple reasoning paths and aggregates their final answers
\citep{wang2022self}. Repeated-sampling studies show that coverage can
increase predictably with the number of attempts, especially when candidate
solutions can be automatically checked \citep{brown2024large}. More broadly,
test-time compute has become an independent scaling axis: it can be allocated to
search, verifier scoring, adaptive procedures, or even problem-specific
test-time training, and has been used to surpass prior human-engineered or
best-known solutions in discovery-style tasks
\citep{snell2024scaling,yuksekgonul2026learning,novikov2025alphaevolve}.
These results make the budget mismatch especially important. At deployment, one
may spend a large amount of compute on a single hard prompt to find one
exceptional response; during post-training, the same per-prompt budget is
usually infeasible because optimization must cover many prompts.

Best-of-\(N\) is one of the simplest and most widely used instances of this
principle. BoNBoN studies the best-of-\(N\) distribution itself, relates it to
KL-regularized alignment, and trains a model to mimic this distribution so that
the sampling cost can be amortized \citep{gui2024bonbon}. Closest to our
test-time scaling perspective, prior work shows that best-of-\(N\) scaling laws
can be predicted from \(m\ll N\) samples by extrapolating the upper tail of the
reward distribution \citep{li2026predicting}. Our work uses the same high-level
idea for a different purpose: rather than predicting the value of best-of-\(N\)
inference or amortizing its output distribution, we use tail extrapolation to
construct a policy-gradient estimator for improving best-of-\(N\) performance.

\paragraph{Test-time-aware post-training.}
Recent work directly adapts post-training to the deployment-time procedure.
Best-of-\(N\)-aware imitation-learning and RL objectives have been introduced
\citep{chow2024inference}. Generic \(k\)-sample test-time objectives,
including pass@\(k\) and majority vote, have also been optimized
\citep{tang2025optimizing}. Low-variance unbiased estimators have been derived
for pass@\(k\) and its gradient \citep{walder2025pass}, and RL has been aligned
with best-of-\(N\) sampling through max@\(k\) optimization
\citep{bagirov2025best}. More general frameworks also show that alignment
should depend on the test-time rule: InfAlign derives
procedure-dependent reward transformations \citep{balashankar2024infalign}, and
Compute Aligned Training views test-time strategies as operators and derives
corresponding training losses \citep{ousherovitch2026compute}.

These works establish that post-training should account for how the model will
be used at deployment. Our focus is the budget-mismatch regime left open by this
line of work. Existing test-time-aware objectives typically construct their
training signal by assuming access to the same per-prompt rollout budget as the deployment procedure. In contrast,
we study the case where training observes only \(m\ll N\) rollouts per prompt,
so the target best-of-\(N\) objective cannot be directly formed during
post-training. TEA uses the small training group only to estimate local
upper-tail statistics, then extrapolates the corresponding gradient signal
toward the larger best-of-\(N\) deployment objective.

\section{Additional Experimental Details}
\label{app:additional_experimental_details}

\subsection{Estimator and Baseline Definitions}
\label{app:estimator_baseline_definitions}

We define all implemented training methods as alternative advantage rules \(A_i\) for
the same prompt-level rollout group \(\{(y_i,R_i)\}_{i=1}^m\) as in \Cref{sec:experiments}. All
advantages are treated as stop-gradient weights in the grouped training
interface.

\paragraph{TEA (ours).}
For completeness, we spell out the advantage used by our
finite-rollout estimators. For a prompt \(x\), let
\(y_1,\ldots,y_m\sim \pi_\theta(\cdot\mid x)\) be the sampled responses
and \(R_i=R(x,y_i)\) their reward-model scores. As in
\eqref{eq:empirical_tail_vector}, let
\(\hat\eta_m=(\hat r_m,\hat\mu_m,\hat\sigma_m)\) denote the empirical
top-\(\alpha\) tail vector computed from these \(m\) rewards. The direct
plug-in estimator in \eqref{eq:direct_gradient_estimator} can be written
in score-function form with raw advantage
\begin{equation}
\label{eq:app_tea_raw_advantage}
A_i^{\mathrm{raw}}
=
\frac{1}{\alpha}
\Ind{R_i\ge \hat r_m}
\widetilde R_{\hat\eta_m}(R_i).
\end{equation}
In experiments, TEA uses the positive part of this raw advantage and then
centers it within the prompt group:
\begin{equation}
\label{eq:app_tea_advantage}
A_i^{\mathrm{TEA}}
=
\bigl(A_i^{\mathrm{raw}}\bigr)_+
-
\frac1m\sum_{j=1}^m
\bigl(A_j^{\mathrm{raw}}\bigr)_+,
\qquad
(z)_+ := \max\{z,0\}.
\end{equation}

\paragraph{Prefix-TEA (ours).}
Prefix-TEA is the implementation of the fixed-order estimator in
\Cref{thm:fixed_order_debiased_gradient}, with the specific form defined
in \eqref{eq:app_total_budget_fixed_order_estimator}. Let
\((m_1,\ldots,m_J)\) be the prescribed prefix sizes and
\((w_1,\ldots,w_J)\) the corresponding cancellation weights. For each
prefix \(j\), we compute the empirical tail vector
\(\hat\eta_{m_j}^{(j)}\) using only the first \(m_j\) rewards in the
rollout group, and define the prefix raw advantage
\begin{equation}
\label{eq:app_prefix_tea_raw_advantage}
A_{i,j}^{\mathrm{raw}}
=
\Ind{i\le m_j}\,
\frac{1}{\alpha}
\Ind{R_i\ge \hat r_{m_j}^{(j)}}
\widetilde R_{\hat\eta_{m_j}^{(j)}}(R_i).
\end{equation}
To express the weighted prefix gradients in the common full-group
training interface, define \(\rho_j:=m/m_j\). We apply the same
positive-part operation to each prefix raw advantage, combine the prefix
scores with the fixed-order weights and the prefix-to-full-group scaling,
and then center over the full rollout group:
\begin{equation}
\label{eq:app_prefix_tea_advantage}
C_i^{\mathrm{Prefix}}
=
\sum_{j=1}^J
w_j\,\rho_j\,
\bigl(A_{i,j}^{\mathrm{raw}}\bigr)_+,
\qquad
A_i^{\mathrm{Prefix\text{-}TEA}}
=
C_i^{\mathrm{Prefix}}
-
\frac1m\sum_{\ell=1}^m
C_\ell^{\mathrm{Prefix}} .
\end{equation}

In the main experiments we use the Prefix-TEA \((k,J)=(2,4)\) instance.
For the default rollout group size \(m=64\), this corresponds to prefix
sizes
\[
(m_1,m_2,m_3,m_4)=(40,48,56,64)
\]
and cancellation weights
\[
(w_1,w_2,w_3,w_4)
=
(-1.82946,\,-0.15392,\,1.04289,\,1.94050).
\]
The corresponding prefix-to-full-group scaling factors are
\[
(\rho_1,\rho_2,\rho_3,\rho_4)
=
(1.600,\,1.333,\,1.143,\,1.000).
\]
The theoretical fixed-order estimator is stated in a cross-fitted form
for analysis. Our main implementation uses fixed prefixes of the same
rollout group, without a separate fit/evaluation split.

\paragraph{GRPO and GRPO-Z \citep{shao2024deepseekmath}.}
GRPO uses the mean-centered reward advantage
\begin{equation}
\label{eq:app_grpo_advantage}
A_i^{\mathrm{GRPO}}
=
R_i-\bar R,
\qquad
\bar R=\frac1m\sum_{j=1}^m R_j .
\end{equation}
GRPO-Z is the group-normalized variant,
\begin{equation}
\label{eq:app_grpoz_advantage}
A_i^{\mathrm{GRPO\text{-}Z}}
=
\frac{R_i-\bar R}
{\operatorname{std}(R_1,\ldots,R_m)+\varepsilon}.
\end{equation}
We use GRPO-Z as an advantage-scale control.

\paragraph{BoN-max mean and BoN-max second \citep{bagirov2025best}.}
These are selected-output BoN baselines following the BoN-max baselines
listed in \citep{bagirov2025best}. Let
\[
i^\star=\arg\max_i R_i,
\qquad
R^\star=R_{i^\star},
\]
with deterministic tie-breaking, and let \(R^{(2)}\) denote the
second-largest reward in the group. BoN-max mean gives nonzero advantage
only to the reward-selected response:
\begin{equation}
\label{eq:app_bonmax_mean_advantage}
A_i^{\mathrm{BoN\text{-}max\ mean}}
=
\begin{cases}
R^\star-\bar R, & i=i^\star,\\
0, & \text{otherwise}.
\end{cases}
\end{equation}
BoN-max second instead uses the runner-up reward as the baseline:
\begin{equation}
\label{eq:app_bonmax_second_advantage}
A_i^{\mathrm{BoN\text{-}max\ second}}
=
\begin{cases}
R^\star-R^{(2)}, & i=i^\star,\\
0, & \text{otherwise}.
\end{cases}
\end{equation}

\paragraph{BoN mean \citep{bagirov2025best}.}
We use the name BoN mean for the transformed-reward baseline derived
from the max@\(k\) objective in \citep{bagirov2025best}. Let the rewards be sorted in ascending order,
\[
r_{(1)}\le r_{(2)}\le \cdots \le r_{(m)}.
\]
For a target subset size \(k<m\), define the transformed reward at sorted
position \(i\) as
\begin{equation}
\label{eq:app_maxatk_transformed_reward}
B_{(i)}
=
r_{(i)}
\frac{\binom{i-1}{k-1}}{\binom{m}{k}}
+
\sum_{j>i}
r_{(j)}
\frac{\binom{j-2}{k-2}}{\binom{m}{k}} .
\end{equation}
We unsort \(B_{(i)}\) back to the original sample order and use the
group-normalized advantage
\begin{equation}
\label{eq:app_maxatk_advantage}
A_i^{\mathrm{Max@K}}
=
\frac{B_i-\bar B}
{\operatorname{std}(B_1,\ldots,B_m)+\varepsilon}.
\end{equation}
The exact Max@\(k\) transformation requires \(k<m\). In the matched
\(m=64\) setting we use \(k=32\); in the \(m=128\) stress tests we use
\(k=64\).

\paragraph{Chow BoN-RL\citep{chow2024inference}.}
We implement a Monte Carlo estimator of the Lemma~2 BoN-RL gradient in \citep{chow2024inference},
  under the scalar reward-model specialization
\(R=r\) and prompt baseline \(b=0\). For a group of size \(m\), we
randomly split the group into a BoN selection set \(\mathcal S\) of size
\(N_{\mathrm{sel}}\) and an independent correction set \(\mathcal C\) of
size \(M_{\mathrm{corr}}\), with
\(N_{\mathrm{sel}}+M_{\mathrm{corr}}=m\). Let
\[
i^\star=\arg\max_{i\in\mathcal S} R_i,
\qquad
R^\star=R_{i^\star}.
\]
The selected BoN response receives the selected-response term, and
correction samples receive the Lemma~2 correction term:
\begin{equation}
\label{eq:app_chow_advantage}
A_i^{\mathrm{Chow}}
=
m\left[
\Ind{i=i^\star}R^\star
-
\Ind{i\in\mathcal C}\,
\frac{\lambda_{N_{\mathrm{sel}}}}{M_{\mathrm{corr}}}
\Ind{R_i>R^\star}\,R^\star
\right].
\end{equation}
Here \(\lambda_{N_{\mathrm{sel}}}\) is the coefficient appearing in
Chow et al.'s Lemma~2 for a BoN selection set of size
\(N_{\mathrm{sel}}\). The prefactor \(m\) accounts for the fact that our
trainer averages the loss over the \(m\) sampled responses. In the
matched \(m=64\) setting we use \(N_{\mathrm{sel}}=32\) and
\(M_{\mathrm{corr}}=32\); in the \(m=128\) stress tests we use a \(64/64\)
split. 

We should note that in \citep{chow2024inference}, the authors target verifier-selected BoN inference and report
pass@\(K\) evaluations, whereas our setting uses a learned reward
model for instruction-following. We therefore compare the Lemma~2
estimator under a matched rollout budget rather than reproducing their
full experimental setup.

\paragraph{CAT-BoN\citep{ousherovitch2026compute}.}
CAT-BoN follows the Compute Aligned Training principle in
\citep{ousherovitch2026compute}, which reweights gradients by the
marginal utility of a sample under a test-time strategy. For BoN, we use
a rank-based scaling. Let \(\hat F_{<}(R_i)\) be the empirical fraction
of group rewards strictly below \(R_i\), with the same tie-handling rule
used in the implementation. We define
\begin{equation}
\label{eq:app_cat_weight}
w_i
=
N_{\mathrm{target}}\,
\hat F_{<}(R_i)^{N_{\mathrm{target}}-1},
\qquad
\tilde w_i=\frac{w_i}{\frac1m\sum_{j=1}^m w_j+\varepsilon}.
\end{equation}
CAT-BoN applies this normalized rank weight to the group-normalized
reward advantage:
\begin{equation}
\label{eq:app_cat_advantage}
A_i^{\mathrm{CAT}}
=
\tilde w_i\,
\frac{R_i-\bar R}
{\operatorname{std}(R_1,\ldots,R_m)+\varepsilon}.
\end{equation}
Unless otherwise stated, we use \(N_{\mathrm{target}}=128\) for CAT-BoN.

\subsection{Training and Evaluation Details}
\label{app:training_hyperparameters}

\paragraph{UltraFeedback splits.}
We construct fixed, disjoint training, validation, and held-out evaluation
prompt sets from UltraFeedback~\citep{cui2023ultrafeedback}. After
deduplication and removing prompts overlapping the source test split, the
eligible pool contains 37,756 prompts. We use 34,994 prompts for policy
training, 512 prompts as a validation pool, and 2,000 prompts as a
held-out final pool. The main comprehensive baseline comparison uses
\emph{core250}, a deterministic 250-prompt held-out subset. We also use
\emph{core500}, a larger 500-prompt held-out set containing core250, to
confirm the main method ordering on a larger evaluation set. Training,
validation, and held-out evaluation prompts are disjoint.

\paragraph{Checkpoint selection.}
All methods use the same checkpoint-selection rule. Each $25$ training steps,
we evaluate the current checkpoint on the validation pool. Specifically, for each checkpoint,
we sample 32 completions for each of 128 fixed validation prompts and
score them with the reward model. We compute the mean reward of the top
10 completions for each prompt, average this value over prompts, and
select the checkpoint with the highest average. Validation prompts are
used only for checkpoint selection and are never used for held-out
reporting.

\paragraph{Reported best-of-\(N\) reward scores.}
Unless otherwise stated, all reported bo\(N\) values are reward-model
scores under the grouped best-of-\(N\) evaluation protocol. For each
held-out prompt \(x\), we sample \(M\) completions and score them as
\(R_i=R(x,y_i)\). For a reported budget \(N\) dividing \(M\), we partition
the \(M\) completions into \(M/N\) consecutive groups of size \(N\), take
the maximum reward in each group, average these maxima within the prompt,
and then average over prompts. For the main held-out evaluation,
\(M=256\) and \(N\in\{1,2,4,8,16,32,64,128,256\}\). The
\(m\)-to-\(N\) scaling experiments use \(M=512\) and report bo128,
bo256, and bo512.

\paragraph{Uncertainty estimates.}
Confidence intervals are computed using paired prompt bootstrap with
1,000 resamples. Prompt-level win/tie/loss is computed from the same
prompt-level grouped best-of-\(k\) values: a method wins a prompt if its
value is larger than the comparison method, loses if it is smaller, and
ties if the absolute difference is at most $10^{-9}$.

What's more, all reported experiments were run on NVIDIA L40S GPU workers with 48GB GPU
memory per device.

\section{Proofs for problem setup and population gradients}
\label{app:problem-setup-gradient-proofs}

Throughout this section, constants denoted by \(C_{\alpha,M_R}\) may change from
line to line, but depend only on \((\alpha,M_R)\).

\begin{lemma}
\label{lem:uniform_sigma_upper_bound}
Under Assumption~\ref{assum:gaussian-tail-model}, there exists a constant
\(\sigmamax=\sigmamax(\alpha,M_R)<\infty\) such that
\[
\sigma_\theta(x)\le \sigmamax
\]
for all prompts \(x\) and all policy parameters \(\theta\).
\end{lemma}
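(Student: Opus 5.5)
The plan is to bound \(\sigma_\theta(x)\) by the fourth-moment bound \(M_R\), using only the mass that the Gaussian tail model forces into a region of width proportional to \(\sigma\). If \(\sigma=\sigma_\theta(x)\) were very large, the Gaussian density on \([r_{\theta,2\alpha}(x),\infty)\) would be spread thinly. The tail mass \(2\alpha\) would then have to sit far from the origin, which is incompatible with \(\mathbb E[R^4]\le M_R\).

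Concretely, write \(\mu=\mu_\theta(x)\), \(\sigma=\sigma_\theta(x)\), and \(r_2=r_{\theta,2\alpha}(x)\). By Assumption~\ref{assum:gaussian-tail-model}, for every Borel set \(A\subseteq[r_2,\infty)\),
\[
\mathbb P(R\in A)=\int_A\phi(r;\mu,\sigma^2)\,dr\le \frac{|A|}{\sqrt{2\pi}\,\sigma}.
\]
Apply this with \(A=[r_2,\infty)\cap[-t,t]\). Since \(A\) lies in an interval of length \(2t\),
\[
\mathbb P\bigl(R\ge r_2,\ |R|\le t\bigr)\le \frac{2t}{\sqrt{2\pi}\,\sigma}.
\]
On the other hand, \(\mathbb P(R\ge r_2)=2\alpha\). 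Continuity of the density on the tail gives this exactly; in any case it is at least \(2\alpha\). By Markov's inequality, \(\mathbb P(|R|>t)\le M_R/t^4\). Hence
\[
2\alpha\le \frac{2t}{\sqrt{2\pi}\,\sigma}+\frac{M_R}{t^4}.
\]
Now choose \(t=(M_R/\alpha)^{1/4}\), so that the Markov term equals \(\alpha\). Rearranging gives
\[
\sigma\le \frac{2}{\alpha\sqrt{2\pi}}\Bigl(\frac{M_R}{\alpha}\Bigr)^{1/4}=:\sigmamax(\alpha,M_R),
\]
which is uniform in \(x\) and \(\theta\).

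The only step needing care is justifying that the upper \(2\alpha\) tail carries mass exactly (or at least) \(2\alpha\). This amounts to checking that \(r_{\theta,2\alpha}(x)=F^{-1}_{\theta,x}(1-2\alpha)\) is a genuine quantile, with no atom issues. Since the distribution has a Gaussian density on \([r_2,\infty)\), \(F\) is continuous there, and the generalized-inverse definition gives \(F(r_2)\le 1-2\alpha\). Combined with right-continuity and the absence of an atom at \(r_2\), we get \(\mathbb P(R\ge r_2)\ge 2\alpha\), and this lower bound is all the argument uses. Apart from this point the proof is a two-line estimate, and I expect no real obstacle.
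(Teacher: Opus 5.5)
Your argument is correct, and it takes a genuinely different route from the paper. The paper uses the exact Gaussian shape of the tail through the identity \(\kappa_\alpha\sigma^4=\int_q^\infty (r-q)^4p(r)\,dr\), where \(q=r_{\theta,2\alpha}(x)\) and \(\kappa_\alpha=\int_{z_{2\alpha}}^\infty(u-z_{2\alpha})^4\phi(u)\,du\). It bounds the right-hand side by \(8\mathbb E[R^4]+16\alpha q^4\), and then controls \(|q|\) by Markov's inequality applied on whichever side of zero \(q\) lies.

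You instead argue by anti-concentration. The tail density is at most \(1/(\sqrt{2\pi}\sigma)\), so the window \([-t,t]\) can hold at most \(2t/(\sqrt{2\pi}\sigma)\) of the tail mass, while Markov forces all but \(\alpha\) of that tail mass into the window. Your version is shorter and gives an explicit constant, \(\sigma_{\max}=\frac{2}{\alpha\sqrt{2\pi}}(M_R/\alpha)^{1/4}\). It needs neither \(\kappa_\alpha\) nor a bound on the quantile. It is also more robust: it uses only a \(c/\sigma\) sup-norm bound on the tail density and tightness of the reward law (any moment would do), so it carries over unchanged to other location--scale tail models.

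What the paper's route buys is a byproduct, the bound \(|r_{\theta,2\alpha}(x)|\le (M_R/\min\{2\alpha,1-2\alpha\})^{1/4}\). The paper reuses it immediately in \Cref{cor:uniform_tail_bounds} to bound \(\mu_\theta(x)\) and the other tail statistics. With your approach that quantile bound would have to be added separately, though it is the same one-line Markov argument.

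A small remark on your last paragraph. The inequality \(\mathbb P(R\ge r_2)\ge 2\alpha\) follows directly from the generalized-inverse definition. Since \(F(s)<1-2\alpha\) for every \(s<r_2\), you get \(\mathbb P(R<r_2)=F(r_2^-)\le 1-2\alpha\). Neither right-continuity nor absence of an atom is needed for this direction; right-continuity actually gives the opposite inequality \(F(r_2)\ge 1-2\alpha\).
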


\begin{proof}
Fix a prompt \(x\) and parameter \(\theta\). Throughout the proof, probabilities
and expectations are taken with respect to the conditional reward law induced
by this fixed prompt and policy. Write
\[
R:=R_\theta(x),\qquad
p:=p_{\theta,x},\qquad
\mu:=\mu_\theta(x),\qquad
\sigma:=\sigma_\theta(x),\qquad
q:=r_{\theta,2\alpha}(x).
\]
By Assumption~\ref{assum:gaussian-tail-model},
\[
p(r)=\frac{1}{\sigma}\phi\!\left(\frac{r-\mu}{\sigma}\right),
\qquad r\ge q.
\]
Since \(\mathbb P(R\ge q)=2\alpha\), integrating over \([q,\infty)\) gives
\[
q=\mu+\sigma z_{2\alpha},
\qquad
z_{2\alpha}:=\Phi^{-1}(1-2\alpha).
\]

Let
\[
\kappa_\alpha
:=
\int_{z_{2\alpha}}^\infty (u-z_{2\alpha})^4\phi(u)\,du.
\]
Then \(\kappa_\alpha>0\) depends only on \(\alpha\), and the change of variables
\(r=\mu+\sigma u\) gives
\begin{equation}
\label{eq:app_sigma_upper_identity}
\kappa_\alpha\sigma^4
=
\int_q^\infty (r-q)^4p(r)\,dr.
\end{equation}
Using \((a-b)^4\le 8(a^4+b^4)\),
\begin{align}
\int_q^\infty (r-q)^4p(r)\,dr
&\le
8\int_q^\infty r^4p(r)\,dr
+
8q^4\int_q^\infty p(r)\,dr \notag\\
&\le
8\mathbb E[R^4]+16\alpha q^4.
\label{eq:app_sigma_upper_rhs}
\end{align}

It remains to bound \(q\). If \(q>0\), Markov's inequality gives
\[
2\alpha=\mathbb P(R\ge q)
\le
\frac{\mathbb E[R^4]}{q^4}
\le
\frac{M_R}{q^4}.
\]
If \(q<0\), then
\[
1-2\alpha
=
\mathbb P(R\le q)
\le
\mathbb P(|R|\ge |q|)
\le
\frac{\mathbb E[R^4]}{|q|^4}
\le
\frac{M_R}{|q|^4}.
\]
The case \(q=0\) is trivial. Hence
\[
q^4
\le
\frac{M_R}{\min\{2\alpha,1-2\alpha\}}.
\]
Substituting this bound into \eqref{eq:app_sigma_upper_rhs} and using
\(\mathbb E[R^4]\le M_R\), we obtain
\[
\int_q^\infty (r-q)^4p(r)\,dr
\le C_{\alpha,M_R}.
\]
Combining with \eqref{eq:app_sigma_upper_identity},
\[
\sigma^4\le \frac{C_{\alpha,M_R}}{\kappa_\alpha}.
\]
Thus
\[
\sigma\le
\left(\frac{C_{\alpha,M_R}}{\kappa_\alpha}\right)^{1/4}
=:\sigmamax.
\]
Since \(\kappa_\alpha\) depends only on \(\alpha\), the constant \(\sigmamax\)
depends only on \((\alpha,M_R)\). The claim follows.
\end{proof}

\begin{corollary}
\label{cor:uniform_tail_bounds}
Suppose Assumptions~\ref{assum:gaussian-tail-model} and
\ref{assum:nondegenerate_scale} hold. Then there exists a constant
\(C_{\alpha,M_R,\sigma_{\min}}>0\), depending only on
\((\alpha,M_R,\sigma_{\min})\), such that uniformly over all prompts \(x\) and
policy parameters \(\theta\),
\[
|r_{\theta,2\alpha}(x)|
+
|\mu_\theta(x)|
+
\sigma_\theta(x)
+
|r_{\theta,\alpha}(x)|
+
|\mu_{\theta,\alpha}(x)|
+
\sigma_{\theta,\alpha}(x)
\le
C_{\alpha,M_R,\sigma_{\min}}.
\]
Moreover,
\[
\sigma_\theta(x)\ge \sigma_{\min},
\qquad
\sigma_{\theta,\alpha}(x)\ge \rho_\alpha\sigma_{\min},
\]
where
\[
z_\alpha:=\Phi^{-1}(1-\alpha),
\qquad
\lambda_\alpha:=\frac{\phi(z_\alpha)}{1-\Phi(z_\alpha)}
=
\frac{\phi(z_\alpha)}{\alpha},
\qquad
\rho_\alpha
:=
\sqrt{1+z_\alpha\lambda_\alpha-\lambda_\alpha^2}.
\]
Finally, with
\[
z_{2\alpha}:=\Phi^{-1}(1-2\alpha),
\]
we have
\[
r_{\theta,\alpha}(x)-r_{\theta,2\alpha}(x)
\ge
\sigma_{\min}(z_\alpha-z_{2\alpha})>0.
\]
\end{corollary}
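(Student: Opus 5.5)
The plan is to reduce everything to the Gaussian parametrization of the upper $2\alpha$ tail and then use Lemma~\ref{lem:uniform_sigma_upper_bound} together with Assumption~\ref{assum:nondegenerate_scale}. Fix $x,\theta$ and write $\mu=\mu_\theta(x)$, $\sigma=\sigma_\theta(x)$. As in the proof of Lemma~\ref{lem:uniform_sigma_upper_bound}, integrating the Gaussian density over $[q,\infty)$ gives $r_{\theta,2\alpha}(x)=\mu+\sigma z_{2\alpha}$. The same computation applied to the $\alpha$-tail, which lies inside the Gaussian region because $\alpha<2\alpha$, gives $r_{\theta,\alpha}(x)=\mu+\sigma z_\alpha$.

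Next we use the standard truncated-normal formulas. Conditional on $R\ge\mu+\sigma z_\alpha$, the variable $(R-\mu)/\sigma$ is a standard normal truncated to $[z_\alpha,\infty)$. Its mean is the inverse Mills ratio $\lambda_\alpha=\phi(z_\alpha)/\alpha$, and its variance is $1+z_\alpha\lambda_\alpha-\lambda_\alpha^2=\rho_\alpha^2$. Hence
\[
\mu_{\theta,\alpha}(x)=\mu+\sigma\lambda_\alpha,\qquad
\sigma_{\theta,\alpha}(x)=\rho_\alpha\sigma .
\]
We have $\rho_\alpha>0$ because the truncated variable is nondegenerate. The two lower bounds follow at once: $\sigma\ge\sigma_{\min}$ by Assumption~\ref{assum:nondegenerate_scale}, so $\sigma_{\theta,\alpha}(x)\ge\rho_\alpha\sigma_{\min}$. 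The gap identity $r_{\theta,\alpha}-r_{\theta,2\alpha}=\sigma(z_\alpha-z_{2\alpha})$ then yields the last claim, and $z_\alpha>z_{2\alpha}$ because $\Phi^{-1}$ is strictly increasing and $1-\alpha>1-2\alpha$.

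For the upper bounds, Lemma~\ref{lem:uniform_sigma_upper_bound} gives $\sigma\le\sigmamax$. The proof of that lemma already shows $|q|=|r_{\theta,2\alpha}(x)|\le (M_R/\min\{2\alpha,1-2\alpha\})^{1/4}$. From these two bounds we get $|\mu|\le|q|+\sigmamax|z_{2\alpha}|$. Each of $r_{\theta,\alpha}$, $\mu_{\theta,\alpha}$, $\sigma_{\theta,\alpha}$ is an affine combination of $\mu$ and $\sigma$ with coefficients depending only on $\alpha$, so all six quantities are bounded by a constant depending on $(\alpha,M_R)$. The dependence on $\sigma_{\min}$ allowed in the statement is harmless.

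The only step that needs some care is justifying that the $\alpha$-quantile and the conditional moments are computed entirely inside the Gaussian region. This holds because $\mathbb P(R\ge r_{\theta,2\alpha})=2\alpha>\alpha$, so $r_{\theta,\alpha}\ge r_{\theta,2\alpha}$, and on $[r_{\theta,2\alpha},\infty)$ the density is exactly the Gaussian one.
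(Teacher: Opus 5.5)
Your proposal is correct and follows the paper's proof essentially step for step. Both arguments use the Gaussian calibration $r_{\theta,2\alpha}(x)=\mu_\theta(x)+\sigma_\theta(x)z_{2\alpha}$, then the Markov bound on $|r_{\theta,2\alpha}(x)|$ together with $\sigma_\theta(x)\le\sigma_{\max}$ to bound $\mu_\theta(x)$, and then the truncated-normal identities for $(r_{\theta,\alpha},\mu_{\theta,\alpha},\sigma_{\theta,\alpha})$ to get the remaining upper bounds, the lower bound $\rho_\alpha\sigma_{\min}$, and the quantile gap. Your explicit check that the $\alpha$-tail lies inside the modeled Gaussian region, and your remark that the upper bounds do not actually need $\sigma_{\min}$, are both accurate; the paper leaves the first implicit.
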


\begin{proof}
Fix \(x\) and \(\theta\), and write
\[
q_{2\alpha}:=r_{\theta,2\alpha}(x),
\qquad
z_{2\alpha}:=\Phi^{-1}(1-2\alpha).
\]
By the Gaussian-tail calibration in
Assumption~\ref{assum:gaussian-tail-model},
\[
q_{2\alpha}
=
\mu_\theta(x)+\sigma_\theta(x)z_{2\alpha}.
\]
The same Markov argument used in the proof of
\Cref{lem:uniform_sigma_upper_bound} gives
\[
|q_{2\alpha}|
\le
\left(
\frac{M_R}{\min\{2\alpha,1-2\alpha\}}
\right)^{1/4}.
\]
By \Cref{lem:uniform_sigma_upper_bound},
\[
\sigma_\theta(x)\le \sigmamax(\alpha,M_R).
\]
Since
\[
\mu_\theta(x)
=
q_{2\alpha}-\sigma_\theta(x)z_{2\alpha},
\]
we obtain
\[
|q_{2\alpha}|+|\mu_\theta(x)|+\sigma_\theta(x)
\le
C_{\alpha,M_R}.
\]
The lower bound
\[
\sigma_\theta(x)\ge\sigma_{\min}
\]
is exactly Assumption~\ref{assum:nondegenerate_scale}.

The remaining bounds follow from the truncated-normal identities. With
\[
z_\alpha:=\Phi^{-1}(1-\alpha),
\qquad
\lambda_\alpha:=\frac{\phi(z_\alpha)}{1-\Phi(z_\alpha)},
\qquad
\rho_\alpha
:=
\sqrt{1+z_\alpha\lambda_\alpha-\lambda_\alpha^2},
\]
the top-\(\alpha\) threshold, mean, and scale satisfy
\[
r_{\theta,\alpha}(x)
=
\mu_\theta(x)+\sigma_\theta(x)z_\alpha,
\]
\[
\mu_{\theta,\alpha}(x)
=
\mu_\theta(x)+\sigma_\theta(x)\lambda_\alpha,
\]
and
\[
\sigma_{\theta,\alpha}(x)
=
\rho_\alpha\sigma_\theta(x).
\]
Using the already established bounds on
\(|\mu_\theta(x)|\) and \(\sigma_\theta(x)\), we get
\[
|r_{\theta,\alpha}(x)|
+
|\mu_{\theta,\alpha}(x)|
+
\sigma_{\theta,\alpha}(x)
\le
C_{\alpha,M_R,\sigma_{\min}}.
\]
Also,
\[
\sigma_{\theta,\alpha}(x)
=
\rho_\alpha\sigma_\theta(x)
\ge
\rho_\alpha\sigma_{\min}.
\]
Finally,
\[
r_{\theta,\alpha}(x)-r_{\theta,2\alpha}(x)
=
\sigma_\theta(x)(z_\alpha-z_{2\alpha})
\ge
\sigma_{\min}(z_\alpha-z_{2\alpha}).
\]
Since \(\alpha<1/2\), we have
\[
1-\alpha>1-2\alpha,
\qquad
z_\alpha>z_{2\alpha},
\]
and hence
\[
\sigma_{\min}(z_\alpha-z_{2\alpha})>0.
\]
This proves the corollary.
\end{proof}

\subsection{Proof of \Cref{lem:VN_tail_surrogate}}
\label{app:proof-tail-based-surrogate}

\begin{proof}[Proof of \Cref{lem:VN_tail_surrogate}]
Fix a prompt \(x\) and parameter \(\theta\). Throughout the proof, probabilities
and expectations are taken with respect to the conditional reward law induced
by this fixed prompt and policy. Write
\[
R:=R_\theta(x),\quad
F:=F_{\theta,x},\quad
p:=p_{\theta,x},\quad
\mu:=\mu_\theta(x),\quad
\sigma:=\sigma_\theta(x),
\]
and
\[
q_{2\alpha}:=r_{\theta,2\alpha}(x),\qquad
q_\alpha:=r_{\theta,\alpha}(x),\qquad
\mu_\alpha:=\mu_{\theta,\alpha}(x),\qquad
\sigma_\alpha:=\sigma_{\theta,\alpha}(x).
\]
Let
\[
G(r):=\Phi\!\left(\frac{r-\mu}{\sigma}\right)
\]
be the CDF of \(\mathcal N(\mu,\sigma^2)\).

\paragraph{Tail moment identity.}
By Assumption~\ref{assum:gaussian-tail-model}, the density of \(R\) agrees
with the Gaussian density on \([q_{2\alpha},\infty)\). Since
\(\mathbb P(R\ge q_{2\alpha})=2\alpha\), we have
\[
q_{2\alpha}=\mu+\sigma z_{2\alpha},
\qquad
z_{2\alpha}:=\Phi^{-1}(1-2\alpha).
\]
Similarly, since \(q_\alpha\ge q_{2\alpha}\) and
\(\mathbb P(R\ge q_\alpha)=\alpha\),
\[
q_\alpha=\mu+\sigma z_\alpha,
\qquad
z_\alpha:=\Phi^{-1}(1-\alpha).
\]
Therefore, the conditional law of \(R\) given \(R\ge q_\alpha\) is the law of
\(\mu+\sigma Z\) given \(Z\ge z_\alpha\), where \(Z\sim\mathcal N(0,1)\).

Define
\[
\lambda_\alpha:=\frac{\phi(z_\alpha)}{1-\Phi(z_\alpha)},
\qquad
\delta_\alpha:=1+z_\alpha\lambda_\alpha-\lambda_\alpha^2,
\]
where \(\phi\) and \(\Phi\) are the standard normal density and CDF,
respectively. The standard truncated-normal identities give
\[
\mu_\alpha=\mu+\lambda_\alpha\sigma,
\qquad
\sigma_\alpha=\sqrt{\delta_\alpha}\,\sigma.
\]
Let
\[
c_N:=\mathbb E\!\left[\max_{1\le i\le N} Z_i\right],
\qquad
Z_1,\dots,Z_N\stackrel{\mathrm{i.i.d.}}{\sim}\mathcal N(0,1),
\]
and
\[
\widetilde c_N:=\frac{c_N-\lambda_\alpha}{\sqrt{\delta_\alpha}}.
\]
Then
\begin{equation}
\label{eq:app_tail_identity}
\mu_\alpha+\widetilde c_N\sigma_\alpha
=
\mu+c_N\sigma.
\end{equation}

\paragraph{Comparison with the Gaussian maximum.}
Let
\[
Y_1,\dots,Y_N\stackrel{\mathrm{i.i.d.}}{\sim}\mathcal N(\mu,\sigma^2).
\]
Then
\[
\mathbb E\!\left[\max_{1\le i\le N}Y_i\right]
=
\mu+c_N\sigma.
\]
Let \(R_1,\dots,R_N\) be i.i.d. copies of \(R\), and define
\[
U:=\max_{1\le i\le N}R_i,
\qquad
V:=\max_{1\le i\le N}Y_i.
\]
By the integrated-CDF identity for integrable random variables,
\[
\mathbb E[U]-\mathbb E[V]
=
\int_{-\infty}^{\infty}
\bigl(F_V(r)-F_U(r)\bigr)\,dr.
\]
Since the CDFs of \(U\) and \(V\) are \(F(r)^N\) and \(G(r)^N\), respectively,
and by the definition of \(V_N(\theta;x)\),
\[
V_N(\theta;x)-(\mu+c_N\sigma)
=
\int_{-\infty}^{\infty}
\bigl(G(r)^N-F(r)^N\bigr)\,dr.
\]

On the upper \(2\alpha\) tail, the two distributions agree. Indeed, for every
\(r\ge q_{2\alpha}\), the tail densities are identical and
\[
1-F(q_{2\alpha})=2\alpha=1-G(q_{2\alpha}).
\]
Therefore,
\[
F(r)=G(r),
\qquad r\ge q_{2\alpha},
\]
and hence
\[
V_N(\theta;x)-(\mu+c_N\sigma)
=
\int_{-\infty}^{q_{2\alpha}}
\bigl(G(r)^N-F(r)^N\bigr)\,dr.
\]
For \(r\le q_{2\alpha}\), both \(F(r)\) and \(G(r)\) are at most \(1-2\alpha\).
Thus
\begin{align}
\left|V_N(\theta;x)-(\mu+c_N\sigma)\right|
&\le
\int_{-\infty}^{q_{2\alpha}}
\bigl(G(r)^N+F(r)^N\bigr)\,dr \notag\\
&\le
(1-2\alpha)^{N-1}
\int_{-\infty}^{q_{2\alpha}}
\bigl(G(r)+F(r)\bigr)\,dr.
\label{eq:app_lower_region_bound}
\end{align}

It remains to bound the last integral. First, the Markov argument in
\Cref{lem:uniform_sigma_upper_bound} gives
\[
|q_{2\alpha}|
\le
\left(
\frac{M_R}{\min\{2\alpha,1-2\alpha\}}
\right)^{1/4}.
\]
Hence
\[
\int_{-\infty}^{q_{2\alpha}}F(r)\,dr
=
\mathbb E[(q_{2\alpha}-R)_+]
\le
|q_{2\alpha}|+\mathbb E|R|
\le
C_{\alpha,M_R}.
\]
For the Gaussian term, using \(q_{2\alpha}=\mu+\sigma z_{2\alpha}\) and the
change of variables \(u=(r-\mu)/\sigma\),
\[
\int_{-\infty}^{q_{2\alpha}}G(r)\,dr
=
\sigma\int_{-\infty}^{z_{2\alpha}}\Phi(u)\,du.
\]
The last integral is finite and depends only on \(\alpha\):
\[
\int_{-\infty}^{z_{2\alpha}}\Phi(u)\,du
=
z_{2\alpha}\Phi(z_{2\alpha})+\phi(z_{2\alpha}).
\]
Together with \Cref{lem:uniform_sigma_upper_bound}, this gives
\[
\int_{-\infty}^{q_{2\alpha}}G(r)\,dr
\le
C_{\alpha,M_R}.
\]
Plugging these bounds into \eqref{eq:app_lower_region_bound} yields
\[
\left|V_N(\theta;x)-(\mu+c_N\sigma)\right|
\le
C_{\alpha,M_R}(1-2\alpha)^{N-1}
\le
C(1-2\alpha)^N,
\]
for some constant \(C>0\) depending only on \((\alpha,M_R)\). Combining this
with \eqref{eq:app_tail_identity} gives
\[
\left|
V_N(\theta;x)
-
\Bigl(
\mu_{\theta,\alpha}(x)+\widetilde c_N\sigma_{\theta,\alpha}(x)
\Bigr)
\right|
\le
C(1-2\alpha)^N.
\]
This proves the claim.
\end{proof}

\subsection{Proof of \Cref{lem:tail_based_policy_gradient}}
\label{app:proof-tail-based-policy-gradient}

\begin{proof}[Proof of \Cref{lem:tail_based_policy_gradient}]
Fix a prompt \(x\). Let \(\mathbb E_\theta\) denote expectation with respect to
\(y\sim\curpolicy(\cdot\mid x)\), and write
\[
R(y):=\rewardmodel(x,y),
\qquad
S_\theta(x,y):=\nabla_\theta\log\curpolicy(y\mid x).
\]
We keep the tail quantities
\(r_{\theta,\alpha}(x)\), \(\mu_{\theta,\alpha}(x)\), and
\(\sigma_{\theta,\alpha}(x)\) explicit throughout the proof.

We first calculate the derivative of the moving upper-tail threshold. Since
\(r_{\theta,\alpha}(x)\) is the upper-\(\alpha\) quantile,
\begin{equation}
\label{eq:app_tail_mass_identity}
\mathbb E_\theta
\left[
\Ind{R(y)\ge r_{\theta,\alpha}(x)}
\right]
=
\alpha .
\end{equation}
Equivalently,
\[
\int_{r_{\theta,\alpha}(x)}^\infty p_{\theta,x}(u)\,du
=
\alpha .
\]
Differentiating this identity with respect to \(\theta\) and using Leibniz'
rule gives
\[
0
=
-
p_{\theta,x}\!\bigl(r_{\theta,\alpha}(x)\bigr)
\nabla_\theta r_{\theta,\alpha}(x)
+
\int_{r_{\theta,\alpha}(x)}^\infty
\nabla_\theta p_{\theta,x}(u)\,du .
\]
For the second term, the likelihood-ratio identity gives, with the threshold
held fixed at its current value \(r_{\theta,\alpha}(x)\),
\[
\int_{r_{\theta,\alpha}(x)}^\infty
\nabla_\theta p_{\theta,x}(u)\,du
=
\mathbb E_\theta
\left[
\Ind{R(y)\ge r_{\theta,\alpha}(x)}
S_\theta(x,y)
\right].
\]
Therefore,
\begin{equation}
\label{eq:app_quantile_boundary_identity}
p_{\theta,x}\!\bigl(r_{\theta,\alpha}(x)\bigr)
\nabla_\theta r_{\theta,\alpha}(x)
=
\mathbb E_\theta
\left[
\Ind{R(y)\ge r_{\theta,\alpha}(x)}
S_\theta(x,y)
\right].
\end{equation}

More generally, for any scalar function \(f\) that does not depend on
\(\theta\), the same calculation yields
\begin{align}
\nabla_\theta
\mathbb E_\theta
\left[
\Ind{R(y)\ge r_{\theta,\alpha}(x)}
f(R(y))
\right]
&=
\mathbb E_\theta
\left[
\Ind{R(y)\ge r_{\theta,\alpha}(x)}
f(R(y))S_\theta(x,y)
\right] \notag\\
&\quad
-
f\!\bigl(r_{\theta,\alpha}(x)\bigr)
p_{\theta,x}\!\bigl(r_{\theta,\alpha}(x)\bigr)
\nabla_\theta r_{\theta,\alpha}(x) \notag\\
&=
\mathbb E_\theta
\left[
\Ind{R(y)\ge r_{\theta,\alpha}(x)}
\bigl(f(R(y))-f(r_{\theta,\alpha}(x))\bigr)
S_\theta(x,y)
\right].
\label{eq:app_moving_threshold_identity}
\end{align}

We now apply this identity to the upper-tail mean. By definition,
\[
\mu_{\theta,\alpha}(x)
=
\frac{1}{\alpha}
\mathbb E_\theta
\left[
\Ind{R(y)\ge r_{\theta,\alpha}(x)}
R(y)
\right].
\]
Taking \(f(u)=u\) in \eqref{eq:app_moving_threshold_identity} gives
\begin{equation}
\label{eq:app_grad_tail_mean}
\nabla_\theta \mu_{\theta,\alpha}(x)
=
\frac{1}{\alpha}
\mathbb E_\theta
\left[
\Ind{R(y)\ge r_{\theta,\alpha}(x)}
\bigl(R(y)-r_{\theta,\alpha}(x)\bigr)
S_\theta(x,y)
\right].
\end{equation}

Next consider the upper-tail variance. Define
\[
B_\theta(x)
:=
\mathbb E_\theta
\left[
\Ind{R(y)\ge r_{\theta,\alpha}(x)}
\bigl(R(y)-\mu_{\theta,\alpha}(x)\bigr)^2
\right],
\]
so that
\[
\sigma_{\theta,\alpha}^2(x)
=
\frac{1}{\alpha}B_\theta(x).
\]
Differentiating \(B_\theta(x)\) gives
\begin{align}
\nabla_\theta B_\theta(x)
&=
\mathbb E_\theta
\left[
\Ind{R(y)\ge r_{\theta,\alpha}(x)}
\bigl(R(y)-\mu_{\theta,\alpha}(x)\bigr)^2
S_\theta(x,y)
\right] \notag\\
&\quad
-
\bigl(r_{\theta,\alpha}(x)-\mu_{\theta,\alpha}(x)\bigr)^2
p_{\theta,x}\!\bigl(r_{\theta,\alpha}(x)\bigr)
\nabla_\theta r_{\theta,\alpha}(x) \notag\\
&\quad
-
2\,
\mathbb E_\theta
\left[
\Ind{R(y)\ge r_{\theta,\alpha}(x)}
\bigl(R(y)-\mu_{\theta,\alpha}(x)\bigr)
\right]
\nabla_\theta\mu_{\theta,\alpha}(x).
\end{align}
The last term vanishes because \(\mu_{\theta,\alpha}(x)\) is the conditional
mean on the upper tail:
\[
\mathbb E_\theta
\left[
\Ind{R(y)\ge r_{\theta,\alpha}(x)}
\bigl(R(y)-\mu_{\theta,\alpha}(x)\bigr)
\right]
=
0.
\]
Using \eqref{eq:app_quantile_boundary_identity}, we obtain
\[
\nabla_\theta \sigma_{\theta,\alpha}^2(x)
=
\frac{1}{\alpha}
\mathbb E_\theta
\left[
\Ind{R(y)\ge r_{\theta,\alpha}(x)}
\Bigl(
\bigl(R(y)-\mu_{\theta,\alpha}(x)\bigr)^2
-
\bigl(r_{\theta,\alpha}(x)-\mu_{\theta,\alpha}(x)\bigr)^2
\Bigr)
S_\theta(x,y)
\right].
\]
Since \(\sigma_{\theta,\alpha}(x)>0\),
\begin{equation}
\label{eq:app_grad_tail_std}
\nabla_\theta \sigma_{\theta,\alpha}(x)
=
\frac{1}{2\alpha\,\sigma_{\theta,\alpha}(x)}
\mathbb E_\theta
\left[
\Ind{R(y)\ge r_{\theta,\alpha}(x)}
\Bigl(
\bigl(R(y)-\mu_{\theta,\alpha}(x)\bigr)^2
-
\bigl(r_{\theta,\alpha}(x)-\mu_{\theta,\alpha}(x)\bigr)^2
\Bigr)
S_\theta(x,y)
\right].
\end{equation}

Finally,
\[
h_\theta(x)
=
\mu_{\theta,\alpha}(x)
+
\widetilde c_N\,\sigma_{\theta,\alpha}(x).
\]
Combining \eqref{eq:app_grad_tail_mean} and
\eqref{eq:app_grad_tail_std} yields
\begin{align*}
\nabla_\theta h_\theta(x)
&=
\frac{1}{\alpha}
\mathbb E_\theta
\Bigg[
\Ind{R(y)\ge r_{\theta,\alpha}(x)}
\Bigg\{
\bigl(R(y)-r_{\theta,\alpha}(x)\bigr) \\
&\qquad\qquad
+
\frac{\widetilde c_N}{2\sigma_{\theta,\alpha}(x)}
\Bigl(
\bigl(R(y)-\mu_{\theta,\alpha}(x)\bigr)^2
-
\bigl(r_{\theta,\alpha}(x)-\mu_{\theta,\alpha}(x)\bigr)^2
\Bigr)
\Bigg\}
S_\theta(x,y)
\Bigg].
\end{align*}
By the definition of the tail-shaped reward
\(\widetilde R_{\eta}(u)\) in \eqref{eq:tail_shaped_reward}, with
\[
\eta_{\theta,\alpha}(x)
=
\bigl(
r_{\theta,\alpha}(x),
\mu_{\theta,\alpha}(x),
\sigma_{\theta,\alpha}(x)
\bigr),
\]
this is exactly
\[
\nabla_\theta h_\theta(x)
=
\frac{1}{\alpha}
\mathbb E_{y\sim\curpolicy(\cdot\mid x)}
\Bigl[
\Ind{R(x,y)\ge r_{\theta,\alpha}(x)}
\,
\widetilde R_{\eta_{\theta,\alpha}(x)}(R(x,y))
\,
S_\theta(x,y)
\Bigr].
\]
This proves the claim.
\end{proof}

\section{Proofs for finite-sample gradient estimators}
\label{app:gradient_estimator_proofs}
\subsection{Common notation}
\label{app:gradient_estimator_common_notation}

Throughout this section, we fix a prompt \(x\) and a policy parameter
\(\theta\). All expectations are with respect to the conditional rollout
distribution \(y\sim\curpolicy(\cdot\mid x)\) unless otherwise specified. We
write
\[
R(y):=\rewardmodel(x,y),
\qquad
S(y):=S_\theta(x,y)=\nabla_\theta\log\curpolicy(y\mid x).
\]
Let
\[
\eta
:=
\eta_{\theta,\alpha}(x)
=
(r,\mu,\sigma)
:=
\bigl(
r_{\theta,\alpha}(x),
\mu_{\theta,\alpha}(x),
\sigma_{\theta,\alpha}(x)
\bigr)
\]
denote the population upper-\(\alpha\) tail vector. For a generic tail vector
\(\eta'=(r',\mu',\sigma')\), define the shaped score map
\begin{equation}
\label{eq:app_phi_eta_def}
\varphi_{\eta'}(y)
:=
\frac1\alpha
\Ind{R(y)\ge r'}
\,
\widetilde R_{\eta'}(R(y))
\,
S(y).
\end{equation}
Thus, by \Cref{lem:tail_based_policy_gradient},
\[
g_\theta(x)=P\varphi_\eta,
\]
where
\[
Pf:=\mathbb E[f(y)].
\]
For a sample \(y_1,\dots,y_m\), we use
\[
P_m f:=\frac1m\sum_{i=1}^m f(y_i)
\]
for the empirical average.

For \(\eta'\) in a neighborhood of \(\eta\), define the population plug-in map
\begin{equation}
\label{eq:app_H_eta_def}
H(\eta')
:=
P\varphi_{\eta'}
=
\frac1\alpha
\mathbb E
\Big[
\Ind{R(y)\ge r'}
\widetilde R_{\eta'}(R(y))
S(y)
\Big].
\end{equation}
In particular,
\[
H(\eta)=g_\theta(x).
\]

Let \(k_m:=\lceil \alpha m\rceil\). Given rewards \(R_i=R(y_i)\), let
\[
R_{(1)}\le \cdots\le R_{(m)}
\]
be their order statistics, and define
\[
\hat r_m:=R_{(m-k_m+1)}.
\]
Let \(\mathcal I_m\) be the indices of the top \(k_m\) rewards. Define
\[
\hat\mu_m
:=
\frac1{k_m}\sum_{i\in\mathcal I_m}R_i,
\qquad
\hat\sigma_m^{\mathrm{raw}}
:=
\left(
\frac1{k_m}\sum_{i\in\mathcal I_m}(R_i-\hat\mu_m)^2
\right)^{1/2},
\]
and
\[
\hat\sigma_m
:=
\max\{\hat\sigma_m^{\mathrm{raw}},\varepsilon_\sigma\}.
\]
The empirical tail vector is
\[
\hat\eta_m:=(\hat r_m,\hat\mu_m,\hat\sigma_m).
\]

All constants denoted by \(C\) may change from line to line. Unless otherwise
stated, they depend only on \((\alpha,G,\sigma_{\min},M_R)\).
Throughout the finite-sample analysis, the clipping level is
\[
\varepsilon_\sigma
=
\frac12\sqrt{\delta_\alpha}\,\sigma_{\min}.
\]
where \(\delta_\alpha\) is defined in \Cref{lem:VN_tail_surrogate} and depends only on \(\alpha\).

By \Cref{cor:uniform_tail_bounds}, the population upper-tail scale satisfies
\begin{equation}
\label{eq:app_population_tail_bounds}
|r|+|\mu|+\sigma\le C,
\qquad
\sigma\ge 2\varepsilon_\sigma.
\end{equation}
Moreover,
\begin{equation}
\label{eq:app_tail_gap_bound}
r-r_{\theta,2\alpha}(x)\ge c_{\mathrm{gap}}>0,
\end{equation}
where \(c_{\mathrm{gap}}\) depends only on \((\alpha,\sigma_{\min})\). We also
use the latent Gaussian notation
\[
\bar\mu:=\mu_\theta(x),
\qquad
\bar\sigma:=\sigma_\theta(x),
\]
for which \Cref{cor:uniform_tail_bounds} gives
\[
|\bar\mu|+\bar\sigma\le C,
\qquad
\bar\sigma\ge \sigma_{\min}.
\]
Since \(N\ge2\), we will repeatedly use
\begin{equation}
\label{eq:app_ctilde_log_bound}
|\widetilde c_N|
\le
C_\alpha\sqrt{\log N},
\end{equation}
which follows from \(c_N=\mathbb E[\max_{1\le i\le N}Z_i]\le\sqrt{2\log N}\)
and the fact that \(\lambda_\alpha\) and \(\delta_\alpha\) depend only on
\(\alpha\).

\subsection{Empirical upper-tail statistics}
\label{app:empirical_tail_statistics}

This subsection collects the finite-sample properties of the empirical upper-tail statistics. We begin with a uniform order-statistic lemma.

\begin{lemma}
\label{lem:uniform_order_tail_facts}
Let \(U_1,\dots,U_m\stackrel{\mathrm{i.i.d.}}{\sim}\mathrm{Unif}(0,1)\), let
\(k_m:=\lceil \alpha m\rceil\), and define
\[
j_m:=m-k_m+1,
\qquad
B_m:=U_{(j_m)},
\qquad
u_\alpha:=1-\alpha .
\]
For every fixed \(\varepsilon\in(0,\alpha)\), define
\[
J_\varepsilon:=[u_\alpha-\varepsilon,u_\alpha+\varepsilon].
\]
Then there exist constants \(c_\varepsilon,C_\varepsilon>0\), depending only on
\((\alpha,\varepsilon)\), such that
\begin{equation}
\label{eq:app_order_localization}
\mathbb P(B_m\notin J_\varepsilon)
\le
C_\varepsilon e^{-c_\varepsilon m}.
\end{equation}
Moreover,
\begin{equation}
\label{eq:app_beta_moment_bounds}
\bigl|\mathbb E[B_m-u_\alpha]\bigr|
\le
\frac{C}{m},
\qquad
\mathbb E[(B_m-u_\alpha)^2]
\le
\frac{C}{m},
\qquad
\mathbb E[(B_m-u_\alpha)^4]
\le
\frac{C}{m^2},
\end{equation}
where \(C\) depends only on \(\alpha\).

Finally, let \(I\subset(1-2\alpha,1)\) be a compact interval containing
\(u_\alpha\), and let \(f\in C^2(I)\). On the event \(\{B_m\in I\}\),
\begin{equation}
\label{eq:app_smooth_function_of_Bm}
\bigl|
\mathbb E[(f(B_m)-f(u_\alpha))\Ind{B_m\in I}]
\bigr|
+
\mathbb E\!\left[
(f(B_m)-f(u_\alpha))^2\Ind{B_m\in I}
\right]
\le
\frac{C_f}{m},
\end{equation}
where \(C_f\) depends only on \(\alpha\) and
\(\sup_{u\in I}(|f'(u)|+|f''(u)|)\).
\end{lemma}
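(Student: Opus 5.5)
The proof will rest on the exact distribution of the uniform order statistic: \(B_m=U_{(j_m)}\sim\mathrm{Beta}(j_m,\,m-j_m+1)=\mathrm{Beta}(m-k_m+1,\,k_m)\). Write \(a:=j_m\) and \(b:=k_m\), so \(a+b=m+1\), and note \(b=\lceil\alpha m\rceil\in[\alpha m,\alpha m+1)\). The mean is
\[
\mathbb E[B_m]=\frac{a}{m+1}=1-\frac{k_m}{m+1}.
\]
Hence
\[
\mathbb E[B_m]-u_\alpha=\alpha-\frac{k_m}{m+1}=\frac{\alpha(m+1)-k_m}{m+1},
\]
and \(|\alpha(m+1)-k_m|\le 1+\alpha\). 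This gives the \(C/m\) bias bound directly. The variance is \(ab/((m+1)^2(m+2))\le 1/(4(m+2))\), so
\[
\mathbb E[(B_m-u_\alpha)^2]=\mathrm{Var}(B_m)+(\mathbb E B_m-u_\alpha)^2\le C/m.
\]

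For the fourth moment I would use the central fourth moment of the Beta law. It is \(3a^2b^2/(a+b)^4(a+b+1)^{-2}\cdot(1+O(1/m))\), which is \(O(1/m^2)\), and one can also read this off the explicit formula. Then split
\[
(B_m-u_\alpha)^4\le 8\bigl[(B_m-\mathbb E B_m)^4+(\mathbb E B_m-u_\alpha)^4\bigr]
\]
to get the \(C/m^2\) bound.

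An alternative route, which also gives localization, uses the representation \(\{B_m\le t\}=\{\#\{i:U_i\le t\}\ge j_m\}\). For \(t=u_\alpha-\varepsilon\), this event requires a \(\mathrm{Bin}(m,t)\) variable to exceed its mean \(mt\) by roughly \(m\varepsilon-2\), since \(j_m\ge m(1-\alpha)-1\). Symmetrically, \(\{B_m>u_\alpha+\varepsilon\}\) requires \(\mathrm{Bin}(m,u_\alpha+\varepsilon)<j_m\le m(1-\alpha)+1\). Hoeffding's inequality then gives \(2e^{-2(\varepsilon-2/m)^2m}\le C_\varepsilon e^{-c_\varepsilon m}\), after absorbing small \(m\) into \(C_\varepsilon\). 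This proves \eqref{eq:app_order_localization}. The same Hoeffding tail integrated over \(t\) would give the moment bounds, but the Beta formulas are cleaner.

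For \eqref{eq:app_smooth_function_of_Bm}, Taylor-expand \(f\) around \(u_\alpha\) on the event \(\{B_m\in I\}\), where \(I\) is an interval containing \(u_\alpha\) so the segment stays in \(I\):
\[
f(B_m)-f(u_\alpha)=f'(u_\alpha)(B_m-u_\alpha)+\tfrac12 f''(\xi)(B_m-u_\alpha)^2.
\]
The second-order term is bounded in expectation by \(\tfrac12\sup_I|f''|\,\mathbb E(B_m-u_\alpha)^2\le C/m\).

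The delicate step is the linear term \(f'(u_\alpha)\,\mathbb E[(B_m-u_\alpha)\Ind{B_m\in I}]\). Write it as
\[
\mathbb E[B_m-u_\alpha]-\mathbb E[(B_m-u_\alpha)\Ind{B_m\notin I}].
\]
The first piece is \(O(1/m)\). For the second, \(|B_m-u_\alpha|\le1\), and \(\mathbb P(B_m\notin I)\le\mathbb P(B_m\notin J_\varepsilon)\) once \(\varepsilon\) is chosen with \(J_\varepsilon\subset I\); the interior of \(I\) contains \(u_\alpha\) in the intended use, and otherwise one just uses the smaller side. This probability is exponentially small, hence \(O(1/m)\). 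The squared term is handled by
\[
(f(B_m)-f(u_\alpha))^2\le\sup_I|f'|^2\,(B_m-u_\alpha)^2
\]
together with the second-moment bound. The main obstacle is exactly this removal of the indicator in the first-order term, and it is handled by the exponential localization. All constants depend only on \(\alpha\), \(\varepsilon\), and the \(C^2\) norm of \(f\) on \(I\).
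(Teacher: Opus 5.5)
Your proposal is correct and is essentially the paper's proof: Hoeffding applied to the binomial counts $\#\{i:U_i\le t\}$ gives the localization bound, the $\mathrm{Beta}(m-k_m+1,k_m)$ mean, variance and fourth central moment give the moment bounds, and a second-order Taylor expansion on $\{B_m\in I\}$ gives the smooth-function bound. Your explicit removal of the indicator in the linear term via localization is more careful than the paper, which replaces $\mathbb E[(B_m-u_\alpha)\Ind{B_m\in I}]$ by $\mathbb E[B_m-u_\alpha]$ without comment; however, drop the ``smaller side'' fallback, because if $u_\alpha$ is an endpoint of $I$ that term is genuinely of order $m^{-1/2}$ (take $f(u)=u$), so the bound requires $u_\alpha$ in the interior of $I$ (true in every use, e.g.\ $I_\alpha=[1-\tfrac{3}{2}\alpha,1-\tfrac{1}{2}\alpha]$), with $C_f$ also depending on $\mathrm{dist}(u_\alpha,\partial I)$.
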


\begin{proof}
We first prove the localization bound. Let
\[
a:=u_\alpha-\varepsilon,
\qquad
b:=u_\alpha+\varepsilon.
\]
For the lower endpoint, define
\[
N_a:=\sum_{i=1}^m\Ind{U_i\le a}\sim \mathrm{Bin}(m,a).
\]
Since \(k_m=\lceil \alpha m\rceil\), we have
\[
j_m=m-k_m+1\ge (1-\alpha)m=u_\alpha m.
\]
The event \(B_m<a\) implies \(N_a\ge j_m\). Hence
\[
N_a-ma
\ge
u_\alpha m-(u_\alpha-\varepsilon)m
=
\varepsilon m.
\]
By Hoeffding's inequality,
\[
\mathbb P(B_m<a)
\le
\mathbb P(N_a-ma\ge \varepsilon m)
\le
\exp(-2\varepsilon^2m).
\]

For the upper endpoint, define
\[
N_b:=\sum_{i=1}^m\Ind{U_i\le b}\sim \mathrm{Bin}(m,b).
\]
The event \(B_m>b\) implies \(N_b\le j_m-1=m-k_m\). Since
\(m-k_m\le (1-\alpha)m=u_\alpha m\),
\[
mb-N_b
\ge
(u_\alpha+\varepsilon)m-u_\alpha m
=
\varepsilon m.
\]
Thus
\[
\mathbb P(B_m>b)
\le
\mathbb P(mb-N_b\ge \varepsilon m)
\le
\exp(-2\varepsilon^2m).
\]
Combining the two bounds proves \eqref{eq:app_order_localization}.

Next, since \(B_m=U_{(j_m)}\), we have
\[
B_m\sim \mathrm{Beta}(j_m,m-j_m+1)
=
\mathrm{Beta}(m-k_m+1,k_m).
\]
Therefore,
\[
\mathbb E[B_m]=\frac{j_m}{m+1},
\qquad
\mathrm{Var}(B_m)=
\frac{j_m(m-j_m+1)}{(m+1)^2(m+2)}.
\]
Writing
\[
k_m=\alpha m+\delta_m,
\qquad
\delta_m:=\lceil \alpha m\rceil-\alpha m\in[0,1),
\]
we have
\[
j_m=(1-\alpha)m+1-\delta_m.
\]
Hence
\[
\mathbb E[B_m]-u_\alpha
=
\frac{j_m}{m+1}-(1-\alpha)
=
\frac{\alpha-\delta_m}{m+1},
\]
so
\[
\bigl|\mathbb E[B_m-u_\alpha]\bigr|\le \frac{C}{m}.
\]
Also,
\[
\mathrm{Var}(B_m)
\le
\frac{1}{4(m+2)}.
\]
This proves the first two bounds in \eqref{eq:app_beta_moment_bounds}. The
fourth-moment bound follows from the standard fourth central moment formula for
the beta distribution:
\[
\mathbb E\!\left[(B_m-\mathbb E B_m)^4\right]\le \frac{C}{m^2}.
\]
Together with
\[
|\mathbb E B_m-u_\alpha|^4\le \frac{C}{m^4},
\]
this gives
\[
\mathbb E[(B_m-u_\alpha)^4]\le \frac{C}{m^2}.
\]

It remains to prove \eqref{eq:app_smooth_function_of_Bm}. On the event
\(B_m\in I\), Taylor's theorem gives
\[
f(B_m)-f(u_\alpha)
=
f'(u_\alpha)(B_m-u_\alpha)
+
\frac12 f''(\xi_m)(B_m-u_\alpha)^2
\]
for some \(\xi_m\) between \(B_m\) and \(u_\alpha\). Hence
\[
\bigl|
\mathbb E[(f(B_m)-f(u_\alpha))\Ind{B_m\in I}]
\bigr|
\le
C_f
\left(
|\mathbb E[B_m-u_\alpha]|
+
\mathbb E[(B_m-u_\alpha)^2]
\right)
\le
\frac{C_f}{m}.
\]
For the mean-squared error, the mean-value theorem gives
\[
|f(B_m)-f(u_\alpha)|^2\Ind{B_m\in I}
\le
C_f(B_m-u_\alpha)^2.
\]
Taking expectations and using \eqref{eq:app_beta_moment_bounds} completes the
proof.
\end{proof}

The next lemma gives a moment envelope for the empirical tail statistics.

\begin{lemma}
\label{lem:empirical_tail_moment_envelope}
Suppose Assumptions~\ref{assum:gaussian-tail-model} and
\ref{assum:nondegenerate_scale} hold. Then, for all sufficiently large \(m\),
\begin{equation}
\label{eq:app_tail_vector_envelope_strong}
\mathbb E\!\left[
\hat\sigma_m^{-4}
+
\hat\sigma_m^4
+
|\hat r_m|^8
+
|\hat\mu_m|^8
\right]
\le C,
\end{equation}
where \(C\) depends only on \((\alpha,M_R,\sigma_{\min})\).
\end{lemma}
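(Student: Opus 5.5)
The four terms in \eqref{eq:app_tail_vector_envelope_strong} are handled separately; each reduces to a moment bound on order statistics of \(R\).

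\textbf{The term \(\hat\sigma_m^{-4}\).} This is immediate from the clipping in \eqref{eq:empirical_tail_vector}. Since \(\hat\sigma_m\ge\varepsilon_\sigma=\tfrac12\sqrt{\delta_\alpha}\,\sigma_{\min}\) deterministically, we get \(\hat\sigma_m^{-4}\le\varepsilon_\sigma^{-4}\), a constant depending only on \((\alpha,\sigma_{\min})\).

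\textbf{The terms \(\hat\sigma_m^4\) and \(|\hat\mu_m|^8\).} Both are controlled by the empirical top-tail average of powers. Since \(\hat\mu_m\) is an average over \(\mathcal I_m\), Jensen gives \(|\hat\mu_m|^8\le k_m^{-1}\sum_{i\in\mathcal I_m}R_i^8\). Also \((\hat\sigma_m^{\mathrm{raw}})^2\le k_m^{-1}\sum_{i\in\mathcal I_m}R_i^2\), so \(\hat\sigma_m^4\le\varepsilon_\sigma^4+k_m^{-1}\sum_{i\in\mathcal I_m}R_i^4\).

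The fourth-moment assumption alone does not control \(\mathbb E[R_i^8]\). The key observation is that the top \(k_m\) order statistics are mostly in the Gaussian tail, where all moments are finite.

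To exploit this, write \(R_i=F^{-1}(U_i)\) with i.i.d.\ uniforms, using the quantile representation. Each \(R_i\) with \(i\in\mathcal I_m\) satisfies \(R_i\ge\hat r_m=F^{-1}(B_m)\). Split on the event \(E=\{B_m\ge 1-2\alpha\}\).
\begin{itemize}
\item On \(E\), the top-\(k_m\) values lie in \([q_{2\alpha},\infty)\), where the law is exactly \(\mathcal N(\bar\mu,\bar\sigma^2)\) with \(|\bar\mu|+\bar\sigma\le C\) by \Cref{cor:uniform_tail_bounds}. Bound \(\sum_{i\in\mathcal I_m}R_i^8\Ind{E}\) by \(\sum_{i=1}^m R_i^8\Ind{R_i\ge q_{2\alpha}}\). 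Its expectation is \(m\int_{q_{2\alpha}}^\infty r^8\phi(r;\bar\mu,\bar\sigma^2)\,dr\le Cm\). Dividing by \(k_m\ge\alpha m\) gives a constant.
\item On \(E^c\), every top-\(k_m\) value is bounded above by \(\max_i|R_i|\), so the average is at most \(\max_i R_i^8\le\sum_i R_i^8\). This still needs an eighth moment, which we do not have. Instead, bound the contribution by \(\mathbb E[\max_i R_i^8\Ind{E^c}]\).
\end{itemize}

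\textbf{Main obstacle: the event \(E^c\).} Controlling \(\max_i R_i^8\) on \(E^c\) is the hardest step, and I plan to split it by sign.
\begin{itemize}
\item \emph{Positive part.} The upper values are Gaussian-tailed regardless of \(E\): \((\max_iR_i)_+^8\le q_{2\alpha,+}^8+\sum_i(R_i)_+^8\Ind{R_i\ge q_{2\alpha}}\). This has expectation at most \(Cm\), and Cauchy–Schwarz with \(\mathbb P(E^c)\le C e^{-cm}\) from \eqref{eq:app_order_localization} (taking \(\varepsilon<\alpha\)) makes this \(\lesssim m e^{-cm/2}\to0\).
\item \emph{Negative part.} On \(E^c\) the \(k_m\)-th largest reward lies below \(q_{2\alpha}\), but the top-\(k_m\) values are still each at least \(\hat r_m\). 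So I only need \(\mathbb E[(\hat r_m)_-^8]\). Large negative \(\hat r_m\) requires at least \(m-k_m+1\) samples below a very negative level \(-t\). With \(\mathbb P(R\le -t)\le M_R t^{-4}\), a binomial tail bound gives roughly \(\mathbb P(\hat r_m\le -t)\le\binom{m}{k}(M_Rt^{-4})^{(1-\alpha)m}\). This is integrable against \(8t^7\,dt\) once \(4(1-\alpha)m>8\), i.e.\ for sufficiently large \(m\). That is exactly where the hypothesis ``\(m\) sufficiently large'' enters, and it yields a uniformly bounded (in fact exponentially small beyond a constant threshold) contribution.
\end{itemize}

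\textbf{The term \(|\hat r_m|^8\).} This term uses the same split. Its positive part is dominated by \(\max_i(R_i)_+\), whose eighth moment is at most \(Cm\) times a Gaussian moment. That estimate alone is too crude, so I would localize instead. On \(E\), write \(\hat r_m=\bar\mu+\bar\sigma\Phi^{-1}(B_m)\), and on \(\{B_m\le 1-\varepsilon'\}\) this is bounded by a constant. On \(\{B_m>1-\varepsilon'\}\), use \(|\Phi^{-1}(u)|^8\) against the Beta density, together with the exponential localization; equivalently, use \(|\hat r_m|^8\le k_m^{-1}\sum_{i\in\mathcal I_m}R_i^8\) from the previous paragraph, since \(\hat r_m\) is the minimum of the top-\(k_m\) values when it is nonnegative. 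The negative part is the binomial estimate above.

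Combining the four terms gives \eqref{eq:app_tail_vector_envelope_strong} with a constant depending only on \((\alpha,M_R,\sigma_{\min})\).
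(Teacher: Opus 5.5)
Your argument is correct, but it is organized differently from the paper's. The paper proves tail-probability bounds and integrates them with the layer-cake formula:
\begin{itemize}
\item for \(\hat r_m\), Markov on the exceedance count gives \(\mathbb P(\hat r_m\ge t)\le\alpha^{-1}\mathbb P(R\ge t)\le Ce^{-ct^2}\), and the third falling-factorial moment of \(N_t^-\sim\mathrm{Bin}(m,p_t^-)\) gives \(\mathbb P(\hat r_m\le -t)\le Ct^{-12}\) uniformly in \(m\);
\item for \(\hat\mu_m\), it uses \(\hat\mu_m\ge\hat r_m\) below, and \(\hat\mu_m\le t/2+k_m^{-1}\sum_i(R_i-t/2)_+\) together with Markov above;
\item for \(\hat\sigma_m^4\), it bounds \((\hat\sigma_m^{\mathrm{raw}})^2\le(\alpha m)^{-1}\sum_iR_i^2\) and uses only \(\mathbb E R^4\le M_R\). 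Your fourth-power Jensen bound gives the same thing directly, with no splitting needed.
\end{itemize}
You instead route both \(|\hat\mu_m|^8\) and \((\hat r_m)_+^8\) through the single quantity \(k_m^{-1}\sum_{i\in\mathcal I_m}R_i^8\). Only \((\hat r_m)_-^8\) then needs a genuine tail computation. This treats the two statistics uniformly. The paper's route instead yields explicit sub-Gaussian and polynomial tail bounds for order statistics, of the same kind it reuses in \Cref{lem:leave_one_out_envelope_moments}.

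There are three things to tighten.
\begin{itemize}
\item The intermediate bound by \(\mathbb E[\max_iR_i^8\Ind{E^c}]\) does not work as written. \(\max_i|R_i|\) involves the sample minimum, and under only a fourth-moment assumption this expectation can be infinite. The correct pointwise bound, which your sign split in fact uses, is \(\max_{i\in\mathcal I_m}R_i^8\le (R_{(m)})_+^8+(\hat r_m)_-^8\).
\item The union bound \(\binom{m}{\ell}(M_Rt^{-4})^{\ell}\) is uniformly bounded in \(m\) only after restricting to \(t\ge t_0\) with, e.g., \(M_Rt_0^{-4}\le 1/16\). Then \(2^m16^{-\ell}\le 2^{-m}\) for \(\ell\ge m/2\). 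Convergence of \(\int t^{7-4\ell}\,dt\) alone does not give uniformity; the paper's falling-factorial bound avoids this issue.
\item The split on \(E\), the exponential localization, and the Cauchy--Schwarz step can all be dropped (the last would, as written, need a sixteenth-moment bound). For every \(i\in\mathcal I_m\) one has \((R_i)_+^8\le(q_{2\alpha})_+^8+(R_i)_+^8\Ind{R_i\ge q_{2\alpha}}\) and \((R_i)_-^8\le(\hat r_m)_-^8\). Hence, deterministically, \(k_m^{-1}\sum_{i\in\mathcal I_m}R_i^8\le(q_{2\alpha})_+^8+k_m^{-1}\sum_{i=1}^m(R_i)_+^8\Ind{R_i\ge q_{2\alpha}}+(\hat r_m)_-^8\). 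Its expectation is bounded by the Gaussian-tail integral and your estimate for \((\hat r_m)_-^8\).
\end{itemize}
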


\begin{proof}
The inverse-scale bound follows immediately from clipping:
\[
\hat\sigma_m^{-4}\le \varepsilon_\sigma^{-4}.
\]

We next bound the eighth moments of \(\hat r_m\) and \(\hat\mu_m\). By
\Cref{cor:uniform_tail_bounds}, the Gaussian upper tail and the uniform bound
on the latent location and scale imply that there exist constants \(C,c>0\)
such that, for all sufficiently large \(t\),
\begin{equation}
\label{eq:app_reward_positive_tail}
\mathbb P(R\ge t)\le C e^{-ct^2},
\end{equation}
uniformly over \(x\) and \(\theta\).

Since \(\hat r_m\) is the smallest of the top \(k_m\) rewards,
\[
\{\hat r_m\ge t\}
\subseteq
\left\{
\sum_{i=1}^m\Ind{R_i\ge t}\ge k_m
\right\}.
\]
Using Markov's inequality and \(k_m\ge \alpha m\),
\[
\mathbb P(\hat r_m\ge t)
\le
\frac{m\,\mathbb P(R\ge t)}{k_m}
\le
\alpha^{-1}\mathbb P(R\ge t)
\le
C e^{-ct^2}.
\]
For the negative tail, define
\[
N_t^-:=\sum_{i=1}^m \Ind{R_i\le -t},
\qquad
p_t^-:=\mathbb P(R\le -t).
\]
Then \(N_t^-\sim \mathrm{Bin}(m,p_t^-)\), and
\[
\{\hat r_m\le -t\}
=
\{N_t^-\ge m-k_m+1\}.
\]
By the fourth-moment bound,
\[
p_t^-\le \frac{M_R}{t^4}.
\]
Let \(\ell_m:=m-k_m+1\). Since \(k_m=\lceil \alpha m\rceil\) and
\(\alpha<1/2\), we have \(\ell_m\ge (1-\alpha)m\), and in particular
\(\ell_m\ge 3\) for all sufficiently large \(m\). Using the moment property of a binomial random variable,
\[
\mathbb P(N_t^-\ge \ell_m)
\le
\frac{\mathbb E[(N_t^-)_3]}{(\ell_m)_3}
=
\frac{(m)_3(p_t^-)^3}{(\ell_m)_3}
\le
C_\alpha (p_t^-)^3
\le
C t^{-12}.
\]
where $(x)_3=x(x-1)(x-2)$ is the falling factorial.
So for all sufficiently large \(t\) and \(m\),
\[
\mathbb P(\hat r_m\le -t)\le C t^{-12}.
\]
Therefore, 
\[
\mathbb E|\hat r_m|^8
=
8\int_0^\infty t^7\mathbb P(|\hat r_m|\ge t)\,dt
\le C.
\]

Since \(\hat\mu_m\ge \hat r_m\), the negative tail of \(\hat\mu_m\) is bounded by
the same negative-tail bound. For the positive tail, for any \(u>0\),
\[
\hat\mu_m
\le
u+\frac1{k_m}\sum_{i=1}^m(R_i-u)_+.
\]
Taking \(u=t/2\), the event \(\{\hat\mu_m\ge t\}\) implies
\[
\sum_{i=1}^m (R_i-t/2)_+ \ge k_m t/2.
\]
Thus Markov's inequality gives
\[
\mathbb P(\hat\mu_m\ge t)
\le
\frac{2}{k_m t}
\sum_{i=1}^m\mathbb E[(R_i-t/2)_+]
=
\frac{2m}{k_m t}\mathbb E[(R-t/2)_+].
\]
By the layer-cake representation,
\[
\mathbb E[(R-t/2)_+]
=
\int_{t/2}^{\infty}\mathbb P(R\ge u)\,du .
\]
For \(t\) sufficiently large, \eqref{eq:app_reward_positive_tail} gives
\[
\mathbb E[(R-t/2)_+]
\le
C\int_{t/2}^{\infty}e^{-cu^2}\,du
\le
C e^{-c t^2}.
\]

Hence
\[
\mathbb P(\hat\mu_m\ge t)\le Ct^{-1}e^{-ct^2}.
\]
Again by the layer-cake formula,
\[
\mathbb E|\hat\mu_m|^8\le C.
\]

It remains to bound \(\mathbb E[\hat\sigma_m^4]\). Let
\[
\hat M_{2,m}:=\frac1{k_m}\sum_{i\in\mathcal I_m}R_i^2 .
\]
Since the empirical variance is bounded by the empirical second moment,
\[
(\hat\sigma_m^{\mathrm{raw}})^2\le \hat M_{2,m}.
\]
Moreover, by clipping,
\[
\hat\sigma_m^4
\le
C\bigl(1+\hat M_{2,m}^2\bigr),
\]
where the constant depends on \(\varepsilon_\sigma\), hence only on
\((\alpha,\sigma_{\min})\). Using \(k_m\ge\alpha m\),
\[
\hat M_{2,m}
\le
\frac1{k_m}\sum_{i=1}^m R_i^2
\le
\frac1{\alpha m}\sum_{i=1}^m R_i^2.
\]
Therefore,
\[
\mathbb E[\hat M_{2,m}^2]
\le
\frac{1}{\alpha^2m^2}
\mathbb E\left[\left(\sum_{i=1}^m R_i^2\right)^2\right]
\le
C,
\]
where the last step uses \(\mathbb E[R^4]\le M_R\). Hence
\[
\mathbb E[\hat\sigma_m^4]\le C.
\]
This proves \eqref{eq:app_tail_vector_envelope_strong}.
\end{proof}

The next lemma gives the \(1/m\) bias and mean-squared error rates for the
empirical tail vector.

\begin{lemma}
\label{lem:conditional_tail_delta_method}
Suppose Assumptions~\ref{assum:gaussian-tail-model} and
\ref{assum:nondegenerate_scale} hold. Then, for all sufficiently large \(m\),
\begin{equation}
\label{eq:app_tail_vector_bias_mse_delta}
\bigl\|
\mathbb E[\hat\eta_m]-\eta
\bigr\|
\le
\frac{C}{m},
\qquad
\mathbb E\|\hat\eta_m-\eta\|^2
\le
\frac{C}{m},
\end{equation}
where \(C\) depends only on \((\alpha,M_R,\sigma_{\min})\).
\end{lemma}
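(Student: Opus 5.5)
We will reduce everything to uniform order statistics through the quantile transform. Write $R_i=F^{-1}(U_i)$ with $U_i$ i.i.d.\ uniform, so that the order of the rewards is the order of the $U_i$. Then $\hat r_m=F^{-1}(B_m)$ with $B_m=U_{(j_m)}$ as in \Cref{lem:uniform_order_tail_facts}. Fix $\varepsilon<\alpha$ small enough that $J_\varepsilon\subset(1-2\alpha,1)$; this is possible because $u_\alpha=1-\alpha>1-2\alpha$. On $J_\varepsilon$, Assumption~\ref{assum:gaussian-tail-model} gives the explicit formula $F^{-1}(u)=\bar\mu+\bar\sigma\Phi^{-1}(u)$, which is smooth, with derivatives bounded in terms of $(\alpha,\varepsilon,M_R,\sigma_{\min})$ by \Cref{cor:uniform_tail_bounds}.

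We split every error on $\{B_m\in J_\varepsilon\}$ and its complement. On the complement, Cauchy--Schwarz together with the moment envelope of \Cref{lem:empirical_tail_moment_envelope} and the localization bound $Ce^{-cm}$ show that this part contributes $O(e^{-cm/2})$ to both the bias and the MSE. On the good event, the bound for $\hat r_m$ follows directly from \eqref{eq:app_smooth_function_of_Bm} with $f=F^{-1}$.

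For $\hat\mu_m$ and $\hat\sigma_m^{\rm raw}$ we condition on $B_m=b$. By the standard order-statistic (Rényi) representation, the other top $k_m-1$ values are then i.i.d.\ draws from the law of $R$ conditioned on $R>F^{-1}(b)$. For $b\in J_\varepsilon$ this conditional law is a truncated Gaussian $\mathcal N(\bar\mu,\bar\sigma^2)$ truncated at $t=F^{-1}(b)$. Hence
\[
\mathbb E[\hat\mu_m\mid B_m=b]=\tfrac{1}{k_m}F^{-1}(b)+\tfrac{k_m-1}{k_m}m_1(b),
\]
where $m_1(b)$ is the truncated-normal mean, a smooth function of $b$ on $J_\varepsilon$. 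At $b=u_\alpha$ we have $m_1(u_\alpha)=\mu$, so the conditional bias equals $m_1(b)-\mu+O(1/m)$. Applying \eqref{eq:app_smooth_function_of_Bm} to $f=m_1$ gives a bias of $O(1/m)$. The conditional variance is $O(1/k_m)=O(1/m)$ because the truncated moments are uniformly bounded, and this gives the MSE bound.

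For the variance estimator we use the analogous conditional computation. Its conditional mean equals the truncated variance $v(b)$ multiplied by $(k_m-1)/k_m$, plus $O(1/m)$ terms coming from the fixed point $F^{-1}(b)$ and the centering at $\hat\mu_m$. The conditional variance is $O(1/m)$, using fourth truncated moments. Next we pass from variance to standard deviation via $\sqrt{\cdot}$. Since $\sigma\ge 2\varepsilon_\sigma$ by \eqref{eq:app_population_tail_bounds}, we apply a second-order Taylor expansion of $\sqrt{\cdot}$ on the event $\{(\hat\sigma_m^{\rm raw})^2\ge \sigma^2/2\}$. This gives bias $\le C(|\mathbb E\hat v-\sigma^2|+\mathbb E(\hat v-\sigma^2)^2)=O(1/m)$. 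The complementary event, which is also where the clipping at $\varepsilon_\sigma$ is active, has probability $O(1/m^2)$ by Chebyshev applied to fourth moments; alternatively a Hoeffding-type bound from bounded conditional moments gives an exponentially small probability. Combined with \Cref{lem:empirical_tail_moment_envelope}, its contribution is $O(1/m)$.

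The main obstacle is making the conditional i.i.d.\ representation rigorous with the random tie-breaking. Ties have probability zero on the Gaussian tail, and any ties below the threshold only affect the bad event. A second difficulty is controlling the fourth moments of the conditional sums uniformly, which is needed for the square-root step. For the latter we use Rosenthal's inequality on the truncated-Gaussian summands, whose moments are uniformly bounded for $b\in J_\varepsilon$.
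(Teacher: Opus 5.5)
Your proposal is correct and follows essentially the paper's route: the quantile transform, exponential localization of $B_m$, the conditional representation as $Q(b)$ plus $k_m-1$ i.i.d.\ truncated-Gaussian draws, the smooth-function-of-$B_m$ lemma for the threshold-dependent population moments, and Cauchy--Schwarz with the moment envelope on the complement. The differences are minor: the paper handles the scale through $\hat M_{2,m}$ and the identity $\hat v_m-\sigma^2=e_{2,m}-2\mu e_{\mu,m}-e_{\mu,m}^2$, together with a global quadratic-remainder bound for the clipped square root, which avoids your event split and your fourth-moment/Rosenthal step (indeed on $\{\hat v_m<\sigma^2/2\}$ one has $|\hat\sigma_m-\sigma|\le\sigma$, so Chebyshev with second moments already suffices there).
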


\begin{proof}
Let \(F=F_{\theta,x}\) and \(Q=F^{-1}\). By the probability integral transform,
\(U_i:=F(R_i)\) are i.i.d. uniform random variables, and
\(R_{(j)}=Q(U_{(j)})\). Let
\[
u_\alpha:=1-\alpha,
\qquad
B_m:=U_{(m-k_m+1)}.
\]
Fix
\[
I_\alpha
:=
\left[
1-\frac32\alpha,\,
1-\frac12\alpha
\right]
\subset(1-2\alpha,1).
\]
By \Cref{lem:uniform_order_tail_facts},
\[
\mathbb P(B_m\notin I_\alpha)\le Ce^{-cm}.
\]
On \(I_\alpha\), the Gaussian-tail model gives
\[
Q(u)=\bar\mu+\bar\sigma\Phi^{-1}(u),
\]
where \(\bar\mu:=\mu_\theta(x)\) and \(\bar\sigma:=\sigma_\theta(x)\). Since
\(I_\alpha\) is compactly contained in \((1-2\alpha,1)\), and since
\Cref{cor:uniform_tail_bounds} gives uniform bounds on \(\bar\mu\) and
\(\bar\sigma\), the functions \(Q,Q'\), and \(Q''\) are uniformly bounded on
\(I_\alpha\).

For \(b\in I_\alpha\), define
\[
Y_b:=Q(b+(1-b)W),
\qquad
W\sim\mathrm{Unif}(0,1),
\]
and
\[
M_1(b):=\mathbb E[Y_b]
=
\frac1{1-b}\int_b^1 Q(u)\,du,
\qquad
M_2(b):=\mathbb E[Y_b^2]
=
\frac1{1-b}\int_b^1 Q(u)^2\,du.
\]
Then \(M_1,M_2\in C^2(I_\alpha)\) with uniformly bounded first two derivatives,
and
\[
M_1(u_\alpha)=\mu,
\qquad
M_2(u_\alpha)-M_1(u_\alpha)^2=\sigma^2.
\]
Moreover, uniformly over \(b\in I_\alpha\), by fourth-moment bounds on \(R\) and \Cref{cor:uniform_tail_bounds},
\begin{equation}
\label{eq:app_conditional_tail_moment_bounds}
\mathbb E[|Y_b|^4]\le C,
\qquad
|Q(b)|+|M_1(b)|+|M_2(b)|\le C.
\end{equation}

Conditioned on \(B_m=b\), the empirical upper tail consists of the threshold
value \(Q(b)\) and \(k_m-1\) unordered i.i.d. copies of \(Y_b\). Therefore,
\[
\mathbb E[\hat\mu_m\mid B_m=b]
=
\frac{Q(b)+(k_m-1)M_1(b)}{k_m},
\]
and
\begin{equation}
\label{eq:app_conditional_mean_bias}
\left|
\mathbb E[\hat\mu_m\mid B_m=b]-M_1(b)
\right|
\le
\frac{C}{m}.
\end{equation}
Also,
\begin{equation}
\label{eq:app_conditional_mean_variance}
\mathrm{Var}(\hat\mu_m\mid B_m=b)
\le
\frac{C}{m}.
\end{equation}

Similarly, define
\[
\hat M_{2,m}:=\frac1{k_m}\sum_{i\in\mathcal I_m}R_i^2 .
\]
Then
\[
\mathbb E[\hat M_{2,m}\mid B_m=b]
=
\frac{Q(b)^2+(k_m-1)M_2(b)}{k_m},
\]
and hence
\begin{equation}
\label{eq:app_conditional_second_moment_bias}
\left|
\mathbb E[\hat M_{2,m}\mid B_m=b]-M_2(b)
\right|
\le
\frac{C}{m}.
\end{equation}
Moreover,
\begin{equation}
\label{eq:app_conditional_second_moment_variance}
\mathrm{Var}(\hat M_{2,m}\mid B_m=b)
\le
\frac{C}{m},
\end{equation}
where we used the uniform fourth-moment bound in
\eqref{eq:app_conditional_tail_moment_bounds}.

We now control each component of \(\hat\eta_m\).

First, since \(\hat r_m=Q(B_m)\) and \(r=Q(u_\alpha)\),
\Cref{lem:uniform_order_tail_facts} applied to \(f=Q\) gives
\[
\bigl|
\mathbb E[(\hat r_m-r)\Ind{B_m\in I_\alpha}]
\bigr|
+
\mathbb E[(\hat r_m-r)^2\Ind{B_m\in I_\alpha}]
\le
\frac{C}{m}.
\]
On the complement, by Cauchy--Schwarz,
\[
\mathbb E[(\hat r_m-r)^2\Ind{B_m\notin I_\alpha}]
\le
\Bigl(\mathbb E[(\hat r_m-r)^4]\Bigr)^{1/2}
\mathbb P(B_m\notin I_\alpha)^{1/2}.
\]
By \Cref{lem:empirical_tail_moment_envelope} and
\Cref{cor:uniform_tail_bounds}, the fourth moment
\(\mathbb E[(\hat r_m-r)^4]\) is uniformly bounded, while
\(\mathbb P(B_m\notin I_\alpha)\le Ce^{-cm}\) by
\eqref{eq:app_order_localization}. Therefore
\[
\mathbb E[(\hat r_m-r)^2\Ind{B_m\notin I_\alpha}]
\le
Ce^{-cm}.
\]
The same argument gives
\[
\bigl|
\mathbb E[(\hat r_m-r)\Ind{B_m\notin I_\alpha}]
\bigr|
\le
Ce^{-cm}.
\]
Combining the localized and complement bounds,
\[
\bigl|
\mathbb E[\hat r_m]-r
\bigr|
+
\mathbb E[(\hat r_m-r)^2]
\le
\frac{C}{m}.
\]

Next, write
\[
\hat\mu_m-\mu
=
A_m+D_m,
\qquad
A_m:=\hat\mu_m-M_1(B_m),
\qquad
D_m:=M_1(B_m)-M_1(u_\alpha).
\]
By the tower property, \eqref{eq:app_conditional_mean_bias}, and
\eqref{eq:app_conditional_mean_variance},
\[
|\mathbb E[A_m]|+\mathbb E[A_m^2]\le \frac{C}{m}.
\]
By \Cref{lem:uniform_order_tail_facts} applied to \(f=M_1\),
\[
|\mathbb E[D_m]|+\mathbb E[D_m^2]\le \frac{C}{m}.
\]
Therefore,
\begin{equation}
\label{eq:app_muhat_bias_mse}
\bigl|\mathbb E[\hat\mu_m]-\mu\bigr|
+
\mathbb E[(\hat\mu_m-\mu)^2]
\le
\frac{C}{m}.
\end{equation}

We will also use the following fourth-moment bound:
\begin{equation}
\label{eq:app_muhat_fourth_moment}
\mathbb E[(\hat\mu_m-\mu)^4]\le \frac{C}{m^2}.
\end{equation}
To prove it, recall the decomposition
\[
\hat\mu_m-\mu
=
A_m+D_m,
\qquad
A_m:=\hat\mu_m-M_1(B_m),
\qquad
D_m:=M_1(B_m)-M_1(u_\alpha).
\]
By \((a+b)^4\le 8(a^4+b^4)\), it suffices to bound the fourth moments of
\(A_m\) and \(D_m\).

First, since \(M_1\) has uniformly bounded derivative on \(I_\alpha\),
\[
|D_m|^4\Ind{B_m\in I_\alpha}
\le
C|B_m-u_\alpha|^4 .
\]
Using \eqref{eq:app_beta_moment_bounds}, we get
\[
\mathbb E\bigl[D_m^4\Ind{B_m\in I_\alpha}\bigr]
\le
\frac{C}{m^2}.
\]

Next condition on \(B_m=b\in I_\alpha\). Let
\[
Y_{\ell,b}:=Q\bigl(b+(1-b)W_\ell\bigr),
\qquad
W_\ell\stackrel{\mathrm{i.i.d.}}{\sim}\mathrm{Unif}(0,1),
\qquad
\ell=1,\dots,k_m-1.
\]
Then
\[
\hat\mu_m
=
\frac{Q(b)+\sum_{\ell=1}^{k_m-1}Y_{\ell,b}}{k_m},
\]
and hence
\[
A_m
=
\frac{Q(b)-M_1(b)}{k_m}
+
\frac1{k_m}
\sum_{\ell=1}^{k_m-1}
\bigl(Y_{\ell,b}-M_1(b)\bigr).
\]
The first term is \(O(m^{-1})\) uniformly in \(b\in I_\alpha\). For the second
term, the variables
\[
Z_{\ell,b}:=Y_{\ell,b}-M_1(b)
\]
are conditionally i.i.d., mean-zero, and satisfy
\[
\sup_{b\in I_\alpha}\mathbb E[Z_{\ell,b}^4]\le C
\]
by \eqref{eq:app_conditional_tail_moment_bounds}. Therefore,
\[
\mathbb E\!\left[
\left(\sum_{\ell=1}^{k_m-1}Z_{\ell,b}\right)^4
\,\middle|\,
B_m=b
\right]
=
(k_m-1)\mathbb E[Z_{1,b}^4]
+
3(k_m-1)(k_m-2)\bigl(\mathbb E[Z_{1,b}^2]\bigr)^2
\le
Ck_m^2.
\]
Consequently,
\[
\mathbb E[A_m^4\mid B_m=b]\le \frac{C}{m^2},
\qquad b\in I_\alpha.
\]
Integrating over \(B_m\) gives
\[
\mathbb E\bigl[A_m^4\Ind{B_m\in I_\alpha}\bigr]
\le
\frac{C}{m^2}.
\]

Finally, the complement \(B_m\notin I_\alpha\) has exponentially small
probability by \eqref{eq:app_order_localization}; its contribution is
negligible by Cauchy--Schwarz and the moment envelope in
\Cref{lem:empirical_tail_moment_envelope}. Combining the bounds for \(A_m\)
and \(D_m\) proves \eqref{eq:app_muhat_fourth_moment}.

For the second moment, decompose
\[
\hat M_{2,m}-M_2(u_\alpha)
=
C_m+E_m,
\qquad
C_m:=\hat M_{2,m}-M_2(B_m),
\qquad
E_m:=M_2(B_m)-M_2(u_\alpha).
\]
Using \eqref{eq:app_conditional_second_moment_bias},
\eqref{eq:app_conditional_second_moment_variance}, and
\Cref{lem:uniform_order_tail_facts} applied to \(f=M_2\), we obtain
\begin{equation}
\label{eq:app_second_moment_bias_mse}
\left|
\mathbb E[\hat M_{2,m}]-M_2(u_\alpha)
\right|
+
\mathbb E[(\hat M_{2,m}-M_2(u_\alpha))^2]
\le
\frac{C}{m}.
\end{equation}

Let
\[
\hat v_m
:=
(\hat\sigma_m^{\mathrm{raw}})^2
=
\hat M_{2,m}-\hat\mu_m^2,
\qquad
v_\alpha
:=
\sigma^2
=
M_2(u_\alpha)-\mu^2.
\]
Define the second-moment and mean errors
\[
e_{2,m}:=\hat M_{2,m}-M_2(u_\alpha),
\qquad
e_{\mu,m}:=\hat\mu_m-\mu.
\]
Then, by the identity \(a^2-b^2=2b(a-b)+(a-b)^2\),
\begin{align}
\hat v_m-v_\alpha
&=
\bigl(\hat M_{2,m}-\hat\mu_m^2\bigr)
-
\bigl(M_2(u_\alpha)-\mu^2\bigr) \notag\\
&=
e_{2,m}
-
\bigl(\hat\mu_m^2-\mu^2\bigr) \notag\\
&=
e_{2,m}
-
2\mu e_{\mu,m}
-
e_{\mu,m}^2.
\label{eq:app_variance_error_identity}
\end{align}
Using \(|\mu|\le C\), together with
\eqref{eq:app_muhat_bias_mse},
\eqref{eq:app_muhat_fourth_moment}, and
\eqref{eq:app_second_moment_bias_mse}, this identity gives
\[
\left|
\mathbb E[\hat v_m]-v_\alpha
\right|
+
\mathbb E[(\hat v_m-v_\alpha)^2]
\le
\frac{C}{m}.
\]
Indeed, for the bias,
\[
\left|\mathbb E[\hat v_m-v_\alpha]\right|
\le
|\mathbb E[e_{2,m}]|
+
2|\mu|\,|\mathbb E[e_{\mu,m}]|
+
\mathbb E[e_{\mu,m}^2]
\le
\frac{C}{m},
\]
and for the mean-squared error,
\[
\mathbb E[(\hat v_m-v_\alpha)^2]
\le
C\Big(
\mathbb E[e_{2,m}^2]
+
\mathbb E[e_{\mu,m}^2]
+
\mathbb E[e_{\mu,m}^4]
\Big)
\le
\frac{C}{m}.
\]

Finally, define the clipped square-root map
\[
\psi(v):=\max\{\sqrt v,\varepsilon_\sigma\},
\qquad v\ge0.
\]
Then
\[
\hat\sigma_m=\psi(\hat v_m),
\qquad
\sigma=\psi(v_\alpha),
\]
because \(v_\alpha=\sigma^2\) and \(\sigma\ge 2\varepsilon_\sigma\).
Since \(v_\alpha\ge 4\varepsilon_\sigma^2\), the map \(\psi\) satisfies
\[
|\psi(v)-\psi(v_\alpha)|
\le
C|v-v_\alpha|,
\qquad
|\psi(v)-\psi(v_\alpha)-\psi'(v_\alpha)(v-v_\alpha)|
\le
C|v-v_\alpha|^2,
\qquad v\ge0,
\]
where \(C\) depends only on \((\alpha,G,\sigma_{\min},M_R)\). Using the bound just proved
above,
\[
\bigl|\mathbb E[\hat v_m-v_\alpha]\bigr|
+
\mathbb E[(\hat v_m-v_\alpha)^2]
\le
\frac{C}{m},
\]
we obtain
\[
\mathbb E[(\hat\sigma_m-\sigma)^2]
\le
C\mathbb E[(\hat v_m-v_\alpha)^2]
\le
\frac{C}{m}.
\]
For the bias, write
\[
\hat\sigma_m-\sigma
=
\psi'(v_\alpha)(\hat v_m-v_\alpha)
+
\mathcal R_m,
\qquad
|\mathcal R_m|\le C|\hat v_m-v_\alpha|^2.
\]
Therefore,
\[
\bigl|\mathbb E[\hat\sigma_m]-\sigma\bigr|
\le
C\bigl|\mathbb E[\hat v_m-v_\alpha]\bigr|
+
C\mathbb E[(\hat v_m-v_\alpha)^2]
\le
\frac{C}{m}.
\]
Combining the last two displays gives
\[
\bigl|\mathbb E[\hat\sigma_m]-\sigma\bigr|
+
\mathbb E[(\hat\sigma_m-\sigma)^2]
\le
\frac{C}{m}.
\]
Combining the three component bounds proves
\eqref{eq:app_tail_vector_bias_mse_delta}.
\end{proof}

We now package the preceding results into a single empirical-tail
interface that will be used in the gradient-estimator analysis.

\begin{corollary}
\label{lem:empirical_tail_vector_bounds}
Suppose Assumptions~\ref{assum:gaussian-tail-model} and
\ref{assum:nondegenerate_scale} hold. Then there exist constants
\(\delta_{\mathrm{tail}}>0\), \(c_{\mathrm{tail}}>0\), and
\(C_{\mathrm{tail}}>0\), depending only on
\((\alpha,M_R,\sigma_{\min})\), such that for all sufficiently large \(m\),
\begin{equation}
\label{eq:app_tail_vector_good_event}
\mathbb P\bigl(\hat\eta_m\notin B(\eta,\delta_{\mathrm{tail}})\bigr)
\le
C_{\mathrm{tail}}e^{-c_{\mathrm{tail}}m}.
\end{equation}
Moreover,
\begin{equation}
\label{eq:app_tail_vector_bias_mse}
\bigl\|
\mathbb E[\hat\eta_m]-\eta
\bigr\|
\le
\frac{C_{\mathrm{tail}}}{m},
\qquad
\mathbb E\|\hat\eta_m-\eta\|^2
\le
\frac{C_{\mathrm{tail}}}{m},
\end{equation}
and
\begin{equation}
\label{eq:app_tail_vector_envelope}
\mathbb E\!\left[
\hat\sigma_m^{-4}
+
\hat\sigma_m^4
+
|\hat r_m|^8
+
|\hat\mu_m|^8
\right]
\le
C_{\mathrm{tail}}.
\end{equation}
\end{corollary}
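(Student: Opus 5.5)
This corollary packages earlier results, so the plan is mostly to cite them; the only new ingredient is the good-event bound \eqref{eq:app_tail_vector_good_event}. The bounds \eqref{eq:app_tail_vector_bias_mse} are exactly \Cref{lem:conditional_tail_delta_method}, and \eqref{eq:app_tail_vector_envelope} is exactly \Cref{lem:empirical_tail_moment_envelope}. I will take \(C_{\mathrm{tail}}\) to be the maximum of the constants appearing there and in the localization argument below. All of these depend only on \((\alpha,M_R,\sigma_{\min})\).

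For the localization, I reuse the quantile-transform representation from the proof of \Cref{lem:conditional_tail_delta_method}. Set \(U_i=F(R_i)\) and \(B_m=U_{(m-k_m+1)}\), and fix \(I_\alpha=[1-\tfrac32\alpha,1-\tfrac12\alpha]\). By \eqref{eq:app_order_localization}, \(\mathbb P(B_m\notin J_\varepsilon)\le Ce^{-cm}\) for a small fixed \(\varepsilon\in(0,\alpha/2)\), with \(J_\varepsilon\subset I_\alpha\). On \(\{B_m\in J_\varepsilon\}\) we have \(\hat r_m=Q(B_m)=\bar\mu+\bar\sigma\Phi^{-1}(B_m)\). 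Hence \(|\hat r_m-r|\le \bar\sigma\sup_{J_\varepsilon}|(\Phi^{-1})'|\,\varepsilon\). Taking \(\varepsilon\) small relative to \(\delta_{\mathrm{tail}}\) makes this at most \(\delta_{\mathrm{tail}}/3\), uniformly in \(x,\theta\) by \Cref{cor:uniform_tail_bounds}.

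The mean and scale need a concentration argument. Conditionally on \(B_m=b\in J_\varepsilon\), the top sample consists of \(Q(b)\) together with \(k_m-1\) i.i.d. copies of \(Y_b\). These have Gaussian-type upper tails and are bounded below by \(Q(b)\), so \(Y_b-Q(b)\) is a uniformly sub-Gaussian nonnegative variable (a truncated normal scaled by \(\bar\sigma\le\sigmamax\)). Bernstein/Hoeffding-type sub-Gaussian and sub-exponential concentration applied to the averages of \(Y_b\) and \(Y_b^2\) then gives
\[
\mathbb P\bigl(|\hat\mu_m-M_1(b)|+|\hat M_{2,m}-M_2(b)|>\delta'\,\big|\,B_m=b\bigr)\le Ce^{-c m}.
\]
Continuity of \(M_1,M_2\) on \(I_\alpha\) with uniformly bounded derivatives controls \(|M_j(b)-M_j(u_\alpha)|\le C\varepsilon\). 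The map \((M_1,M_2)\mapsto\psi(M_2-M_1^2)\) is Lipschitz near the population point, since \(\sigma\ge2\varepsilon_\sigma\). So \(\hat\mu_m\) and \(\hat\sigma_m\) lie within \(\delta_{\mathrm{tail}}/3\) of \(\mu,\sigma\) outside an event of probability \(Ce^{-cm}\). A union bound then yields \eqref{eq:app_tail_vector_good_event}.

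The main obstacle is making this concentration uniform in \(x,\theta\). The required sub-Gaussian constants for \(Y_b\) come from the Gaussian tail on \([q_{2\alpha},\infty)\) together with the uniform bounds \(|\bar\mu|+\bar\sigma\le C\) from \Cref{cor:uniform_tail_bounds}. The conditional i.i.d. structure is exactly as in the proof of \Cref{lem:conditional_tail_delta_method}. Beyond this, I would choose \(\delta_{\mathrm{tail}}\) small enough (e.g.\ \(\delta_{\mathrm{tail}}\le\varepsilon_\sigma\)) that the ball \(B(\eta,\delta_{\mathrm{tail}})\) stays in the region where \(\hat\sigma\ge\varepsilon_\sigma\) and \(\hat r\) exceeds \(r_{\theta,2\alpha}\) by \(c_{\mathrm{gap}}/2\), which later arguments will need.
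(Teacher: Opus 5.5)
Your proposal is correct and matches the paper's proof. The bias/MSE and envelope bounds are cited from \Cref{lem:conditional_tail_delta_method} and \Cref{lem:empirical_tail_moment_envelope}, and the localization combines the order-statistic bound for \(B_m\) with conditional Bernstein concentration of the top-tail averages of \(Y_b\) and \(Y_b^2\) given \(B_m=b\), plus continuity of \(Q,M_1,M_2\) and Lipschitzness of the clipped scale map, exactly as you do; the paper only routes the last step through the intermediate vector \(\eta(B_m)\) instead of your coordinatewise bounds, and it explicitly absorbs the deterministic \(O(k_m^{-1})\) contribution of the threshold sample \(Q(b)\) for large \(m\), which you should also note.
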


\begin{proof}
The bias and mean-squared error bounds in
\eqref{eq:app_tail_vector_bias_mse} are exactly
\Cref{lem:conditional_tail_delta_method}. The moment envelope
\eqref{eq:app_tail_vector_envelope} follows from
\Cref{lem:empirical_tail_moment_envelope}. It remains to prove the
exponential localization bound \eqref{eq:app_tail_vector_good_event}.

Let \(F=F_{\theta,x}\), \(Q=F^{-1}\), and \(U_i:=F(R_i)\). By the probability
integral transform, \(U_i\stackrel{\mathrm{i.i.d.}}{\sim}\mathrm{Unif}(0,1)\)
and \(R_{(j)}=Q(U_{(j)})\). Set
\[
u_\alpha:=1-\alpha,
\qquad
B_m:=U_{(m-k_m+1)}.
\]

For \(b\) in a neighborhood of \(u_\alpha\), define
\[
M_1(b):=\frac1{1-b}\int_b^1 Q(u)\,du,
\qquad
M_2(b):=\frac1{1-b}\int_b^1 Q(u)^2\,du.
\]
Also define
\[
\Psi(m_1,m_2)
:=
\left(
m_1,
\max\{\sqrt{m_2-m_1^2},\varepsilon_\sigma\}
\right).
\]
Since
\[
M_2(u_\alpha)-M_1(u_\alpha)^2
=
\sigma^2
\ge
4\varepsilon_\sigma^2,
\]
there exists a fixed radius \(\delta_{\mathrm{tail}}>0\), depending only on
\((\alpha,G,\sigma_{\min},M_R)\), such that on the \(2\delta_{\mathrm{tail}}\)-neighborhood
of \((\mu,M_2(u_\alpha))\), the quantity \(m_2-m_1^2\) is bounded below by
\(\varepsilon_\sigma^2\), and the map \(\Psi\) is Lipschitz with a constant
depending only on \((\alpha,M_R,\sigma_{\min})\).

For \(b\) in this neighborhood, define the local population tail vector
associated with threshold \(Q(b)\) by
\[
\eta(b)
:=
\left(
Q(b),
M_1(b),
\max\left\{
\sqrt{M_2(b)-M_1(b)^2},
\varepsilon_\sigma
\right\}
\right).
\]
By the uniform smoothness of \(Q,M_1,M_2\) on compact subintervals of
\((1-2\alpha,1)\), the map \(b\mapsto\eta(b)\) is uniformly continuous near
\(u_\alpha\), and \(\eta(u_\alpha)=\eta\). Hence there exists
\(\varepsilon_{\mathrm{tail}}\in(0,\alpha)\), depending only on
\((\alpha,G,\sigma_{\min},M_R)\), such that
\[
J_{\mathrm{tail}}
:=
[u_\alpha-\varepsilon_{\mathrm{tail}},
 u_\alpha+\varepsilon_{\mathrm{tail}}]
\subset
(1-2\alpha,1)
\]
and
\begin{equation}
\label{eq:app_population_tail_local_continuity}
\sup_{b\in J_{\mathrm{tail}}}
\|\eta(b)-\eta\|
\le
\frac{\delta_{\mathrm{tail}}}{3}.
\end{equation}

By \Cref{lem:uniform_order_tail_facts},
\begin{equation}
\label{eq:app_Bm_tail_localization}
\mathbb P(B_m\notin J_{\mathrm{tail}})
\le
Ce^{-cm}.
\end{equation}
Conditional on \(B_m=b\in J_{\mathrm{tail}}\), the unordered upper-tail
observations excluding the threshold have the same distribution as
\(Y_{b,1},\dots,Y_{b,k_m-1}\), where
\[
Y_{b,\ell}:=Q(b+(1-b)W_\ell),
\qquad
W_\ell\stackrel{\mathrm{i.i.d.}}{\sim}\mathrm{Unif}(0,1).
\]
Thus
\[
\hat\mu_m-M_1(b)
=
\frac1{k_m}\sum_{\ell=1}^{k_m-1}
\bigl(Y_{b,\ell}-M_1(b)\bigr)
+
\frac{Q(b)-M_1(b)}{k_m},
\]
and
\[
\hat M_{2,m}-M_2(b)
=
\frac1{k_m}\sum_{\ell=1}^{k_m-1}
\bigl(Y_{b,\ell}^2-M_2(b)\bigr)
+
\frac{Q(b)^2-M_2(b)}{k_m},
\]
where
\[
\hat M_{2,m}:=\frac1{k_m}\sum_{i\in\mathcal I_m}R_i^2.
\]
The deterministic threshold terms are uniformly \(O(k_m^{-1})\), since
\(|Q(b)|+|M_1(b)|+|M_2(b)|\le C\) on \(J_{\mathrm{tail}}\).
The family \(\{Y_b:b\in J_{\mathrm{tail}}\}\) has uniform Gaussian-tail
envelopes. Thus \(Y_b-\mathbb E Y_b\) is uniformly sub-Gaussian and
\(Y_b^2-\mathbb E Y_b^2\) is uniformly sub-exponential. Standard Bernstein
inequalities give
\begin{equation}
\label{eq:app_tail_empirical_local_concentration}
\mathbb P\!\left(
|\hat\mu_m-M_1(b)|
+
|\hat M_{2,m}-M_2(b)|
>
c_0\delta_{\mathrm{tail}}
\;\middle|\;
B_m=b
\right)
\le
Ce^{-cm},
\end{equation}
uniformly over \(b\in J_{\mathrm{tail}}\). Here, the deterministic \(O(k_m^{-1})\) contribution of the threshold sample
\(Q(b)\) and \(Q(b)^2\) is absorbed into \(c_0\delta_{\mathrm{tail}}\) for all
sufficiently large \(m\).

On the concentration event in
\eqref{eq:app_tail_empirical_local_concentration}, the threshold coordinate
has no error relative to \(\eta(B_m)\), since
\[
\hat r_m=Q(B_m).
\]
The mean coordinate is controlled by
\(|\hat\mu_m-M_1(B_m)|\). The scale coordinate is controlled by the Lipschitz
property of \(\Psi\), because
\[
\hat\sigma_m
=
\max\left\{
\sqrt{\hat M_{2,m}-\hat\mu_m^2},
\varepsilon_\sigma
\right\}.
\]
Therefore, after choosing \(c_0>0\) small enough,
\begin{equation}
\label{eq:app_empirical_tail_close_to_local_population}
\|\hat\eta_m-\eta(B_m)\|
\le
\frac{2\delta_{\mathrm{tail}}}{3}.
\end{equation}
Combining this with
\eqref{eq:app_population_tail_local_continuity}, we obtain
\[
\|\hat\eta_m-\eta\|
\le
\delta_{\mathrm{tail}}
\]
whenever \(B_m\in J_{\mathrm{tail}}\) and the empirical concentration event
holds. Consequently,
\[
\mathbb P(\|\hat\eta_m-\eta\|>\delta_{\mathrm{tail}})
\le
\mathbb P(B_m\notin J_{\mathrm{tail}})
+
Ce^{-cm}
\le
C_{\mathrm{tail}}e^{-c_{\mathrm{tail}}m}.
\]
This proves \eqref{eq:app_tail_vector_good_event}, and hence the result.
\end{proof}

\subsection{Regularity of the shaped score map}
\label{app:shaped_score_regular}

The next lemma controls the local behavior of the shaped score map
\(\eta'\mapsto \varphi_{\eta'}\) and its population average \(H(\eta')\). These
are the regularity properties used in the direct plug-in estimator analysis.

\begin{lemma}
\label{lem:shaped_score_local_regularity}
Suppose Assumptions~\ref{assum:gaussian-tail-model},
\ref{assum:bounded_score}, and \ref{assum:nondegenerate_scale} hold. Then there
exist constants \(\delta_{\mathrm{reg}}>0\) and \(C_{\mathrm{reg}}>0\),
depending only on \((\alpha,G,\sigma_{\min},M_R)\), such that the following
hold.

First,
\begin{equation}
\label{eq:app_phi_oracle_second_moment}
P\|\varphi_\eta\|^2
\le
C_{\mathrm{reg}}\log N.
\end{equation}
Second, for every
\(\eta_1,\eta_2\in B(\eta,\delta_{\mathrm{reg}})\),
\begin{equation}
\label{eq:app_phi_l2_lipschitz}
P\|\varphi_{\eta_1}-\varphi_{\eta_2}\|^2
\le
C_{\mathrm{reg}}\log N\,\|\eta_1-\eta_2\|^2.
\end{equation}
Third, there exists a linear map
\[
\dot H_\eta:\mathbb R^3\to\mathbb R^{d_\theta}
\]
where \(d_\theta\) is the dimension of the policy parameter vector \(\theta\),
such that
\begin{equation}
\label{eq:app_H_derivative_bound}
\|\dot H_\eta\|
\le
C_{\mathrm{reg}}\sqrt{\log N},
\end{equation}
and for every \(\eta'\in B(\eta,\delta_{\mathrm{reg}})\),
\begin{equation}
\label{eq:app_H_second_order_remainder}
\bigl\|
H(\eta')-H(\eta)-\dot H_\eta(\eta'-\eta)
\bigr\|
\le
C_{\mathrm{reg}}\sqrt{\log N}\,\|\eta'-\eta\|^2.
\end{equation}
Consequently,
\begin{equation}
\label{eq:app_H_lipschitz}
\|H(\eta')-H(\eta)\|
\le
C_{\mathrm{reg}}\sqrt{\log N}\,\|\eta'-\eta\|.
\end{equation}
\end{lemma}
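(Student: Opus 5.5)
The plan is to reduce everything to one-dimensional integrals against the reward density on the Gaussian tail. Two facts drive all four bounds. First, $\|S\|\le G$ by Assumption~\ref{assum:bounded_score}. Second, the shaped reward $\widetilde R_{\eta'}(u)$ is a quadratic polynomial in $u$ whose coefficients are affine in $\widetilde c_N$, so \eqref{eq:app_ctilde_log_bound} gives every coefficient bound the form $C(1+|\widetilde c_N|)\le C\sqrt{\log N}$, using $N\ge 2$ to absorb the constant.

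I first fix $\delta_{\mathrm{reg}}$ small, depending only on $(\alpha,\sigma_{\min},M_R)$. For every $\eta'=(r',\mu',\sigma')\in B(\eta,\delta_{\mathrm{reg}})$ I want two things: $\sigma'\ge\varepsilon_\sigma$, which follows from \eqref{eq:app_population_tail_bounds}; and $r'\ge r_{\theta,2\alpha}(x)+c_{\mathrm{gap}}/2$, which follows from \eqref{eq:app_tail_gap_bound}. Then every threshold in play lies inside the Gaussian regime of Assumption~\ref{assum:gaussian-tail-model}. There the density $p$ equals $\phi(\cdot;\bar\mu,\bar\sigma^2)$, so it is bounded by $(2\pi)^{-1/2}\sigma_{\min}^{-1}$ and has all polynomial moments bounded uniformly, by \Cref{cor:uniform_tail_bounds}.

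\emph{Second moment \eqref{eq:app_phi_oracle_second_moment}.} Bound $P\|\varphi_\eta\|^2\le G^2\alpha^{-2}\,\mathbb E[\Ind{R\ge r}\widetilde R_\eta(R)^2]$. Then use $|\widetilde R_\eta(u)|\le C(1+|\widetilde c_N|)(1+u^2)$, which holds because $\sigma\ge 2\varepsilon_\sigma$ and $|r|,|\mu|\le C$, together with the Gaussian fourth moments on $\{R\ge r\}$.

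\emph{$L^2$-Lipschitz bound \eqref{eq:app_phi_l2_lipschitz}.} Decompose
\[
\Ind{R\ge r_1}\widetilde R_{\eta_1}(R)-\Ind{R\ge r_2}\widetilde R_{\eta_2}(R)
=\Ind{R\ge r_1}\bigl(\widetilde R_{\eta_1}-\widetilde R_{\eta_2}\bigr)(R)
+\bigl(\Ind{R\ge r_1}-\Ind{R\ge r_2}\bigr)\widetilde R_{\eta_2}(R).
\]
For the first term, $\eta'\mapsto\widetilde R_{\eta'}(u)$ is smooth on the ball, with gradient bounded by $C(1+|\widetilde c_N|)(1+u^2)$ because $\sigma'\ge\varepsilon_\sigma$. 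The mean value theorem plus Gaussian moments then gives $C\log N\|\eta_1-\eta_2\|^2$. For the second term, the key observation is that $\widetilde R_{\eta_2}(r_2)=0$, and both summands of $\widetilde R$ vanish at the threshold. So on the interval between $r_1$ and $r_2$ we have $|\widetilde R_{\eta_2}(u)|\le C\sqrt{\log N}\,|r_1-r_2|$. That interval has probability at most $\sup p\cdot|r_1-r_2|$. The contribution is therefore $C\log N|r_1-r_2|^3\le C\log N\|\eta_1-\eta_2\|^2$.

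\emph{Derivative and remainder \eqref{eq:app_H_derivative_bound}--\eqref{eq:app_H_second_order_remainder}.} Write $H(\eta')=\alpha^{-1}\int_{r'}^\infty \widetilde R_{\eta'}(u)\,s(u)\,p(u)\,du$, where $s(u):=\mathbb E[S\mid R=u]$ satisfies $\|s\|\le G$. Differentiating in $(\mu',\sigma')$ passes under the integral. Differentiating in $r'$ produces a boundary term $-\widetilde R_{\eta'}(r')s(r')p(r')$, which vanishes identically. So
\[
\nabla_{\eta'}H(\eta')=\alpha^{-1}\int_{r'}^\infty \nabla_{\eta'}\widetilde R_{\eta'}(u)\,s(u)\,p(u)\,du ,
\]
and I take $\dot H_\eta$ to be this expression at $\eta'=\eta$. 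Its norm is at most $C\sqrt{\log N}$, which gives \eqref{eq:app_H_derivative_bound}.

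The main obstacle is the second-order remainder, because $s$ is only bounded and measurable, so I cannot differentiate it. My plan is to avoid needing a Hessian of $H$. Instead I show that $\eta'\mapsto\nabla_{\eta'}H(\eta')$ is Lipschitz on the ball with constant $C\sqrt{\log N}$. Variation in $(\mu',\sigma')$ is smooth under the integral. Variation in $r'$ moves the lower limit, costing at most $\sup_u|\nabla_{\eta'}\widetilde R_{\eta'}(u)|\,G\sup p\,|\Delta r'|$ over a bounded window, which is again $C\sqrt{\log N}|\Delta r'|$. The integral form of Taylor's theorem along the segment from $\eta$ to $\eta'$ then yields \eqref{eq:app_H_second_order_remainder}, and the Lipschitz bound \eqref{eq:app_H_lipschitz} follows by combining it with \eqref{eq:app_H_derivative_bound} and $\|\eta'-\eta\|\le\delta_{\mathrm{reg}}$.
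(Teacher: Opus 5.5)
Your proposal is correct. The second-moment bound, the $L^2$-Lipschitz bound and the construction of $\dot H_\eta$ follow essentially the paper's argument. You use the same choice of $\delta_{\mathrm{reg}}$ (keeping every threshold inside the Gaussian regime) and the same $C\sqrt{\log N}(1+u^2)$ envelopes. You use the same fact $\widetilde R_{\eta'}(r')=0$ to make the symmetric-difference window cost $O(\log N\,|r_1-r_2|^3)$. You also use the same integral representation through the conditional score, with a vanishing boundary term. Your split of $\varphi_{\eta_1}-\varphi_{\eta_2}$ differs from the paper's common-region/window split only cosmetically.

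Where you genuinely depart is the quadratic remainder \eqref{eq:app_H_second_order_remainder}. The paper never looks at $H$ away from $\eta$. It compares $H(\eta+h)$ with $H(\eta)+\dot H_\eta h$ directly, splitting into two regions:
\begin{itemize}
\item On the common region $[\max\{r,r+h_r\},\infty)$, the pointwise second-order Taylor remainder of $\eta'\mapsto\widetilde R_{\eta'}(t)$ is $O(\sqrt{\log N}(1+t^2)\|h\|^2)$.
\item On the moving window of length $|h_r|$, the terms $\widetilde R_\eta$, $\widetilde R_{\eta+h}$ and $\langle\nabla_\eta\widetilde R_\eta,h\rangle$ are each $O(\sqrt{\log N}\|h\|)$, so the window contributes second order in one pass.
\end{itemize}
You instead show that $\nabla H$ is $C\sqrt{\log N}$-Lipschitz on the whole ball and integrate along the segment. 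This yields a stronger structural fact: $H\in C^{1,1}$ on the ball, so the expansion holds around any center there. The cost is that you need differentiability of $H$ at every point of the ball, and two spots deserve care.
\begin{itemize}
\item Since $s$ is only measurable, the boundary term $-\widetilde R_{\eta'}(r')s(r')p(r')$ is not literally defined. What you actually use is $\bigl\|\int_{r'}^{r'+\epsilon}\widetilde R_{\eta'}\,s\,p\bigr\|\le CG\sqrt{\log N}\,\epsilon^2$, which follows from $\widetilde R_{\eta'}(r')=0$ and boundedness of $s$ and $p$.
\item The integrand $\nabla_{\eta'}\widetilde R_{\eta'}(u)$ depends on $r'$ as well, not only on $(\mu',\sigma')$. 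This dependence is controlled by the same second-derivative envelope $\|\nabla^2_{\eta'}\widetilde R_{\eta'}(u)\|\le C\sqrt{\log N}(1+u^2)$, not by the moving-limit estimate alone. In your window estimate the cost is only first order in $|\Delta r'|$, because $\nabla_{\eta'}\widetilde R_{\eta'}$ does not vanish at the threshold; this is why you need the extra integration step.
\end{itemize}
Finally, you get \eqref{eq:app_H_lipschitz} from the derivative bound plus the remainder on a bounded ball, whereas the paper gets it from Jensen and \eqref{eq:app_phi_l2_lipschitz}. Both are immediate.
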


\begin{proof}
By \eqref{eq:app_population_tail_bounds}, the population tail vector satisfies
\[
|r|+|\mu|+\sigma\le C,
\qquad
\sigma\ge 2\varepsilon_\sigma.
\]
Choose \(\delta_{\mathrm{reg}}>0\) small enough that every
\(\eta'=(r',\mu',\sigma')\in B(\eta,\delta_{\mathrm{reg}})\) satisfies
\begin{equation}
\label{eq:app_eta_prime_local_bounds}
\sigma'\ge \varepsilon_\sigma,
\qquad
|r'|+|\mu'|\le C,
\qquad
r'\ge r_{\theta,2\alpha}(x)+\frac12 c_{\mathrm{gap}},
\end{equation}
where \(c_{\mathrm{gap}}\) is defined in \eqref{eq:app_tail_gap_bound}. Thus all
thresholds in the local ball remain inside the modeled Gaussian tail. We also
write
\[
I_{\mathrm{loc}}
:=
[r-\delta_{\mathrm{reg}},\,r+\delta_{\mathrm{reg}}],
\]
which is a compact interval contained in the modeled Gaussian tail region.

We first calculate local envelope bounds for the shaped reward. Recall that
\[
\widetilde R_{\eta'}(u)
=
(u-r')
+
\frac{\widetilde c_N}{2\sigma'}
\Bigl((u-\mu')^2-(r'-\mu')^2\Bigr).
\]
Using \eqref{eq:app_eta_prime_local_bounds} and
\eqref{eq:app_ctilde_log_bound}, we have, uniformly over
\(\eta'\in B(\eta,\delta_{\mathrm{reg}})\),
\begin{equation}
\label{eq:app_shaped_reward_growth}
|\widetilde R_{\eta'}(u)|
\le
C\sqrt{\log N}\,(1+|u|^2).
\end{equation}
Moreover, differentiating the shaped reward with respect to the tail vector coordinates
 gives
\[
\partial_{r'}\widetilde R_{\eta'}(u)
=
-1-\frac{\widetilde c_N}{\sigma'}(r'-\mu'),
\]
\[
\partial_{\mu'}\widetilde R_{\eta'}(u)
=
-\frac{\widetilde c_N}{\sigma'}(u-r'),
\]
and
\[
\partial_{\sigma'}\widetilde R_{\eta'}(u)
=
-\frac{\widetilde c_N}{2(\sigma')^2}
\Bigl((u-\mu')^2-(r'-\mu')^2\Bigr).
\]
Therefore,
\begin{equation}
\label{eq:app_shaped_reward_eta_derivative}
\|\nabla_{\eta'}\widetilde R_{\eta'}(u)\|
\le
C\sqrt{\log N}\,(1+|u|^2).
\end{equation}
Differentiating once more gives
\begin{equation}
\label{eq:app_shaped_reward_eta_second_derivative}
\|\nabla_{\eta'}^2\widetilde R_{\eta'}(u)\|
\le
C\sqrt{\log N}\,(1+|u|^2),
\qquad
\eta'\in B(\eta,\delta_{\mathrm{reg}}).
\end{equation}
We will also use the one-dimensional derivative bound
\begin{equation}
\label{eq:app_shaped_reward_u_derivative}
\left|
\partial_u\widetilde R_{\eta'}(u)
\right|
=
\left|
1+\frac{\widetilde c_N}{\sigma'}(u-\mu')
\right|
\le
C\sqrt{\log N},
\end{equation}
whenever \(u\in I_{\mathrm{loc}}\) and
\(\eta'\in B(\eta,\delta_{\mathrm{reg}})\).

Taking \(\eta'=\eta\) in \eqref{eq:app_shaped_reward_growth}, using
\(\|S(y)\|\le G\), and using \(\mathbb E[R^4]\le M_R\), we obtain
\[
P\|\varphi_\eta\|^2
=
\frac1{\alpha^2}
\mathbb E\!\left[
\Ind{R\ge r}
|\widetilde R_\eta(R)|^2
\|S(y)\|^2
\right]
\le
C\log N.
\]
This proves \eqref{eq:app_phi_oracle_second_moment}.

We next prove the local \(L^2(P)\) stability. Let
\[
\eta_1=(r_1,\mu_1,\sigma_1),
\qquad
\eta_2=(r_2,\mu_2,\sigma_2),
\qquad
\Delta:=\|\eta_1-\eta_2\|.
\]
Assume without loss of generality that \(r_1\le r_2\). On the common tail
region \(\{R\ge r_2\}\), the indicators agree. By the mean-value theorem and
\eqref{eq:app_shaped_reward_eta_derivative},
\[
\|\varphi_{\eta_1}(y)-\varphi_{\eta_2}(y)\|
\le
C\sqrt{\log N}\,(1+|R(y)|^2)\Delta .
\]
Thus,
\begin{equation}
\label{eq:app_common_tail_l2_bound}
P\!\left[
\|\varphi_{\eta_1}-\varphi_{\eta_2}\|^2
\Ind{R\ge r_2}
\right]
\le
C\log N\,\Delta^2 .
\end{equation}

It remains to control the symmetric difference of the two tail regions,
\(\{r_1\le R<r_2\}\). On this region only \(\varphi_{\eta_1}\) contributes.
Since
\[
\widetilde R_{\eta_1}(r_1)=0,
\]
and \([r_1,r_2]\subset I_{\mathrm{loc}}\), the mean-value
theorem and \eqref{eq:app_shaped_reward_u_derivative} give, for every
\(u\in[r_1,r_2]\),
\[
|\widetilde R_{\eta_1}(u)|
\le
C\sqrt{\log N}\,|u-r_1|
\le
C\sqrt{\log N}\,|r_2-r_1|
\le
C\sqrt{\log N}\,\Delta .
\]
The reward density is uniformly bounded on this interval because it is Gaussian
there and \(\sigma_\theta(x)\ge\sigma_{\min}\). Hence
\[
\mathbb P(r_1\le R<r_2)
\le
C|r_2-r_1|
\le
C\Delta.
\]
Using \(\|S(y)\|\le G\), we get
\begin{equation}
\label{eq:app_boundary_l2_bound}
P\!\left[
\|\varphi_{\eta_1}\|^2
\Ind{r_1\le R<r_2}
\right]
\le
C\log N\,\Delta^2\,\mathbb P(r_1\le R<r_2)
\le
C\log N\,\Delta^3
\le
C\log N\,\Delta^2,
\end{equation}
where the last inequality uses \(\Delta\le 2\delta_{\mathrm{reg}}\) and
\(\delta_{\mathrm{reg}}\le1\). Combining
\eqref{eq:app_common_tail_l2_bound} and
\eqref{eq:app_boundary_l2_bound} proves
\eqref{eq:app_phi_l2_lipschitz}. The case \(r_2<r_1\) is symmetric.

We will also use the following pointwise local consequence of the same
argument. For any \(\eta_1,\eta_2\in B(\eta,\delta_{\mathrm{reg}})\), and all
\(y\),
\begin{equation}
\label{eq:app_phi_pointwise_local_bound}
\|\varphi_{\eta_1}(y)-\varphi_{\eta_2}(y)\|^2
\le
C\log N\,(1+|R(y)|^4)\,\|\eta_1-\eta_2\|^2 .
\end{equation}
Indeed, on the common tail region this follows from the mean-value theorem in
\(\eta\) and \eqref{eq:app_shaped_reward_eta_derivative}. On the symmetric
difference of the two tail regions, it follows from
\(\widetilde R_{\eta_j}(r_j)=0\), the one-dimensional derivative bound
\eqref{eq:app_shaped_reward_u_derivative}, and
\(|r_1-r_2|\le \|\eta_1-\eta_2\|\). Outside the union of the two tail regions,
both sides of the shaped-score difference vanish.

We now turn to the population plug-in map \(H\). Let \(a(t)\) be a bounded
version of the conditional score, i.e.
\[
a(t)=\mathbb E[S(y)\mid R(y)=t]
\]
for \(p_{\theta,x}\)-almost every \(t\). Since \(\|S(y)\|\le G\), we may choose
this version so that
\[
\|a(t)\|\le G
\]
for \(p_{\theta,x}\)-almost every \(t\). For
\(\eta'=(r',\mu',\sigma')\in B(\eta,\delta_{\mathrm{reg}})\), the threshold
\(r'\) lies in the Gaussian tail, so
\begin{equation}
\label{eq:app_H_integral_representation}
H(\eta')
=
\frac1\alpha
\int_{r'}^\infty
\widetilde R_{\eta'}(t)\,
a(t)\,
p_{\theta,x}(t)\,dt .
\end{equation}

Define the linear map
\begin{equation}
\label{eq:app_H_derivative_definition}
\dot H_\eta[h]
:=
\frac1\alpha
\int_{r}^\infty
\left\langle
\nabla_{\eta}\widetilde R_{\eta}(t),
h
\right\rangle
a(t)p_{\theta,x}(t)\,dt,
\qquad h\in\mathbb R^3.
\end{equation}
The boundary term that would arise from differentiating the lower limit is zero
because
\[
\widetilde R_{\eta}(r)=0.
\]
By \eqref{eq:app_shaped_reward_eta_derivative}, the Gaussian tail envelope, and
\(\|a(t)\|\le G\),
\[
\|\dot H_\eta\|
\le
C\sqrt{\log N}.
\]
This proves \eqref{eq:app_H_derivative_bound}.

It remains to prove the quadratic remainder. Let
\[
h:=\eta'-\eta=(h_r,h_\mu,h_\sigma),
\qquad
r_h:=r+h_r.
\]
We compare \(H(\eta+h)\) with \(H(\eta)+\dot H_\eta[h]\). By
\eqref{eq:app_H_integral_representation} and
\eqref{eq:app_H_derivative_definition},
\[
H(\eta+h)
=
\frac1\alpha
\int_{r_h}^{\infty}
\widetilde R_{\eta+h}(t)a(t)p_{\theta,x}(t)\,dt,
\]
whereas
\[
H(\eta)+\dot H_\eta[h]
=
\frac1\alpha
\int_{r}^{\infty}
\Bigl(
\widetilde R_{\eta}(t)
+
\left\langle
\nabla_\eta \widetilde R_{\eta}(t),h
\right\rangle
\Bigr)
a(t)p_{\theta,x}(t)\,dt .
\]
Let
\[
\ell_h:=\max\{r,r_h\},
\qquad
J_h:=[\min\{r,r_h\},\max\{r,r_h\}].
\]
Then \(J_h\subset I_{\mathrm{loc}}\), \(|J_h|\le |h_r|\le \|h\|\), and the common
integration region is
\[
[\ell_h,\infty)
=
[r,\infty)\cap[r_h,\infty).
\]

Define the pointwise Taylor remainder
\[
\mathcal R_h(t)
:=
\widetilde R_{\eta+h}(t)
-
\widetilde R_{\eta}(t)
-
\left\langle
\nabla_{\eta}\widetilde R_{\eta}(t),h
\right\rangle .
\]
If \(h_r\ge0\), then \(r_h\ge r\), and
\[
\begin{aligned}
H(\eta+h)-H(\eta)-\dot H_\eta[h]
&=
\frac1\alpha
\int_{r_h}^{\infty}
\mathcal R_h(t)a(t)p_{\theta,x}(t)\,dt  \\
&\quad
-
\frac1\alpha
\int_{r}^{r_h}
\Bigl(
\widetilde R_{\eta}(t)
+
\left\langle
\nabla_\eta\widetilde R_{\eta}(t),h
\right\rangle
\Bigr)
a(t)p_{\theta,x}(t)\,dt .
\end{aligned}
\]
If \(h_r<0\), then \(r_h<r\), and
\[
\begin{aligned}
H(\eta+h)-H(\eta)-\dot H_\eta[h]
&=
\frac1\alpha
\int_{r}^{\infty}
\mathcal R_h(t)a(t)p_{\theta,x}(t)\,dt  \\
&\quad
+
\frac1\alpha
\int_{r_h}^{r}
\widetilde R_{\eta+h}(t)a(t)p_{\theta,x}(t)\,dt .
\end{aligned}
\]
Thus, in both cases,
\[
\begin{aligned}
\bigl\|
H(\eta+h)-H(\eta)-\dot H_\eta[h]
\bigr\|
&\le
\frac1\alpha
\int_{\ell_h}^{\infty}
|\mathcal R_h(t)|\,\|a(t)\|p_{\theta,x}(t)\,dt \\
&\quad
+
\frac1\alpha
\int_{J_h}
\Bigl(
|\widetilde R_\eta(t)|
+
\left|
\left\langle
\nabla_\eta\widetilde R_\eta(t),h
\right\rangle
\right|
+
|\widetilde R_{\eta+h}(t)|
\Bigr)
\|a(t)\|p_{\theta,x}(t)\,dt .
\end{aligned}
\]
We bound the two terms separately.

First, on the common region \([\ell_h,\infty)\), Taylor's theorem in the tail
vector and \eqref{eq:app_shaped_reward_eta_second_derivative} give
\[
|\mathcal R_h(t)|
\le
C\sqrt{\log N}\,(1+t^2)\|h\|^2 .
\]
Since \(\ell_h\) lies inside the modeled Gaussian tail and
\(\|a(t)\|\le G\), the Gaussian tail envelope gives
\[
\int_{\ell_h}^{\infty}
(1+t^2)\|a(t)\|p_{\theta,x}(t)\,dt
\le C .
\]
Therefore,
\[
\frac1\alpha
\int_{\ell_h}^{\infty}
|\mathcal R_h(t)|\,\|a(t)\|p_{\theta,x}(t)\,dt
\le
C\sqrt{\log N}\,\|h\|^2 .
\]

Second, consider the moving-boundary interval \(J_h\). Since
\(\widetilde R_\eta(r)=0\), \(\widetilde R_{\eta+h}(r_h)=0\), and
\(J_h\subset I_{\mathrm{loc}}\), the one-dimensional derivative bound
\eqref{eq:app_shaped_reward_u_derivative} implies that, for every \(t\in J_h\),
\[
|\widetilde R_\eta(t)|
\le
C\sqrt{\log N}\,|t-r|
\le
C\sqrt{\log N}\,\|h\|,
\]
and
\[
|\widetilde R_{\eta+h}(t)|
\le
C\sqrt{\log N}\,|t-r_h|
\le
C\sqrt{\log N}\,\|h\|.
\]
Moreover, since \(J_h\) is a compact local interval and
\eqref{eq:app_shaped_reward_eta_derivative} holds uniformly,
\[
\left|
\left\langle
\nabla_\eta\widetilde R_\eta(t),h
\right\rangle
\right|
\le
C\sqrt{\log N}\,\|h\|,
\qquad t\in J_h .
\]
The score profile and the reward density are uniformly bounded on \(J_h\).
Hence
\[
\begin{aligned}
&\frac1\alpha
\int_{J_h}
\Bigl(
|\widetilde R_\eta(t)|
+
\left|
\left\langle
\nabla_\eta\widetilde R_\eta(t),h
\right\rangle
\right|
+
|\widetilde R_{\eta+h}(t)|
\Bigr)
\|a(t)\|p_{\theta,x}(t)\,dt  \\
&\qquad\le
C\sqrt{\log N}\,\|h\|\,|J_h|
\le
C\sqrt{\log N}\,\|h\|^2 .
\end{aligned}
\]
Combining the common-region and moving-boundary bounds gives
\[
\bigl\|
H(\eta+h)-H(\eta)-\dot H_\eta[h]
\bigr\|
\le
C\sqrt{\log N}\,\|h\|^2,
\]
which proves \eqref{eq:app_H_second_order_remainder}.

Finally, \eqref{eq:app_H_lipschitz} follows directly from
\eqref{eq:app_phi_l2_lipschitz}. Indeed, by Jensen's inequality,
\[
\|H(\eta')-H(\eta)\|
=
\|P(\varphi_{\eta'}-\varphi_\eta)\|
\le
\left(P\|\varphi_{\eta'}-\varphi_\eta\|^2\right)^{1/2}
\le
C_{\mathrm{reg}}\sqrt{\log N}\|\eta'-\eta\|.
\]
This completes the proof.
\end{proof}

\subsection{Proof of \Cref{thm:direct_plugin_estimator}}
\label{app:proof-direct-plugin-estimator}

Recall that, with the notation from
\Cref{app:gradient_estimator_common_notation},
\[
\widehat g_m^{\mathrm{dir}}(\theta;x)
=
P_m\varphi_{\hat\eta_m},
\qquad
g_\theta(x)=P\varphi_\eta.
\]
The direct estimator is coupled because the same batch is used both to construct
\(\hat\eta_m\) and to evaluate the empirical average \(P_m\). Before proving the main result, we first introduce the leave-one-out estimators that will be used to control the influence of individual samples on the tail vector and the empirical average.

We fix \(i=1\) by exchangeability. Let
\[
k_m:=\lceil \alpha m\rceil,
\qquad
k_{m-1}:=\lceil \alpha(m-1)\rceil .
\]
Let
\[
R_{(1)}\le\cdots\le R_{(m)}
\]
be the order statistics of the full batch, and let
\[
Z_{(1)}\le\cdots\le Z_{(m-1)}
\]
be the order statistics of the leave-one-out rewards
\(\{R_2,\dots,R_m\}\). Define the full-sample and leave-one-out threshold
ranks
\[
j_m:=m-k_m+1,
\qquad
\ell_m:=m-k_{m-1}.
\]
Thus
\[
\hat r_m=R_{(j_m)},
\qquad
\hat r_m^{(-1)}=Z_{(\ell_m)}.
\]
Since \(k_m-k_{m-1}\in\{0,1\}\), we have
\[
|j_m-\ell_m|\le 1.
\]

For \(0\le k\le m-1\), define the leave-one-out top-\(k\) sums
\[
T_k:=\sum_{j=m-k}^{m-1}Z_{(j)},
\qquad
T_k^{(2)}:=\sum_{j=m-k}^{m-1}Z_{(j)}^2,
\qquad
T_0=T_0^{(2)}:=0.
\]
The leave-one-out empirical tail statistics are
\[
\hat r_m^{(-1)}:=Z_{(\ell_m)},
\qquad
\hat\mu_m^{(-1)}
:=
\frac{1}{k_{m-1}}T_{k_{m-1}},
\]
and
\[
(\hat\sigma_m^{(-1),\mathrm{raw}})^2
:=
\frac{1}{k_{m-1}}T_{k_{m-1}}^{(2)}
-
\bigl(\hat\mu_m^{(-1)}\bigr)^2,
\qquad
\hat\sigma_m^{(-1)}
:=
\max\{\hat\sigma_m^{(-1),\mathrm{raw}},\varepsilon_\sigma\},
\]
where $\varepsilon_\sigma>0$ is the same as defined in \Cref{sec:finite_sample_gradient_estimators}. 
We write
\[
\hat\eta_m^{(-1)}
:=
(\hat r_m^{(-1)},\hat\mu_m^{(-1)},\hat\sigma_m^{(-1)}).
\]

We also define two local leave-one-out envelopes around the empirical tail
threshold:
\begin{equation}
\label{eq:app_leave_one_out_spacing_envelope}
D_1
:=
\max_{\ell_m-5\le j\le \ell_m+5}
|Z_{(j+1)}-Z_{(j)}|,
\end{equation}
and
\begin{equation}
\label{eq:app_leave_one_out_boundary_envelope}
B_1
:=
1+
\max_{\ell_m-5\le j\le \ell_m+5}
\bigl(|Z_{(j)}|+Z_{(j)}^2\bigr),
\end{equation}
where indices outside \(\{1,\dots,m-1\}\) are ignored. The spacing \(D_1\)
controls threshold changes, while \(B_1/m\) controls the effect of the constant
number of boundary terms that may enter or leave the top-tail sums. We first bound
the moments of these envelopes.

\begin{lemma}
\label{lem:leave_one_out_envelope_moments}
Under Assumptions~\ref{assum:gaussian-tail-model} and
\ref{assum:nondegenerate_scale}, for all sufficiently large \(m\),
\begin{equation}
\label{eq:app_D1_moment_bounds}
\mathbb E[D_1]\le \frac{C}{m},
\qquad
\mathbb E[D_1^2]\le \frac{C}{m^2},
\end{equation}
and
\begin{equation}
\label{eq:app_B1_moment_bound}
\mathbb E[B_1^2]\le C.
\end{equation}
Moreover, \(D_1\) and \(B_1\) are functions only of the leave-one-out sample
\(\{y_2,\dots,y_m\}\), and hence are independent of \(R_1\).
\end{lemma}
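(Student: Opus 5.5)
The plan is to pass to uniforms via the probability integral transform, as in the proof of \Cref{lem:conditional_tail_delta_method}. Write \(Z_{(j)}=Q(V_{(j)})\), where \(Q=F_{\theta,x}^{-1}\) and \(V_{(1)}\le\cdots\le V_{(m-1)}\) are the order statistics of \(m-1\) i.i.d.\ uniforms built from \(y_2,\dots,y_m\). The independence claim is then immediate: \(D_1\) and \(B_1\) are deterministic functions of \(Z_{(\ell_m-5)},\dots,Z_{(\ell_m+6)}\), hence of \(\{y_2,\dots,y_m\}\), which is independent of \(y_1\) and so of \(R_1\).

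\textbf{Localization.} The ranks \(\ell_m-5,\dots,\ell_m+6\) differ from \(m-1-\lceil\alpha(m-1)\rceil\) by a bounded amount. The Hoeffding argument of \Cref{lem:uniform_order_tail_facts}, with \(m-1\) in place of \(m\) and a shift of at most \(O(1)\) ranks, then gives
\[
\mathbb P(\mathcal E^c)\le Ce^{-cm},
\]
where \(\mathcal E\) is the event that all of \(V_{(\ell_m-5)},\dots,V_{(\ell_m+6)}\) lie in \(I_\alpha=[1-\tfrac32\alpha,1-\tfrac12\alpha]\). On \(I_\alpha\) we have \(Q(u)=\bar\mu+\bar\sigma\Phi^{-1}(u)\). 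By \Cref{cor:uniform_tail_bounds}, \(|Q|\) and \(Q'\) are uniformly bounded there.

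\textbf{Bound on \(B_1\).} On \(\mathcal E\), \(B_1\le C\) deterministically. On \(\mathcal E^c\), bound \(B_1^2\) by \(C\bigl(1+\sum_{|j-\ell_m|\le 6}Z_{(j)}^4\bigr)\) and apply Cauchy--Schwarz against \(\mathbb P(\mathcal E^c)^{1/2}\). This requires an eighth-moment bound on order statistics of rank \(\ell_m+O(1)\). That bound follows exactly as for \(|\hat r_m|^8\) in \Cref{lem:empirical_tail_moment_envelope}: the positive tail is Gaussian, and the negative tail is controlled by the falling-factorial binomial bound \(Ct^{-12}\), since the ranks are still \(\ge(1-\alpha)m-O(1)\). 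The contribution from \(\mathcal E^c\) is therefore \(O(e^{-cm})\), and \(\mathbb E[B_1^2]\le C\).

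\textbf{Bound on \(D_1\).} On \(\mathcal E\), the mean value theorem gives
\[
|Z_{(j+1)}-Z_{(j)}|\le \sup_{I_\alpha}Q'\,\bigl(V_{(j+1)}-V_{(j)}\bigr),
\qquad\text{so}\qquad
D_1\le C\sum_{|j-\ell_m|\le5}\bigl(V_{(j+1)}-V_{(j)}\bigr).
\]
Uniform spacings are \(\mathrm{Beta}(1,m-1)\), with mean \(1/m\) and second moment \(2/(m(m+1))\). Summing a bounded number of them yields
\[
\mathbb E[D_1\Ind{\mathcal E}]\le \frac{C}{m},
\qquad
\mathbb E[D_1^2\Ind{\mathcal E}]\le \frac{C}{m^2}.
\]
On \(\mathcal E^c\), use \(D_1\le 2\max|Z_{(j)}|\) over the window together with the eighth-moment bound just described. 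Cauchy--Schwarz then gives a contribution of \(O(e^{-cm})\), which is \(o(m^{-2})\).

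\textbf{Main obstacle.} The delicate step is the uniform moment control on the bad event. It needs the negative-tail bound for order statistics whose ranks are shifted by \(O(1)\) relative to \(\hat r_m\), uniformly over \(x\) and \(\theta\). I expect this to go through exactly as in \Cref{lem:empirical_tail_moment_envelope} for sufficiently large \(m\), but it must be checked for each rank in the window.
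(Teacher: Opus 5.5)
Your proposal is correct and follows essentially the same route as the paper: the probability integral transform, exponential localization of the window order statistics, a mean-value bound combined with $\mathrm{Beta}(1,m-1)$ spacings on the good event, and Cauchy--Schwarz with order-statistic tail-moment bounds on the bad event. The one minor difference is that the paper bounds $\mathbb E[B_1^2]$ directly from a fourth-moment bound on the window order statistics, obtained from the second falling-factorial estimate $Ct^{-8}$. That bound also suffices for the bad-event part of $D_1$, so the paper needs neither localization for $B_1$ nor your eighth-moment bound.
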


\begin{proof}
The independence statement follows directly from the definitions.

Let
\[
U_j:=F_{\theta,x}(R_j),\qquad j=1,\dots,m,
\]
and let
\[
U^{(-1)}_{(1)}\le \cdots \le U^{(-1)}_{(m-1)}
\]
be the order statistics of \(U_2,\dots,U_m\). By the probability integral
transform, \(U_2,\dots,U_m\) are i.i.d.\ \(\mathrm{Unif}(0,1)\), and
\[
Z_{(j)}=Q(U^{(-1)}_{(j)}),
\qquad
Q:=F_{\theta,x}^{-1}.
\]

We first localize the order statistics in the fixed window
\(\ell_m-5\le j\le \ell_m+6\). Since \(\ell_m/(m-1)\) stays in a fixed neighborhood of \(1-\alpha\), the same
binomial-count argument as in \Cref{lem:uniform_order_tail_facts} gives an event
\(\mathcal A_m\) such that
\[
\mathbb P(\mathcal A_m^c)\le Ce^{-cm},
\]
and on \(\mathcal A_m\), all \(U^{(-1)}_{(j)}\) in this window lie in a compact
subinterval \(I_{\mathrm{loc}}\Subset(1-2\alpha,1)\). On this interval, the
Gaussian-tail model gives
\[
Q(u)=\bar\mu+\bar\sigma\Phi^{-1}(u).
\]
Writing \(z(u):=\Phi^{-1}(u)\), we have
\[
Q'(u)=\frac{\bar\sigma}{\phi(z(u))}.
\]
Since \(I_{\mathrm{loc}}\) is compactly contained in \((1-2\alpha,1)\),
\(\phi(z(u))\) is bounded away from zero on \(I_{\mathrm{loc}}\). Together with
the uniform bound \(\bar\sigma\le C\) from
\Cref{cor:uniform_tail_bounds}, this gives
\[
\sup_{u\in I_{\mathrm{loc}}}|Q'(u)|\le C.
\]

Hence, on
\(\mathcal A_m\),
\[
D_1
\le
\sup_{I_{\mathrm{loc}}
}|Q'(u)|
\max_{\ell_m-5\le j\le \ell_m+5}
\left(
U^{(-1)}_{(j+1)}-U^{(-1)}_{(j)}
\right) \leq C \max_{\ell_m-5\le j\le \ell_m+5}
\left(
U^{(-1)}_{(j+1)}-U^{(-1)}_{(j)}
\right)
\]

Let
\[
S_j:=U^{(-1)}_{(j+1)}-U^{(-1)}_{(j)}
\]
be an adjacent uniform spacing. The vector of uniform spacings, including the
two endpoint spacings, has a \(\mathrm{Dirichlet}(1,\dots,1)\) distribution
with \(m\) components. Thus each interior spacing has marginal distribution
\[
S_j\sim \mathrm{Beta}(1,m-1),
\]
and therefore
\[
\mathbb E[S_j]=\frac1m,
\qquad
\mathbb E[S_j^2]=\frac{2}{m(m+1)}.
\]
Since the maximum in \(D_1\) is over a fixed number of adjacent spacings,
\[
\mathbb E[D_1\Ind{\mathcal A_m}]
\le
\frac{C}{m},
\qquad
\mathbb E[D_1^2\Ind{\mathcal A_m}]
\le
\frac{C}{m^2}.
\]

We next control the contribution from \(\mathcal A_m^c\). Since
\(D_1\) is a reward-spacing envelope, we first bound the moments of the
boundary order statistics. For every \(j\) in the fixed window
\(\ell_m-5\le j\le \ell_m+6\), the rank \(j/(m-1)\) stays in a fixed
neighborhood of \(1-\alpha\). We claim that
\begin{equation}
\label{eq:app_boundary_order_stat_fourth_moment}
\mathbb E\!\left[
\max_{\ell_m-5\le j\le \ell_m+6}|Z_{(j)}|^4
\right]
\le C .
\end{equation}

To prove the claim, fix such a \(j\). For the positive tail, if
\(Z_{(j)}\ge t\), then at least \((m-1)-j+1\ge c_\alpha m\) leave-one-out
observations are at least \(t\). Thus, with
\[
N_t^+:=\sum_{\ell=2}^{m}\Ind{R_\ell\ge t},
\qquad
p_t^+:=\mathbb P(R\ge t),
\]
we have \(N_t^+\sim\mathrm{Bin}(m-1,p_t^+)\) and
\[
\mathbb P(Z_{(j)}\ge t)
\le
\mathbb P(N_t^+\ge c_\alpha m)
\le
C p_t^+.
\]
By the Gaussian upper-tail model and the uniform bounds from
\Cref{cor:uniform_tail_bounds},
\[
p_t^+\le C e^{-ct^2}
\]
for all sufficiently large \(t\). Hence
\[
\mathbb P(Z_{(j)}\ge t)\le C e^{-ct^2}.
\]

For the negative tail, if \(Z_{(j)}\le -t\), then at least
\(j\ge c_\alpha m\) leave-one-out observations are at most \(-t\). Let
\[
N_t^-:=\sum_{\ell=2}^{m}\Ind{R_\ell\le -t},
\qquad
p_t^-:=\mathbb P(R\le -t).
\]
Then \(N_t^-\sim\mathrm{Bin}(m-1,p_t^-)\). By the fourth-moment bound,
\[
p_t^-\le M_R t^{-4}.
\]
Using the second falling-factorial moment of a binomial random variable,
\[
\mathbb P(Z_{(j)}\le -t)
\le
\mathbb P(N_t^-\ge c_\alpha m)
\le
\frac{\mathbb E[(N_t^-)_2]}{(c_\alpha m)_2}
=
\frac{(m-1)_2(p_t^-)^2}{(c_\alpha m)_2}
\le
C t^{-8}.
\]
where $(n)_2=n(n-1)$ is the second falling factorial. 
Combining the positive and negative tail bounds and using the layer-cake
formula gives
\[
\mathbb E|Z_{(j)}|^4
=
4\int_0^\infty t^3\mathbb P(|Z_{(j)}|\ge t)\,dt
\le C.
\]
Since the maximum in \eqref{eq:app_boundary_order_stat_fourth_moment} is over
only a fixed number of indices, the claim follows.

Finally, because
\[
D_1
=
\max_{\ell_m-5\le j\le \ell_m+5}|Z_{(j+1)}-Z_{(j)}|
\le
2\max_{\ell_m-5\le j\le \ell_m+6}|Z_{(j)}|,
\]
\eqref{eq:app_boundary_order_stat_fourth_moment} implies
\[
\mathbb E[D_1^2]\le C,
\qquad
\mathbb E[D_1^4]\le C.
\]
Therefore, by Cauchy--Schwarz and
\(\mathbb P(\mathcal A_m^c)\le Ce^{-cm}\),
\[
\mathbb E[D_1\Ind{\mathcal A_m^c}]
\le
\bigl(\mathbb E[D_1^2]\bigr)^{1/2}
\mathbb P(\mathcal A_m^c)^{1/2}
\le
Ce^{-cm},
\]
and
\[
\mathbb E[D_1^2\Ind{\mathcal A_m^c}]
\le
\bigl(\mathbb E[D_1^4]\bigr)^{1/2}
\mathbb P(\mathcal A_m^c)^{1/2}
\le
Ce^{-cm},
\]
after decreasing \(c>0\) if necessary.
Together with the localized bounds, this proves \eqref{eq:app_D1_moment_bounds}.

Finally, by definition,
\[
B_1^2
\le
C\left(
1+
\max_{\ell_m-5\le j\le \ell_m+5}|Z_{(j)}|^4
\right).
\]
The fourth-moment bound \eqref{eq:app_boundary_order_stat_fourth_moment}, and
the fact that the maximum is over only a fixed number of indices, give
\[
\mathbb E[B_1^2]\le C.
\]
This proves \eqref{eq:app_B1_moment_bound}.
\end{proof}

Then we will analyze the deterministic influence of the first sample on the leave-one-out tail vector.
\begin{lemma}
\label{lem:deterministic_leave_one_out_tail_stability}
For $\delta_{\mathrm{reg}}$ as in \Cref{lem:shaped_score_local_regularity}, define the event
\[
\mathcal E_1
:=
\left\{
\hat\eta_m\in B(\eta,\delta_{\mathrm{reg}}),
\,
\hat\eta_m^{(-1)}\in B(\eta,\delta_{\mathrm{reg}})
\right\}.
\]
Then, on \(\mathcal E_1\),
\begin{equation}
\label{eq:app_leave_one_out_influence_bound}
\|\hat\eta_m-\hat\eta_m^{(-1)}\|
\le
C\left(
D_1+\frac{B_1}{m}
+
\frac{|R_1|+R_1^2}{m}
\Ind{R_1\ge r_{\theta,2\alpha}(x)}
\right),
\end{equation}
where \(C\) depends only on \((\alpha,G,\sigma_{\min},M_R)\).
\end{lemma}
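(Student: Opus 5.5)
The proof is a deterministic comparison between the full sample and the leave-one-out sample $\{R_2,\dots,R_m\}$. Throughout we work on $\mathcal E_1$, where $\hat\eta_m$ and $\hat\eta_m^{(-1)}$ both lie in $B(\eta,\delta_{\mathrm{reg}})$. By \eqref{eq:app_eta_prime_local_bounds}, all thresholds and tail means are then bounded by a constant. Also, both $\hat r_m$ and $\hat r_m^{(-1)}$ exceed $r_{\theta,2\alpha}(x)+\tfrac12 c_{\mathrm{gap}}$, and both scales are at least $\varepsilon_\sigma$.

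\emph{Interlacing.} Inserting $R_1$ into the sorted list $Z_{(1)}\le\cdots\le Z_{(m-1)}$ shifts every rank by at most one. Hence
\[
Z_{(j-1)}\le R_{(j)}\le Z_{(j)}
\]
for all $j$, with the obvious conventions at the endpoints.

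\emph{Threshold coordinate.} Since $|j_m-\ell_m|\le1$, both $\hat r_m=R_{(j_m)}$ and $\hat r_m^{(-1)}=Z_{(\ell_m)}$ lie in the interval $[Z_{(\ell_m-2)},Z_{(\ell_m+1)}]$. This interval is covered by at most three adjacent spacings from the window in \eqref{eq:app_leave_one_out_spacing_envelope}. Therefore $|\hat r_m-\hat r_m^{(-1)}|\le 3D_1$.

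\emph{Mean coordinate.} Let $S_{k}$ denote the full-sample top-$k$ sum. By interlacing there are two cases.
\begin{itemize}
\item If $R_1$ is not among the top $k_m$, then $S_{k_m}=T_{k_m}$.
\item If $R_1$ is among the top $k_m$, then $S_{k_m}=T_{k_m-1}+R_1$.
\end{itemize}
The second case forces $R_1\ge \hat r_m\ge r_{\theta,2\alpha}(x)$, which produces the indicator in \eqref{eq:app_leave_one_out_influence_bound}.

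Since $k_m-k_{m-1}\in\{0,1\}$, the sum $S_{k_m}$ differs from $T_{k_{m-1}}$ by at most two terms. Each such term is either $R_1$ (only in the indicator case) or a boundary order statistic $Z_{(j)}$ with index within $\pm5$ of $\ell_m$, whose absolute value is at most $B_1$. We then write
\[
\hat\mu_m-\hat\mu_m^{(-1)}
=\frac{S_{k_m}-T_{k_{m-1}}}{k_m}
+T_{k_{m-1}}\Bigl(\frac1{k_m}-\frac1{k_{m-1}}\Bigr).
\]
The first term is at most $C(B_1+|R_1|\Ind{R_1\ge r_{\theta,2\alpha}(x)})/m$, using $k_m\ge\alpha m$. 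The second term equals $\hat\mu_m^{(-1)}(k_{m-1}-k_m)/k_m$, which is $O(1/m)$ because $\hat\mu_m^{(-1)}$ is bounded on $\mathcal E_1$. Since $B_1\ge1$, this $O(1/m)$ is absorbed into $CB_1/m$.

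\emph{Second moment.} The identical bookkeeping applies to the top-$k$ sums of squares $T^{(2)}_k$, where $B_1$ already controls $Z_{(j)}^2$. This gives
\[
|\hat M_{2,m}-\hat M^{(-1)}_{2,m}|\le C\bigl(B_1+R_1^2\,\Ind{R_1\ge r_{\theta,2\alpha}(x)}\bigr)/m,
\]
where the $1/k_m-1/k_{m-1}$ correction is handled using boundedness of the leave-one-out second moment on $\mathcal E_1$.

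\emph{Scale coordinate.} Write $\hat v=\hat M_2-\hat\mu^2$. Then
\[
|\hat v_m-\hat v_m^{(-1)}|\le|\Delta M_2|+|\hat\mu_m+\hat\mu_m^{(-1)}|\,|\Delta\mu|,
\]
and $|\hat\mu_m+\hat\mu_m^{(-1)}|\le C$ on $\mathcal E_1$. The map $v\mapsto\max\{\sqrt v,\varepsilon_\sigma\}$ is Lipschitz with constant $1/(2\varepsilon_\sigma)$ on $[0,\infty)$. This transfers the variance bound to $|\hat\sigma_m-\hat\sigma_m^{(-1)}|$. Summing the three coordinate bounds yields \eqref{eq:app_leave_one_out_influence_bound}.

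\emph{Main obstacle.} The hard part is the case bookkeeping in the mean and second-moment steps. One must account for all combinations of $k_m=k_{m-1}$ or $k_m=k_{m-1}+1$ with $R_1$ inside or outside the top set. In each case one has to verify that every displaced element is a leave-one-out order statistic whose index stays within the $\pm5$ window defining $B_1$. I would tabulate the four cases explicitly, using interlacing to locate the displaced element, and check that the index shift never exceeds two.
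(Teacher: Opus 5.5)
Your proposal is correct and follows essentially the same route as the paper. The threshold bound comes from interlacing of order statistics; the mean and second-moment bounds come from the exact top-$k$ sum decomposition, where $R_1$ enters only when it is selected (hence only above $r_{\theta,2\alpha}(x)$ on $\mathcal E_1$), plus the $1/k_m-1/k_{m-1}$ correction; and the scale bound comes from the Lipschitz clipped square root. Your explicit interlacing inequalities and four-case tabulation just make precise steps the paper states more briefly; for example, the threshold bound becomes $3D_1$ rather than $CD_1$.
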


\begin{proof}
We prove the three coordinates separately.

\emph{Threshold.}
By the interlacing property of order statistics, after deleting \(R_1\), every
full-sample order statistic is either unchanged or shifted by one rank relative
to the leave-one-out order statistics. Since \(|j_m-\ell_m|\le1\), the two
thresholds \(R_{(j_m)}\) and \(Z_{(\ell_m)}\) are separated by at most a fixed
number of adjacent leave-one-out spacings around rank \(\ell_m\). Therefore,
by the definition of \(D_1\),
\begin{equation}
\label{eq:app_threshold_deterministic_influence}
|\hat r_m-\hat r_m^{(-1)}|
\le
C D_1
\qquad
\text{on }\mathcal E_1.
\end{equation}

\emph{Tail mean.}
Let \(\mathcal I_m\) be the full-sample top-\(k_m\) index set. The full-sample
top-tail sum can be written exactly in terms of the leave-one-out top sums:
\begin{equation}
\label{eq:app_full_top_sum_from_loo}
\sum_{j\in\mathcal I_m}R_j
=
\Ind{1\in\mathcal I_m}R_1
+
T_{k_m-\Ind{1\in\mathcal I_m}}.
\end{equation}
Indeed, if \(R_1\) is selected, then the remaining selected rewards are the top
\(k_m-1\) rewards among \(\{R_2,\dots,R_m\}\); otherwise they are the top
\(k_m\) leave-one-out rewards.

Subtracting the leave-one-out top-tail sum gives
\begin{align}
\left|
\sum_{j\in\mathcal I_m}R_j
-
T_{k_{m-1}}
\right|
&\le
|R_1|\Ind{1\in\mathcal I_m}
+
\left|
T_{k_m-\Ind{1\in\mathcal I_m}}
-
T_{k_{m-1}}
\right|.
\label{eq:app_top_sum_difference}
\end{align}
Because \(k_m-k_{m-1}\in\{0,1\}\) and
\(\Ind{1\in\mathcal I_m}\in\{0,1\}\), the two leave-one-out top sums in the
second term differ by at most one boundary order statistic. By the definition
of \(B_1\),
\[
\left|
T_{k_m-\Ind{1\in\mathcal I_m}}
-
T_{k_{m-1}}
\right|
\le
B_1.
\]
Hence
\begin{equation}
\label{eq:app_top_sum_difference_bound}
\left|
\sum_{j\in\mathcal I_m}R_j
-
T_{k_{m-1}}
\right|
\le
B_1
+
|R_1|\Ind{1\in\mathcal I_m}.
\end{equation}

Moreover, if \(1\in\mathcal I_m\), then \(R_1\ge \hat r_m\). On
\(\mathcal E_1\), the threshold \(\hat r_m\) lies in the local ball around
\(r=r_{\theta,\alpha}(x)\), and \(\delta_{\mathrm{reg}}\) is chosen smaller
than the fixed gap between \(r_{\theta,\alpha}(x)\) and
\(r_{\theta,2\alpha}(x)\). Thus
\begin{equation}
\label{eq:app_removed_if_selected_in_tail}
\Ind{1\in\mathcal I_m}
\le
\Ind{R_1\ge r_{\theta,2\alpha}(x)}
\qquad
\text{on }\mathcal E_1.
\end{equation}

We decompose
\begin{align}
|\hat\mu_m-\hat\mu_m^{(-1)}|
&=
\left|
\frac1{k_m}\sum_{j\in\mathcal I_m}R_j
-
\frac1{k_{m-1}}T_{k_{m-1}}
\right| \notag\\
&\le
\frac1{k_m}
\left|
\sum_{j\in\mathcal I_m}R_j-T_{k_{m-1}}
\right|
+
\left|
\frac1{k_m}-\frac1{k_{m-1}}
\right|
|T_{k_{m-1}}|.
\label{eq:app_mean_diff_decomposition}
\end{align}
For the second term, on \(\mathcal E_1\), we can uniformly bound the leave-one-out empirical tail mean \(| \hat{\mu}_m^{(-1)}|=|T_{k_{m-1}}|/k_{m-1} \leq C\) because \(\hat\eta_m^{(-1)}\) lies in the local ball around \(\eta\). Therefore, we have:

\[
\left|
\frac1{k_m}-\frac1{k_{m-1}}
\right|
|T_{k_{m-1}}| = \frac{| k_m - k_{m-1}|}{k_m} |\hat{\mu}_m^{(-1)}| \leq \frac{C}{m}.
\]

Using
\eqref{eq:app_top_sum_difference_bound} and
\eqref{eq:app_removed_if_selected_in_tail},
\begin{equation}
\label{eq:app_mean_deterministic_influence}
|\hat\mu_m-\hat\mu_m^{(-1)}|
\le
C\left(
\frac{B_1}{m}
+
\frac{|R_1|}{m}\Ind{R_1\ge r_{\theta,2\alpha}(x)}
\right).
\end{equation}

\emph{Second moment and scale.}
The empirical second tail moment is handled similarly. From
\[
\sum_{j\in\mathcal I_m}R_j^2
=
\Ind{1\in\mathcal I_m}R_1^2
+
T^{(2)}_{k_m-\Ind{1\in\mathcal I_m}},
\]
and the fact that the two leave-one-out top sums differ by at most one boundary
term, we obtain
\[
\left|
\sum_{j\in\mathcal I_m}R_j^2
-
T^{(2)}_{k_{m-1}}
\right|
\le
B_1
+
R_1^2\Ind{R_1\ge r_{\theta,2\alpha}(x)}
\]
on \(\mathcal E_1\). Therefore, with
\[
\hat M_{2,m}:=\frac1{k_m}\sum_{j\in\mathcal I_m}R_j^2,
\qquad
\hat M_{2,m}^{(-1)}
:=
\frac1{k_{m-1}}T^{(2)}_{k_{m-1}},
\]
and using that \(\hat M_{2,m}^{(-1)}\) is uniformly bounded on
\(\mathcal E_1\), the denominator-change term is \(O(m^{-1})\). Hence
\begin{equation}
\label{eq:app_second_moment_deterministic_influence}
|\hat M_{2,m}-\hat M_{2,m}^{(-1)}|
\le
C\left(
\frac{B_1}{m}
+
\frac{R_1^2}{m}\Ind{R_1\ge r_{\theta,2\alpha}(x)}
\right).
\end{equation}

Now write
\[
\hat v_m:=(\hat\sigma_m^{\mathrm{raw}})^2
=
\hat M_{2,m}-\hat\mu_m^2,
\qquad
\hat v_m^{(-1)}
:=
(\hat\sigma_m^{(-1),\mathrm{raw}})^2
=
\hat M_{2,m}^{(-1)}-(\hat\mu_m^{(-1)})^2.
\]
On \(\mathcal E_1\), the empirical tail means are uniformly bounded. Therefore,
using
\[
|\hat v_m-\hat v_m^{(-1)}|
\le
|\hat M_{2,m}-\hat M_{2,m}^{(-1)}|
+
|\hat\mu_m+\hat\mu_m^{(-1)}|\,
|\hat\mu_m-\hat\mu_m^{(-1)}|,
\]
together with \eqref{eq:app_mean_deterministic_influence} and
\eqref{eq:app_second_moment_deterministic_influence}, we obtain
\[
|\hat v_m-\hat v_m^{(-1)}|
\le
C\left(
\frac{B_1}{m}
+
\frac{R_1^2}{m}
\Ind{R_1\ge r_{\theta,2\alpha}(x)}
\right).
\]
The clipped square-root map \(v\mapsto\max\{\sqrt v,\varepsilon_\sigma\}\) is
Lipschitz on \([0,\infty)\). Hence
\begin{equation}
\label{eq:app_scale_deterministic_influence}
|\hat\sigma_m-\hat\sigma_m^{(-1)}|
\le
C\left(
\frac{B_1}{m}
+
\frac{R_1^2}{m}
\Ind{R_1\ge r_{\theta,2\alpha}(x)}
\right).
\end{equation}

Combining
\eqref{eq:app_threshold_deterministic_influence},
\eqref{eq:app_mean_deterministic_influence}, and
\eqref{eq:app_scale_deterministic_influence} gives
\[
\|\hat\eta_m-\hat\eta_m^{(-1)}\|
\le
C\left(
D_1+\frac{B_1}{m}
+
\frac{|R_1|+R_1^2}{m}
\Ind{R_1\ge r_{\theta,2\alpha}(x)}
\right),
\]
which proves \eqref{eq:app_leave_one_out_influence_bound}.
\end{proof}

Combining the samplewise influence bound with the envelope moment bounds gives
the following stability estimates for the leave-one-out tail vectors.

\begin{lemma}
\label{lem:leave_one_out_tail_stability}
For each \(i\in\{1,\dots,m\}\), let \(\hat\eta_m^{(-i)}\) be the empirical tail
vector computed from the \(m-1\) samples with the \(i\)-th sample removed, using
\(k_{m-1}:=\lceil \alpha(m-1)\rceil\) and the same clipped scale
\(\varepsilon_\sigma\). Under Assumptions~\ref{assum:gaussian-tail-model} and
\ref{assum:nondegenerate_scale}, for all sufficiently large \(m\),
\begin{equation}
\label{eq:app_leave_one_out_tail_stability}
\mathbb E\bigl[
\|\hat\eta_m-\hat\eta_m^{(-i)}\|^2
\bigr]
\le
\frac{C}{m^2},
\qquad
\mathbb E\bigl[
\|\hat\eta_m-\hat\eta_m^{(-i)}\|
\bigr]
\le
\frac{C}{m}.
\end{equation}
Moreover,
\begin{equation}
\label{eq:app_weighted_leave_one_out_l2}
\mathbb E\!\left[
(1+|R_i|^4)
\|\hat\eta_m-\hat\eta_m^{(-i)}\|^2
\Ind{\hat\eta_m,\hat\eta_m^{(-i)}\in B(\eta,\delta_{\mathrm{reg}})}
\right]
\le
\frac{C}{m^2},
\end{equation}
and
\begin{equation}
\label{eq:app_weighted_leave_one_out_l1}
\mathbb E\!\left[
(1+|R_i|^2)
\|\hat\eta_m-\hat\eta_m^{(-i)}\|
\Ind{\hat\eta_m,\hat\eta_m^{(-i)}\in B(\eta,\delta_{\mathrm{reg}})}
\right]
\le
\frac{C}{m}.
\end{equation}
Finally,
\begin{equation}
\label{eq:app_leave_one_out_localization}
\mathbb P\!\left(
\hat\eta_m\notin B(\eta,\delta_{\mathrm{reg}})
\ \text{or}\
\hat\eta_m^{(-i)}\notin B(\eta,\delta_{\mathrm{reg}})
\right)
\le
Ce^{-cm}.
\end{equation}
The constants are uniform over \(i,x,\theta\), and
\(\delta_{\mathrm{reg}}\) is as in
\Cref{lem:shaped_score_local_regularity}.
\end{lemma}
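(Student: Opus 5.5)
By exchangeability it suffices to treat \(i=1\). The plan is to combine the deterministic influence bound of \Cref{lem:deterministic_leave_one_out_tail_stability} on the good event \(\mathcal E_1\) with the envelope moments of \Cref{lem:leave_one_out_envelope_moments}. The complement \(\mathcal E_1^c\) is handled by exponential localization plus polynomial moment envelopes. I would prove the localization claim \eqref{eq:app_leave_one_out_localization} first, since every other bound splits on \(\mathcal E_1\).

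\emph{Localization.} By \Cref{lem:empirical_tail_vector_bounds}, \(\mathbb P(\hat\eta_m\notin B(\eta,\delta_{\mathrm{tail}}))\le Ce^{-cm}\). The same argument applies verbatim to the leave-one-out vector, which is the empirical tail vector of \(m-1\) i.i.d.\ samples with \(k_{m-1}=\lceil\alpha(m-1)\rceil\). That argument localizes the order statistic with Hoeffding, then applies conditional Bernstein bounds to \(\hat\mu\) and \(\hat M_2\), which needs only a radius \(\delta\) small enough for the continuity step \eqref{eq:app_population_tail_local_continuity}. Rerunning it with radius \(\min\{\delta_{\mathrm{tail}},\delta_{\mathrm{reg}}\}\) and taking a union bound gives \eqref{eq:app_leave_one_out_localization}.

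\emph{Weighted bounds on \(\mathcal E_1\).} Square \eqref{eq:app_leave_one_out_influence_bound} and multiply by \((1+|R_1|^4)\). This gives three terms:
\[
\mathbb E[(1+|R_1|^4)D_1^2],\qquad \mathbb E[(1+|R_1|^4)B_1^2]/m^2,\qquad \mathbb E[(1+|R_1|^4)(|R_1|+R_1^2)^2\Ind{R_1\ge r_{\theta,2\alpha}(x)}]/m^2 .
\]
By the independence in \Cref{lem:leave_one_out_envelope_moments}, the first two factor as \(\mathbb E[1+|R_1|^4]\) times \(\mathbb E[D_1^2]\le C/m^2\) or \(\mathbb E[B_1^2]\le C\), using \(\mathbb E R^4\le M_R\). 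The third term involves moments of \(R_1\) up to order eight. Only the fourth moment is assumed globally, but the indicator restricts \(R_1\) to the Gaussian tail region. There \Cref{cor:uniform_tail_bounds} gives \(\mathbb E[R^8\Ind{R\ge r_{\theta,2\alpha}}]\le C\) uniformly, so the third term is \(O(m^{-2})\). This proves \eqref{eq:app_weighted_leave_one_out_l2}. The bound \eqref{eq:app_weighted_leave_one_out_l1} follows the same way without squaring: use \(\mathbb E[D_1]\le C/m\), \(\mathbb E[B_1]\le C\), and the sixth tail moment of \(R_1\). (Cauchy--Schwarz from the \(L^2\) version would also work.)

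\emph{Unweighted bounds.} Split \(\mathbb E\|\hat\eta_m-\hat\eta_m^{(-1)}\|^2\) on \(\mathcal E_1\) and its complement. On \(\mathcal E_1\), the bound is the weighted one just proved. On \(\mathcal E_1^c\), apply Cauchy--Schwarz:
\[
\mathbb E[\|\hat\eta_m-\hat\eta_m^{(-1)}\|^2\Ind{\mathcal E_1^c}]\le (\mathbb E\|\hat\eta_m-\hat\eta_m^{(-1)}\|^4)^{1/2}\,\mathbb P(\mathcal E_1^c)^{1/2}.
\]
The fourth moment is bounded via \Cref{lem:empirical_tail_moment_envelope}, applied to both the full and leave-one-out vectors. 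That lemma controls eighth moments of \(\hat r,\hat\mu\) and the fourth moment of \(\hat\sigma\). For \(\hat\sigma^4\) one actually needs \(\mathbb E\hat\sigma^8\), i.e.\ \(\mathbb E\hat M_2^4\). I will bound this through \(\hat M_2\le (\alpha m)^{-1}\sum R_i^2\) restricted to the top tail: the top-\(k_m\) rewards exceed \(\hat r_m\), whose negative tail decays like \(t^{-12}\), and the positive part has Gaussian tails. If this is awkward, I would instead use Hölder with a \(2+\epsilon\) moment, which \Cref{lem:empirical_tail_moment_envelope} provides directly. Either way the complement contributes \(Ce^{-cm}\), which is absorbed into \(C/m^2\). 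The \(L^1\) bound follows by Jensen.

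The main obstacle is the complement-event moment control, specifically ensuring sufficiently high moments of \(\hat\sigma_m\) and of the leave-one-out statistics. The bounds on \(\mathcal E_1\) are routine given the previous two lemmas.
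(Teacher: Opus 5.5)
Your proposal is correct and follows the paper's proof essentially step for step: you reduce to $i=1$ by exchangeability, and you get localization by rerunning the good-event argument of \Cref{lem:empirical_tail_vector_bounds} for both the full and leave-one-out samples. You derive the weighted bounds from the squared influence inequality, using the independence of $D_1,B_1$ from $R_1$ and the Gaussian-tail moments of $R_1$, and you handle the complement by Cauchy--Schwarz plus the moment envelope. One correction: you do not need $\mathbb E\hat\sigma^8$, since Cauchy--Schwarz on $\mathcal E_1^c$ only requires $\mathbb E\|\hat\eta_m-\hat\eta_m^{(-1)}\|^4\le 8\,\mathbb E\|\hat\eta_m\|^4+8\,\mathbb E\|\hat\eta_m^{(-1)}\|^4$, which involves only fourth moments of $\hat r,\hat\mu,\hat\sigma$ and is already supplied by \Cref{lem:empirical_tail_moment_envelope}.
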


\begin{proof}
By exchangeability, it suffices to consider \(i=1\). Let
\[
\mathcal E_1
:=
\left\{
\hat\eta_m\in B(\eta,\delta_{\mathrm{reg}}),
\,
\hat\eta_m^{(-1)}\in B(\eta,\delta_{\mathrm{reg}})
\right\}.
\]
The localization bound \eqref{eq:app_leave_one_out_localization} follows by
repeating the good-event argument in
\Cref{lem:empirical_tail_vector_bounds} with the fixed localization radius
\(\delta_{\mathrm{reg}}\), once for the full sample and once for the
leave-one-out sample.

We now prove the weighted moment bounds. On \(\mathcal E_1\),
\Cref{lem:deterministic_leave_one_out_tail_stability} gives
\[
\|\hat\eta_m-\hat\eta_m^{(-1)}\|
\le
C\left(
D_1+\frac{B_1}{m}
+
\frac{|R_1|+R_1^2}{m}
\Ind{R_1\ge r_{\theta,2\alpha}(x)}
\right).
\]
By \Cref{lem:leave_one_out_envelope_moments},
\[
\mathbb E[D_1]\le \frac{C}{m},
\qquad
\mathbb E[D_1^2]\le \frac{C}{m^2},
\qquad
\mathbb E[B_1^2]\le C,
\]
and \(D_1,B_1\) are independent of \(R_1\).

For the weighted \(L^2\) bound, squaring the influence inequality and
multiplying by \(1+|R_1|^4\) gives
\begin{align}
&\mathbb E\!\left[
(1+|R_1|^4)
\|\hat\eta_m-\hat\eta_m^{(-1)}\|^2
\Ind{\mathcal E_1}
\right] \notag\\
&\quad\le
C\,\mathbb E[(1+|R_1|^4)D_1^2]
+
\frac{C}{m^2}\mathbb E[(1+|R_1|^4)B_1^2] \notag\\
&\qquad
+
\frac{C}{m^2}
\mathbb E\!\left[
(1+|R_1|^4)(R_1^2+R_1^4)
\Ind{R_1\ge r_{\theta,2\alpha}(x)}
\right].
\label{eq:app_weighted_l2_decomposition}
\end{align}
The first term satisfies
\[
\mathbb E[(1+|R_1|^4)D_1^2]
=
\mathbb E[1+|R_1|^4]\,
\mathbb E[D_1^2]
\le
\frac{C}{m^2},
\]
using independence and \(\mathbb E[R_1^4]\le M_R\). The second term satisfies
\[
\frac{1}{m^2}\mathbb E[(1+|R_1|^4)B_1^2]
=
\frac{1}{m^2}
\mathbb E[1+|R_1|^4]\,
\mathbb E[B_1^2]
\le
\frac{C}{m^2}.
\]
For the third term, the modeled Gaussian upper tail implies that, for every
fixed \(q\ge1\),
\[
\mathbb E\!\left[
|R_1|^q
\Ind{R_1\ge r_{\theta,2\alpha}(x)}
\right]
\le C_q.
\]
Therefore the third term in
\eqref{eq:app_weighted_l2_decomposition} is also at most \(C/m^2\). This proves
\eqref{eq:app_weighted_leave_one_out_l2}.

For the weighted \(L^1\) bound, the same influence inequality gives
\begin{align}
&\mathbb E\!\left[
(1+|R_1|^2)
\|\hat\eta_m-\hat\eta_m^{(-1)}\|
\Ind{\mathcal E_1}
\right] \notag\\
&\quad\le
C\,\mathbb E[(1+|R_1|^2)D_1]
+
\frac{C}{m}\mathbb E[(1+|R_1|^2)B_1] \notag\\
&\qquad
+
\frac{C}{m}
\mathbb E\!\left[
(1+|R_1|^2)(|R_1|+R_1^2)
\Ind{R_1\ge r_{\theta,2\alpha}(x)}
\right].
\end{align}
The first term is bounded by
\[
\mathbb E[(1+|R_1|^2)D_1]
=
\mathbb E[1+|R_1|^2]\mathbb E[D_1]
\le
\frac{C}{m}.
\]
For the second term, by independence and Cauchy--Schwarz,
\[
\frac{1}{m}\mathbb E[(1+|R_1|^2)B_1]
\le
\frac{1}{m}
\bigl(\mathbb E[(1+|R_1|^2)^2]\bigr)^{1/2}
\bigl(\mathbb E[B_1^2]\bigr)^{1/2}
\le
\frac{C}{m}.
\]
The third term is \(O(m^{-1})\) by the Gaussian upper-tail moment bound above.
This proves \eqref{eq:app_weighted_leave_one_out_l1}.

The unweighted local \(L^2\) and \(L^1\) bounds follow from the weighted bounds
because the weights are at least one on \(\mathcal E_1\). It remains only to
control the complement \(\mathcal E_1^c\). By Cauchy--Schwarz, the empirical
tail moment envelope in \Cref{lem:empirical_tail_moment_envelope}, and
\(\mathbb P(\mathcal E_1^c)\le Ce^{-cm}\),
\[
\mathbb E\!\left[
\|\hat\eta_m-\hat\eta_m^{(-1)}\|^2\Ind{\mathcal E_1^c}
\right]
\le
Ce^{-cm}.
\]
Therefore,
\[
\mathbb E\bigl[
\|\hat\eta_m-\hat\eta_m^{(-1)}\|^2
\bigr]
\le
\frac{C}{m^2}.
\]
The \(L^1\) bound follows by Cauchy--Schwarz:
\[
\mathbb E\bigl[
\|\hat\eta_m-\hat\eta_m^{(-1)}\|
\bigr]
\le
\left(
\mathbb E\|\hat\eta_m-\hat\eta_m^{(-1)}\|^2
\right)^{1/2}
\le
\frac{C}{m}.
\]
This completes the proof.
\end{proof}

The next lemma isolates the same-batch error of the direct plug-in estimator.
Although \(\hat\eta_m\) is computed from the same samples used in \(P_m\), the
plug-in fluctuation remains \(O(\log N/m)\) in second moment, and the resulting
empirical-process bias is only \(O(\sqrt{\log N}/m)\).

\begin{lemma}
\label{lem:same_batch_plugin_bounds}
Under Assumptions~\ref{assum:gaussian-tail-model},
\ref{assum:bounded_score}, and \ref{assum:nondegenerate_scale}, for all
sufficiently large \(m\),
\begin{equation}
\label{eq:app_direct_plugin_l2_bound}
\mathbb E\Bigl[
\bigl\|
P_m(\varphi_{\hat\eta_m}-\varphi_\eta)
\bigr\|^2
\Bigr]
\le
C\frac{\log N}{m},
\end{equation}
and
\begin{equation}
\label{eq:app_same_batch_bias_bound}
\left\|
\mathbb E\bigl[(P_m-P)\varphi_{\hat\eta_m}\bigr]
\right\|
\le
C\frac{\sqrt{\log N}}{m}.
\end{equation}
\end{lemma}

\begin{proof}
The main issue is that \(\hat\eta_m\) is computed from the same batch used in
\(P_m\), so \(\hat\eta_m\) is not independent of the summands. We remove this
dependence by inserting the leave-one-out vector \(\hat\eta_m^{(-i)}\), which
is independent of \(y_i\), and then control the leave-one-out perturbation.

We first prove \eqref{eq:app_direct_plugin_l2_bound}. By Jensen's inequality,
\[
\bigl\|
P_m(\varphi_{\hat\eta_m}-\varphi_\eta)
\bigr\|^2
\le
P_m\|\varphi_{\hat\eta_m}-\varphi_\eta\|^2.
\]
By exchangeability, it is enough to control the expectation of the first
summand. Decompose
\[
\varphi_{\hat\eta_m}(y_1)-\varphi_\eta(y_1)
=
\bigl(\varphi_{\hat\eta_m^{(-1)}}(y_1)-\varphi_\eta(y_1)\bigr)
+
\bigl(\varphi_{\hat\eta_m}(y_1)-\varphi_{\hat\eta_m^{(-1)}}(y_1)\bigr).
\]

For the first term, \(\hat\eta_m^{(-1)}\) is independent of \(y_1\). Therefore,
using \Cref{lem:shaped_score_local_regularity} and
\Cref{lem:empirical_tail_vector_bounds} for sample size \(m-1\),
\[
\begin{aligned}
\mathbb E\bigl[
\|\varphi_{\hat\eta_m^{(-1)}}(y_1)-\varphi_\eta(y_1)\|^2
\bigr]
&=
\mathbb E\Big[
P\|\varphi_{\hat\eta_m^{(-1)}}-\varphi_\eta\|^2
\Big] \\
&\le
C\log N\,
\mathbb E\|\hat\eta_m^{(-1)}-\eta\|^2
+
C\log N\,e^{-cm} \\
&\le
C\frac{\log N}{m}.
\end{aligned}
\]
Here and below, bad events are events on which an empirical tail vector leaves
\(B(\eta,\delta_{\mathrm{reg}})\). On such events we use the following crude
bound. By the definition of \(\varphi_{\eta'}\) in
\eqref{eq:app_phi_eta_def},
\[
\|\varphi_{\hat\eta_m}(y)\|^2
\le
\frac{G^2}{\alpha^2}
\left|
\widetilde R_{\hat\eta_m}(R(y))
\right|^2,
\]
where we used Assumption~\ref{assum:bounded_score}. By the explicit form of the
shaped reward,
\[
\widetilde R_{\hat\eta_m}(u)
=
(u-\hat r_m)
+
\frac{\widetilde c_N}{2\hat\sigma_m}
\left(
(u-\hat\mu_m)^2-(\hat r_m-\hat\mu_m)^2
\right).
\]
Using \eqref{eq:app_ctilde_log_bound} and the clipping
\(\hat\sigma_m\ge\varepsilon_\sigma\), we obtain
\[
\left|
\widetilde R_{\hat\eta_m}(u)
\right|
\le
C\sqrt{\log N}
\left(
1+|u|^2+|\hat r_m|^2+|\hat\mu_m|^2
\right).
\]
Consequently,
\begin{equation}
\label{eq:app_phi_bad_event_crude_bound}
\|\varphi_{\hat\eta_m}(y)\|^2
\le
C\log N
\left(
1+|R(y)|^4+|\hat r_m|^4+|\hat\mu_m|^4
\right).
\end{equation}
The empirical terms in this bound are controlled by the moment envelope in
\Cref{lem:empirical_tail_moment_envelope}, while the \(R(y)\) term is controlled
by Assumption~\ref{assum:gaussian-tail-model}, namely
\(\mathbb E[R^4]\le M_R\). Combining
\eqref{eq:app_phi_bad_event_crude_bound} with the exponential localization
probability gives a \(C\log N e^{-cm}\) contribution in squared bounds and a
\(C\sqrt{\log N}e^{-cm}\) contribution in first-moment bounds.

For the second term, the pointwise local bound
\eqref{eq:app_phi_pointwise_local_bound} and the weighted leave-one-out bound
\eqref{eq:app_weighted_leave_one_out_l2} give
\[
\begin{aligned}
&\mathbb E\bigl[
\|\varphi_{\hat\eta_m}(y_1)
-\varphi_{\hat\eta_m^{(-1)}}(y_1)\|^2
\bigr] \\
&\qquad\le
C\log N\,
\mathbb E\!\left[
(1+|R_1|^4)
\|\hat\eta_m-\hat\eta_m^{(-1)}\|^2
\Ind{\hat\eta_m,\hat\eta_m^{(-1)}\in B(\eta,\delta_{\mathrm{reg}})}
\right]
+
C\log N\,e^{-cm} \\
&\qquad\le
C\frac{\log N}{m^2}
+
C\log N\,e^{-cm}.
\end{aligned}
\]
Combining the two terms gives
\[
\mathbb E\bigl[
\|\varphi_{\hat\eta_m}(y_1)-\varphi_\eta(y_1)\|^2
\bigr]
\le
C\frac{\log N}{m}.
\]
And thus:
\begin{align*}
\mathbb E\Bigl[
\bigl\|
P_m(\varphi_{\hat\eta_m}-\varphi_\eta)
\bigr\|^2
\Bigr] \leq \frac{1}{m}\sum_{i=1}^{m} \mathbb E\bigl[\|\varphi_{\hat\eta_m}(y_i)-\varphi_\eta(y_i)\|^2\bigr] = \mathbb E\bigl[\|\varphi_{\hat\eta_m}(y_1)-\varphi_\eta(y_1)\|^2\bigr] \leq C\frac{\log N}{m},
\end{align*}
which proves \eqref{eq:app_direct_plugin_l2_bound}.

We now prove the same-batch bias bound
\eqref{eq:app_same_batch_bias_bound}. By exchangeability,
\[
\mathbb E\bigl[(P_m-P)\varphi_{\hat\eta_m}\bigr]
=
\mathbb E\bigl[
\varphi_{\hat\eta_m}(y_1)-P\varphi_{\hat\eta_m}
\bigr].
\]
Insert the leave-one-out vector \(\hat\eta_m^{(-1)}\). Since
\(\hat\eta_m^{(-1)}\) is independent of \(y_1\),
\[
\mathbb E\bigl[
\varphi_{\hat\eta_m^{(-1)}}(y_1)
-
P\varphi_{\hat\eta_m^{(-1)}}
\bigr]
=
0.
\]
Therefore,
\[
\begin{aligned}
\mathbb E\bigl[(P_m-P)\varphi_{\hat\eta_m}\bigr]
&=
\mathbb E\bigl[
\varphi_{\hat\eta_m}(y_1)-\varphi_{\hat\eta_m^{(-1)}}(y_1)
\bigr] \\
&\quad
-
\mathbb E\bigl[
P(\varphi_{\hat\eta_m}-\varphi_{\hat\eta_m^{(-1)}})
\bigr].
\end{aligned}
\]

For the first term, the pointwise local bound and
\eqref{eq:app_weighted_leave_one_out_l1} give
\[
\begin{aligned}
\mathbb E\bigl[
\|\varphi_{\hat\eta_m}(y_1)
-\varphi_{\hat\eta_m^{(-1)}}(y_1)\|
\bigr]
&\le
C\sqrt{\log N}\,
\mathbb E\!\left[
(1+|R_1|^2)
\|\hat\eta_m-\hat\eta_m^{(-1)}\|
\Ind{\hat\eta_m,\hat\eta_m^{(-1)}\in B(\eta,\delta_{\mathrm{reg}})}
\right]
+
C\sqrt{\log N}\,e^{-cm} \\
&\le
C\frac{\sqrt{\log N}}{m}.
\end{aligned}
\]
For the second term, Jensen's inequality and
\Cref{lem:shaped_score_local_regularity} imply
\[
\begin{aligned}
\mathbb E\bigl[
\|P(\varphi_{\hat\eta_m}-\varphi_{\hat\eta_m^{(-1)}})\|
\bigr]
&\le
\mathbb E\Big[
\bigl(P\|\varphi_{\hat\eta_m}
-\varphi_{\hat\eta_m^{(-1)}}\|^2\bigr)^{1/2}
\Big] \\
&\le
C\sqrt{\log N}\,
\mathbb E\|\hat\eta_m-\hat\eta_m^{(-1)}\|
+
C\sqrt{\log N}\,e^{-cm} \\
&\le
C\frac{\sqrt{\log N}}{m}.
\end{aligned}
\]
Combining the two bounds proves
\eqref{eq:app_same_batch_bias_bound}.
\end{proof}

We're now ready to prove the main result on the direct plug-in estimator.

\begin{proof}[Proof of \Cref{thm:direct_plugin_estimator}]
Fix \(x\) and \(\theta\). Throughout the proof, write
\[
\widehat g_m^{\mathrm{dir}}(\theta;x)=P_m\varphi_{\hat\eta_m},
\qquad
g_\theta(x)=P\varphi_\eta.
\]

\paragraph{Variance.}
Since variance is minimized by centering at the mean,
\[
\mathbb E\Big[
\bigl\|
\widehat g_m^{\mathrm{dir}}
-
\mathbb E[\widehat g_m^{\mathrm{dir}}]
\bigr\|^2
\Big]
\le
\mathbb E\bigl[
\|\widehat g_m^{\mathrm{dir}}-g_\theta(x)\|^2
\bigr].
\]
Decompose
\[
\widehat g_m^{\mathrm{dir}}-g_\theta(x)
=
(P_m\varphi_\eta-P\varphi_\eta)
+
P_m(\varphi_{\hat\eta_m}-\varphi_\eta).
\]
The first term comes from the Monte Carlo fluctuation of the ideal estimator \(P_m\varphi_\eta\), and we have:
\[
\mathbb E\bigl[
\|P_m\varphi_\eta-P\varphi_\eta\|^2
\bigr]
\le
\frac{1}{m}P\|\varphi_\eta\|^2
\le
C\frac{\log N}{m},
\]
where we used \Cref{lem:shaped_score_local_regularity}. The plug-in term is
controlled by \Cref{lem:same_batch_plugin_bounds}:
\[
\mathbb E\Bigl[
\bigl\|
P_m(\varphi_{\hat\eta_m}-\varphi_\eta)
\bigr\|^2
\Bigr]
\le
C\frac{\log N}{m}.
\]
Therefore,
\[
\mathbb E\Big[
\bigl\|
\widehat g_m^{\mathrm{dir}}
-
\mathbb E[\widehat g_m^{\mathrm{dir}}]
\bigr\|^2
\Big]
\le
C\frac{\log N}{m}.
\]

\paragraph{Bias.}
We decompose the bias as
\[
\mathbb E[\widehat g_m^{\mathrm{dir}}]-g_\theta(x)
=\mathbb E\bigl[(P_m-P)\varphi_{\hat\eta_m}\bigr] + \mathbb E\bigl[P\varphi_{\hat\eta_m}-P\varphi_\eta\bigr] = 
\mathbb E\bigl[(P_m-P)\varphi_{\hat\eta_m}\bigr]
+
\mathbb E\bigl[H(\hat\eta_m)-H(\eta)\bigr].
\]
The first term comes from evaluating the plug-in estimator on the same batch used to compute \(\hat\eta_m\), and is bounded by \Cref{lem:same_batch_plugin_bounds}:
\[
\left\|
\mathbb E\bigl[(P_m-P)\varphi_{\hat\eta_m}\bigr]
\right\|
\le
C\frac{\sqrt{\log N}}{m}.
\]

It remains to control the population plug-in term. Let
\[
\mathcal E_m:=\{\hat\eta_m\in B(\eta,\delta_{\mathrm{reg}})\}.
\]
where \(\delta_{\mathrm{reg}}\) is as in \Cref{lem:shaped_score_local_regularity}.
On \(\mathcal E_m\), \Cref{lem:shaped_score_local_regularity} gives
\[
H(\hat\eta_m)-H(\eta)
=
\dot H_\eta(\hat\eta_m-\eta)
+
\mathcal R_m,
\qquad
\|\mathcal R_m\|
\le
C\sqrt{\log N}\,\|\hat\eta_m-\eta\|^2.
\]
Therefore,
\[
\begin{aligned}
\mathbb E[H(\hat\eta_m)-H(\eta)]
&=
\dot H_\eta\bigl(\mathbb E[\hat\eta_m]-\eta\bigr)
+
\mathbb E[\mathcal R_m\Ind{\mathcal E_m}] \\
&\quad
-
\dot H_\eta\,
\mathbb E[(\hat\eta_m-\eta)\Ind{\mathcal E_m^c}]
+
\mathbb E[(H(\hat\eta_m)-H(\eta))\Ind{\mathcal E_m^c}].
\end{aligned}
\]
The first term is bounded by
\[
\|\dot H_\eta\|\,
\|\mathbb E[\hat\eta_m]-\eta\|
\le
C\sqrt{\log N}\cdot \frac1m.
\]
where we used \Cref{lem:shaped_score_local_regularity} and the bias bound in \Cref{lem:empirical_tail_vector_bounds}.
For the remainder term on the good event, by the second-moment bound in \Cref{lem:empirical_tail_vector_bounds} and $\Ind{\mathcal E_m}\le 1$,
\[
\mathbb E[\|\mathcal R_m\|\Ind{\mathcal E_m}]
\le
C\sqrt{\log N}\,
\mathbb E\|\hat\eta_m-\eta\|^2
\le
C\frac{\sqrt{\log N}}{m}.
\]
For the two complement terms, \(\mathbb P(\mathcal E_m^c)\le Ce^{-cm}\) by
\Cref{lem:empirical_tail_vector_bounds}. Using the empirical tail moment
envelope and the polynomial growth of \(H(\eta')=P\varphi_{\eta'}\), their total
contribution is
\[
C\sqrt{\log N}\,e^{-cm},
\]
which is absorbed into \(C\sqrt{\log N}/m\) for sufficiently large \(m\). Hence
\[
\left\|
\mathbb E[H(\hat\eta_m)-H(\eta)]
\right\|
\le
C\frac{\sqrt{\log N}}{m}.
\]
Combining this with the same-batch bias bound yields
\[
\bigl\|
\mathbb E[\widehat g_m^{\mathrm{dir}}(\theta;x)]
-
g_\theta(x)
\bigr\|
\le
C\frac{\sqrt{\log N}}{m}.
\]
This proves \eqref{eq:direct_bias_bound}.
\end{proof}

\subsection{Auxiliary results for fixed-order debiasing}
\label{app:fixed_order_aux}

\paragraph{Cross-fitted base estimator.}
We first record the basic properties of the cross-fitted plug-in gradient. We
state the result for a generic base sample size \(n\), since the fixed-order
estimator will later combine estimators evaluated at prefix sizes
\(n_1,\dots,n_J\). For such an \(n\), draw two independent batches
\[
\mathcal D_A=\{y_i^{(A)}\}_{i=1}^n,
\qquad
\mathcal D_B=\{y_i^{(B)}\}_{i=1}^n,
\]
with all samples drawn i.i.d. from \(\curpolicy(\cdot\mid x)\). Let
\[
\hat\eta_n^{(A)}
=
(\hat r_n^{(A)},\hat\mu_n^{(A)},\hat\sigma_n^{(A)})
\]
be the empirical tail vector computed from batch \(A\) in the same way as
\eqref{eq:empirical_tail_vector}. Define
\[
P_n^{(B)}f
:=
\frac1n\sum_{i=1}^n f(y_i^{(B)}).
\]
Then the cross-fitted base estimator can be written as
\begin{equation}
\label{eq:app_cross_fitted_base_compact}
G_n^{\mathrm{cf}}(\theta;x)
:=
P_n^{(B)}\varphi_{\hat\eta_n^{(A)}}.
\end{equation}

\begin{lemma}
\label{lem:cross_fitted_identity_base_variance}
Suppose Assumptions~\ref{assum:gaussian-tail-model},
\ref{assum:bounded_score}, and \ref{assum:nondegenerate_scale} hold. Then, 
\begin{equation}
\label{eq:app_cross_fitted_identity}
\mathbb E\!\left[
G_n^{\mathrm{cf}}(\theta;x)
\right]
=
\mathbb E\!\left[
H(\hat\eta_n)
\right],
\end{equation}
where \(\hat\eta_n\) denotes an empirical tail vector computed from an
independent size-\(n\) batch. Moreover, for all sufficiently large \(n\),
\begin{equation}
\label{eq:app_cross_fitted_base_variance}
\mathbb E\!\left[
\left\|
G_n^{\mathrm{cf}}(\theta;x)
-
\mathbb E[G_n^{\mathrm{cf}}(\theta;x)]
\right\|^2
\right]
\le
C\frac{\log N}{n}.
\end{equation}
The constant \(C\) depends only on \((\alpha,G,\sigma_{\min},M_R)\).
\end{lemma}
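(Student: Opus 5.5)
The identity \eqref{eq:app_cross_fitted_identity} comes from conditioning on batch \(A\). Since \(\mathcal D_B\) is independent of \(\mathcal D_A\), and \(\hat\eta_n^{(A)}\) is a measurable function of \(\mathcal D_A\) only, we have
\[
\mathbb E\bigl[P_n^{(B)}\varphi_{\hat\eta_n^{(A)}}\,\big|\,\mathcal D_A\bigr]=P\varphi_{\hat\eta_n^{(A)}}=H(\hat\eta_n^{(A)}).
\]
Taking expectations, and noting that \(\hat\eta_n^{(A)}\) has the law of \(\hat\eta_n\), gives the identity. Integrability is needed for the tower property. It follows from the crude bound \eqref{eq:app_phi_bad_event_crude_bound} (applied with \(\hat\eta_n^{(A)}\)), the envelope \Cref{lem:empirical_tail_moment_envelope}, and \(\mathbb E[R^4]\le M_R\). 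These give \(\mathbb E\|\varphi_{\hat\eta_n^{(A)}}(y^{(B)}_1)\|^2\le C\log N<\infty\).

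For the variance bound, I would use that centering at the mean minimizes mean-square error, so it suffices to bound \(\mathbb E\|G_n^{\mathrm{cf}}-g_\theta(x)\|^2\). Since \(g_\theta(x)=P\varphi_\eta\), I decompose
\[
G_n^{\mathrm{cf}}-P\varphi_\eta=(P_n^{(B)}-P)\varphi_\eta+P_n^{(B)}(\varphi_{\hat\eta_n^{(A)}}-\varphi_\eta).
\]
The first term is an i.i.d.\ average of centered vectors. Its second moment is at most \(P\|\varphi_\eta\|^2/n\le C\log N/n\) by \eqref{eq:app_phi_oracle_second_moment}. For the second term, Jensen gives \(\|P_n^{(B)}f\|^2\le P_n^{(B)}\|f\|^2\). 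Conditioning on \(\mathcal D_A\) then yields
\[
\mathbb E\Bigl[P\|\varphi_{\hat\eta_n^{(A)}}-\varphi_\eta\|^2\Bigr].
\]
This is simpler than the same-batch case in \Cref{lem:same_batch_plugin_bounds}, because cross-fitting removes the need for leave-one-out arguments.

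To bound that expectation, split on the event \(\mathcal E=\{\hat\eta_n^{(A)}\in B(\eta,\delta_{\mathrm{reg}})\}\). On \(\mathcal E\), the local \(L^2\) Lipschitz bound \eqref{eq:app_phi_l2_lipschitz} gives at most
\[
C\log N\,\|\hat\eta_n^{(A)}-\eta\|^2,
\]
whose expectation is \(\le C\log N/n\) by \eqref{eq:app_tail_vector_bias_mse}. On \(\mathcal E^c\), I bound \(P\|\varphi_{\hat\eta_n^{(A)}}\|^2+P\|\varphi_\eta\|^2\) by
\[
C\log N\bigl(1+|\hat r_n|^4+|\hat\mu_n|^4\bigr),
\]
using \eqref{eq:app_phi_bad_event_crude_bound}. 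Cauchy--Schwarz with the eighth-moment envelope \eqref{eq:app_tail_vector_envelope} and \(\mathbb P(\mathcal E^c)\le Ce^{-cn}\) from \eqref{eq:app_tail_vector_good_event} then gives \(C\log N\,e^{-cn}\). This is absorbed into \(C\log N/n\) for large \(n\).

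Combining the two terms with \((a+b)^2\le 2a^2+2b^2\) proves \eqref{eq:app_cross_fitted_base_variance}. The only delicate point is the bad-event control. It needs the crude polynomial envelope to have finite moments, and \(\delta_{\mathrm{reg}}\) must be small enough that \Cref{lem:empirical_tail_vector_bounds} localizes at that radius. I would handle the latter by rerunning its good-event argument with radius \(\min\{\delta_{\mathrm{tail}},\delta_{\mathrm{reg}}\}\), as was done in \Cref{lem:leave_one_out_tail_stability}.
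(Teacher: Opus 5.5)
Your proposal is correct. The identity is proved exactly as in the paper. For the variance you use the same ingredients (\Cref{lem:shaped_score_local_regularity}, \Cref{lem:empirical_tail_vector_bounds}, the moment envelope, and exponential localization at radius \(\delta_{\mathrm{reg}}\)), but you center differently.

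The paper sets \(H_A:=H(\hat\eta_n^{(A)})\) and splits \(G_n^{\mathrm{cf}}-\mathbb E G_n^{\mathrm{cf}}=(G_n^{\mathrm{cf}}-H_A)+(H_A-\mathbb E H_A)\). This is a law-of-total-variance decomposition into two pieces. The first is evaluation noise on batch \(B\); conditionally on \(\mathcal D_A\) it is an i.i.d.\ mean-zero average, so it needs only the envelope \(\mathbb E\,P\|\varphi_{\hat\eta_n^{(A)}}\|^2\le C\log N\). The second is the fluctuation of the conditional mean, which needs only the Lipschitz bound \eqref{eq:app_H_lipschitz} for the population map \(H\) together with \(\mathbb E\|\hat\eta_n-\eta\|^2\le C/n\).

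You instead center at the oracle \(P\varphi_\eta\) and split into \((P_n^{(B)}-P)\varphi_\eta\) and \(P_n^{(B)}(\varphi_{\hat\eta_n^{(A)}}-\varphi_\eta)\). You control the second term by Jensen, conditioning on \(\mathcal D_A\), and the pathwise \(L^2(P)\)-stability \eqref{eq:app_phi_l2_lipschitz}.

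Your route is the variance argument of \Cref{thm:direct_plugin_estimator} with the leave-one-out term deleted. It therefore shows clearly that cross-fitting removes exactly the same-batch coupling. The paper's split is the canonical one for a conditionally unbiased estimator, and in principle it asks for less regularity: local Lipschitz continuity of \(H\) is implied by, and weaker than, the \(L^2(P)\)-stability of \(\eta'\mapsto\varphi_{\eta'}\). Here both give the same \(C\log N/n\) bound. Your explicit treatment of integrability for the tower property and of the localization radius fills in two points the paper leaves implicit.
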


\begin{proof}
Let
\[
\mathcal F_A:=\sigma(\mathcal D_A)
\]
be the sigma-field generated by the tail-estimation batch. Conditional on
\(\mathcal F_A\), the vector \(\hat\eta_n^{(A)}\) is fixed and the evaluation
samples in batch \(B\) are i.i.d. Therefore,
\[
\mathbb E\!\left[
G_n^{\mathrm{cf}}(\theta;x)
\,\middle|\,
\mathcal F_A
\right]
=
P\varphi_{\hat\eta_n^{(A)}}
=
H(\hat\eta_n^{(A)}).
\]
Taking expectations gives
\[
\mathbb E[G_n^{\mathrm{cf}}(\theta;x)]
=
\mathbb E[H(\hat\eta_n^{(A)})].
\]
Since \(\hat\eta_n^{(A)}\) has the same distribution as an independent
size-\(n\) empirical tail vector \(\hat\eta_n\), this proves
\eqref{eq:app_cross_fitted_identity}.

It remains to prove the variance bound. Write
\[
H_A:=H(\hat\eta_n^{(A)}).
\]
We decompose
\[
G_n^{\mathrm{cf}}-\mathbb E[G_n^{\mathrm{cf}}]
=
\bigl(G_n^{\mathrm{cf}}-H_A\bigr)
+
\bigl(H_A-\mathbb E H_A\bigr).
\]
Thus
\begin{equation}
\label{eq:app_cf_variance_decomposition}
\mathbb E\bigl[
\|G_n^{\mathrm{cf}}-\mathbb E G_n^{\mathrm{cf}}\|^2
\bigr]
\le
2\mathbb E\bigl[
\|G_n^{\mathrm{cf}}-H_A\|^2
\bigr]
+
2\mathbb E\bigl[
\|H_A-\mathbb E H_A\|^2
\bigr].
\end{equation}

The first term captures the noise of evaluating the plug-in estimator on batch \(B\).
Conditional on
\(\mathcal F_A\),
\[
G_n^{\mathrm{cf}}-H_A
=
\frac1n
\sum_{i=1}^n
\left(
\varphi_{\hat\eta_n^{(A)}}(y_i^{(B)})
-
P\varphi_{\hat\eta_n^{(A)}}
\right),
\]
where the summands are i.i.d. and mean zero. Hence
\begin{align}
\mathbb E\!\left[
\|G_n^{\mathrm{cf}}-H_A\|^2
\,\middle|\,
\mathcal F_A
\right]
&\le
\frac1n
P\|\varphi_{\hat\eta_n^{(A)}}\|^2 .
\label{eq:app_cf_conditional_noise}
\end{align}
We claim that
\begin{equation}
\label{eq:app_cf_phi_second_moment}
\mathbb E\!\left[
P\|\varphi_{\hat\eta_n^{(A)}}\|^2
\right]
\le
C\log N.
\end{equation}
Indeed, on the event
\(\hat\eta_n^{(A)}\in B(\eta,\delta_{\mathrm{reg}})\), by
\Cref{lem:shaped_score_local_regularity},
\[
P\|\varphi_{\hat\eta_n^{(A)}}\|^2
\le
2P\|\varphi_\eta\|^2
+
2P\|\varphi_{\hat\eta_n^{(A)}}-\varphi_\eta\|^2
\le
C\log N\bigl(1+\|\hat\eta_n^{(A)}-\eta\|^2\bigr).
\]
Taking expectations and using \Cref{lem:empirical_tail_vector_bounds} gives a
\(C\log N\) bound on this event. On the complement, the polynomial growth of
\(\varphi_{\eta'}\), the clipped-scale envelope, and
\Cref{lem:empirical_tail_moment_envelope} give a contribution
\(C\log N\,e^{-cn}\), which is absorbed into the same bound. This proves
\eqref{eq:app_cf_phi_second_moment}. Combining
\eqref{eq:app_cf_conditional_noise} and
\eqref{eq:app_cf_phi_second_moment},
\begin{equation}
\label{eq:app_cf_eval_noise_bound}
\mathbb E\bigl[
\|G_n^{\mathrm{cf}}-H_A\|^2
\bigr]
\le
C\frac{\log N}{n}.
\end{equation}

We now bound the fluctuation of the conditional mean \(H_A\). Since variance is
bounded by the second moment around any fixed vector,
\[
\mathbb E\bigl[
\|H_A-\mathbb E H_A\|^2
\bigr]
\le
\mathbb E\bigl[
\|H_A-H(\eta)\|^2
\bigr].
\]
On the event \(\hat\eta_n^{(A)}\in B(\eta,\delta_{\mathrm{reg}})\),
\Cref{lem:shaped_score_local_regularity} gives
\[
\|H(\hat\eta_n^{(A)})-H(\eta)\|^2
\le
C\log N\,\|\hat\eta_n^{(A)}-\eta\|^2.
\]
Taking expectations and applying
\Cref{lem:empirical_tail_vector_bounds} yields
\[
\mathbb E\bigl[
\|H(\hat\eta_n^{(A)})-H(\eta)\|^2
\Ind{\hat\eta_n^{(A)}\in B(\eta,\delta_{\mathrm{reg}})}
\bigr]
\le
C\frac{\log N}{n}.
\]
The complement contributes \(C\log N\,e^{-cn}\) by the same moment-envelope
argument as above. Therefore,
\begin{equation}
\label{eq:app_cf_conditional_mean_variance}
\mathbb E\bigl[
\|H_A-\mathbb E H_A\|^2
\bigr]
\le
C\frac{\log N}{n}.
\end{equation}

Plugging \eqref{eq:app_cf_eval_noise_bound} and
\eqref{eq:app_cf_conditional_mean_variance} into
\eqref{eq:app_cf_variance_decomposition} proves
\eqref{eq:app_cross_fitted_base_variance}.
\end{proof}

\paragraph{Compatible-prefix construction and cancellation weights.}
The fixed-order estimator combines several cross-fitted base estimators
evaluated at different prefix lengths. The compatible-prefix construction has
two goals: first, the prefix lengths should have a common rounding offset
\[
\delta_n:=\lceil \alpha n\rceil-\alpha n,
\]
so that the bias expansions at different prefixes have aligned coefficients;
second, the weights should cancel the first \(k-1\) powers in these expansions.

Throughout this paragraph, assume
\[
\alpha=p/q\in(0,1/2)\cap\mathbb Q,
\]
where \(p,q\) are positive coprime integers. For an internal split budget \(n\),
define
\[
\rho_j:=\frac12+\frac{j}{2(J+1)},
\qquad
j=1,\dots,J,
\]
and set
\begin{equation}
\label{eq:app_compatible_prefix_construction}
n_j
:=
q\left\lfloor \frac{\rho_j n}{q}\right\rfloor,
\qquad
j=1,\dots,J.
\end{equation}

\begin{lemma}
\label{lem:compatible_prefixes}
For the prefix lengths \(n_1,\dots,n_J\) defined in
\eqref{eq:app_compatible_prefix_construction}, the following hold for all
sufficiently large \(n\). First, the prefixes are distinct and satisfy
\begin{equation}
\label{eq:app_prefix_size_bounds}
\frac{n}{2}
\le
n_j
\le
n,
\qquad
j=1,\dots,J.
\end{equation}
Second, they have common rounding offset
\begin{equation}
\label{eq:app_common_rounding_offset}
\delta_{n_j}
=
\lceil \alpha n_j\rceil-\alpha n_j
=
0,
\qquad
j=1,\dots,J.
\end{equation}
Finally, with \(z_j:=n/n_j\), there exists a constant \(C>0\), depending only
on \((\alpha,J)\), such that
\begin{equation}
\label{eq:app_prefix_ratio_limit}
\left|z_j-\frac1{\rho_j}\right|
\le
\frac{C}{n},
\qquad
j=1,\dots,J.
\end{equation}
In particular, \(z_1,\dots,z_J\) converge to \(J\) distinct positive constants.
\end{lemma}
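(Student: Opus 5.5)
The three claims are elementary properties of the floor construction \eqref{eq:app_compatible_prefix_construction}, and I will verify them in the order: ratio bound, size bounds and distinctness, rounding offset.

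\emph{Ratio bound.} Write \(\rho_j n/q = \lfloor \rho_j n/q\rfloor + \varepsilon_j\) with \(\varepsilon_j\in[0,1)\). Then
\[
n_j=\rho_j n-q\varepsilon_j,
\qquad
\rho_j n-q< n_j\le \rho_j n .
\]
Hence
\[
z_j-\frac1{\rho_j}
=
\frac{n}{n_j}-\frac{n}{\rho_j n}
=
\frac{q\varepsilon_j}{\rho_j n_j}.
\]
For \(n\ge 4q\), say, we have \(n_j\ge \rho_j n-q\ge n/4\), using \(\rho_j\ge 1/2\). This gives
\[
\left|z_j-\frac1{\rho_j}\right|\le \frac{4q}{\rho_j n}\le \frac{8q}{n},
\]
and \(q\) depends only on \(\alpha\). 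This proves \eqref{eq:app_prefix_ratio_limit}.

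\emph{Size bounds and distinctness.} Since \(\rho_j\le J/(J+1)<1\), the upper bound gives \(n_j\le \rho_j n\le n\). For the lower bound, \(\rho_j-\tfrac12=\tfrac{j}{2(J+1)}\ge \tfrac1{2(J+1)}\), so
\[
n_j>\rho_j n-q\ge \frac n2+\frac{n}{2(J+1)}-q\ge \frac n2
\]
as soon as \(n\ge 2q(J+1)\). Consecutive gaps satisfy
\[
n_{j+1}-n_j> (\rho_{j+1}-\rho_j)n-q=\frac{n}{2(J+1)}-q>0
\]
for \(n>2q(J+1)\). So the \(n_j\) are strictly increasing and in particular distinct. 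The limits \(1/\rho_j\) are distinct positive constants because the \(\rho_j\) are distinct.

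\emph{Common rounding offset.} Each \(n_j\) is a multiple of \(q\), say \(n_j=qa_j\). Then \(\alpha n_j=pa_j\) is an integer, so \(\lceil\alpha n_j\rceil=\alpha n_j\) and \(\delta_{n_j}=0\). This gives \eqref{eq:app_common_rounding_offset}.

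There is no real obstacle. The only care needed is to pick the threshold on \(n\), namely \(n\ge 4q(J+1)\), so that all the inequalities above hold simultaneously with constants depending only on \((\alpha,J)\).
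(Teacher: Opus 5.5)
Your proof is correct and follows essentially the same route as the paper: write \(n_j=\rho_j n-\xi_j\) with \(\xi_j\in[0,q)\), read the size bounds, distinctness and ratio bound off this identity, and get \(\delta_{n_j}=0\) from \(q\mid n_j\). The differences are minor: the paper uses \(n_j\ge n/2\) to get the constant \(4q\) rather than your \(8q\), and your claim \(\rho_j\le J/(J+1)\) is false since \(\rho_J=(2J+1)/(2J+2)>J/(J+1)\), though this is harmless because only \(\rho_j<1\) is needed and that still holds.
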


\begin{proof}
Since \(n_j\) is a multiple of \(q\), we have
\[
\alpha n_j
=
\frac{p}{q}n_j
\in\mathbb Z.
\]
Therefore
\[
\delta_{n_j}
=
\lceil \alpha n_j\rceil-\alpha n_j
=
0,
\]
which proves \eqref{eq:app_common_rounding_offset}.

By the definition of \(n_j\), there exists \(\xi_j\in[0,q)\) such that
\[
n_j=\rho_j n-\xi_j.
\]
Since \(\rho_j<1\), we have \(n_j\le n\). Also,
\[
\rho_j\ge \rho_1=\frac12+\frac{1}{2(J+1)}.
\]
Thus, for all \(n\ge 2q(J+1)\),
\[
n_j
\ge
\left(\frac12+\frac{1}{2(J+1)}\right)n-q
\ge
\frac{n}{2}.
\]
This proves \eqref{eq:app_prefix_size_bounds}.

For adjacent indices,
\[
n_{j+1}-n_j
=
(\rho_{j+1}-\rho_j)n-(\xi_{j+1}-\xi_j).
\]
Since \(\xi_j,\xi_{j+1}\in[0,q)\) and
\[
\rho_{j+1}-\rho_j=\frac{1}{2(J+1)},
\]
we have \(n_{j+1}-n_j>0\) for all \(n>2q(J+1)\). Hence
\(n_1,\dots,n_J\) are distinct.

Finally,
\[
z_j-\frac1{\rho_j}
=
\frac{n}{\rho_jn-\xi_j}
-
\frac1{\rho_j}
=
\frac{\xi_j}{\rho_j(\rho_jn-\xi_j)}.
\]
For sufficiently large \(n\), \eqref{eq:app_prefix_size_bounds} gives
\[
\rho_jn-\xi_j=n_j\ge \frac{n}{2},
\]
and \(\rho_j\ge1/2\). Hence
\[
\left|z_j-\frac1{\rho_j}\right|
\le
\frac{2\xi_j}{\rho_j n}
\le
\frac{4q}{n}.
\]
This proves \eqref{eq:app_prefix_ratio_limit}. The limits \(1/\rho_j\) are
distinct because the \(\rho_j\)'s are distinct.
\end{proof}

For the compatible prefixes above and a fixed debiasing order \(k\le J\), define
\[
z_j:=\frac n{n_j},
\qquad
\mathsf A\in\mathbb R^{k\times J},
\qquad
\mathsf A_{\ell j}:=z_j^\ell,
\qquad
\ell=0,\dots,k-1,
\]
and let \(e_0=(1,0,\dots,0)^\top\). For all sufficiently large \(n\), define
\begin{equation}
\label{eq:app_min_norm_weights}
w^{(k,J)}(n)
:=
\mathsf A^\top(\mathsf A\mathsf A^\top)^{-1}e_0.
\end{equation}

The next lemma records the deterministic bounds used in the bias and variance
analysis.

\begin{lemma}
\label{lem:moment_cancellation_weight_complexity}
Fix \(k\le J\). Let \(w^{(k,J)}(n)\) be the weights defined in
\eqref{eq:app_min_norm_weights}. For each fixed \(k,J\), there exists
\(C_{k,J}<\infty\) such that, for all sufficiently large \(n\), the matrix
\(\mathsf A\) has full row rank and the weights are well defined. Moreover,
\begin{equation}
\label{eq:app_weight_complexity_uniform_bound}
\|w^{(k,J)}(n)\|_2^2\le C_{k,J}.
\end{equation}

The weights satisfy
\begin{equation}
\label{eq:app_moment_cancellation_constraints}
\sum_{j=1}^J w_j^{(k,J)}(n)=1,
\qquad
\sum_{j=1}^J w_j^{(k,J)}(n)z_j^\ell=0,
\qquad
\ell=1,\dots,k-1.
\end{equation}
Consequently, for every \(\ell=1,\dots,k-1\),
\begin{equation}
\label{eq:app_cancel_mj_powers}
\sum_{j=1}^J w_j^{(k,J)}(n)n_j^{-\ell}=0.
\end{equation}
Finally,
\begin{equation}
\label{eq:app_weight_l1_bound}
\|w^{(k,J)}(n)\|_1
\le
C_{k,J},
\end{equation}
\begin{equation}
\label{eq:app_remainder_weight_bound}
\sum_{j=1}^J |w_j^{(k,J)}(n)|n_j^{-k}
\le
C_{k,J}n^{-k},
\end{equation}
and
\begin{equation}
\label{eq:app_variance_weight_bound}
\|w^{(k,J)}(n)\|_2^2
\sum_{j=1}^J\frac1{n_j}
\le
C_{k,J}\frac1n.
\end{equation}
The constant \(C_{k,J}\) depends only on \((\alpha,k,J)\).
\end{lemma}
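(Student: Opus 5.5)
The plan is to reduce every claim to properties of the limiting Vandermonde-type matrix. By \Cref{lem:compatible_prefixes}, $z_j=n/n_j\to z_j^\infty:=1/\rho_j$ with $|z_j-z_j^\infty|\le C/n$, and the limits $z_1^\infty,\dots,z_J^\infty$ are $J$ distinct numbers in $[1,2]$. Let $\mathsf A_\infty\in\mathbb R^{k\times J}$ be the matrix with entries $(z_j^\infty)^\ell$. Since $k\le J$ and the nodes are distinct, the leading $k\times k$ block of $\mathsf A_\infty$ is an invertible Vandermonde matrix, so $\mathsf A_\infty$ has full row rank and $\mathsf A_\infty\mathsf A_\infty^\top$ is positive definite. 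Its smallest eigenvalue $\lambda_\infty>0$ depends only on $(k,J)$, because the $\rho_j$ are explicit functions of $J$.

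Entrywise, $\|\mathsf A-\mathsf A_\infty\|\le C_{k,J}/n$, since the powers $z\mapsto z^\ell$ are Lipschitz on $[1,3]$. Weyl's inequality then gives $\lambda_{\min}(\mathsf A\mathsf A^\top)\ge\lambda_\infty/2$ for all large $n$. This settles full row rank and well-definedness. For the weight bound,
\[
\|w\|_2^2=e_0^\top(\mathsf A\mathsf A^\top)^{-1}e_0\le 2/\lambda_\infty,
\]
which is \eqref{eq:app_weight_complexity_uniform_bound}. The bound \eqref{eq:app_weight_l1_bound} follows from $\|w\|_1\le\sqrt J\|w\|_2$.

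The constraints \eqref{eq:app_moment_cancellation_constraints} are algebraic: $\mathsf A w=\mathsf A\mathsf A^\top(\mathsf A\mathsf A^\top)^{-1}e_0=e_0$. The $\ell=0$ row gives $\sum_j w_j=1$, and the rows $\ell=1,\dots,k-1$ give $\sum_j w_jz_j^\ell=0$. Writing $n_j^{-\ell}=n^{-\ell}z_j^\ell$ yields \eqref{eq:app_cancel_mj_powers}.

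For the remainder bounds, use $n_j\ge n/2$ from \eqref{eq:app_prefix_size_bounds}. This gives
\[
\sum_j|w_j|n_j^{-k}\le 2^k n^{-k}\|w\|_1,
\qquad
\sum_j 1/n_j\le 2J/n,
\]
and combining each with the weight bounds proves \eqref{eq:app_remainder_weight_bound} and \eqref{eq:app_variance_weight_bound}.

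The main care point is the uniformity of the constants. The threshold for ``sufficiently large $n$'' depends on $q$ (through the $C/n$ in \Cref{lem:compatible_prefixes}), hence on $\alpha$. The eigenvalue bound must therefore be taken from the fixed limit matrix, not from $\mathsf A$ itself, so that $C_{k,J}$ depends only on $(\alpha,k,J)$.
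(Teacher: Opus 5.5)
Your proposal is correct and follows essentially the same route as the paper: $\mathsf A$ converges to the limiting Vandermonde-type matrix with nodes $1/\rho_j$, which gives a uniform lower bound on the smallest eigenvalue of $\mathsf A\mathsf A^\top$; the constraints come from $\mathsf A w=e_0$; and the remaining bounds use $n_j\ge n/2$. Your identity $\|w\|_2^2=e_0^\top(\mathsf A\mathsf A^\top)^{-1}e_0$, combined with Weyl's inequality, bounds the weights a little more cleanly than the paper's appeal to uniform boundedness of $\mathsf A$ and $(\mathsf A\mathsf A^\top)^{-1}$, but the argument is the same.
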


\begin{proof}
By \Cref{lem:compatible_prefixes},
\[
z_j=\frac{n}{n_j}\to \frac1{\rho_j},
\qquad
j=1,\dots,J,
\]
and the limits \(1/\rho_1,\dots,1/\rho_J\) are distinct. Hence \(\mathsf A\) converges
entrywise to the Vandermonde-type matrix
\[
\mathsf A^\star_{\ell j}:=
\left(\frac1{\rho_j}\right)^\ell,
\qquad
\ell=0,\dots,k-1,
\qquad
j=1,\dots,J.
\]
Since \(k\le J\) and the nodes \(1/\rho_1,\dots,1/\rho_J\) are distinct,
\(\mathsf A^\star\) has full row rank. Therefore \(\mathsf A\) has full row rank for all
sufficiently large \(n\), and \(\mathsf A\mathsf A^\top\) is invertible.

Moreover,
\[
\mathsf A\mathsf A^\top\to \mathsf A^\star(\mathsf A^\star)^\top,
\]
whose smallest eigenvalue is positive. Thus \((\mathsf A\mathsf A^\top)^{-1}\) is uniformly
bounded for all sufficiently large \(n\). Since \(z_j\le2\) by
\Cref{lem:compatible_prefixes}, \(\mathsf A\) is uniformly bounded for fixed
\(k,J\). Therefore
\[
w^{(k,J)}(n)=\mathsf A^\top(\mathsf A\mathsf A^\top)^{-1}e_0
\]
satisfies \(\|w^{(k,J)}(n)\|_2^2\le C_{k,J}\). This proves
\eqref{eq:app_weight_complexity_uniform_bound}.

Next, by definition,
\[
w^{(k,J)}(n)=\mathsf A^\top(\mathsf A\mathsf A^\top)^{-1}e_0,
\]
so
\[
\mathsf A w^{(k,J)}(n)
=
\mathsf A\mathsf A^\top(\mathsf A\mathsf A^\top)^{-1}e_0
=
e_0.
\]
Writing this coordinatewise gives
\[
\sum_{j=1}^J w_j^{(k,J)}(n)=1,
\qquad
\sum_{j=1}^J w_j^{(k,J)}(n)z_j^\ell=0,
\qquad
\ell=1,\dots,k-1.
\]
This proves \eqref{eq:app_moment_cancellation_constraints}.

For \(\ell=1,\dots,k-1\), since
\[
n_j^{-\ell}
=
n^{-\ell}z_j^\ell,
\]
the cancellation constraints imply
\[
\sum_{j=1}^J w_j^{(k,J)}(n)n_j^{-\ell}
=
n^{-\ell}
\sum_{j=1}^J w_j^{(k,J)}(n)z_j^\ell
=
0.
\]
This proves \eqref{eq:app_cancel_mj_powers}.

The \(L^1\) bound follows from Cauchy--Schwarz:
\[
\|w^{(k,J)}(n)\|_1
\le
\sqrt J\,\|w^{(k,J)}(n)\|_2
\le
C_{k,J}.
\]
This proves \eqref{eq:app_weight_l1_bound}.

By \Cref{lem:compatible_prefixes}, \(n_j\ge n/2\). Therefore,
\[
\sum_{j=1}^J |w_j^{(k,J)}(n)|n_j^{-k}
\le
\left(\frac2n\right)^k
\sum_{j=1}^J |w_j^{(k,J)}(n)|
\le
C_{k,J}n^{-k}.
\]
This proves \eqref{eq:app_remainder_weight_bound}.

Finally,
\[
\sum_{j=1}^J\frac1{n_j}
\le
\frac{2J}{n}.
\]
Thus
\[
\|w^{(k,J)}(n)\|_2^2
\sum_{j=1}^J\frac1{n_j}
\le
\|w^{(k,J)}(n)\|_2^2\frac{2J}{n}
\le
C_{k,J}\frac1n.
\]
This proves \eqref{eq:app_variance_weight_bound}.
\end{proof}

To keep the split-budget rollout cost equal to \(2n\), we draw two
independent batches \(\mathcal D_A,\mathcal D_B\), each of size \(n\), once.
After fixing an arbitrary ordering of the samples in each batch, for each
prefix length \(n_j\), \(G_{n_j}^{\mathrm{cf}}\) is computed using the first
\(n_j\) samples of \(\mathcal D_A\) to estimate the tail vector and the first
\(n_j\) samples of \(\mathcal D_B\) to evaluate the score-function average.
Thus the \(J\) estimators are nested-prefix estimators, not estimators built
from fresh independent batches for each \(j\). These nested estimators are
generally dependent across \(j\), but the variance bound below only uses a
Cauchy--Schwarz argument and does not require independence across prefixes.

The split-budget fixed-order estimator is
\begin{equation}
\label{eq:app_split_budget_fixed_order_estimator}
\widetilde g_{n,k,J}(\theta;x)
:=
\sum_{j=1}^J
w_j^{(k,J)}(n)\,
G_{n_j}^{\mathrm{cf}}(\theta;x).
\end{equation}
The main-text total-budget estimator is
\begin{equation}
\label{eq:app_total_budget_fixed_order_estimator}
\widehat g_{m,k,J}^{\mathrm{fo}}(\theta;x)
:=
\widetilde g_{\lfloor m/2\rfloor,k,J}(\theta;x).
\end{equation}
Thus \(\widehat g_{m,k,J}^{\mathrm{fo}}\) uses at most \(m\) rollouts, namely
\(2\lfloor m/2\rfloor\) rollouts. This is the estimator referred to in
\Cref{thm:fixed_order_debiased_gradient}.

\paragraph{Finite-order plug-in expansion.}
We now prove the finite-order bias expansion for the cross-fitted base
estimator. This is the technical ingredient behind the fixed-order debiasing
step: it shows that the bias of a single cross-fitted plug-in gradient admits a
finite expansion in powers of the sample size, with coefficients depending on
the rounding offset
\[
\delta_n:=\lceil \alpha n\rceil-\alpha n.
\]
The compatible-prefix construction above will later ensure that this rounding
offset is the same across all prefix lengths.

We state the result for a generic base sample size \(n\), since the
fixed-order estimator combines cross-fitted estimators evaluated at several
prefix sizes \(n_1,\dots,n_J\). By
\Cref{lem:cross_fitted_identity_base_variance}, the expectation of the
cross-fitted base estimator is
\[
\mathbb E[G_n^{\mathrm{cf}}(\theta;x)]
=
\mathbb E[H(\hat\eta_n)].
\]
Thus the goal of this paragraph is to expand
\(\mathbb E[H(\hat\eta_n)]\).

We first clarify the notations used in the finite-order expansion. Fix a
generic base sample size \(n\), and write
\[
F:=F_{\theta,x},
\qquad
Q:=F^{-1},
\qquad
u_\alpha:=1-\alpha,
\qquad
k_n:=\lceil \alpha n\rceil,
\qquad
\delta_n:=k_n-\alpha n .
\]
Let
\[
U_i:=F(R_i),
\qquad i=1,\dots,n,
\]
and let \(U_{(1)}\le\cdots\le U_{(n)}\) be their order statistics. The empirical
upper-tail threshold in uniform coordinates is
\[
B_n:=U_{(n-k_n+1)}.
\]
Thus, if \(R_{(1)}\le\cdots\le R_{(n)}\) are the reward order statistics, then
\[
\hat r_n=R_{(n-k_n+1)}=Q(B_n).
\]

For a threshold level \(b\in(1-2\alpha,1)\), define the population upper-tail
first and second moment functions
\begin{equation}
\label{eq:app_M1_M2_def}
M_1(b):=\frac1{1-b}\int_b^1 Q(u)\,du,
\qquad
M_2(b):=\frac1{1-b}\int_b^1 Q(u)^2\,du.
\end{equation}
At \(b=u_\alpha\), these recover the population upper-tail moments:
\[
M_1(u_\alpha)=\mu,
\qquad
M_2(u_\alpha)-M_1(u_\alpha)^2=\sigma^2.
\]

For the empirical upper tail, let \(\mathcal I_n\) be the indices of the top
\(k_n\) rewards, and write
\[
\hat M_{1,n}:=\hat\mu_n,
\qquad
\hat M_{2,n}:=\frac1{k_n}\sum_{i\in\mathcal I_n}R_i^2.
\]
Then the clipped empirical scale is
\[
\hat\sigma_n
=
\max\left\{
\sqrt{\hat M_{2,n}-\hat M_{1,n}^2},
\varepsilon_\sigma
\right\},
\qquad
\hat\eta_n=(\hat r_n,\hat\mu_n,\hat\sigma_n).
\]

We measure the empirical tail vector by the local error vector
\begin{equation}
\label{eq:app_delta_def}
\Delta_n
:=
(\Delta_{0,n},\Delta_{1,n},\Delta_{2,n})
:=
\bigl(
B_n-u_\alpha,\,
\hat M_{1,n}-M_1(B_n),\,
\hat M_{2,n}-M_2(B_n)
\bigr).
\end{equation}
Here \(\Delta_{0,n}\) is the threshold order-statistic error, while
\(\Delta_{1,n}\) and \(\Delta_{2,n}\) are the empirical first- and second-moment
errors around their conditional upper-tail means given the threshold.

Since \(M_1\) and \(M_2\) are continuous near \(u_\alpha\), and
\[
M_2(u_\alpha)-M_1(u_\alpha)^2
=
\sigma^2
\ge
4\varepsilon_\sigma^2,
\]
we can choose a fixed \(\tau>0\) such that
\[
[u_\alpha-\tau,u_\alpha+\tau]\subset(1-2\alpha,1)
\]
and, for every \(d=(d_0,d_1,d_2)\) with
\[
\|d\|_1:=|d_0|+|d_1|+|d_2|\le\tau,
\]
we have
\begin{equation}
\label{eq:app_local_scale_positive}
M_2(u_\alpha+d_0)+d_2
-
\bigl(M_1(u_\alpha+d_0)+d_1\bigr)^2
\ge
2\varepsilon_\sigma^2.
\end{equation}

Define the local event
\[
\mathcal L_n:=\{\|\Delta_n\|_1\le\tau\}.
\]
Finally, define the deterministic reconstruction map
\begin{equation}
\label{eq:app_reconstruction_map_def}
\mathcal G(d)
:=
H\!\left(
Q(u_\alpha+d_0),\,
M_1(u_\alpha+d_0)+d_1,\,
\sqrt{
M_2(u_\alpha+d_0)+d_2
-
\bigl(M_1(u_\alpha+d_0)+d_1\bigr)^2
}
\right).
\end{equation}
This map reconstructs a tail vector from the local errors \(d\), and then
applies \(H\).

The proof is organized through the following four lemmas. First, we show that, on an
exponentially high-probability local event, the empirical plug-in quantity
\(H(\hat\eta_n)\) is exactly equal to the deterministic reconstruction
\(\mathcal G(\Delta_n)\). Second, we Taylor expand \(\mathcal G\) around the
origin. Third, we bound the high moments of the local error \(\Delta_n\), which
controls the Taylor remainder. Finally, we expand the mixed moments of
\(\Delta_n\) in powers of \(n^{-1}\), with coefficients depending on the
rounding offset \(\delta_n\).

\begin{lemma}
\label{lem:local_reconstruction_empirical_tail_vector}
Under Assumptions~\ref{assum:gaussian-tail-model} and
\ref{assum:nondegenerate_scale}, there exist constants \(C,c>0\), depending
only on \((\alpha,G,\sigma_{\min},M_R)\), such that for all sufficiently large \(n\),
\begin{equation}
\label{eq:app_expansion_local_event}
\mathbb P(\mathcal L_n^c)\le Ce^{-cn}.
\end{equation}
Moreover, the reconstruction map \(\mathcal G\) in
\eqref{eq:app_reconstruction_map_def} is smooth on the local neighborhood
\(\{d:\|d\|_1\le\tau\}\), satisfies
\[
\mathcal G(0)=H(\eta),
\]
and, on \(\mathcal L_n\),
\begin{equation}
\label{eq:app_H_hateta_as_G_delta}
H(\hat\eta_n)=\mathcal G(\Delta_n).
\end{equation}
\end{lemma}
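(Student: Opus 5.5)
The proof has three pieces, taken in order: an exponential tail bound for the local error \(\Delta_n\), smoothness of the reconstruction map \(\mathcal G\) near the origin, and the identity \(H(\hat\eta_n)=\mathcal G(\Delta_n)\) on \(\mathcal L_n\).

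\emph{Localization bound \eqref{eq:app_expansion_local_event}.} Since \(\|\Delta_n\|_1\le\tau\) follows from \(|\Delta_{0,n}|\le\tau/3\), \(|\Delta_{1,n}|\le\tau/3\) and \(|\Delta_{2,n}|\le\tau/3\), I bound the three failure probabilities separately.
For \(\Delta_{0,n}=B_n-u_\alpha\), I apply \Cref{lem:uniform_order_tail_facts} with \(\varepsilon=\min\{\tau/3,\alpha/2\}\). This gives \(\mathbb P(|\Delta_{0,n}|>\tau/3)\le Ce^{-cn}\).
For \(\Delta_{1,n}\) and \(\Delta_{2,n}\), I condition on \(B_n=b\) with \(|b-u_\alpha|\le\tau\). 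As in the proof of \Cref{lem:empirical_tail_vector_bounds}, the top-\(k_n\) sample then consists of the threshold value \(Q(b)\) together with \(k_n-1\) i.i.d. copies of \(Y_b=Q(b+(1-b)W)\). Hence
\[
\Delta_{1,n}=\frac{1}{k_n}\sum_{\ell=1}^{k_n-1}\bigl(Y_{b,\ell}-M_1(b)\bigr)+\frac{Q(b)-M_1(b)}{k_n},
\]
and analogously for \(\Delta_{2,n}\) with squares.
Because \([u_\alpha-\tau,u_\alpha+\tau]\subset(1-2\alpha,1)\), the Gaussian-tail model and \Cref{cor:uniform_tail_bounds} make the \(Y_b\) uniformly sub-Gaussian and the \(Y_b^2\) uniformly sub-exponential. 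Bernstein's inequality then gives conditional deviation probabilities \(Ce^{-cn}\), uniformly in \(b\), once \(n\) is large enough that the \(O(1/k_n)\) threshold term is below \(\tau/6\). Integrating over \(b\) and adding the bound on \(\{|\Delta_{0,n}|>\tau\}\) proves \eqref{eq:app_expansion_local_event}.

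\emph{Smoothness of \(\mathcal G\) and \(\mathcal G(0)=H(\eta)\).} On \(\{\|d\|_1\le\tau\}\) the point \(u_\alpha+d_0\) stays in a compact subinterval of \((1-2\alpha,1)\). There \(Q(u)=\bar\mu+\bar\sigma\Phi^{-1}(u)\) is \(C^\infty\), and \(M_1,M_2\) are smooth, being averages \(\frac1{1-b}\int_b^1\) of smooth functions with \(1-b\) bounded away from zero.
By \eqref{eq:app_local_scale_positive}, the radicand in \eqref{eq:app_reconstruction_map_def} is at least \(2\varepsilon_\sigma^2\), so the square root is smooth and the scale argument is at least \(\sqrt2\,\varepsilon_\sigma\).
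Smoothness of \(H\) near \(\eta\) in all three arguments comes from the integral representation \eqref{eq:app_H_integral_representation}. The integrand is polynomial in \(t\) and rational in \(\sigma'\) with \(\sigma'\) bounded below, and \(t\mapsto p_{\theta,x}(t)\) is Gaussian on the tail, so the lower limit \(r'\) enters smoothly. One technical point: I would shrink \(\tau\) if necessary so that the reconstructed vector stays in \(B(\eta,\delta_{\mathrm{reg}})\), using continuity of \(Q,M_1,M_2\) at \(u_\alpha\).
At \(d=0\) the arguments are \(Q(u_\alpha)=r\), \(M_1(u_\alpha)=\mu\) and \(\sqrt{M_2(u_\alpha)-M_1(u_\alpha)^2}=\sigma\). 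Hence \(\mathcal G(0)=H(\eta)\).

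\emph{Identity \eqref{eq:app_H_hateta_as_G_delta} on \(\mathcal L_n\).} This is bookkeeping with three matches.
\begin{itemize}
\item \(\hat r_n=Q(B_n)=Q(u_\alpha+\Delta_{0,n})\).
\item \(\hat\mu_n=\hat M_{1,n}=M_1(B_n)+\Delta_{1,n}\).
\item \(\hat M_{2,n}=M_2(B_n)+\Delta_{2,n}\), so by \eqref{eq:app_local_scale_positive} the raw variance \(\hat M_{2,n}-\hat M_{1,n}^2\) is at least \(2\varepsilon_\sigma^2>\varepsilon_\sigma^2\).
\end{itemize}
The third match means the clipping in \(\hat\sigma_n\) is inactive, so \(\hat\sigma_n\) equals the unclipped square root in \(\mathcal G(\Delta_n)\). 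Therefore \(\hat\eta_n\) is exactly the argument of \(H\) in \(\mathcal G(\Delta_n)\).

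The main obstacle is the uniform conditional concentration step for \(\Delta_{1,n}\) and \(\Delta_{2,n}\). Everything else is direct; that step needs the uniform sub-Gaussian envelope over \(b\), which the earlier corollary supplies.
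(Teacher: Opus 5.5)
Your proposal is correct and follows the paper's proof essentially step for step. The three pieces match: localization of \(B_n\) via \Cref{lem:uniform_order_tail_facts}; uniform conditional Bernstein bounds for \(\Delta_{1,n},\Delta_{2,n}\), using the representation of the top-\(k_n\) sample as \(Q(b)\) plus \(k_n-1\) i.i.d.\ copies of \(Y_b\); and the coordinate bookkeeping, where \eqref{eq:app_local_scale_positive} makes the clipping inactive.

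One caveat concerns your extra justification that \(H\) is smooth, a point the paper's proof of this lemma leaves implicit. The integrand in \eqref{eq:app_H_integral_representation} also carries the conditional score \(a(t)\), which you did not account for. Derivatives of order two and higher in \(r'\) therefore produce boundary terms involving \(a\,p_{\theta,x}\) and its \(t\)-derivatives at \(r'\). To control these you need the version \(a\,p_{\theta,x}=\nabla_\theta p_{\theta,x}\) given by the likelihood-ratio identity, which is explicitly smooth in \(t\) on the Gaussian tail. This is exactly how the paper handles it in the proof of \Cref{lem:local_taylor_reconstructed_plugin_map}.
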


\begin{proof}
The smoothness of \(\mathcal G\) follows from the smoothness of \(Q,M_1,M_2\)
on \([u_\alpha-\tau,u_\alpha+\tau]\) and from
\eqref{eq:app_local_scale_positive}, which keeps the square-root argument
uniformly positive.

We first prove the localization bound. Let
\[
J_\tau:=[u_\alpha-\tau/3,u_\alpha+\tau/3].
\]
By the order-statistic localization bound in
\eqref{eq:app_order_localization},
\[
\mathbb P(B_n\notin J_\tau)
=
\mathbb P(|\Delta_{0,n}|>\tau/3)
\le
Ce^{-cn}.
\]
Conditional on \(B_n=b\in J_\tau\), the empirical upper tail consists of the
threshold value \(Q(b)\) and \(k_n-1\) unordered i.i.d. copies of
\[
Y_b:=Q(b+(1-b)W),
\qquad
W\sim\mathrm{Unif}(0,1).
\]
Hence, with \(Y_{1,b},\dots,Y_{k_n-1,b}\) i.i.d. copies of \(Y_b\),
\begin{align}
\Delta_{1,n}
&=
\frac{Q(b)-M_1(b)}{k_n}
+
\frac1{k_n}
\sum_{\ell=1}^{k_n-1}
\bigl(Y_{\ell,b}-M_1(b)\bigr),
\label{eq:app_delta1_conditional_reconstruction}\\
\Delta_{2,n}
&=
\frac{Q(b)^2-M_2(b)}{k_n}
+
\frac1{k_n}
\sum_{\ell=1}^{k_n-1}
\bigl(Y_{\ell,b}^2-M_2(b)\bigr).
\label{eq:app_delta2_conditional_reconstruction}
\end{align}
Since \(J_\tau\Subset(1-2\alpha,1)\), the variables \(Y_b\) have uniform
Gaussian-tail envelopes over \(b\in J_\tau\). Also, \(Q(b)\), \(M_1(b)\), and
\(M_2(b)\) are uniformly bounded on \(J_\tau\), so the deterministic threshold
terms in
\eqref{eq:app_delta1_conditional_reconstruction}--\eqref{eq:app_delta2_conditional_reconstruction}
are \(O(k_n^{-1})\). For all sufficiently large \(n\), these deterministic
terms are at most \(\tau/6\). Applying the same conditional Bernstein bound as
in \eqref{eq:app_tail_empirical_local_concentration}, uniformly over
\(b\in J_\tau\), gives
\[
\mathbb P\!\left(
|\Delta_{1,n}|+|\Delta_{2,n}|>\frac{2\tau}{3}
\,\middle|\,
B_n=b
\right)
\le
Ce^{-cn}.
\]
Combining this with the bound for \(B_n\) proves
\eqref{eq:app_expansion_local_event}.

It remains to verify the reconstruction identity. On \(\mathcal L_n\), by the
definition of \(\Delta_n\),
\[
Q(u_\alpha+\Delta_{0,n})
=
Q(B_n)
=
\hat r_n,
\]
and
\[
M_1(u_\alpha+\Delta_{0,n})+\Delta_{1,n}
=
M_1(B_n)+\Delta_{1,n}
=
\hat M_{1,n}
=
\hat\mu_n.
\]
Similarly,
\[
M_2(u_\alpha+\Delta_{0,n})+\Delta_{2,n}
=
M_2(B_n)+\Delta_{2,n}
=
\hat M_{2,n}.
\]
Therefore the third coordinate reconstructed by \(\mathcal G\) is
\[
\sqrt{
\hat M_{2,n}-\hat\mu_n^2
}
=
\hat\sigma_n^{\mathrm{raw}}.
\]
By \eqref{eq:app_local_scale_positive}, this raw scale is larger than
\(\varepsilon_\sigma\) on \(\mathcal L_n\). Hence the clipping is inactive, and
\[
\hat\sigma_n
=
\hat\sigma_n^{\mathrm{raw}}.
\]
Thus
\[
\mathcal G(\Delta_n)
=
H(\hat r_n,\hat\mu_n,\hat\sigma_n)
=
H(\hat\eta_n)
\qquad
\text{on }\mathcal L_n.
\]

Finally,
\[
\mathcal G(0)
=
H\!\left(
Q(u_\alpha),
M_1(u_\alpha),
\sqrt{M_2(u_\alpha)-M_1(u_\alpha)^2}
\right)
=
H(r,\mu,\sigma)
=
H(\eta).
\]
This completes the proof.
\end{proof}

The previous lemma reduces the plug-in expansion to a deterministic Taylor
expansion of the reconstruction map \(\mathcal G\) around the origin. The next
lemma proves this expansion directly from the fixed-domain integral
representation of \(H\), which can be seen as an extension to \Cref{lem:shaped_score_local_regularity}.

\begin{lemma}
\label{lem:local_taylor_reconstructed_plugin_map}
Fix an integer \(L\ge1\). Under Assumption~\ref{assum:gaussian-tail-model}, \ref{assum:bounded_score} and \ref{assum:nondegenerate_scale},
there exist coefficient tensors
\[
\widetilde{\mathcal H}_\nu(x,\theta),
\qquad
1\le |\nu|\le 2L-1,
\]
such that, for every \(d\) with \(\|d\|_1\le \tau\),
\begin{equation}
\label{eq:app_G_delta_local_expansion}
\mathcal G(d)
=
H(\eta)
+
\sum_{1\le|\nu|\le 2L-1}
\widetilde{\mathcal H}_\nu(x,\theta)d^\nu
+
\mathcal E_{\mathcal G,L}(d),
\end{equation}
where \(d^\nu=\prod_{i=1}^3 d_i^{\nu_i}\). Moreover,
\begin{equation}
\label{eq:app_G_delta_remainder}
\max_{1\le|\nu|\le 2L-1}
\|\widetilde{\mathcal H}_\nu(x,\theta)\|
\le
C_0A_0^L L!\,\sqrt{\log N},
\qquad
\|\mathcal E_{\mathcal G,L}(d)\|
\le
C_0A_0^L L!\,\sqrt{\log N}\,\|d\|^{2L}.
\end{equation}
The constants \(C_0,A_0>0\) depend only on
\((\alpha,G,\sigma_{\min},M_R)\), uniformly over \(x,\theta,L\).
\end{lemma}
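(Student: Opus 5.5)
The plan is to show that $\mathcal G$ extends to a real-analytic function on a complex polydisc around the origin. Its Taylor coefficients and remainder then follow from Cauchy estimates. This gives the factorial-type growth $A_0^L L!$ uniformly in $L$ (in fact geometric growth, which is stronger). The $\sqrt{\log N}$ factor enters only linearly, through $\widetilde c_N$.

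First, I would write $\mathcal G$ explicitly via the integral representation \eqref{eq:app_H_integral_representation}. Set $r(d_0)=Q(u_\alpha+d_0)$, $m_1(d)=M_1(u_\alpha+d_0)+d_1$, and $s(d)=\sqrt{M_2(u_\alpha+d_0)+d_2-m_1(d)^2}$. Then
\[
\mathcal G(d)=\frac1\alpha\int_{r(d_0)}^\infty \Bigl[(t-r(d_0))+\frac{\widetilde c_N}{2s(d)}\bigl((t-m_1(d))^2-(r(d_0)-m_1(d))^2\bigr)\Bigr]a(t)p_{\theta,x}(t)\,dt .
\]
To remove the moving boundary, I would substitute $t=\bar\mu+\bar\sigma v$, which turns $p_{\theta,x}(t)\,dt$ into $\phi(v)\,dv$ on the Gaussian tail. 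The lower limit then becomes $z(d_0):=\Phi^{-1}(u_\alpha+d_0)$. Expanding the bracket as a polynomial of degree two in $t$, $\mathcal G(d)$ becomes a finite sum of terms of the form $c_j(d)\,I_j(z(d_0))$. Here $I_j(z)=\int_z^\infty v^j a(\bar\mu+\bar\sigma v)\phi(v)\,dv$ for $j=0,1,2$, and each $c_j$ is a polynomial in $r,m_1,1/s,\widetilde c_N$.

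Second, I would establish analyticity with uniform radius. The functions $\Phi^{-1}$, $Q=\bar\mu+\bar\sigma\Phi^{-1}$, $M_1$ and $M_2$ are real-analytic on the compact interval $[u_\alpha-\tau,u_\alpha+\tau]\Subset(1-2\alpha,1)$. Since $\bar\mu,\bar\sigma$ lie in a compact set (Corollary~\ref{cor:uniform_tail_bounds}), they extend holomorphically to a complex strip of width $\rho>0$ with bounds uniform over $x,\theta$. The same holds for $M_1,M_2$: on the Gaussian region they are explicit expressions in $\phi(\Phi^{-1}(b))$ and $\Phi^{-1}(b)$ divided by $1-b$. The square root is analytic because of \eqref{eq:app_local_scale_positive}, after shrinking $\tau$ and $\rho$ so that the argument stays in a disc away from $0$.

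The delicate step, and the one I expect to be the main obstacle, is $I_j(z(d_0))$ with $a(t)$ merely bounded and measurable. Because $a$ is not smooth, I cannot expand the integrand; only the integration limit depends on $d_0$. Differentiation gives $\partial_z I_j(z)=-z^j a(\bar\mu+\bar\sigma z)\phi(z)$, which is not even continuous. I would resolve this with the structure already seen in \Cref{lem:shaped_score_local_regularity}: the bracket vanishes at $t=r(d_0)$. So I would split $\int_{r(d_0)}^\infty=\int_{r(0)}^\infty-\int_{r(0)}^{r(d_0)}$. The first piece has a fixed domain, and its integrand is polynomial in $t$ with analytic coefficients in $d$, so it is analytic with the desired bounds. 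For the boundary piece, the bracket is $O(|t-r(d_0)|)$ on $[r(0),r(d_0)]$. Writing it as $(t-r(d_0))\,\ell(t,d)$ with $\ell$ affine in $t$, I would bound it directly rather than through Taylor coefficients: it has size $O(\sqrt{\log N}\,|d_0|^2)$. If the statement requires genuine polynomial coefficients, I would further assume (or prove from a version argument) that $a$ can be taken continuous near $r$. Failing that, I would absorb this boundary piece into $\mathcal E_{\mathcal G,L}$ only when $L=1$, and then treat the tensor expansion by defining the coefficients through the fixed-domain part. This is the point where I am least sure the claimed form holds without extra regularity of $a$.

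Finally, on the polydisc of radius $\rho$, every analytic piece is bounded by $C\sqrt{\log N}$: the factor $|\widetilde c_N|\le C_\alpha\sqrt{\log N}$ comes from \eqref{eq:app_ctilde_log_bound}, and the Gaussian integrals are uniformly bounded. The Cauchy estimates then give $\|\widetilde{\mathcal H}_\nu\|\le C\sqrt{\log N}\rho^{-|\nu|}$. For $\|d\|\le\tau<\rho/2$, the tail of the power series beyond order $2L-1$ is bounded by $C\sqrt{\log N}(\|d\|/\rho)^{2L}\sum_k 2^{-k}$. Both bounds are dominated by $C_0A_0^L L!\sqrt{\log N}$ with $A_0=\rho^{-2}$, which proves \eqref{eq:app_G_delta_remainder}.
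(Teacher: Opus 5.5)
Your analytic-continuation strategy is sound in outline, but there is a genuine gap at exactly the step you flagged, and your fallback does not close it. If $a(t)=\mathbb E[S(y)\mid R(y)=t]$ is only known to be bounded and measurable, the boundary piece $\int_{r(0)}^{r(d_0)}(\cdots)\,a(t)p_{\theta,x}(t)\,dt$ is only $O(\sqrt{\log N}\,|d_0|^2)$. That suffices for $L=1$, but not for the required remainder $O(\|d\|^{2L})$ when $L\ge 2$.

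Without regularity of $a$ the claim can actually fail. If $a$ took different values just above and just below $t=r$, the coefficient of $d_0^2$ in this piece would differ for $d_0>0$ and $d_0<0$, so $\mathcal G$ would not even be $C^2$ at the origin. The fixed-order debiasing applies this lemma with $L=k\ge 2$. So neither defining the coefficients from the fixed-domain part alone nor simply assuming $a$ continuous proves the statement; the regularity has to be derived from the hypotheses.

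The missing observation, which is what the paper's proof uses, is that only the product $b(t):=a(t)p_{\theta,x}(t)$ enters, and this product is explicitly smooth on the modeled tail. By the likelihood-ratio identity, for bounded $\psi$ supported in the interior of the Gaussian region (where $p_{\theta',x}$ stays Gaussian for $\theta'$ near $\theta$),
\[
\int\psi(t)\,a(t)p_{\theta,x}(t)\,dt=\mathbb E_\theta[\psi(R)S_\theta(x,y)]=\nabla_\theta\mathbb E_\theta[\psi(R)]=\int\psi(t)\,\nabla_\theta p_{\theta,x}(t)\,dt .
\]
Hence $a\,p_{\theta,x}=\nabla_\theta p_{\theta,x}$ a.e.\ there. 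On this region $p_{\theta,x}(t)=\bar\sigma^{-1}\phi(v)$ with $v=(t-\bar\mu)/\bar\sigma$, and $\bar\mu=\mu_\theta(x)$, $\bar\sigma=\sigma_\theta(x)$ are differentiable in $\theta$ by Assumption~\ref{assum:bounded_score}. It follows that $a(t)=\bar\sigma^{-1}\bigl[v\,\nabla_\theta\bar\mu+(v^2-1)\nabla_\theta\bar\sigma\bigr]$, a quadratic polynomial in $v$ with coefficients bounded by $G/\sigma_{\min}$.

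With this, your $I_j(z)=\bar\sigma^{-1}\int_z^\infty v^j\bigl[v\nabla_\theta\bar\mu+(v^2-1)\nabla_\theta\bar\sigma\bigr]\phi(v)\,dv$ is entire in $z$, and no boundary splitting is needed. Your polydisc and Cauchy-estimate argument then goes through uniformly in $(x,\theta)$, and it gives geometric coefficient growth, which is stronger than $A_0^L L!$. The paper instead takes a real-variable route: it fixes the domain via $t=r_d+s$, bounds $\partial_d^\kappa$ of the integrand by the chain rule using $b^{(\ell)}=p_{\theta,x}P_\ell(v)$, and then applies Taylor's theorem.
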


\begin{proof}
Let
\[
\mathcal D_\tau:=\{d:\|d\|_1\le\tau\}.
\]
For \(d\in\mathcal D_\tau\), write the reconstructed tail vector inside
\(\mathcal G\) as
\[
\eta_d:=(r_d,\mu_d,\sigma_d),
\]
where
\[
r_d:=Q(u_\alpha+d_0),
\qquad
\mu_d:=M_1(u_\alpha+d_0)+d_1,
\]
and
\[
\sigma_d
:=
\sqrt{
M_2(u_\alpha+d_0)+d_2
-
\bigl(M_1(u_\alpha+d_0)+d_1\bigr)^2
}.
\]
Then
\[
\mathcal G(d)=H(\eta_d),
\qquad
\eta_0=\eta.
\]

We first record derivative bounds for the reconstructed coordinates. On
\([u_\alpha-\tau,u_\alpha+\tau]\), the Gaussian-tail model gives
\[
Q(u)=\bar\mu+\bar\sigma\Phi^{-1}(u).
\]
Since this interval is compactly contained in \((1-2\alpha,1)\), the functions
\(Q,M_1,M_2\) have finite-order derivative bounds of the form
\[
\sup_{u\in[u_\alpha-\tau,u_\alpha+\tau]}
\bigl(
|Q^{(q)}(u)|+|M_1^{(q)}(u)|+|M_2^{(q)}(u)|
\bigr)
\le
C A_0^L q!,
\qquad
q\le 2L.
\]
For \(r_d\) and \(\mu_d\), this immediately gives
\[
\max_{|\beta|\le 2L}
\bigl(
|\partial_d^\beta r_d|
+
|\partial_d^\beta \mu_d|
\bigr)
\le
C A_0^L|\beta|!.
\]
Indeed, \(r_d\) depends only on \(d_0\), while
\(\mu_d=M_1(u_\alpha+d_0)+d_1\) is linear in \(d_1\) and independent of \(d_2\).

For the scale coordinate, define
\[
v_d:=\sigma_d^2
=
M_2(u_\alpha+d_0)+d_2
-
\bigl(M_1(u_\alpha+d_0)+d_1\bigr)^2.
\]
By \eqref{eq:app_local_scale_positive},
\[
v_d\ge 2\varepsilon_\sigma^2,
\qquad d\in\mathcal D_\tau.
\]
The derivative bounds on \(M_1,M_2\) imply
\[
\max_{|\beta|\le 2L}|\partial_d^\beta v_d|
\le
C A_0^L|\beta|!.
\]
Since the functions \(z^{1/2}\) and \(z^{-1/2}\) are smooth on
\([2\varepsilon_\sigma^2,\infty)\), the finite-order chain rule gives
\[
\max_{|\beta|\le 2L}
\bigl(
|\partial_d^\beta \sigma_d|
+
|\partial_d^\beta \sigma_d^{-1}|
\bigr)
\le
C A_0^L|\beta|!.
\]
Combining the preceding displays, we have
\begin{equation}
\label{eq:app_reconstructed_tail_derivative_bounds}
\max_{|\beta|\le 2L}
\left(
|\partial_d^\beta r_d|
+
|\partial_d^\beta \mu_d|
+
|\partial_d^\beta \sigma_d|
+
|\partial_d^\beta \sigma_d^{-1}|
\right)
\le
C A_0^L|\beta|!,
\qquad d\in\mathcal D_\tau .
\end{equation}

Using the change of variables \(t=r_d+s\), the moving boundary in \(H\) becomes
fixed:
\begin{equation}
\label{eq:app_G_fixed_domain_representation}
\mathcal G(d)
=
\frac1\alpha
\int_0^\infty
\widetilde R_{\eta_d}(r_d+s)\,
b(r_d+s)\,ds,
\end{equation}
where
\[
b(t):=a(t)p_{\theta,x}(t),
\qquad
a(t):=\mathbb E[S(y)\mid R(y)=t].
\]

We now bound the two factors in the integrand. First, set
\[
a_d:=r_d-\mu_d.
\]
By the definition of the shaped reward,
\begin{align}
\widetilde R_{\eta_d}(r_d+s)
&=
s+
\frac{\widetilde c_N}{2\sigma_d}
\Bigl((a_d+s)^2-a_d^2\Bigr) \notag\\
&=
s+
\frac{\widetilde c_N}{2\sigma_d}
\bigl(2a_ds+s^2\bigr).
\label{eq:app_G_shaped_reward_coordinate}
\end{align}
From \eqref{eq:app_reconstructed_tail_derivative_bounds},
\[
|a_d|\le C,
\qquad
|\partial_d^\beta a_d|\le C A_0^L|\beta|!,
\qquad
|\partial_d^\beta\sigma_d^{-1}|\le C A_0^L|\beta|!.
\]
Let \(q:=|\kappa|\le 2L\). If \(q=0\), then
\[
|\widetilde R_{\eta_d}(r_d+s)|
\le
C\sqrt{\log N}\,(1+s^2),
\]
using \(|\widetilde c_N|\le C\sqrt{\log N}\). If \(q\ge1\), the derivative of
the first term \(s\) is zero, and the product rule gives
\[
\partial_d^\kappa
\widetilde R_{\eta_d}(r_d+s)
=
\frac{\widetilde c_N}{2}
\sum_{\gamma\le\kappa}
{\kappa\choose\gamma}
\partial_d^\gamma(\sigma_d^{-1})
\,
\partial_d^{\kappa-\gamma}(2a_ds+s^2).
\]
For each \(\gamma\le\kappa\),
\[
\left|
\partial_d^\gamma(\sigma_d^{-1})
\right|
\le
C A_0^L|\gamma|!,
\]
and
\[
\left|
\partial_d^{\kappa-\gamma}(2a_ds+s^2)
\right|
\le
C A_0^L|\kappa-\gamma|!\,(1+s^2).
\]
Moreover,
\[
{\kappa\choose\gamma}|\gamma|!\,|\kappa-\gamma|!
\le
C\,|\kappa|!,
\]
and the number of multi-indices \(\gamma\le\kappa\) is at most \(C^{|\kappa|}\).
Absorbing this finite product-rule factor into \(A_0^L\), we obtain
\begin{equation}
\label{eq:app_G_shaped_reward_derivative_bound}
\left|
\partial_d^\kappa
\widetilde R_{\eta_d}(r_d+s)
\right|
\le
C A_0^L q!\sqrt{\log N}\,(1+s^2),
\qquad q=|\kappa|\le 2L.
\end{equation}

Next, by the likelihood-ratio identity for the reward density, for any smooth test
function \(\psi\),
\[
\int \psi(t)\nabla_\theta p_{\theta,x}(t)\,dt
=
\nabla_\theta \mathbb E_\theta[\psi(R)]
=
\mathbb E_\theta[\psi(R)S(y)]
=
\int \psi(t)a(t)p_{\theta,x}(t)\,dt,
\]
where \(a(t):=\mathbb E_\theta[S(y)\mid R=t]\). Hence
\[
a(t)p_{\theta,x}(t)=\nabla_\theta p_{\theta,x}(t)
\]
for \(p_{\theta,x}\)-almost every \(t\).
And on the modeled Gaussian tail,
\[
p_{\theta,x}(t)
=
\frac1{\bar\sigma}
\phi\!\left(\frac{t-\bar\mu}{\bar\sigma}\right).
\]
Writing
\[
z(t):=\frac{t-\bar\mu}{\bar\sigma},
\]
we get
\[
b(t)
=
p_{\theta,x}(t)
\left[
\frac{z(t)}{\bar\sigma}\nabla_\theta\bar\mu
+
\frac{z(t)^2-1}{\bar\sigma}\nabla_\theta\bar\sigma
\right].
\]
Thus \(b(t)\) is a Gaussian density times a polynomial in \(z(t)\), with
uniformly controlled coefficients.

This structure is preserved by \(t\)-differentiation. Indeed, if
\(b^{(\ell)}(t)=p_{\theta,x}(t)P_\ell(z(t))\) for some vector-valued polynomial
\(P_\ell\), then
\[
b^{(\ell+1)}(t)
=
p_{\theta,x}(t)\,
\frac{P_\ell'(z(t))-z(t)P_\ell(z(t))}{\bar\sigma},
\]
so, by induction,
\[
b^{(\ell)}(t)=p_{\theta,x}(t)P_\ell(z(t)),
\qquad
\ell\le 2L,
\]
where \(P_\ell\) has degree at most \(\ell+2\) and coefficients controlled by
\((\alpha,G,\sigma_{\min},M_R)\) and the order \(L\).

Now set \(t=r_d+s\). Since \(r_d\) stays in a compact part of the modeled
Gaussian tail, and since \(|\bar\mu|+\bar\sigma\le C\) and
\(\bar\sigma\ge\sigma_{\min}\), we have
\[
p_{\theta,x}(r_d+s)\le C e^{-cs^2},
\qquad
|z(r_d+s)|\le C(1+s),
\qquad s\ge0.
\]
Therefore, for every \(\ell\le 2L\),
\begin{equation}
\label{eq:app_G_b_t_derivative_bound}
\left\|
b^{(\ell)}(r_d+s)
\right\|
\le
C_0A_0^L L!\,(1+s^{C_L})e^{-cs^2},
\qquad s\ge0.
\end{equation}

By the product rule, \eqref{eq:app_G_shaped_reward_derivative_bound} and
\eqref{eq:app_G_b_t_derivative_bound} imply
\begin{equation}
\label{eq:app_G_integrand_derivative_envelope}
\left\|
\partial_d^\kappa
\Bigl[
\widetilde R_{\eta_d}(r_d+s)\,
b(r_d+s)
\Bigr]
\right\|
\le
C_0A_0^L q!L!\,\sqrt{\log N}\,
(1+s^{C_L})e^{-cs^2},
\qquad q=|\kappa|\le 2L .
\end{equation}
The right-hand side is integrable in \(s\), uniformly over \(d\in\mathcal D_\tau\).
Therefore, differentiating under the integral sign in
\eqref{eq:app_G_fixed_domain_representation},
\begin{align}
\left\|
\partial_d^\kappa\mathcal G(d)
\right\|
&\le
\frac1\alpha
\int_0^\infty
\left\|
\partial_d^\kappa
\Bigl[
\widetilde R_{\eta_d}(r_d+s)b(r_d+s)
\Bigr]
\right\|\,ds \notag\\
&\le
C_0A_0^L q!L!\,\sqrt{\log N},
\qquad q=|\kappa|\le 2L.
\label{eq:app_G_derivative_bound}
\end{align}
where we used $\int_0^\infty (1+s^{C_L})e^{-cs^2} ds \le C_L$.

Taylor's theorem at the origin gives
\[
\mathcal G(d)
=
\mathcal G(0)
+
\sum_{1\le|\nu|\le 2L-1}
\frac{\partial_d^\nu\mathcal G(0)}{\nu!}d^\nu
+
\mathcal E_{\mathcal G,L}(d).
\]
Since \(\mathcal G(0)=H(\eta)\), set
\[
\widetilde{\mathcal H}_\nu(x,\theta)
:=
\frac{\partial_d^\nu\mathcal G(0)}{\nu!}.
\]
Using \eqref{eq:app_G_derivative_bound},
\[
\|\widetilde{\mathcal H}_\nu(x,\theta)\|
\le
\frac{C_0A_0^L|\nu|!L!\sqrt{\log N}}{\nu!}.
\]
Since
\[
\frac{|\nu|!}{\nu!}\le 3^{|\nu|}\le 9^L,
\]
we enlarge \(A_0\) and obtain
\[
\max_{1\le|\nu|\le 2L-1}
\|\widetilde{\mathcal H}_\nu(x,\theta)\|
\le
C_0A_0^L L!\sqrt{\log N}.
\]

For the remainder, Taylor's theorem gives
\[
\|\mathcal E_{\mathcal G,L}(d)\|
\le
\sup_{\|u\|_1\le\|d\|_1}
\max_{|\kappa|=2L}
\|\partial_u^\kappa\mathcal G(u)\|
\sum_{|\kappa|=2L}\frac{|d^\kappa|}{\kappa!}.
\]
By \eqref{eq:app_G_derivative_bound},
\[
\sup_{\|u\|_1\le\|d\|_1}
\max_{|\kappa|=2L}
\|\partial_u^\kappa\mathcal G(u)\|
\le
C_0A_0^L(2L)!L!\sqrt{\log N}.
\]
The multinomial formula gives
\[
\sum_{|\kappa|=2L}\frac{|d^\kappa|}{\kappa!}
=
\frac{(|d_0|+|d_1|+|d_2|)^{2L}}{(2L)!}
\le
\frac{C^L\|d\|^{2L}}{(2L)!}.
\]
Thus the factor \((2L)!\) cancels, and after enlarging \(A_0\),
\[
\|\mathcal E_{\mathcal G,L}(d)\|
\le
C_0A_0^L L!\sqrt{\log N}\,\|d\|^{2L}.
\]
This proves \eqref{eq:app_G_delta_local_expansion} and
\eqref{eq:app_G_delta_remainder}.
\end{proof}

We next control the Taylor remainder after substituting the random local error
\(\Delta_n\). The following lemma gives the high-moment estimate needed for
the remainder term.

\begin{lemma}
\label{lem:local_tail_error_high_moments}
Fix an integer \(L\ge1\). Under Assumptions~\ref{assum:gaussian-tail-model} and
\ref{assum:nondegenerate_scale}, for all sufficiently large \(n\),
\begin{equation}
\label{eq:app_delta_high_moment_bound}
\mathbb E\!\left[
\|\Delta_n\|^{2L}\Ind{\mathcal L_n}
\right]
\le
C_L n^{-L},
\end{equation}
where \(\mathcal L_n\) is the local event defined above. The constant
\(C_L\) depends only on \(L\) and \((\alpha,G,\sigma_{\min},M_R)\).
\end{lemma}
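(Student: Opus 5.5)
Since \(\|\Delta_n\|^{2L}\le 3^{L}\bigl(|\Delta_{0,n}|^{2L}+|\Delta_{1,n}|^{2L}+|\Delta_{2,n}|^{2L}\bigr)\), it suffices to bound the \(2L\)-th moment of each coordinate on \(\mathcal L_n\) by \(C_Ln^{-L}\). The indicator of \(\mathcal L_n\) will be used in only one way: it forces \(B_n\in[u_\alpha-\tau,u_\alpha+\tau]\subset(1-2\alpha,1)\). This is exactly the region where the conditional representation \eqref{eq:app_delta1_conditional_reconstruction}--\eqref{eq:app_delta2_conditional_reconstruction} holds with uniformly Gaussian-tailed summands. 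So the plan is to bound each coordinate moment on the event \(\{B_n\in J\}\), where \(J:=[u_\alpha-\tau,u_\alpha+\tau]\), and then drop the indicator.

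\emph{Threshold coordinate.} Here \(\Delta_{0,n}=B_n-u_\alpha\), and \(B_n\sim\mathrm{Beta}(n-k_n+1,k_n)\). I would split \(B_n-u_\alpha=(B_n-\mathbb E B_n)+(\mathbb E B_n-u_\alpha)\). The deterministic part is \(O(1/n)\) by \Cref{lem:uniform_order_tail_facts}. For the centered part, I would represent the uniform order statistic as a ratio of Gamma sums: \(B_n=S_j/S_{n+1}\), with \(S_j\) a sum of \(j\) i.i.d.\ exponentials. On the event that \(S_{n+1}/(n+1)\ge 1/2\), which fails with probability \(e^{-cn}\), one has \(|B_n-\tfrac{j}{n+1}|\lesssim n^{-1}\bigl(|S_j-j|+|S_{n+1}-(n+1)|\bigr)\). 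Rosenthal's inequality, or the Marcinkiewicz--Zygmund inequality, for centered exponential sums then gives \(\mathbb E|S_j-j|^{2L}\le C_Ln^{L}\). Hence \(\mathbb E|B_n-u_\alpha|^{2L}\le C_Ln^{-L}\). The bad Gamma event contributes at most \(e^{-cn}\), because \(|B_n-u_\alpha|\le1\).

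\emph{Moment coordinates.} I would condition on \(B_n=b\in J\). By \eqref{eq:app_delta1_conditional_reconstruction}, \(\Delta_{1,n}\) equals a deterministic term of size \(O(k_n^{-1})\) plus \(k_n^{-1}\sum_{\ell<k_n}Z_{\ell,b}\), where the \(Z_{\ell,b}=Y_{\ell,b}-M_1(b)\) are conditionally i.i.d.\ and mean zero. The family \(\{Y_b:b\in J\}\) has a uniform Gaussian-tail envelope, because \(Q(b+(1-b)W)\) lies in the Gaussian tail with \(|\bar\mu|+\bar\sigma\le C\) by \Cref{cor:uniform_tail_bounds}. It follows that \(\sup_{b\in J}\mathbb E|Z_{1,b}|^{2L}\le C_L\). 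The same holds for \(Y_b^2-M_2(b)\), since the Gaussian envelope controls all polynomial moments. Rosenthal's inequality (cf.\ \citet{rosenthal1970subspaces}) then gives
\[
\mathbb E\Bigl[\Bigl|\sum_{\ell<k_n}Z_{\ell,b}\Bigr|^{2L}\,\Big|\,B_n=b\Bigr]\le C_L\bigl(k_n^{L}+k_n\bigr),
\]
uniformly in \(b\in J\). Dividing by \(k_n^{2L}\asymp n^{2L}\) and adding the deterministic \(O(n^{-2L})\) piece yields a conditional bound \(C_Ln^{-L}\). The argument for \(\Delta_{2,n}\) is identical. Integrating over \(B_n\) restricted to \(J\) completes the proof.

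\emph{Main obstacle.} The only delicate point is the uniformity of the conditional \(2L\)-th moments over \(b\in J\). I would establish it first, explicitly, from the Gaussian-tail formula for \(Q\) on \((1-2\alpha,1)\): for \(b\in J\), \(Y_b\) is \(\bar\mu+\bar\sigma Z\) conditioned on \(Z\ge\Phi^{-1}(b)\), and \(\Phi^{-1}(b)\) stays bounded. Also note that \(\mathcal L_n\) is used only through \(B_n\in J\); this is legitimate because \(\mathcal L_n\subseteq\{|\Delta_{0,n}|\le\tau\}\), and the bound on the larger event suffices.
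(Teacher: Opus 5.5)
Your proposal is correct and follows the same route as the paper: the coordinate split, a Beta-moment bound for $\Delta_{0,n}$, and conditional Rosenthal bounds on $\{B_n\in I_\tau\}\supseteq\mathcal L_n$ for $\Delta_{1,n},\Delta_{2,n}$, using the uniform Gaussian-tail moments of $Y_b$. The only difference is in the threshold step: you get $\mathbb E|B_n-\mathbb E B_n|^{2L}\le C_Ln^{-L}$ from the Gamma-ratio representation plus Rosenthal, whereas the paper derives a sub-Gaussian tail for $B_n$ via the beta--binomial identity and Chernoff and then applies the layer-cake formula; either works.
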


\begin{proof}
Since \(\Delta_n\in\mathbb R^3\),
\begin{equation}
\label{eq:app_delta_norm_coordinate_bound}
\|\Delta_n\|^{2L}
\le
C_L\left(
|\Delta_{0,n}|^{2L}
+
|\Delta_{1,n}|^{2L}
+
|\Delta_{2,n}|^{2L}
\right).
\end{equation}

We first control the threshold coordinate. Since
\[
B_n=U_{(n-k_n+1)},
\qquad
k_n=\lceil \alpha n\rceil,
\]
we have
\[
B_n\sim \mathrm{Beta}(n-k_n+1,k_n).
\]
Moreover,
\[
\mathbb E[B_n]
=
\frac{n-k_n+1}{n+1}.
\]
Using \(u_\alpha=1-\alpha\) and \(k_n=\alpha n+\delta_n\), with
\(\delta_n\in[0,1)\), we get
\[
\left|
\mathbb E[B_n]-u_\alpha
\right|
=
\left|
\frac{n-k_n+1}{n+1}
-
(1-\alpha)
\right|
=
\frac{|\alpha-\delta_n|}{n+1}
\le
\frac{1}{n+1}.
\]
Standard beta concentration gives
\begin{equation}
\label{eq:app_beta_high_central_moment}
\mathbb E\!\left[
|B_n-\mathbb E B_n|^{2L}
\right]
\le
C_L n^{-L}.
\end{equation}
Indeed, \(B_n\) is an order statistic at a fixed quantile considering that $n-k_n+1$ and $k_n$ are both
of order \(n\). Hence the beta-binomial tail
identity and Chernoff's inequality imply
\[
\mathbb P\!\left(
|B_n-\mathbb E B_n|\ge t
\right)
\le
C e^{-c n t^2},
\qquad
t\ge0.
\]
Therefore, by the layer-cake formula,
\[
\begin{aligned}
\mathbb E\!\left[
|B_n-\mathbb E B_n|^{2L}
\right]
&=
2L\int_0^\infty
t^{2L-1}
\mathbb P\!\left(
|B_n-\mathbb E B_n|\ge t
\right)\,dt  \\
&\le
C_L\int_0^\infty
t^{2L-1}e^{-c n t^2}\,dt
\le
C_L n^{-L}.
\end{aligned}
\]

Therefore,
\begin{align}
\mathbb E|\Delta_{0,n}|^{2L}
&=
\mathbb E|B_n-u_\alpha|^{2L} \notag\\
&\le
C_L
\mathbb E|B_n-\mathbb E B_n|^{2L}
+
C_L|\mathbb E B_n-u_\alpha|^{2L} \notag\\
&\le
C_L n^{-L}.
\label{eq:app_delta0_high_moment}
\end{align}

It remains to control the empirical moment coordinates. Let
\[
I_\tau:=[u_\alpha-\tau,u_\alpha+\tau].
\]
On \(\mathcal L_n\), we have \(B_n\in I_\tau\). Conditional on \(B_n=b\in
I_\tau\), the empirical upper tail consists of the threshold value \(Q(b)\) and
\[
K_n:=k_n-1
\]
unordered i.i.d. copies
\[
Y_{\ell,b}:=Q(b+(1-b)W_\ell),
\qquad
W_\ell\stackrel{\mathrm{i.i.d.}}{\sim}\mathrm{Unif}(0,1),
\qquad
\ell=1,\dots,K_n.
\]
Thus
\[
M_1(b)=\mathbb E[Y_{\ell,b}],
\qquad
M_2(b)=\mathbb E[Y_{\ell,b}^2],
\]
and
\begin{align}
\Delta_{1,n}
&=
\frac{Q(b)-M_1(b)}{k_n}
+
\frac1{k_n}
\sum_{\ell=1}^{K_n}
\bigl(Y_{\ell,b}-M_1(b)\bigr),
\label{eq:app_delta1_conditional_decomp_high_moment}\\
\Delta_{2,n}
&=
\frac{Q(b)^2-M_2(b)}{k_n}
+
\frac1{k_n}
\sum_{\ell=1}^{K_n}
\bigl(Y_{\ell,b}^2-M_2(b)\bigr).
\label{eq:app_delta2_conditional_decomp_high_moment}
\end{align}

Since \(I_\tau\Subset(1-2\alpha,1)\), the variables \(Y_{\ell,b}\) are in the
modeled Gaussian upper tail, uniformly over \(b\in I_\tau\). Hence
\begin{equation}
\label{eq:app_Yb_uniform_moments}
\sup_{b\in I_\tau}
\mathbb E|Y_{\ell,b}|^{4L}
\le
C_L.
\end{equation}
Consequently,
\[
\sup_{b\in I_\tau}
\mathbb E|Y_{\ell,b}-M_1(b)|^{2L}
+
\sup_{b\in I_\tau}
\mathbb E|Y_{\ell,b}^2-M_2(b)|^{2L}
\le
C_L.
\]

For all sufficiently large \(n\),
\(
k_n\ge \alpha n,
\)
and \(
K_n=k_n-1\ge \frac{\alpha n}{2}.
\)
Also, since \(Q,M_1,M_2\) are bounded on \(I_\tau\),
\[
\sup_{b\in I_\tau}
\left|
\frac{Q(b)-M_1(b)}{k_n}
\right|^{2L}
+
\sup_{b\in I_\tau}
\left|
\frac{Q(b)^2-M_2(b)}{k_n}
\right|^{2L}
\le
C_L n^{-2L}
\le
C_L n^{-L}.
\]

We use the Rosenthal moment bound \citep{rosenthal1970subspaces}: if
\(Z_1,\dots,Z_K\) are i.i.d. mean-zero variables with
\(\mathbb E|Z_1|^{2L}\le C_L\), then
\[
\mathbb E\left|
\frac1K\sum_{\ell=1}^K Z_\ell
\right|^{2L}
\le
C_L K^{-L}.
\]
Applying this to
\[
Z_{\ell,b}^{(1)}:=Y_{\ell,b}-M_1(b),
\qquad
Z_{\ell,b}^{(2)}:=Y_{\ell,b}^2-M_2(b),
\]
and using \(K_n\ge \alpha n/2\), we obtain uniformly over \(b\in I_\tau\),
\[
\mathbb E\!\left[
\left|
\frac1{K_n}\sum_{\ell=1}^{K_n} Z_{\ell,b}^{(1)}
\right|^{2L}
+
\left|
\frac1{K_n}\sum_{\ell=1}^{K_n} Z_{\ell,b}^{(2)}
\right|^{2L}
\,\middle|\,
B_n=b
\right]
\le
C_L n^{-L}.
\]
Since
\[
\frac1{k_n}\sum_{\ell=1}^{K_n} Z_{\ell,b}^{(j)}
=
\frac{K_n}{k_n}
\left(
\frac1{K_n}\sum_{\ell=1}^{K_n} Z_{\ell,b}^{(j)}
\right),
\qquad
0\le \frac{K_n}{k_n}\le 1,
\]
the same bound holds with \(k_n^{-1}\sum_{\ell=1}^{K_n}\) in place of
\(K_n^{-1}\sum_{\ell=1}^{K_n}\). Combining this with
\eqref{eq:app_delta1_conditional_decomp_high_moment} and
\eqref{eq:app_delta2_conditional_decomp_high_moment}, we get
\begin{equation}
\label{eq:app_delta12_conditional_high_moment}
\mathbb E\!\left[
|\Delta_{1,n}|^{2L}
+
|\Delta_{2,n}|^{2L}
\,\middle|\,
B_n=b
\right]
\le
C_L n^{-L},
\qquad b\in I_\tau .
\end{equation}

Finally, since \(\mathcal L_n\subseteq\{B_n\in I_\tau\}\),
\begin{align}
&\mathbb E\!\left[
\bigl(|\Delta_{1,n}|^{2L}+|\Delta_{2,n}|^{2L}\bigr)
\Ind{\mathcal L_n}
\right] 
\le
\mathbb E\!\left[
\Ind{B_n\in I_\tau}
\mathbb E\!\left[
|\Delta_{1,n}|^{2L}+|\Delta_{2,n}|^{2L}
\,\middle|\,
B_n
\right]
\right]
\le
C_L n^{-L}.
\label{eq:app_delta12_local_high_moment}
\end{align}
Together with
\eqref{eq:app_delta_norm_coordinate_bound} and
\eqref{eq:app_delta0_high_moment}, this proves
\eqref{eq:app_delta_high_moment_bound}.
\end{proof}

The high-moment lemma controls the Taylor remainder. It remains to identify the
expectation of each polynomial term in the Taylor expansion. The next lemma
shows that the localized mixed moments of \(\Delta_n\) admit finite expansions
in powers of \(n^{-1}\), with coefficients depending on the rounding offset
\(\delta_n\).

\begin{lemma}
\label{lem:local_tail_error_mixed_moment_expansion}
Fix an integer \(L\ge1\). Under Assumptions~\ref{assum:gaussian-tail-model}
and \ref{assum:nondegenerate_scale}, for every multi-index
\[
\nu=(\nu_0,\nu_1,\nu_2),
\qquad
1\le |\nu|\le 2L-1,
\]
there exist coefficients
\[
c_{\nu,\ell}(\delta_n;x,\theta),
\qquad
\ell=1,\dots,L-1,
\]
depending on \(n\) only through
\(\delta_n=\lceil\alpha n\rceil-\alpha n\), such that, for all sufficiently
large \(n\),
\begin{equation}
\label{eq:app_delta_mixed_moment_expansion}
\left|
\mathbb E\!\left[
\Delta_n^\nu\Ind{\mathcal L_n}
\right]
-
\sum_{\ell=1}^{L-1}
c_{\nu,\ell}(\delta_n;x,\theta)n^{-\ell}
\right|
\le
C_L n^{-L}.
\end{equation}
Here
\[
\Delta_n^\nu
:=
\Delta_{0,n}^{\nu_0}
\Delta_{1,n}^{\nu_1}
\Delta_{2,n}^{\nu_2},
\]
and \(C_L\) depends only on \(L\) and \((\alpha,G,\sigma_{\min},M_R)\).
\end{lemma}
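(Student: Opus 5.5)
The plan is to condition on the threshold order statistic and expand in two layers. First, the inner conditional moments of $(\Delta_{1,n},\Delta_{2,n})$ given $B_n=b$ are computed exactly as polynomials in $1/k_n$. Second, smooth functions of $B_n$ are expanded using the exact Beta moments, which are rational in $n$ with $\delta_n$-dependent coefficients. Throughout, write $k_n=\alpha n+\delta_n$, so that $1/k_n=(\alpha n)^{-1}(1+\delta_n/(\alpha n))^{-1}$. Any rational function of $k_n$ and $n$ then has an expansion in powers of $n^{-1}$ whose coefficients depend only on $(\alpha,\delta_n)$, with a remainder of order $n^{-L}$ uniform in $\delta_n\in[0,1)$.

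\emph{Step 1 (inner expansion).} Fix $b\in I_\tau$. By \eqref{eq:app_delta1_conditional_decomp_high_moment}--\eqref{eq:app_delta2_conditional_decomp_high_moment}, we can write $\Delta_{j,n}=k_n^{-1}\bigl(a_j(b)+\sum_{\ell\le K_n}Z^{(j)}_{\ell,b}\bigr)$, where $a_1(b)=Q(b)-M_1(b)$, $a_2(b)=Q(b)^2-M_2(b)$, and $K_n=k_n-1$. Expand $\mathbb E[\Delta_{1,n}^{\nu_1}\Delta_{2,n}^{\nu_2}\mid B_n=b]$ by the multinomial theorem. Mixed moments of i.i.d. mean-zero sums are polynomials in $K_n$: only partitions whose blocks all have size at least $2$ contribute, so the degree in $K_n$ is at most $\lfloor p/2\rfloor$ for $p$ summands. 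The coefficients are products of joint cumulants of $(Y_b,Y_b^2)$, which are smooth functions $\kappa(b)$ because $Q$ is Gaussian-quantile-smooth on $I_\tau$ and all moments are finite (see \eqref{eq:app_Yb_uniform_moments}). Substituting $K_n=k_n-1$, the result is $\sum_{s}\gamma_{\nu,s}(b)\,k_n^{-s}$, a finite sum with $s\ge\lceil(\nu_1+\nu_2)/2\rceil$ and with $\gamma_{\nu,s}\in C^\infty(I_\tau)$. All derivatives of $\gamma_{\nu,s}$ are bounded uniformly over $x,\theta$, by \Cref{cor:uniform_tail_bounds}.

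\emph{Step 2 (outer expansion in $B_n$).} By Step 1 it remains to expand $\mathbb E[(B_n-u_\alpha)^{\nu_0}\gamma(B_n)\Ind{\mathcal L_n}]$ for a smooth $\gamma$. Proceed in three moves. First, replace $\Ind{\mathcal L_n}$ by $\Ind{B_n\in I_\tau}$. The discrepancy lies on $\{B_n\in I_\tau\}\cap\mathcal L_n^c$, which has probability $Ce^{-cn}$ by \eqref{eq:app_expansion_local_event}; the integrand has bounded high moments by \Cref{lem:local_tail_error_high_moments}-type Rosenthal bounds, so Cauchy--Schwarz gives an $O(e^{-cn})$ error. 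Second, Taylor expand $\gamma$ around $u_\alpha$ to order $2L$. The remainder is $O(\mathbb E|B_n-u_\alpha|^{2L})=O(n^{-L})$ by \eqref{eq:app_delta0_high_moment}. Third, reduce to the moments $\mathbb E[(B_n-u_\alpha)^p]$ with $p\le 2L$; dropping the indicator $\Ind{B_n\in I_\tau}$ here costs only $O(e^{-cn})$ by \eqref{eq:app_order_localization}. Since $B_n\sim\mathrm{Beta}(n-k_n+1,k_n)$, its raw moments are $\prod_{i<p}(n-k_n+1+i)/(n+1+i)$, rational in $(n,k_n)$. Hence $\mathbb E(B_n-u_\alpha)^p$ is rational in $n$ with $\delta_n$-dependent coefficients, and it is $O(n^{-\lceil p/2\rceil})$ because the central moments scale at that order.

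\emph{Step 3 (assembly).} Multiply the $k_n^{-s}$ factors from Step 1 by the rational Beta-moment expansions from Step 2. Re-expand the product in $n^{-1}$ up to order $L-1$, and collect the result into the coefficients $c_{\nu,\ell}(\delta_n;x,\theta)$; these depend on $n$ only through $\delta_n$. No $n^0$ term appears because $|\nu|\ge1$: either $\nu_0\ge1$, giving at least $n^{-1}$ (the odd case $p=1$ is also order $n^{-1}$), or some $\nu_j\ge1$ with $j\in\{1,2\}$, giving $s\ge1$. Every truncation error is $C_Ln^{-L}$.

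The main obstacle is bookkeeping. The remainders must be uniform in $\delta_n\in[0,1)$, $b$, $x$ and $\theta$, which we control through uniform bounds on the Beta moments and the smooth cumulant functions. The localization must be handled cleanly so that the exponentially small events never interact with the polynomial expansion.
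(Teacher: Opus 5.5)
Your proposal is correct and follows essentially the same route as the paper: localize to $\{B_n\in I_\tau\}$ at exponential cost, condition on $B_n$ and count index collisions (degree at most $\lfloor(\nu_1+\nu_2)/2\rfloor$ in $K_n$), Taylor expand in $B_n$ around $u_\alpha$, and expand the Beta central moments as rational functions of $n$ and $\delta_n$; the $n^0$ coefficient vanishes for the same reasons you give. The differences are cosmetic. You keep the inner conditional moment as an exact finite sum $\sum_s\gamma_{\nu,s}(b)k_n^{-s}$ with $\delta_n$-free coefficients and expand $k_n^{-s}$ only at assembly, whereas the paper expands $k_n^{-m_\nu}K_n^{\underline r}$ in $n^{-1}$ right away. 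Also, the replacement of $\Ind{\mathcal L_n}$ by $\Ind{B_n\in I_\tau}$ must be applied to $\Delta_n^\nu$ before the conditioning in your Step 1, since $\mathcal L_n$ is not $B_n$-measurable.
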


\begin{proof}
Let
\[
I_\tau:=[u_\alpha-\tau,u_\alpha+\tau].
\]
Since \(\mathcal L_n\subseteq\{B_n\in I_\tau\}\),
\[
\Ind{\mathcal L_n}-\Ind{B_n\in I_\tau}
=
-\Ind{\{B_n\in I_\tau\}\cap \mathcal L_n^c}.
\]
Therefore, by Cauchy--Schwarz,
\begin{align}
&\left|
\mathbb E[\Delta_n^\nu\Ind{\mathcal L_n}]
-
\mathbb E[\Delta_n^\nu\Ind{B_n\in I_\tau}]
\right| \notag\\
&\qquad\le
\mathbb E\!\left[
|\Delta_n^\nu|
\Ind{\{B_n\in I_\tau\}\cap \mathcal L_n^c}
\right] \notag\\
&\qquad\le
\left(
\mathbb E\!\left[
|\Delta_n^\nu|^2
\Ind{B_n\in I_\tau}
\right]
\right)^{1/2}
\mathbb P(\mathcal L_n^c)^{1/2}.
\end{align}
On \(B_n\in I_\tau\), the same conditional Gaussian-tail moment bounds used in
\Cref{lem:local_tail_error_high_moments} give
\[
\mathbb E\!\left[
|\Delta_n^\nu|^2\Ind{B_n\in I_\tau}
\right]
\le C_L.
\]
Together with \eqref{eq:app_expansion_local_event}, this yields
\begin{equation}
\label{eq:app_mixed_local_to_threshold_local}
\left|
\mathbb E[\Delta_n^\nu\Ind{\mathcal L_n}]
-
\mathbb E[\Delta_n^\nu\Ind{B_n\in I_\tau}]
\right|
\le
C_L e^{-cn},
\end{equation}
after decreasing \(c>0\) if necessary. Hence it is enough to expand
\(\mathbb E[\Delta_n^\nu\Ind{B_n\in I_\tau}]\).

Fix \(b\in I_\tau\), and condition on \(B_n=b\). Let
\[
K_n:=k_n-1=\alpha n+\delta_n-1,
\]
and define
\[
Y_{\ell,b}:=Q(b+(1-b)W_\ell),
\qquad
W_\ell\stackrel{\mathrm{i.i.d.}}{\sim}\mathrm{Unif}(0,1),
\qquad
\ell=1,\dots,K_n.
\]
Set
\[
A_{\ell,b}:=Y_{\ell,b}-M_1(b),
\qquad
C_{\ell,b}:=Y_{\ell,b}^2-M_2(b),
\]
and
\[
q_1(b):=Q(b)-M_1(b),
\qquad
q_2(b):=Q(b)^2-M_2(b).
\]
Then
\begin{equation}
\label{eq:app_delta_conditional_representation}
\Delta_{1,n}
=
\frac{q_1(b)+\sum_{\ell=1}^{K_n}A_{\ell,b}}{k_n},
\qquad
\Delta_{2,n}
=
\frac{q_2(b)+\sum_{\ell=1}^{K_n}C_{\ell,b}}{k_n}.
\end{equation}

We first expand the conditional moment of
\(\Delta_{1,n}^{\nu_1}\Delta_{2,n}^{\nu_2}\). Define
\[
S_{A,b}:=\sum_{\ell=1}^{K_n}A_{\ell,b},
\qquad
S_{C,b}:=\sum_{\ell=1}^{K_n}C_{\ell,b}.
\]
By \eqref{eq:app_delta_conditional_representation} and the binomial theorem,
\begin{align}
&\mathbb E\!\left[
\Delta_{1,n}^{\nu_1}\Delta_{2,n}^{\nu_2}
\,\middle|\,
B_n=b
\right] \notag\\
&\quad =
k_n^{-(\nu_1+\nu_2)}
\sum_{a=0}^{\nu_1}
\sum_{c=0}^{\nu_2}
\binom{\nu_1}{a}
\binom{\nu_2}{c}
q_1(b)^{\nu_1-a}
q_2(b)^{\nu_2-c}
\,
\mathbb E\!\left[
S_{A,b}^{a}S_{C,b}^{c}
\right].
\label{eq:app_conditional_delta12_binomial_expansion}
\end{align}

For fixed \(a,c\), expand the powers in
\(\mathbb E[S_{A,b}^{a}S_{C,b}^{c}]\). Group each resulting term by the number
\(r\) of distinct sample indices appearing in the product. Since
\[
\mathbb E[A_{\ell,b}]=0,
\qquad
\mathbb E[C_{\ell,b}]=0,
\]
any term in which some used index appears exactly once has zero expectation.
Thus every contributing index appears at least twice, and hence
\(r\le \lfloor(a+c)/2\rfloor\). Therefore there exist functions
\(\Psi_{a,c,r}(b;x,\theta)\) such that
\begin{equation}
\label{eq:app_centered_sum_collision_expansion}
\mathbb E[S_{A,b}^{a}S_{C,b}^{c}]
=
\sum_{r=0}^{\lfloor(a+c)/2\rfloor}
K_n^{\underline r}\,
\Psi_{a,c,r}(b;x,\theta),
\end{equation}
where
\[
K_n^{\underline r}:=K_n(K_n-1)\cdots(K_n-r+1),
\qquad
K_n^{\underline 0}:=1.
\]
Each \(\Psi_{a,c,r}\) is a finite linear combination of products of mixed
centered tail moments
\[
\mu_{p,q}(b)
:=
\mathbb E\!\left[
\bigl(Y_b-M_1(b)\bigr)^p
\bigl(Y_b^2-M_2(b)\bigr)^q
\right].
\]
These functions are smooth on \(I_\tau\). Indeed, under the Gaussian-tail model,
\[
Y_b=\bar\mu+\bar\sigma\Phi^{-1}(b+(1-b)W),
\]
and \(I_\tau\Subset(1-2\alpha,1)\). Hence all fixed-order truncated-Gaussian
moments and their \(b\)-derivatives are uniformly bounded on \(I_\tau\).

Substituting \eqref{eq:app_centered_sum_collision_expansion} into
\eqref{eq:app_conditional_delta12_binomial_expansion}, each summand is indexed
by a triple \((a,c,r)\) and has the form
\[
k_n^{-(\nu_1+\nu_2)}
K_n^{\underline r}
\psi_{a,c,r}(b;x,\theta),
\]
where
\[
\psi_{a,c,r}(b;x,\theta)
:=
\binom{\nu_1}{a}
\binom{\nu_2}{c}
q_1(b)^{\nu_1-a}
q_2(b)^{\nu_2-c}
\Psi_{a,c,r}(b;x,\theta).
\]
Here
\[
0\le r\le \left\lfloor\frac{a+c}{2}\right\rfloor
\le
\left\lfloor\frac{\nu_1+\nu_2}{2}\right\rfloor,
\]
and \(\psi_{a,c,r}(\cdot;x,\theta)\) is smooth on \(I_\tau\).

Let
\[
m_\nu:=\nu_1+\nu_2.
\]
Since
\[
k_n=\alpha n+\delta_n,
\qquad
K_n=\alpha n+\delta_n-1,
\]
we have
\[
k_n^{-m_\nu}K_n^{\underline r}
=
(\alpha n+\delta_n)^{-m_\nu}
\prod_{h=0}^{r-1}(\alpha n+\delta_n-1-h).
\]
Factoring out powers of \(n\),
\[
k_n^{-m_\nu}K_n^{\underline r}
=
n^{r-m_\nu}
G_r(n^{-1},\delta_n),
\]
where
\[
G_r(x,\delta)
:=
(\alpha+\delta x)^{-m_\nu}
\prod_{h=0}^{r-1}
\bigl(\alpha+(\delta-1-h)x\bigr).
\]
Since \(\alpha>0\) and \(\delta_n\in[0,1)\), \(G_r(x,\delta)\) is smooth in
\(x\) near \(0\), uniformly over \(\delta\in[0,1)\). Hence, for every fixed
\(L\),
\[
G_r(n^{-1},\delta_n)
=
\sum_{j=0}^{L-1}g_{r,j}(\delta_n)n^{-j}
+
\rho_{r,L}(n,\delta_n),
\qquad
|\rho_{r,L}(n,\delta_n)|\le C_L n^{-L}.
\]
Since \(r\le \lfloor m_\nu/2\rfloor\le m_\nu\), the prefactor
\(n^{r-m_\nu}\) is never a positive power of \(n\). Reindexing the preceding
expansion gives
\[
k_n^{-m_\nu}K_n^{\underline r}
=
\sum_{j=0}^{L-1}a_{r,j}(\delta_n)n^{-j}
+
\widetilde\rho_{r,L}(n,\delta_n),
\qquad
|\widetilde\rho_{r,L}(n,\delta_n)|\le C_Ln^{-L},
\]
with
\[
a_{r,j}(\delta_n)
=
\begin{cases}
g_{r,j-(m_\nu-r)}(\delta_n),
& j\ge m_\nu-r,\\
0,
& j<m_\nu-r.
\end{cases}
\]
Thus the first \(m_\nu-r\) coefficients are zero.

Since the number of triples \((a,c,r)\) is finite and depends only on \(L\), and
since all \(\psi_{a,c,r}\) are uniformly bounded on \(I_\tau\), we obtain,
uniformly for \(b\in I_\tau\),
\begin{equation}
\label{eq:app_conditional_delta12_power_expansion}
\mathbb E\!\left[
\Delta_{1,n}^{\nu_1}\Delta_{2,n}^{\nu_2}
\,\middle|\,
B_n=b
\right]
=
\sum_{j=0}^{L-1}
\psi_j(b,\delta_n;x,\theta)n^{-j}
+
\rho_L(b,n),
\qquad
|\rho_L(b,n)|\le C_L n^{-L},
\end{equation}
where each \(\psi_j(\cdot,\delta_n;x,\theta)\) is smooth on \(I_\tau\).

Using the tower property,
\begin{align}
\mathbb E[\Delta_n^\nu\Ind{B_n\in I_\tau}]
&=
\mathbb E\!\left[
(B_n-u_\alpha)^{\nu_0}
\Ind{B_n\in I_\tau}
\mathbb E\!\left[
\Delta_{1,n}^{\nu_1}\Delta_{2,n}^{\nu_2}
\,\middle|\,
B_n
\right]
\right] \notag\\
&=
\sum_{j=0}^{L-1}
n^{-j}
\mathbb E\!\left[
(B_n-u_\alpha)^{\nu_0}
\psi_j(B_n,\delta_n;x,\theta)
\Ind{B_n\in I_\tau}
\right]
+
O_L(n^{-L}).
\label{eq:app_mixed_after_conditioning}
\end{align}
Here \(O_L(n^{-L})\) denotes a term bounded in absolute value by
\(C_Ln^{-L}\).

It remains to expand the beta expectations in
\eqref{eq:app_mixed_after_conditioning}. Let \(\psi\) be a smooth function on
\(I_\tau\). Taylor's theorem at \(u_\alpha\) gives, on \(B_n\in I_\tau\),
\[
\psi(B_n)
=
\sum_{s=0}^{2L-1}
\frac{\psi^{(s)}(u_\alpha)}{s!}
(B_n-u_\alpha)^s
+
O_L(|B_n-u_\alpha|^{2L}).
\]
Therefore,
\begin{align}
&\mathbb E\!\left[
(B_n-u_\alpha)^{\nu_0}
\psi(B_n)
\Ind{B_n\in I_\tau}
\right] \notag\\
&=
\sum_{s=0}^{2L-1}
\frac{\psi^{(s)}(u_\alpha)}{s!}
\mathbb E\!\left[
(B_n-u_\alpha)^{\nu_0+s}
\Ind{B_n\in I_\tau}
\right]
+
O_L\!\left(
\mathbb E[
|B_n-u_\alpha|^{2L+\nu_0}
\Ind{B_n\in I_\tau}]
\right).
\end{align}
Since \(|B_n-u_\alpha|\le\tau\) on \(I_\tau\), and
\(\mathbb E|B_n-u_\alpha|^{2L}\le C_Ln^{-L}\),
\[
\mathbb E[
|B_n-u_\alpha|^{2L+\nu_0}
\Ind{B_n\in I_\tau}]
\le C_L n^{-L}.
\]
Moreover,
\[
\left|
\mathbb E[(B_n-u_\alpha)^a\Ind{B_n\in I_\tau}]
-
\mathbb E[(B_n-u_\alpha)^a]
\right|
\le
Ce^{-cn},
\]
because \(\mathbb P(B_n\notin I_\tau)\le Ce^{-cn}\) and
\(|B_n-u_\alpha|\le1\). Hence
\begin{align}
&\mathbb E\!\left[
(B_n-u_\alpha)^{\nu_0}
\psi(B_n)
\Ind{B_n\in I_\tau}
\right] \notag\\
&=
\sum_{s=0}^{2L-1}
\frac{\psi^{(s)}(u_\alpha)}{s!}
\mathbb E[(B_n-u_\alpha)^{\nu_0+s}]
+
O_L(n^{-L}).
\label{eq:app_smooth_beta_expectation_expansion}
\end{align}

Finally, for every fixed integer \(a\),
\begin{equation}
\label{eq:app_beta_central_moment_expansion}
\mathbb E[(B_n-u_\alpha)^a]
=
\sum_{\ell=0}^{L-1}
\beta_{a,\ell}(\delta_n)n^{-\ell}
+
\rho_{a,L}(n,\delta_n),
\qquad
|\rho_{a,L}(n,\delta_n)|\le C_L n^{-L}.
\end{equation}
Indeed, raw beta moments satisfy
\[
\mathbb E[B_n^m]
=
\frac{(n-k_n+1)^{\overline m}}{(n+1)^{\overline m}},
\qquad
x^{\overline m}:=x(x+1)\cdots(x+m-1).
\]
After substituting \(k_n=\alpha n+\delta_n\), each raw moment is a rational
function of \(n\) and \(\delta_n\), and hence has a finite expansion in powers
of \(n^{-1}\). Central moments follow by expanding
\((B_n-u_\alpha)^a\) into raw moments.

Combining
\eqref{eq:app_mixed_after_conditioning},
\eqref{eq:app_smooth_beta_expectation_expansion}, and
\eqref{eq:app_beta_central_moment_expansion}, we obtain
\[
\mathbb E[\Delta_n^\nu\Ind{B_n\in I_\tau}]
=
\sum_{\ell=0}^{L-1}
\widetilde c_{\nu,\ell}(\delta_n;x,\theta)n^{-\ell}
+
\widetilde\rho_{\nu,L}(n;x,\theta),
\qquad
|\widetilde\rho_{\nu,L}(n;x,\theta)|\le C_L n^{-L}.
\]

The coefficient of \(n^0\) is zero. If \(m_\nu>0\), then in every conditional
term we have
\[
r\le\lfloor m_\nu/2\rfloor,
\]
and hence \(m_\nu-r\ge1\). Since \(k_n\asymp n\) and \(K_n\asymp n\),
\[
\left|
k_n^{-m_\nu}K_n^{\underline r}
\right|
\le
C n^{-(m_\nu-r)}
\le
C n^{-1}.
\]
Thus no \(n^0\) term can arise from the conditional moment expansion. If
\(m_\nu=0\), then \(\nu_0\ge1\), and the remaining factor is a central beta
moment
\[
\mathbb E[(B_n-u_\alpha)^{\nu_0+s}],
\]
whose expansion has zero constant coefficient because \(B_n\to u_\alpha\).
Therefore \(\widetilde c_{\nu,0}=0\). Setting
\[
c_{\nu,\ell}:=\widetilde c_{\nu,\ell},
\qquad
\ell=1,\dots,L-1,
\]
and using \eqref{eq:app_mixed_local_to_threshold_local}, proves
\eqref{eq:app_delta_mixed_moment_expansion}.
\end{proof}

We can now assemble the finite-order expansion for the cross-fitted base
estimator. The local reconstruction and Taylor expansion express
\(H(\hat\eta_n)\) as a finite polynomial in \(\Delta_n\), up to a controlled
remainder. The high-moment lemma controls this remainder, while the mixed-moment
lemma expands the expectation of each polynomial term.

\begin{proposition}
\label{prop:cross_fitted_plugin_expansion}
Suppose Assumptions~\ref{assum:gaussian-tail-model},
\ref{assum:bounded_score}, and \ref{assum:nondegenerate_scale} hold. Fix an
integer \(L\ge1\), and let
\[
\delta_n:=\lceil \alpha n\rceil-\alpha n.
\]
Then there exist coefficient vectors
\[
b_\ell(\delta_n;x,\theta),
\qquad
\ell=1,\dots,L-1,
\]
depending on \(n\) only through \(\delta_n\), such that, for all sufficiently
large \(n\),
\begin{equation}
\label{eq:app_cross_fitted_plugin_expansion}
\mathbb E[G_n^{\mathrm{cf}}(\theta;x)]
=
g_\theta(x)
+
\sum_{\ell=1}^{L-1}
b_\ell(\delta_n;x,\theta)n^{-\ell}
+
\mathcal R_L(n;x,\theta),
\end{equation}
where
\begin{equation}
\label{eq:app_cross_fitted_plugin_remainder}
\|\mathcal R_L(n;x,\theta)\|
\le
C_0A_0^L L!\,
\frac{\sqrt{\log N}}{n^L}.
\end{equation}
The constants \(C_0,A_0>0\) depend only on
\((\alpha,G,\sigma_{\min},M_R)\), uniformly over \(x,\theta,n,L\).
\end{proposition}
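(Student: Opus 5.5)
The plan is to assemble the four preceding lemmas. By \Cref{lem:cross_fitted_identity_base_variance}, $\mathbb E[G_n^{\mathrm{cf}}(\theta;x)]=\mathbb E[H(\hat\eta_n)]$, so it suffices to expand $\mathbb E[H(\hat\eta_n)]$. We split according to the local event $\mathcal L_n$:
\[
\mathbb E[H(\hat\eta_n)]=\mathbb E[\mathcal G(\Delta_n)\Ind{\mathcal L_n}]+\mathbb E[H(\hat\eta_n)\Ind{\mathcal L_n^c}],
\]
using the reconstruction identity $H(\hat\eta_n)=\mathcal G(\Delta_n)$ on $\mathcal L_n$ from \Cref{lem:local_reconstruction_empirical_tail_vector}.

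For the complement term, we use Cauchy--Schwarz. The crude polynomial bound on $\varphi_{\eta'}$ (as in \eqref{eq:app_phi_bad_event_crude_bound}) together with the moment envelope of \Cref{lem:empirical_tail_moment_envelope} gives $\mathbb E\|H(\hat\eta_n)\|^2\le C\log N$. Combined with $\mathbb P(\mathcal L_n^c)\le Ce^{-cn}$, this term is at most $C\sqrt{\log N}e^{-cn}$. For $n$ large this is $\le \sqrt{\log N}\,n^{-L}$, and the constant can be absorbed into $C_0A_0^LL!$. Uniformity in $L$ requires a little care: $e^{-cn}\le (L/c)^L n^{-L}$, and $L^L\le e^L L!$, so this fits the form $A_0^LL!$.

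On $\mathcal L_n$, we insert the Taylor expansion of \Cref{lem:local_taylor_reconstructed_plugin_map}:
\[
\mathbb E[\mathcal G(\Delta_n)\Ind{\mathcal L_n}]=H(\eta)\mathbb P(\mathcal L_n)+\sum_{1\le|\nu|\le 2L-1}\widetilde{\mathcal H}_\nu\,\mathbb E[\Delta_n^\nu\Ind{\mathcal L_n}]+\mathbb E[\mathcal E_{\mathcal G,L}(\Delta_n)\Ind{\mathcal L_n}].
\]
The first term is handled as follows. Note that $H(\eta)=g_\theta(x)$ by \Cref{lem:tail_based_policy_gradient}. The loss $H(\eta)\mathbb P(\mathcal L_n^c)$ is exponentially small, with $\|H(\eta)\|\le C\sqrt{\log N}$ by \eqref{eq:app_phi_oracle_second_moment}.

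The remainder term is bounded by $C_0A_0^LL!\sqrt{\log N}\,\mathbb E[\|\Delta_n\|^{2L}\Ind{\mathcal L_n}]$. By \Cref{lem:local_tail_error_high_moments}, this is at most $C_0A_0^LL!\,C_L\sqrt{\log N}\,n^{-L}$.

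For the polynomial terms, \Cref{lem:local_tail_error_mixed_moment_expansion} expands each mixed moment as $\sum_{\ell=1}^{L-1}c_{\nu,\ell}(\delta_n)n^{-\ell}+O(C_Ln^{-L})$. Collecting terms, we define
\[
b_\ell(\delta_n;x,\theta):=\sum_{\nu}\widetilde{\mathcal H}_\nu c_{\nu,\ell}(\delta_n;x,\theta).
\]
These coefficients depend on $n$ only through $\delta_n$. The leftover error is at most $\#\{\nu\}\cdot\max_\nu\|\widetilde{\mathcal H}_\nu\|\cdot C_Ln^{-L}$. There are at most $C\,L^3$ multi-indices $\nu$, a count that is absorbed into $A_0^L$.

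The main obstacle is the claimed uniform-in-$L$ constant $C_0A_0^LL!$. The earlier lemmas give $C_L$ for the moment bounds without an explicit dependence on $L$. I would therefore revisit them to track their growth. For the beta central moments, the sub-Gaussian tail $Ce^{-cnt^2}$ yields $\mathbb E|B_n-\mathbb EB_n|^{2L}\le (CL/n)^L\Gamma$-type bounds, which are of order $C^LL!\,n^{-L}$. For $\Delta_1,\Delta_2$, Rosenthal's inequality has constants $(CL)^{L}$, and the Gaussian-tail moments $\mathbb E|Y_b|^{4L}$ grow like $C^L(2L)!^{1/2}$. The challenge is to check that these combine to at most $A_0^LL!$, possibly after adjusting $A_0$. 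If they do not combine cleanly (for instance, $(2L)!$ appears from the mixed-moment coefficients $c_{\nu,\ell}$), I would settle for reporting the constant as $C_L$ depending on $L$. This weaker form suffices for the fixed-order application, since $k$ and $J$ are fixed there.
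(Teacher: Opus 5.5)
Your proposal follows the paper's proof essentially step for step: the cross-fitted identity, the split on $\mathcal L_n$, the reconstruction $H(\hat\eta_n)=\mathcal G(\Delta_n)$, the Taylor remainder controlled by the high-moment lemma, the mixed-moment expansion defining $b_\ell=\sum_\nu\widetilde{\mathcal H}_\nu c_{\nu,\ell}$, and Cauchy--Schwarz with exponential localization on $\mathcal L_n^c$. Your caveat about the $L$-uniform constant is also well founded. The paper's proof inserts the $C_L n^{-L}$ bounds from those two lemmas and writes the result as $C_0A_0^L L!\sqrt{\log N}\,n^{-L}$ without tracking $C_L$. Only a fixed $L=k$ is used in the fixed-order theorem, so the $C_L$ version you fall back on is what is actually needed.
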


\begin{proof}
By the cross-fitted identity in
\Cref{lem:cross_fitted_identity_base_variance},
\[
\mathbb E[G_n^{\mathrm{cf}}(\theta;x)]
=
\mathbb E[H(\hat\eta_n)].
\]
Thus it suffices to expand \(\mathbb E[H(\hat\eta_n)]\).

Using \(\mathcal L_n\), decompose
\[
\mathbb E[H(\hat\eta_n)]-H(\eta)
=
\mathbb E\!\left[
\bigl(H(\hat\eta_n)-H(\eta)\bigr)\Ind{\mathcal L_n}
\right]
+
\mathbb E\!\left[
\bigl(H(\hat\eta_n)-H(\eta)\bigr)\Ind{\mathcal L_n^c}
\right].
\]
On \(\mathcal L_n\), \Cref{lem:local_reconstruction_empirical_tail_vector}
gives
\[
H(\hat\eta_n)=\mathcal G(\Delta_n).
\]
Therefore, by \Cref{lem:local_taylor_reconstructed_plugin_map},
\begin{align}
\bigl(H(\hat\eta_n)-H(\eta)\bigr)\Ind{\mathcal L_n}
&=
\left[
\sum_{1\le|\nu|\le 2L-1}
\widetilde{\mathcal H}_\nu(x,\theta)\Delta_n^\nu
+
\mathcal E_{\mathcal G,L}(\Delta_n)
\right]\Ind{\mathcal L_n}.
\label{eq:app_plugin_expansion_on_local_event}
\end{align}

The Taylor remainder is controlled by
\Cref{lem:local_taylor_reconstructed_plugin_map} and
\Cref{lem:local_tail_error_high_moments}:
\[
\begin{aligned}
\mathbb E\!\left[
\|\mathcal E_{\mathcal G,L}(\Delta_n)\|\Ind{\mathcal L_n}
\right]
&\le
C_0A_0^L L!\,\sqrt{\log N}\,
\mathbb E\!\left[
\|\Delta_n\|^{2L}\Ind{\mathcal L_n}
\right]  \\
&\le
C_0A_0^L L!\frac{\sqrt{\log N}}{n^L}.
\end{aligned}
\]

For the polynomial terms, \Cref{lem:local_tail_error_mixed_moment_expansion}
gives, for every \(1\le|\nu|\le 2L-1\),
\[
\mathbb E[\Delta_n^\nu\Ind{\mathcal L_n}]
=
\sum_{\ell=1}^{L-1}
c_{\nu,\ell}(\delta_n;x,\theta)n^{-\ell}
+
r_{\nu,L}(n;x,\theta),
\qquad
|r_{\nu,L}(n;x,\theta)|\le C_Ln^{-L}.
\]
Substituting this into
\eqref{eq:app_plugin_expansion_on_local_event}, and using the coefficient bound
from \Cref{lem:local_taylor_reconstructed_plugin_map}, gives
\begin{align}
\mathbb E\!\left[
\bigl(H(\hat\eta_n)-H(\eta)\bigr)\Ind{\mathcal L_n}
\right]
&=
\sum_{\ell=1}^{L-1}
\left(
\sum_{1\le|\nu|\le 2L-1}
\widetilde{\mathcal H}_\nu(x,\theta)
c_{\nu,\ell}(\delta_n;x,\theta)
\right)n^{-\ell}
+
\mathcal R_{L,\mathrm{loc}}(n;x,\theta),
\end{align}
where
\[
\|\mathcal R_{L,\mathrm{loc}}(n;x,\theta)\|
\le
C_0A_0^L L!\frac{\sqrt{\log N}}{n^L}.
\]

Define
\begin{equation}
\label{eq:app_plugin_expansion_coefficients_def}
b_\ell(\delta_n;x,\theta)
:=
\sum_{1\le|\nu|\le 2L-1}
\widetilde{\mathcal H}_\nu(x,\theta)
c_{\nu,\ell}(\delta_n;x,\theta),
\qquad
\ell=1,\dots,L-1.
\end{equation}
These coefficients depend on \(n\) only through \(\delta_n\).

It remains to bound the complement \(\mathcal L_n^c\). By
\eqref{eq:app_expansion_local_event},
\[
\mathbb P(\mathcal L_n^c)\le Ce^{-cn}.
\]
Moreover, the same moment-envelope argument used in the proof of
\Cref{lem:cross_fitted_identity_base_variance} gives
\[
\mathbb E\|H(\hat\eta_n)-H(\eta)\|^2\le C\log N.
\]
Indeed, this follows from \(H(\eta')=P\varphi_{\eta'}\), the explicit quadratic
growth of \(\widetilde R_{\eta'}\), the bounded score, the clipped scale
\(\hat\sigma_n\ge\varepsilon_\sigma\), and
\Cref{lem:empirical_tail_moment_envelope}. Therefore, by Cauchy--Schwarz,
\[
\left\|
\mathbb E\!\left[
(H(\hat\eta_n)-H(\eta))\Ind{\mathcal L_n^c}
\right]
\right\|
\le
C\sqrt{\log N}\,e^{-cn}.
\]
Since the right-hand side is exponentially small, it is absorbed into
\[
C_0A_0^LL!\sqrt{\log N}\,n^{-L}
\]
for every fixed \(L\).

Combining the local expansion and the complement bound yields
\[
\mathbb E[H(\hat\eta_n)]
=
H(\eta)
+
\sum_{\ell=1}^{L-1}
b_\ell(\delta_n;x,\theta)n^{-\ell}
+
\mathcal R_L(n;x,\theta),
\]
with
\[
\|\mathcal R_L(n;x,\theta)\|
\le
C_0A_0^L L!\frac{\sqrt{\log N}}{n^L}.
\]
Finally,
\[
H(\eta)=P\varphi_\eta=g_\theta(x),
\]
and
\[
\mathbb E[G_n^{\mathrm{cf}}(\theta;x)]
=
\mathbb E[H(\hat\eta_n)].
\]
This proves \eqref{eq:app_cross_fitted_plugin_expansion} and
\eqref{eq:app_cross_fitted_plugin_remainder}.
\end{proof}

\subsection{Proof of \Cref{thm:fixed_order_debiased_gradient}}
\label{app:proof_fixed_order_debiased_gradient}

\begin{proof}[Proof of \Cref{thm:fixed_order_debiased_gradient}]
We first prove split-budget bounds for \(\widetilde g_{n,k,J}\) defined in \eqref{eq:app_split_budget_fixed_order_estimator}
. Write
\[
w_j:=w_j^{(k,J)}(n),
\qquad
G_j:=G_{n_j}^{\mathrm{cf}}(\theta;x).
\]

\paragraph{Variance.}
Let
\[
\xi_j:=G_j-\mathbb E[G_j].
\]
Then
\[
\widetilde g_{n,k,J}(\theta;x)
-
\mathbb E[\widetilde g_{n,k,J}(\theta;x)]
=
\sum_{j=1}^J w_j\xi_j.
\]
By Cauchy--Schwarz,
\[
\left\|
\sum_{j=1}^J w_j\xi_j
\right\|^2
\le
\left(\sum_{j=1}^Jw_j^2\right)
\left(\sum_{j=1}^J\|\xi_j\|^2\right)
=
\|w^{(k,J)}(n)\|_2^2
\sum_{j=1}^J\|\xi_j\|^2.
\]
Taking expectations and applying
\Cref{lem:cross_fitted_identity_base_variance} with \(n=n_j\), we obtain
\[
\mathbb E\!\left[
\left\|
\widetilde g_{n,k,J}(\theta;x)
-
\mathbb E[\widetilde g_{n,k,J}(\theta;x)]
\right\|^2
\right]
\le
C\log N\,
\|w^{(k,J)}(n)\|_2^2
\sum_{j=1}^J\frac1{n_j}.
\]
By \Cref{lem:moment_cancellation_weight_complexity},
\[
\|w^{(k,J)}(n)\|_2^2
\sum_{j=1}^J\frac1{n_j}
\le
C_{k,J}\frac1n.
\]
Thus
\[
\mathbb E\!\left[
\left\|
\widetilde g_{n,k,J}(\theta;x)
-
\mathbb E[\widetilde g_{n,k,J}(\theta;x)]
\right\|^2
\right]
\le
C_{k,J}\frac{\log N}{n}.
\]

\paragraph{Bias.}
Apply \Cref{prop:cross_fitted_plugin_expansion} with \(L=k\) and
\(n=n_j\). For each \(j=1,\dots,J\),
\[
\mathbb E[G_j]
=
g_\theta(x)
+
\sum_{\ell=1}^{k-1}
b_\ell(\delta_{n_j};x,\theta)n_j^{-\ell}
+
\mathcal R_k(n_j;x,\theta),
\]
where, since \(k\) is fixed,
\[
\|\mathcal R_k(n_j;x,\theta)\|
\le
C_{k,J}\frac{\sqrt{\log N}}{n_j^k}.
\]
By \Cref{lem:compatible_prefixes},
\[
\delta_{n_j}=0,
\qquad
j=1,\dots,J.
\]
Therefore
\[
\mathbb E[G_j]
=
g_\theta(x)
+
\sum_{\ell=1}^{k-1}
b_\ell(0;x,\theta)n_j^{-\ell}
+
\mathcal R_k(n_j;x,\theta).
\]
Using the definition of \(\widetilde g_{n,k,J}\),
\begin{align}
\mathbb E[\widetilde g_{n,k,J}(\theta;x)]
-
g_\theta(x)
&=
\left(\sum_{j=1}^Jw_j-1\right)g_\theta(x) \notag\\
&\quad+
\sum_{\ell=1}^{k-1}
b_\ell(0;x,\theta)
\sum_{j=1}^Jw_jn_j^{-\ell}
+
\sum_{j=1}^Jw_j\mathcal R_k(n_j;x,\theta).
\end{align}
The first term is zero because
\[
\sum_{j=1}^Jw_j=1.
\]
For \(\ell=1,\dots,k-1\), the middle terms are zero because
\[
\sum_{j=1}^Jw_jn_j^{-\ell}=0
\]
by \Cref{lem:moment_cancellation_weight_complexity}. Hence
\[
\mathbb E[\widetilde g_{n,k,J}(\theta;x)]
-
g_\theta(x)
=
\sum_{j=1}^Jw_j\mathcal R_k(n_j;x,\theta).
\]
Thus
\begin{align}
\left\|
\mathbb E[\widetilde g_{n,k,J}(\theta;x)]
-
g_\theta(x)
\right\|
&\le
C_{k,J}\sqrt{\log N}
\sum_{j=1}^J |w_j|n_j^{-k} \notag\\
&\le
C_{k,J}\frac{\sqrt{\log N}}{n^k},
\end{align}
where the last step uses
\Cref{lem:moment_cancellation_weight_complexity}.

Finally set \(n=\lfloor m/2\rfloor\). By
\eqref{eq:app_total_budget_fixed_order_estimator},
\[
\widehat g_{m,k,J}^{\mathrm{fo}}(\theta;x)
=
\widetilde g_{\lfloor m/2\rfloor,k,J}(\theta;x).
\]
Since \(\lfloor m/2\rfloor\asymp m\), the split-budget bounds imply, after
enlarging \(C_{k,J}\),
\[
\left\|
\mathbb E[\widehat g_{m,k,J}^{\mathrm{fo}}(\theta;x)]
-
g_\theta(x)
\right\|
\le
C_{k,J}\frac{\sqrt{\log N}}{m^k},
\]
and
\[
\mathbb E\!\left[
\left\|
\widehat g_{m,k,J}^{\mathrm{fo}}(\theta;x)
-
\mathbb E[\widehat g_{m,k,J}^{\mathrm{fo}}(\theta;x)]
\right\|^2
\right]
\le
C_{k,J}\frac{\log N}{m}.
\]
The enlarged \(C_{k,J}\) depends only on
\((\alpha,G,\sigma_{\min},M_R,k,J)\).
This proves the theorem.
\end{proof}

\subsection{Proofs for finite-rollout post-training}
\label{app:proof_finite_rollout_post_training}

We first state the regularity facts of $\hatTTSobj$ needed for the convergence analysis.

\begin{lemma}
\label{lem:post_training_surrogate_regular}
Under Assumptions~\ref{assum:gaussian-tail-model},
\ref{assum:bounded_score}, and \ref{assum:nondegenerate_scale}, the surrogate objective
\(\hatTTSobj\) is \(L_{\mathrm{opt}}\)-smooth and bounded above:
\[
\hatTTSobj(\theta)\le \hatTTSobj^\star<\infty .
\]
Moreover, defining
\[
u_\theta(x)
:=
g_\theta(x)
-
\beta\nabla_\theta
\mathrm{KL}\bigl(
\curpolicy(\cdot\mid x)\|\refpolicy(\cdot\mid x)
\bigr),
\]
we have
\[
\nabla_\theta\hatTTSobj(\theta)
=
\mathbb E_{x\sim P_X}[u_\theta(x)]
\]
and
\begin{equation}
\label{eq:app_prompt_variance_bound}
\mathbb E_{x\sim P_X}
\left[
\left\|
u_\theta(x)-\nabla_\theta\hatTTSobj(\theta)
\right\|^2
\right]
\le
C(\log N+\beta^2).
\end{equation}
The constant \(C\) depends only on \((\alpha,G,\sigma_{\min},M_R)\). Moreover,
\(L_{\mathrm{opt}}\le C(1+\beta+\sqrt{\log N})\) for such a constant \(C\).
\end{lemma}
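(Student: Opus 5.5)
The plan is to reduce everything to the latent Gaussian parameters \((\mu_\theta(x),\sigma_\theta(x))\), whose first and second \(\theta\)-derivatives are controlled by Assumption~\ref{assum:bounded_score}. The truncated-normal identities used in the proof of \Cref{lem:VN_tail_surrogate} give, for every prompt,
\[
\mu_{\theta,\alpha}(x)+\widetilde c_N\sigma_{\theta,\alpha}(x)=\mu_\theta(x)+c_N\sigma_\theta(x),
\]
so that
\[
\hatTTSobj(\theta)=\mathbb E_{x\sim P_X}\bigl[\mu_\theta(x)+c_N\sigma_\theta(x)-\beta K_\theta(x)\bigr].
\]

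\emph{Boundedness.} By \Cref{cor:uniform_tail_bounds}, \(|\mu_\theta(x)|+\sigma_\theta(x)\le C\), and \(K_\theta(x)\ge0\). Hence
\[
\hatTTSobj(\theta)\le C(1+c_N)\le C(1+\sqrt{\log N}),
\]
and \(\hatTTSobj^\star<\infty\) follows.

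\emph{Gradient representation.} Since \(\nabla_\theta\mu_\theta\), \(\nabla_\theta\sigma_\theta\) and \(\nabla_\theta K_\theta\) are uniformly bounded by \(G\), dominated convergence lets us differentiate under \(\mathbb E_{x}\). By \Cref{lem:tail_based_policy_gradient}, the promptwise gradient of \(\mu_{\theta,\alpha}+\widetilde c_N\sigma_{\theta,\alpha}\) is exactly \(g_\theta(x)\). Therefore
\[
\nabla\hatTTSobj(\theta)=\mathbb E_{x\sim P_X}[u_\theta(x)].
\]

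\emph{Smoothness.} Differentiating the identity above once more, the promptwise Hessian is
\[
\nabla^2\mu_\theta(x)+c_N\nabla^2\sigma_\theta(x)-\beta\nabla^2K_\theta(x),
\]
whose operator norm is at most \(G(1+c_N+\beta)\). Using \(c_N\le\sqrt{2\log N}\), this gives
\[
L_{\mathrm{opt}}\le C(1+\beta+\sqrt{\log N}).
\]
Averaging over \(x\) preserves the bound; interchanging the Hessian with the expectation is again justified by the uniform bound.

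\emph{Prompt-variance bound.} The variance is at most the second moment, so it suffices to bound \(\|u_\theta(x)\|^2\le 2\|g_\theta(x)\|^2+2\beta^2G^2\). For the first term we use the identity \(g_\theta(x)=\nabla\mu_\theta(x)+c_N\nabla\sigma_\theta(x)\), which gives \(\|g_\theta(x)\|\le G(1+c_N)\). Consequently \(\|g_\theta(x)\|^2\le C\log N\) for \(N\ge2\). Alternatively, Jensen's inequality with \eqref{eq:app_phi_oracle_second_moment} gives \(\|g_\theta(x)\|^2=\|P\varphi_\eta\|^2\le P\|\varphi_\eta\|^2\le C\log N\).

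The main obstacle is justifying that the promptwise quantities are twice differentiable in \(\theta\) and that the truncated-normal identity can be differentiated. Since the identity holds for all \(\theta\) and the right-hand side is a linear combination of \(\mu_\theta\) and \(\sigma_\theta\), differentiability is inherited directly from Assumption~\ref{assum:bounded_score}, with constants depending only on \((\alpha,G,\sigma_{\min},M_R)\).
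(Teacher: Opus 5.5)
Your proposal is correct and follows essentially the same route as the paper: it reduces the tail statistics to the latent Gaussian parameters via the truncated-normal identities, takes first- and second-derivative bounds from Assumption~\ref{assum:bounded_score}, and bounds the Hessian, the supremum of \(\hatTTSobj\), and \(\|u_\theta(x)\|^2\) (via variance \(\le\) second moment) by \(C(1+\beta+\sqrt{\log N})\)-type quantities. The only difference is cosmetic: you collapse to \(\mu_\theta+c_N\sigma_\theta\) with \(c_N\ge0\), whereas the paper keeps \(\mu_{\theta,\alpha}+\widetilde c_N\sigma_{\theta,\alpha}\) and uses \(|\widetilde c_N|\le C\sqrt{\log N}\).
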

\begin{proof}
Write
\[
K_\theta(x)
:=
\mathrm{KL}\bigl(
\curpolicy(\cdot\mid x)\|\refpolicy(\cdot\mid x)
\bigr).
\]

By the proof of \Cref{lem:VN_tail_surrogate}, the Gaussian-tail model implies
\[
r_{\theta,\alpha}(x)
=
\mu_\theta(x)+\sigma_\theta(x)z_\alpha,
\]
\[
\mu_{\theta,\alpha}(x)
=
\mu_\theta(x)+\lambda_\alpha\sigma_\theta(x),
\qquad
\sigma_{\theta,\alpha}(x)
=
\sqrt{\delta_\alpha}\,\sigma_\theta(x).
\]
Since \(\alpha\) is fixed, \(z_\alpha,\lambda_\alpha,\delta_\alpha\) are fixed
constants. Hence Assumption~\ref{assum:bounded_score} implies
\begin{equation}
\label{eq:app_tail_map_first_second_bounds}
\left\|
\nabla_\theta \mu_{\theta,\alpha}(x)
\right\|
+
\left\|
\nabla_\theta \sigma_{\theta,\alpha}(x)
\right\|
+
\left\|
\nabla_\theta^2 \mu_{\theta,\alpha}(x)
\right\|_{\mathrm{op}}
+
\left\|
\nabla_\theta^2 \sigma_{\theta,\alpha}(x)
\right\|_{\mathrm{op}}
\le C,
\end{equation}
uniformly over \(x\) and \(\theta\). The same assumption gives
\begin{equation}
\label{eq:app_K_first_second_bounds}
\left\|
\nabla_\theta K_\theta(x)
\right\|
+
\left\|
\nabla_\theta^2 K_\theta(x)
\right\|_{\mathrm{op}}
\le C,
\end{equation}
uniformly over \(x\) and \(\theta\).

By definition of the Gaussian-tail surrogate,
\[
\hatTTSobj(\theta)
=
\mathbb E_{x\sim P_X}
\left[
\mu_{\theta,\alpha}(x)
+
\widetilde c_N\,\sigma_{\theta,\alpha}(x)
-
\beta K_\theta(x)
\right].
\]
Differentiating under the expectation, justified by
\eqref{eq:app_tail_map_first_second_bounds} and
\eqref{eq:app_K_first_second_bounds}, gives
\begin{align}
\nabla_\theta\hatTTSobj(\theta)
&=
\mathbb E_{x\sim P_X}
\left[
\nabla_\theta \mu_{\theta,\alpha}(x)
+
\widetilde c_N\,\nabla_\theta \sigma_{\theta,\alpha}(x)
-
\beta\nabla_\theta K_\theta(x)
\right].
\label{eq:app_hatJ_gradient_explicit}
\end{align}
By the promptwise gradient identity,
\[
g_\theta(x)
=
\nabla_\theta
\left[
\mu_{\theta,\alpha}(x)
+
\widetilde c_N\,\sigma_{\theta,\alpha}(x)
\right]
=
\nabla_\theta \mu_{\theta,\alpha}(x)
+
\widetilde c_N\,\nabla_\theta \sigma_{\theta,\alpha}(x).
\]
Thus
\[
\nabla_\theta\hatTTSobj(\theta)
=
\mathbb E_{x\sim P_X}
\left[
g_\theta(x)
-
\beta\nabla_\theta K_\theta(x)
\right]
=
\mathbb E_{x\sim P_X}[u_\theta(x)].
\]

We next prove smoothness. Differentiating
\eqref{eq:app_hatJ_gradient_explicit} once more gives
\[
\nabla_\theta^2\hatTTSobj(\theta)
=
\mathbb E_{x\sim P_X}
\left[
\nabla_\theta^2 \mu_{\theta,\alpha}(x)
+
\widetilde c_N\,\nabla_\theta^2 \sigma_{\theta,\alpha}(x)
-
\beta\nabla_\theta^2 K_\theta(x)
\right].
\]
Using \eqref{eq:app_tail_map_first_second_bounds} and
\eqref{eq:app_K_first_second_bounds},
\[
\left\|
\nabla_\theta^2\hatTTSobj(\theta)
\right\|_{\mathrm{op}}
\le
C\left(1+|\widetilde c_N|+\beta\right).
\]
The Gaussian maximum coefficient satisfies
\[
|\widetilde c_N|\le C\sqrt{\log N}.
\]
Therefore
\[
\left\|
\nabla_\theta^2\hatTTSobj(\theta)
\right\|_{\mathrm{op}}
\le
C(1+\beta+\sqrt{\log N}).
\]
Hence \(\hatTTSobj\) is \(L_{\mathrm{opt}}\)-smooth with
\[
L_{\mathrm{opt}}
\le
C(1+\beta+\sqrt{\log N}).
\]

We now prove that \(\hatTTSobj\) is bounded above. Since
\[
K_\theta(x)\ge0,
\]
we have
\[
\hatTTSobj(\theta)
\le
\mathbb E_{x\sim P_X}
\left[
\mu_{\theta,\alpha}(x)
+
\widetilde c_N\,\sigma_{\theta,\alpha}(x)
\right].
\]
By \Cref{cor:uniform_tail_bounds}
\[
|\mu_{\theta,\alpha}(x)|+\sigma_{\theta,\alpha}(x)\le C
\]
uniformly over \(x\) and \(\theta\). Together with
\(|\widetilde c_N|\le C\sqrt{\log N}\), this yields
\[
\hatTTSobj(\theta)
\le
C\sqrt{\log N}.
\]
Thus
\[
\hatTTSobj^\star
:=
\sup_\theta \hatTTSobj(\theta)
<\infty .
\]

It remains to prove the prompt-variance bound. From the identity above,
\[
u_\theta(x)
=
g_\theta(x)
-
\beta\nabla_\theta K_\theta(x)
=
\nabla_\theta \mu_{\theta,\alpha}(x)
+
\widetilde c_N\,\nabla_\theta \sigma_{\theta,\alpha}(x)
-
\beta\nabla_\theta K_\theta(x).
\]
Using \eqref{eq:app_tail_map_first_second_bounds} and
\eqref{eq:app_K_first_second_bounds},
\[
\|u_\theta(x)\|
\le
C(1+|\widetilde c_N|+\beta)
\le
C(1+\sqrt{\log N}+\beta).
\]
Therefore
\[
\mathbb E_{x\sim P_X}
\left[
\|u_\theta(x)\|^2
\right]
\le
C(\log N+\beta^2).
\]
Finally, since
\[
\nabla_\theta\hatTTSobj(\theta)
=
\mathbb E_{x\sim P_X}[u_\theta(x)],
\]
we have
\[
\mathbb E_{x\sim P_X}
\left[
\left\|
u_\theta(x)
-
\nabla_\theta\hatTTSobj(\theta)
\right\|^2
\right]
\le
\mathbb E_{x\sim P_X}
\left[
\|u_\theta(x)\|^2
\right]
\le
C(\log N+\beta^2).
\]
This proves \eqref{eq:app_prompt_variance_bound}.
\end{proof}

\subsubsection{Proof of \Cref{prop:finite_rollout_post_training}}
\begin{proof}[Proof of \Cref{prop:finite_rollout_post_training}]
Let
\[
\nabla_t:=\nabla_\theta\hatTTSobj(\theta_t),
\qquad
\bar G_t:=\mathbb E[\widehat G_t\mid \theta_t].
\]
By \Cref{lem:post_training_surrogate_regular}, \(\hatTTSobj\) is
\(L_{\mathrm{opt}}\)-smooth with
\[
L_{\mathrm{opt}}
\le
C(1+\beta+\sqrt{\log N}).
\]
The step-size condition in the proposition ensures
\[
\gamma L_{\mathrm{opt}}\le c_0
\]
for a sufficiently small numerical constant \(c_0>0\).

We first control the bias of the prompt-batch direction. Conditional on
\(\theta_t\), the prompt-batch mean satisfies
\[
\bar G_t
=
\mathbb E_{x\sim P_X}
\left[
\mathbb E[\widehat g(\theta_t;x)]-
\beta\nabla_\theta
\mathrm{KL}\bigl(
\curpolicy(\cdot\mid x)\|\refpolicy(\cdot\mid x)
\bigr)
\right].
\]
Also, by \Cref{lem:post_training_surrogate_regular},
\[
\nabla_t
=
\mathbb E_{x\sim P_X}
\left[
g_{\theta_t}(x)-
\beta\nabla_\theta
\mathrm{KL}\bigl(
\curpolicy(\cdot\mid x)\|\refpolicy(\cdot\mid x)
\bigr)
\right].
\]
Therefore,
\[
\bar G_t-\nabla_t
=
\mathbb E_{x\sim P_X}
\left[
\mathbb E[\widehat g(\theta_t;x)]-g_{\theta_t}(x)
\right].
\]
If \(\widehat g=\widehat g_m^{\mathrm{dir}}\), then
\Cref{thm:direct_plugin_estimator} gives
\[
\|\bar G_t-\nabla_t\|^2
\le
C\frac{\log N}{m^2}.
\]
If \(\widehat g=\widehat g_{m,k,J}^{\mathrm{fo}}\), then
\Cref{thm:fixed_order_debiased_gradient} gives
\[
\|\bar G_t-\nabla_t\|^2
\le
C\frac{\log N}{m^{2k}},
\]
Hence, in both cases,
\begin{equation}
\label{eq:app_post_training_bias_generic}
\|\bar G_t-\nabla_t\|^2
\le
C\mathsf B_m,
\end{equation}
where \(\mathsf B_m\) is the corresponding bias term in
\Cref{prop:finite_rollout_post_training}.

We next control the stochastic fluctuation of \(\widehat G_t\). For a single
prompt \(x\), define
\[
u_{\theta_t}(x)
:=
g_{\theta_t}(x)
-
\beta\nabla_\theta
\mathrm{KL}\bigl(
\curpolicy(\cdot\mid x)\|\refpolicy(\cdot\mid x)
\bigr).
\]
By \Cref{lem:post_training_surrogate_regular},
\[
\mathbb E_{x\sim P_X}
\left[
\left\|
u_{\theta_t}(x)-\nabla_t
\right\|^2
\right]
\le
C(\log N+\beta^2).
\]
The prompt-level variance contributes \(C(\log N+\beta^2)\). The rollout
variance contributes \(C\log N/m\), for either estimator after fixing \(k,J\).
Since \(m\ge1\), this is absorbed into \(C(\log N+\beta^2)\). Combining these
bounds and using independence across the \(P\) prompts, we obtain
\begin{equation}
\label{eq:app_post_training_variance_generic}
\mathbb E\!\left[
\|\widehat G_t-\bar G_t\|^2
\,\middle|\,
\theta_t
\right]
\le
C\frac{\log N+\beta^2}{P}
+
C\frac{\mathsf B_m}{P}.
\end{equation}
The extra \(\mathsf B_m/P\) term comes from the promptwise bias variation; it
will be absorbed into the final \(\mathsf B_m\) term.

By \(L_{\mathrm{opt}}\)-smoothness and the ascent update,
\[
\hatTTSobj(\theta_{t+1})
\ge
\hatTTSobj(\theta_t)
+
\gamma\langle \nabla_t,\widehat G_t\rangle
-
\frac{L_{\mathrm{opt}}\gamma^2}{2}
\|\widehat G_t\|^2.
\]
Taking conditional expectation given \(\theta_t\), we get
\[
\mathbb E\!\left[
\hatTTSobj(\theta_{t+1})-\hatTTSobj(\theta_t)
\,\middle|\,
\theta_t
\right]
\ge
\gamma\langle \nabla_t,\bar G_t\rangle
-
\frac{L_{\mathrm{opt}}\gamma^2}{2}
\mathbb E\!\left[
\|\widehat G_t\|^2
\,\middle|\,
\theta_t
\right].
\]
Let
\[
e_t:=\bar G_t-\nabla_t.
\]
Then
\[
\langle \nabla_t,\bar G_t\rangle
=
\|\nabla_t\|^2+\langle \nabla_t,e_t\rangle
\ge
\frac34\|\nabla_t\|^2-C\|e_t\|^2.
\]
Moreover,
\[
\mathbb E\!\left[
\|\widehat G_t\|^2
\,\middle|\,
\theta_t
\right]
\le
2\|\bar G_t\|^2
+
2\mathbb E\!\left[
\|\widehat G_t-\bar G_t\|^2
\,\middle|\,
\theta_t
\right],
\]
and
\[
\|\bar G_t\|^2
=
\|\nabla_t+e_t\|^2
\le
2\|\nabla_t\|^2+2\|e_t\|^2.
\]
Using
\eqref{eq:app_post_training_bias_generic} and
\eqref{eq:app_post_training_variance_generic}, we obtain
\[
\mathbb E\!\left[
\|\widehat G_t\|^2
\,\middle|\,
\theta_t
\right]
\le
C\|\nabla_t\|^2
+
C\mathsf B_m
+
C\frac{\log N+\beta^2}{P}.
\]
Substituting the last displays into the smoothness inequality and using
\(\gamma L_{\mathrm{opt}}\le c_0\), with \(c_0\) sufficiently small, gives
\[
\mathbb E[
\hatTTSobj(\theta_{t+1})-\hatTTSobj(\theta_t)]
\ge
c\gamma\,
\mathbb E\|\nabla_t\|^2
-
C\gamma\,\mathsf B_m
-
C\gamma^2\frac{\log N+\beta^2}{P}.
\]
Here we used \(P\ge1\) and \(\gamma\le1\) to absorb the harmless
\(\gamma^2\mathsf B_m/P\) term into \(C\gamma\mathsf B_m\).

Summing over \(t=0,\dots,T-1\) gives
\[
c\gamma
\sum_{t=0}^{T-1}
\mathbb E\|\nabla_t\|^2
\le
\mathbb E[\hatTTSobj(\theta_T)]-\hatTTSobj(\theta_0)
+
CT\gamma\,\mathsf B_m
+
CT\gamma^2\frac{\log N+\beta^2}{P}.
\]
Since
\[
\hatTTSobj(\theta_T)\le \hatTTSobj^\star,
\]
we conclude that
\[
\frac1T
\sum_{t=0}^{T-1}
\mathbb E
\left[
\left\|
\nabla_\theta\hatTTSobj(\theta_t)
\right\|^2
\right]
\le
C
\left[
\frac{\hatTTSobj^\star-\hatTTSobj(\theta_0)}{\gamma T}
+
\mathsf B_m
+
\gamma\frac{\log N+\beta^2}{P}
\right].
\]
This proves \Cref{prop:finite_rollout_post_training}.
\end{proof}

\begin{corollary}
\label{cor:value_transfer_to_best_of_n}
Under the same conditions as in
\Cref{prop:finite_rollout_post_training}, let
\(\TTSobj^\star:=\sup_\theta \TTSobj(\theta)\), and let \(I\) be uniformly
distributed on \(\{0,\dots,T-1\}\), independently of the training randomness.
Suppose, in addition, that \(\hatTTSobj\) satisfies the gradient-domination
condition for some \(\mu_{\mathrm{gd}}>0\):
\[
\hatTTSobj^\star-\hatTTSobj(\theta)
\le
\frac1{2\mu_{\mathrm{gd}}}
\left\|
\nabla_\theta\hatTTSobj(\theta)
\right\|^2
\qquad
\text{for all }\theta .
\]
Then, for either the direct plug-in estimator or the fixed-order estimator,
\begin{equation}
\label{eq:best_of_n_value_gap}
\TTSobj^\star-\mathbb E[\TTSobj(\theta_I)]
\le
C_{\mathrm{gd}}
\left[
\frac{\hatTTSobj^\star-\hatTTSobj(\theta_0)}{\gamma T}
+
\mathsf B_m
+
\gamma\frac{\log N+\beta^2}{P}
\right]
+
C(1-2\alpha)^N,
\end{equation}
where \(\mathsf B_m\) is the corresponding finite-rollout bias term from
\Cref{prop:finite_rollout_post_training}.
The constant \(C_{\mathrm{gd}}\) depends only on
\((\alpha,G,\sigma_{\min},M_R,\mu_{\mathrm{gd}})\), and in the fixed-order case
also on \((k,J)\).
\end{corollary}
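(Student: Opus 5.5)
The argument splits the gap into an optimization part and an approximation part:
\[
\TTSobj^\star-\TTSobj(\theta_I)
\le
\bigl(\TTSobj^\star-\hatTTSobj^\star\bigr)
+
\bigl(\hatTTSobj^\star-\hatTTSobj(\theta_I)\bigr)
+
\bigl(\hatTTSobj(\theta_I)-\TTSobj(\theta_I)\bigr).
\]
For the outer two terms we use \Cref{lem:VN_tail_surrogate}. Its promptwise bound $|V_N(\theta;x)-(\mu_{\theta,\alpha}(x)+\widetilde c_N\sigma_{\theta,\alpha}(x))|\le C(1-2\alpha)^N$ holds uniformly in $x$ and $\theta$, and the KL terms are identical in $\TTSobj$ and $\hatTTSobj$. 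Averaging over $P_X$ therefore gives $\sup_\theta|\TTSobj(\theta)-\hatTTSobj(\theta)|\le C(1-2\alpha)^N$.

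Taking suprema, $\TTSobj^\star\le \hatTTSobj^\star+C(1-2\alpha)^N$: for any $\theta$, $\TTSobj(\theta)\le\hatTTSobj(\theta)+C(1-2\alpha)^N\le\hatTTSobj^\star+C(1-2\alpha)^N$. Likewise $\hatTTSobj(\theta_I)-\TTSobj(\theta_I)\le C(1-2\alpha)^N$ pointwise. Together these two terms contribute at most $2C(1-2\alpha)^N$, which we rename $C(1-2\alpha)^N$.

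The middle term is where the training analysis enters. Apply the gradient-domination hypothesis at $\theta=\theta_I$ and take expectations over both the training randomness and the independent uniform index $I$:
\[
\hatTTSobj^\star-\mathbb E[\hatTTSobj(\theta_I)]
\le
\frac{1}{2\mu_{\mathrm{gd}}}\,\mathbb E\|\nabla_\theta\hatTTSobj(\theta_I)\|^2
=
\frac{1}{2\mu_{\mathrm{gd}}}\cdot\frac1T\sum_{t=0}^{T-1}\mathbb E\|\nabla_\theta\hatTTSobj(\theta_t)\|^2 .
\]
The equality uses only that $I$ is independent of the iterates. The right-hand side is bounded by \eqref{eq:finite_rollout_post_training_stationarity} from \Cref{prop:finite_rollout_post_training}, under the same step-size and large-$m$ conditions, with $\mathsf B_m$ equal to $\log N/m^2$ for the direct estimator or $\log N/m^{2k}$ for the fixed-order one. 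Setting $C_{\mathrm{gd}}=C/(2\mu_{\mathrm{gd}})$ gives the stated dependence, including on $(k,J)$ in the fixed-order case.

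None of these steps is a real obstacle. The only points to check are that $\hatTTSobj^\star$ is finite, which \Cref{lem:post_training_surrogate_regular} supplies, and that $\TTSobj^\star$ is finite and the surrogate approximation transfers to suprema; both follow from the uniform bound above.
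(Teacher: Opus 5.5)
Your proposal is correct and follows essentially the same route as the paper: transfer the uniform bound from \Cref{lem:VN_tail_surrogate} to suprema to obtain $\TTSobj^\star-\TTSobj(\theta)\le\hatTTSobj^\star-\hatTTSobj(\theta)+C(1-2\alpha)^N$. Then apply gradient domination at $\theta_I$, average over the independent uniform index, and invoke \Cref{prop:finite_rollout_post_training}. Tracking the factor $2$ in the approximation term and checking that $\TTSobj^\star$ is finite are minor bookkeeping steps the paper leaves implicit.
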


\subsubsection{Proof of \Cref{cor:value_transfer_to_best_of_n}}
\label{app:proof_value_transfer_to_best_of_n}
\begin{proof}[Proof of \Cref{cor:value_transfer_to_best_of_n}]
By the Gaussian-tail value approximation in \Cref{lem:VN_tail_surrogate},
\[
\sup_\theta
\left|
\hatTTSobj(\theta)-\TTSobj(\theta)
\right|
\le
C(1-2\alpha)^N.
\]
Therefore, for every \(\theta\),
\begin{align}
\TTSobj^\star-\TTSobj(\theta)
&=
\sup_{\theta'}\TTSobj(\theta')-\TTSobj(\theta) \notag\\
&\le
\sup_{\theta'}
\left[
\hatTTSobj(\theta')+C(1-2\alpha)^N
\right]
-
\left[
\hatTTSobj(\theta)-C(1-2\alpha)^N
\right] \notag\\
&\le
\hatTTSobj^\star-\hatTTSobj(\theta)
+
C(1-2\alpha)^N.
\label{eq:app_value_transfer_to_exact_tts}
\end{align}
Let \(I\) be uniformly distributed on \(\{0,\dots,T-1\}\), independently of the
training randomness. By the gradient-domination condition,
\[
\hatTTSobj^\star-\mathbb E[\hatTTSobj(\theta_I)]
\le
\frac1{2\mu_{\mathrm{gd}}}
\mathbb E
\left[
\left\|
\nabla_\theta\hatTTSobj(\theta_I)
\right\|^2
\right].
\]
Since \(I\) is uniform,
\[
\mathbb E
\left[
\left\|
\nabla_\theta\hatTTSobj(\theta_I)
\right\|^2
\right]
=
\frac1T
\sum_{t=0}^{T-1}
\mathbb E
\left[
\left\|
\nabla_\theta\hatTTSobj(\theta_t)
\right\|^2
\right].
\]
Applying \Cref{prop:finite_rollout_post_training} and then
\eqref{eq:app_value_transfer_to_exact_tts}, we get
\[
\TTSobj^\star-\mathbb E[\TTSobj(\theta_I)]
\le
C_{\mathrm{gd}}
\left[
\frac{\hatTTSobj^\star-\hatTTSobj(\theta_0)}{\gamma T}
+
\mathsf B_m
+
\gamma\frac{\log N+\beta^2}{P}
\right]
+
C(1-2\alpha)^N.
\]
This proves \Cref{cor:value_transfer_to_best_of_n}.
\end{proof}

\section{Synthetic diagnostic for Prefix-TEA}
\label{app:synthetic_bias_variance_frontier}

\paragraph{Purpose.}
This section is a diagnostic for the fixed-order debiasing theorem. It is not a
training benchmark. We isolate the finite-sample bias--variance tradeoff in a
one-prompt Gaussian-tail model where the target gradient is known analytically.
The goal is to check that Prefix-TEA cancels systematic bias, while also giving 
guidance on the budget regimes where prefix-TEA is expected to outperform TEA.

\paragraph{Synthetic model.}
We take
\[
R=Z,\qquad Z\sim \mathcal N(0,1),
\]
with bounded two-dimensional score
\[
S(Z)
=
\bigl(
\Ind{Z\ge 1.0}-\bar\Phi(1.0),
\Ind{Z\ge 1.5}-\bar\Phi(1.5)
\bigr).
\]
We set \(\alpha=0.25\) and \(N=128\). The true gradient is computed from exact
Gaussian tail integrals. We compare prefix-TEA to TEA with the same rollout budget \(m\) and prompt batch size \(P\).

\paragraph{Metrics.}
For a prompt batch of size \(P\), we report
\[
\operatorname{MSE}_{P}(m)
=
\|\operatorname{Bias}(m)\|_2^2
+
\frac{\operatorname{Var}(m)}{P}.
\]
For Prefix-TEA, systematic bias is estimated by Rao--Blackwellizing over the
evaluation split; variance is measured using the actual finite-sample
estimator. We report the main Prefix-TEA setting \(k=2,J=4\), and include
\(J=8\) as a stabilizing ablation.

\paragraph{Results.}
Figure~\ref{fig:prefix_tea_synthetic_scaling} gives the bias and variance scaling with \(m\) for \(P=1\). The bias plot
 confirms second-order cancellation for the main \(k=2,J=4\)
Prefix-TEA diagnostic: its bias decreases faster than TEA and follows the
expected \(m^{-2}\) trend in the large-\(m\) regime. The variance plot shows the
cost: Prefix-TEA has a larger variance constant. 

Figure~\ref{fig:prefix_tea_synthetic_frontier} then shows how this
bias--variance tradeoff changes after prompt-batch averaging. For small \(P\),
the larger Prefix-TEA variance dominates, so TEA has lower MSE. As \(P\)
increases, the variance term is suppressed and the smaller Prefix-TEA bias
becomes decisive, producing the blue region where Prefix-TEA outperforms TEA.
Table~\ref{tab:prefix_tea_summary} quantifies the same effect: Prefix-TEA has worse
one-prompt MSE, but crosses over once \(P\) reaches the thousands range, depending on \(m\) and \(J\).

\begin{figure}[t]
    \centering
    \includegraphics[width=\textwidth]{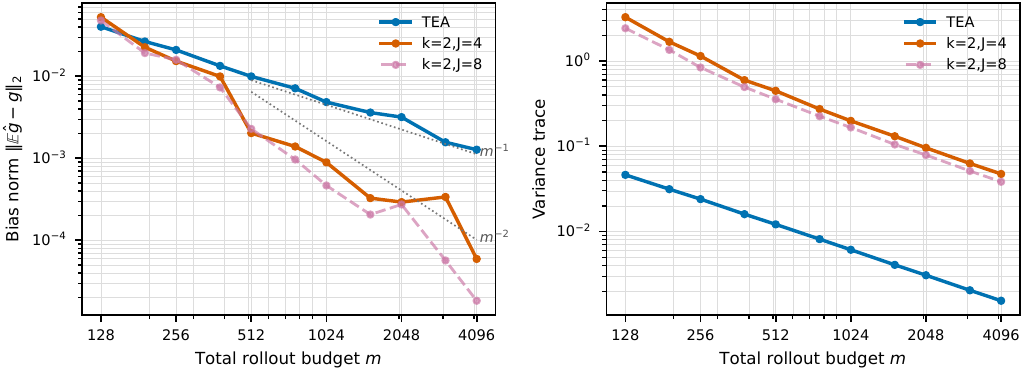}
    \caption{Bias and variance scaling in the one-prompt Gaussian-tail
    diagnostic. Prefix-TEA \(k=2,J=4\) reduces systematic bias faster than TEA,
    but with a larger variance constant.}
    \label{fig:prefix_tea_synthetic_scaling}
\end{figure}

\begin{figure}[t]
    \centering
    \includegraphics[width=\textwidth]{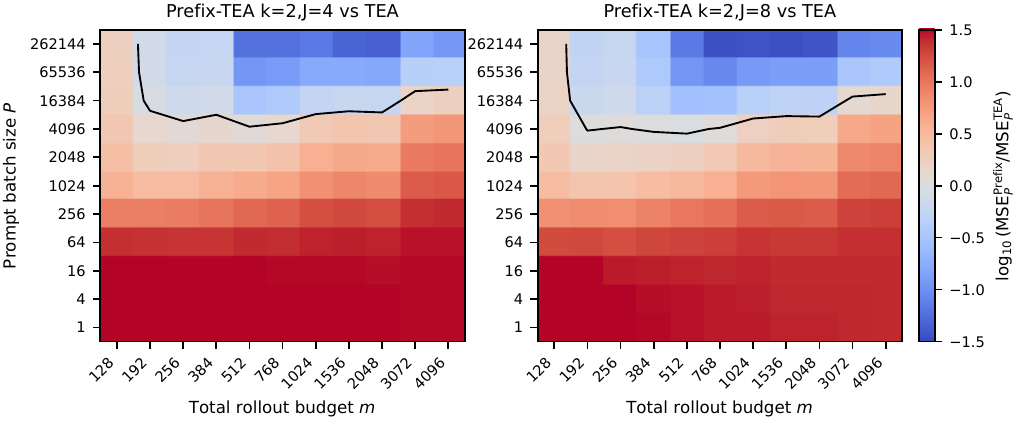}
    \caption{Prompt-batch frontier for Prefix-TEA \(k=2,J=4\), with \(J=8\) as
    a stabilizing ablation. The
    heatmap shows
    \(\log_{10}(\operatorname{MSE}_{P}^{\mathrm{Prefix}}(m)/
    \operatorname{MSE}_{P}^{\mathrm{TEA}}(m))\); values below zero indicate
    lower MSE for Prefix-TEA.}
    \label{fig:prefix_tea_synthetic_frontier}
\end{figure}

\begin{table}[t]
\centering
\small
\caption{Synthetic Prefix-TEA bias--variance summary.}
\label{tab:prefix_tea_summary}
\resizebox{\textwidth}{!}{%
\begin{tabular}{llrrrrr}
\toprule
Estimator & $m$ & Bias norm & Variance & MSE $P=1$ & MSE $P=2048$ & MSE $P=65536$ \\
\midrule
TEA & 256 & 0.021 & 0.024 & 0.024 & 4.520e-04 & 4.406e-04 \\
TEA & 512 & 9.972e-03 & 0.012 & 0.012 & 1.054e-04 & 9.962e-05 \\
TEA & 1024 & 4.848e-03 & 6.114e-03 & 6.138e-03 & 2.649e-05 & 2.360e-05 \\
TEA & 2048 & 3.180e-03 & 3.073e-03 & 3.083e-03 & 1.161e-05 & 1.016e-05 \\
TEA & 4096 & 1.274e-03 & 1.544e-03 & 1.546e-03 & 2.377e-06 & 1.646e-06 \\
Prefix-TEA $k=2,J=4$ & 256 & 0.015 & 1.144 & 1.144 & 7.939e-04 & 2.528e-04 \\
Prefix-TEA $k=2,J=4$ & 512 & 2.030e-03 & 0.447 & 0.447 & 2.225e-04 & 1.095e-05 \\
Prefix-TEA $k=2,J=4$ & 1024 & 8.919e-04 & 0.199 & 0.199 & 9.778e-05 & 3.826e-06 \\
Prefix-TEA $k=2,J=4$ & 2048 & 2.924e-04 & 0.096 & 0.096 & 4.690e-05 & 1.549e-06 \\
Prefix-TEA $k=2,J=4$ & 4096 & 5.924e-05 & 0.047 & 0.047 & 2.314e-05 & 7.264e-07 \\
Prefix-TEA $k=2,J=8$ & 256 & 0.016 & 0.839 & 0.840 & 6.633e-04 & 2.662e-04 \\
Prefix-TEA $k=2,J=8$ & 512 & 2.286e-03 & 0.356 & 0.356 & 1.793e-04 & 1.066e-05 \\
Prefix-TEA $k=2,J=8$ & 1024 & 4.661e-04 & 0.166 & 0.166 & 8.135e-05 & 2.753e-06 \\
Prefix-TEA $k=2,J=8$ & 2048 & 2.736e-04 & 0.079 & 0.079 & 3.864e-05 & 1.280e-06 \\
Prefix-TEA $k=2,J=8$ & 4096 & 1.831e-05 & 0.038 & 0.038 & 1.872e-05 & 5.852e-07 \\
\bottomrule
\end{tabular}%
}
\end{table}

\paragraph{Takeaway.}
The diagnostic explains why Prefix-TEA can underperform TEA in small-budget real
training runs despite its higher-order bias cancellation: the variance and
constant factors matter. Its advantage appears in the large-rollout and
large-prompt-batch regime where the reduced bias floor dominates.

\end{document}